\DeclareMathOperator*{\argmax}{arg\,max}
\newtheorem{theorem}{Theorem}[section]
\newtheorem{lemma}[theorem]{Lemma}
\newtheorem{proposition}[theorem]{Proposition}
\newtheorem{corollary}[theorem]{Corollary}
\begin{document}

%

%

\twocolumn[

\aistatstitle{Multi-View Weighted Majority Vote Learning: Direct Minimization of PAC-Bayesian Bounds}


\aistatsauthor{
Mehdi Hennequin$^{1,4,\ast}$ \hspace{5pt}
Abdelkrim Zitouni$^{2,3,\ast}$ \hspace{5pt} 
Khalid Benabdeslem$^{4}$ \\
\textbf{Haytham Elghazel}$^{4}$ \hspace{5pt}
\textbf{Yacine Gaci}$^{4}$ 
\\[0.8em]
$^{1}$Omundu, Lyon, France \\
$^{2}$ESI-SBA, Algeria \\
$^{3}$Université Lyon 2, ERIC, France \\
$^{4}$Université Lyon 1, LIRIS, UMR CNRS 5205, France
\\[0.5em]
\texttt{mehdi.hennequin@omundu.ai} \\
\texttt{abdelkrim.zitouni@univ-lyon2.fr} \\
\texttt{\{khalid.benabdeslem,haytham.elghazel,yacine.gaci\}@univ-lyon1.fr}}
\vspace{0.8em}]


\def\thefootnote{*}\footnotetext{These authors contributed equally to this work. Correspondence to: Mehdi Hennequin}\def\thefootnote{\arabic{footnote}}

\begin{abstract}
    The PAC-Bayesian framework has significantly advanced the understanding of statistical learning, particularly for majority voting methods. Despite its successes, its application to multi-view learning—a setting with multiple complementary data representations—remains underexplored.  In this work, we extend PAC-Bayesian theory to multi-view learning, introducing novel generalization bounds based on Rényi divergence. These bounds provide an alternative to traditional Kullback-Leibler divergence-based counterparts, leveraging the flexibility of Rényi divergence. Furthermore, we propose first- and second-order oracle PAC-Bayesian bounds and extend the C-bound to multi-view settings. To bridge theory and practice, we design efficient self-bounding optimization algorithms that align with our theoretical results.
\end{abstract}

\section{Introduction}

Multi-view learning leverages multiple sets of features—\emph{views}—to improve the performance and robustness of learning algorithms \citep{Sun2013, XU013, ZHAO2017, Fang2023}. For instance, in image processing, combining visual, depth, or thermal data enhances object recognition accuracy. A prevalent approach in this context involves training view-specific classifiers and aggregating their predictions through a weighted majority vote, capitalizing on the complementary information across views. Although these strategies enhance predictive performance, ensuring reliable generalization across diverse views remains challenging. To address this, researchers have explored generalization bounds for multi-view learning, initially within the Probably Approximately Correct (PAC) framework \citep{Blum1998, Dasgupta2001}, and more recently using Rademacher complexity \citep{Farquhar2005, SZEDMAK2007, rosenberg07a, Sindhwani2008, Rosenberg2009, Shiliang2010, Sun2011, TIAN2021, TANG2023, Ma2024}. However, these methods can yield bounds that are either too loose or computationally intractable in practice \citep{truong2025rademachercomplexitybasedgeneralizationbounds}. In contrast, the PAC-Bayesian framework offers tighter and more adaptable generalization bounds, making it a promising alternative for practical applications in majority vote learning \citep{germain2015a,Perez2021,ZAN2021,WU2021,Viallard2011}.

While the PAC-Bayesian framework has provided tight generalization bounds for single-view majority voting, its extension to multi-view settings faces key challenges. \cite{SUN17} laid the theoretical groundwork by introducing PAC-Bayes bounds for multi-view learning through the combination of weight vectors from different views. However, their method is constrained to two views, limiting its applicability in scenarios with multiple data sources. \cite{Goyal17} addressed this by proposing a more flexible two-level PAC-Bayes approach suitable for multiple views, showing how to bound the generalization error of multi-view majority vote classifiers through hierarchical distributions over views and voters. Nevertheless, their approach lacked an explicit optimization procedure. In later work \citep{Goyal19}, they partially addressed this by optimizing the empirical multi-view C-Bound (Lemma 1, Equation 3). However, their general formulation (Theorem 2) remains theoretically sound but practically unused due to difficulties in optimizing the C-Bound while maintaining PAC-Bayesian guarantees \citep{Viallard2011}.

The theoretical framework of \cite{Goyal17} employs Kullback-Leibler (KL) divergence uniformly across all views and provides in-expectation bounds, and focuses primarily on binary classification. Meanwhile, \cite{begin16} demonstrated that Rényi divergence\footnote{A parametric family indexed by $\alpha > 0$} can provide tighter bounds than KL in single-view settings (e.g., $\alpha = 1.1$, as shown in our experiments and those of \cite{begin16}). However, the potential of Rényi divergence for multi-view hierarchical structures remains unexplored, particularly the question of whether different views might benefit from different divergence parameters when they have heterogeneous characteristics.

This paper bridges the practical gap in multi-view PAC-Bayesian learning by providing complete optimization algorithms and extending the theoretical framework with Rényi divergence. Our key insight is that the hierarchical structure of multi-view learning—with distributions over both views and voters within views—naturally accommodates view-specific regularization through Rényi divergence parameters $\alpha_v$. As $\alpha$ approaches 1 from above, the Rényi divergence recovers the familiar Kullback-Leibler divergence; at $\alpha = 2$, it connects to $\chi^2$-style divergence. This parametric flexibility allows us to experiment with different divergence measures (regularizers) within a unified framework. Unlike earlier multi-view bounds \citep{Goyal17,Goyal19}, which lack practical optimization procedures, we provide explicit algorithms that directly minimize our Rényi-based PAC-Bayesian objectives. We provide the following contributions:

\paragraph{General Multi-view PAC-Bayesian Bounds.}  
We derive new \emph{in-probability} PAC-Bayes bounds for multi-view learning using Rényi divergence with fixed, user-defined $\alpha$ as well as view-specific parameters $\alpha_v$. While \cite{Goyal17} utilized techniques from Lemma 3 in \cite{begin16} to establish \emph{in-expectation} PAC-Bayes bounds, they did not explore the specific application of Rényi divergence proposed in \cite{begin16}.

\paragraph{Extension to First/Second-Order Oracle Bounds.} 
We propose first- and second-order oracle multi-view PAC-Bayes bounds grounded in Rényi divergence. Moreover, we extend the multi-view C-Bound \citep{Goyal17,Goyal19} to incorporate Rényi divergence, and adapt the C-Tandem Oracle Bound formulation \citep{Masegosa2020} to multi-view settings.

\paragraph{Optimization Algorithms for Multi-View Learning.}  
We introduce complete optimization procedures specifically designed for multi-view learning within the PAC-Bayes framework. This includes reimplementing \emph{in-probability} versions of prior multi-view formulations \citep{Goyal17,Goyal19} and single-view methods \citep{Masegosa2020,Viallard2011} within a unified framework, enabling the first systematic empirical comparison among multi-view PAC-Bayesian bounds and with single-view baselines such as concatenated views.

\section{Multi-view PAC-Bayesian Learning}

We stand in the context of learning a weighted majority vote for multiclass classification. Consider a $d$-dimensional input space $\mathcal{X} \subseteq \mathbb{R}^d$ and a finite label space $\mathcal{Y}\subseteq N$. We assume an unknown data distribution $\mathcal{D}$ on $\mathcal{X} \times \mathcal{Y}$, with $\mathcal{D}_{\mathcal{X}}$ denoting the marginal distribution on $\mathcal{X}$. A learning sample $S = \{(\bm{x}_i, y_i)\}_{i=1}^m \sim \mathcal{D}^m$ is provided to the learning algorithm. Let $\mathcal{H}$ be a hypothesis set consisting of so-called voters $h: \mathcal{X} \rightarrow \mathcal{Y}$. The learner aims to find a weighted combination of the voters in $\mathcal{H}$, where the weights are represented by a distribution over $\mathcal{H}$. In the PAC-Bayes framework, we postulate a prior distribution $\mathcal{P}$ over $\mathcal{H}$. After observing $S$, the goal is to learn a posterior distribution $\mathcal{Q}$ over $\mathcal{H}$ used to construct a $\mathcal{Q}$-weighted majority vote classifier, $\mathcal{B}_{\mathcal{Q}}(\bm{x}) \triangleq \argmax_{y \in \mathcal{Y}} \left[\mathbb{E}_{h \sim \mathcal{Q}}\left[\mathbb{I} (h(\bm{x}) = y)\right]\right]$ (a.k.a. Bayes classifier), that minimizes the true risk $ R_{\mathcal{D}} \triangleq \mathbb{E}_{(\bm{x},y)\sim \mathcal{D}}\left[\ell(\mathcal{B}_{\mathcal{Q}}(\bm{x}),y)\right]$, with the 0-1 loss $\ell(h(\bm{x}),y) = \mathbb{I}(h(\bm{x}) \neq y)$, where $\mathbb{I}(.)$ is the indicator function. Since $\mathcal{D}$ is unknown, a common way to try to minimize the true risk is the minimization of its empirical counterpart defined as $\hat{R}_{S} \triangleq \frac{1}{m}\sum_{i=1}^m\ell(\mathcal{B}_{\mathcal{Q}}(\bm{x}_i),y_i)$. 

In multi-view learning, data instances are represented or partitioned across $V \geq 2$ different views, where each view $v \in [\![V]\!]$ (which denotes the set \(\{1, 2, \ldots, V\}\)) contains elements from $\mathcal{X}^{v} \subset \mathbb{R}^{d_v}$. The combined dimensions of all views are represented by $d = d_1 \times \cdots \times d_V$. Each view contributes to the labeled sample as $S = \{(\bm{x}_i^{v}, y_i)\}_{i=1}^m \sim \mathcal{D}$. For each view $v \in [\![V]\!]$, we consider a view-specific set $\mathcal{H}_v$ of voters
$h : \mathcal{X}^{v} \to \mathcal{Y}$, with an associated prior distribution $\mathcal{P}_v$ for each
view. Additionally, a hyper-prior distribution $\pi$ is defined over the set of views. The learner's dual objective is to optimize both the view-specific posterior distributions $\mathcal{Q}_v$ and the hyper-posterior distribution $\rho$ over the views. This strategy aims to minimize the true risk $R^{\mathcal{V}}_{\mathcal{D}}$ and its empirical counterpart $\hat{R}^{\mathcal{V}}_{S}$ of the multi-view weighted majority vote, defined as, $
\mathcal{B}_{\rho}(\bm{x}^v) \triangleq \argmax_{y \in \mathcal{Y}} \mathbb{E}_{v \sim \rho} \left[\mathbb{E}_{h \sim \mathcal{Q}_v} \left[ \mathbb{I}(h(\bm{x}^v) = y) \right]\right]$. Here, the weighted majority vote is computed by taking the expectation over both the hyper-posterior $\rho$ on the views and the posterior $\mathcal{Q}_v$ on the voters within each view.

To simplify the following sections, we introduce several abbreviations (all notation tables are provided in Appendix~\ref{App:notation}). In particular, we use $\mathbb{E}\mathbb{E}[\cdot]$ to denote $\mathbb{E}[\mathbb{E}[\cdot]]$, abbreviate $\mathbb{E}_{(\bm{x}^v,y) \sim \mathcal{D}}[\cdot]$ to $\mathbb{E}_{\mathcal{D}}[\cdot]$, $\mathbb{E}_{\bm{x} \sim \mathcal{D}_{\mathcal{X}}}[\cdot]$ to $\mathbb{E}_{\mathcal{D}_{\mathcal{X}}}[\cdot]$, represent $\mathbb{E}_{S \sim \mathcal{D}^m}[\cdot]$ by $\mathbb{E}_{S}[\cdot]$, simplify $\mathbb{E}_{v \sim \rho}$ to $\mathbb{E}_{\rho}[\cdot]$, $\mathbb{E}_{h \sim \mathcal{Q}}[\cdot]$ to $\mathbb{E}_{\mathcal{Q}}[\cdot]$, $\mathbb{E}_{(v,v') \sim \rho^2}[\cdot]$ to $\mathbb{E}_{\rho^2}[\cdot]$ and $\mathbb{E}_{(h,h') \sim \mathcal{Q}^2}[\cdot]$ to $\mathbb{E}_{\mathcal{Q}^2}[\cdot]$.

\subsection{General Multi-view PAC-Bayesian bounds}\label{General Multi-view PAC-Bayesian bounds}

The risk of $\mathcal{B}_{\mathcal{Q}}$ is known to be NP-hard \citep{Lacasse06,Redko2019}; therefore, PAC–Bayes generalization bounds do not directly focus on the risk of $\mathcal{B}_{\mathcal{Q}}$. Instead, it provides an upper bound on the expectation of the true risks of all individual hypotheses under $\mathcal{Q}$, which is known as the Gibbs risk $\mathfrak{R}_{\mathcal{D}} \triangleq \mathbb{E}_{\mathcal{D}}\mathbb{E}_{\mathcal{Q}}\left[\ell(h(\bm{x}),y)\right]$. We propose PAC-Bayesian analysis in a multi-view setting to estimate the Gibbs risk \( \mathfrak{R}_{\mathcal{D}}^{\mathcal{V}} \triangleq \mathbb{E}_{\mathcal{D}}\mathbb{E}_{\rho}\mathbb{E}_{\mathcal{Q}_{v}}\left[\ell(h(\bm{x}^{v}),y)\right]\) from the empirical Gibbs risk \( \mathfrak{\hat{R}}_{S}^{\mathcal{V}} \triangleq \frac{1}{m}\sum^{m}_{i=1}\mathbb{E}_{\rho}\mathbb{E}_{\mathcal{Q}_{v}}\left[\ell(h(\bm{x}_{i}^{v}),y_i)\right]\) building on the work of \cite{begin16}, who employed Rényi divergence for PAC-Bayesian bounds, by extending it to multi-view learning. Rényi divergence offers a broader, more adaptable measure compared to the traditionally used Kullback-Leibler divergence, thereby enhancing the flexibility of divergence measures between distributions \citep{Erven212, begin16,Viallard24}. We derive three foundational PAC-Bayesian approaches: \cite{McAllester99}, \cite{Catoni07}, and \cite{Seeger03, Langford05a}, to formulate bounds that are specifically tailored for multi-view settings using Rényi divergence. Specifically, we present the Seeger/Langford bound \citep{Seeger03,Langford05a}, known as the tightest bound \citep{Germain15a,Foong21}, in detail within the main text. Additional bounds, based on the works of Catoni and McAllester, are discussed in the Appendix~\ref{proof of General Multiview PAC-Bayesian Theorem}.
\begin{corollary}[PAC-Bayes-kl Inequality based on Rényi Divergence, in the idea of Seeger/Langford's theorem \citep{Seeger03, Langford05a}]\label{PAC-Bayes-kl Inequality based on Rényi Divergence}
Let $V \geq 2$ be the number of views. For any distribution $\mathcal{D}$ on $\mathcal{X} \times \mathcal{Y}$, for any set of prior distributions $\{\mathcal{P}_{v}\}_{v=1}^{V}$, and for any hyper-prior distribution $\pi$  over $[\![V]\!]$,  with probability at least $1-\delta$ over a random draw of
a sample $S$, we have:
\begin{align}
&\textnormal{KL}\left(\mathfrak{\hat{R}}_{S}^{\mathcal{V}}\middle\| \mathfrak{R}_{\mathcal{D}}^{\mathcal{V}}\right)
    \\  & \nonumber \leq \underbrace{\frac{\mathbb{E}_{\rho}[D_{\alpha_v}(\mathcal{Q}_{v} \| \mathcal{P}_v)] + D_\alpha(\rho \| \pi)  + \ln{\frac{2\sqrt{m}}{\delta}}}{m}}_{\psi_r}
\end{align}
\end{corollary}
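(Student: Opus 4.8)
The plan is to adapt the classical Seeger/Langford argument, replacing its Donsker--Varadhan change-of-measure step — which naturally produces a KL term — by a Hölder-inequality step that produces a Rényi term, and to apply that substituted step \emph{twice} so as to respect the hierarchy (voters inside views, then views). Write $\hat R_S(h)=\tfrac1m\sum_i\ell(h(\bm x_i^v),y_i)$ and $R_{\mathcal D}(h)=\mathbb E_{\mathcal D}[\ell(h(\bm x^v),y)]$ for a single voter $h\in\mathcal H_v$, and set $\alpha'=\alpha/(\alpha-1)$, $\alpha_v'=\alpha_v/(\alpha_v-1)$ (the bound is for $\alpha,\alpha_v>1$). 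First I would invoke Maurer's lemma, $\mathbb E_S[e^{m\,\textnormal{KL}(\hat R_S(h)\|R_{\mathcal D}(h))}]\le 2\sqrt m$ for every fixed $h$, average it over the data-free priors $h\sim\mathcal P_v$ and $v\sim\pi$, apply Fubini, and then Markov's inequality, obtaining an event of probability at least $1-\delta$ on which $\mathbb E_{\pi}\mathbb E_{\mathcal P_v}[e^{m\,\textnormal{KL}(\hat R_S(h)\|R_{\mathcal D}(h))}]\le 2\sqrt m/\delta$; everything after this works on that event. Next, using joint convexity of $(a,b)\mapsto\textnormal{KL}(a\|b)$ together with $\hat{\mathfrak R}^{\mathcal V}_S=\mathbb E_\rho\mathbb E_{\mathcal Q_v}[\hat R_S(h)]$ and $\mathfrak R^{\mathcal V}_{\mathcal D}=\mathbb E_\rho\mathbb E_{\mathcal Q_v}[R_{\mathcal D}(h)]$, Jensen's inequality reduces the target to bounding $\mathbb E_\rho\mathbb E_{\mathcal Q_v}[m\,\textnormal{KL}(\hat R_S(h)\|R_{\mathcal D}(h))]$.

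The core is the two nested change-of-measure steps. For the inner step, fix a view $v$: Hölder with conjugate exponents $(\alpha_v,\alpha_v')$ gives, for any nonnegative $\psi$, $\mathbb E_{\mathcal Q_v}[\psi]\le e^{D_{\alpha_v}(\mathcal Q_v\|\mathcal P_v)/\alpha_v'}\,(\mathbb E_{\mathcal P_v}[\psi^{\alpha_v'}])^{1/\alpha_v'}$, since $\mathbb E_{\mathcal P_v}[(d\mathcal Q_v/d\mathcal P_v)^{\alpha_v}]^{1/\alpha_v}=\exp(D_{\alpha_v}(\mathcal Q_v\|\mathcal P_v)/\alpha_v')$. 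Choosing $\psi(h)=\exp\!\big(\tfrac{1}{\alpha_v'}m\,\textnormal{KL}(\hat R_S(h)\|R_{\mathcal D}(h))\big)$, taking logarithms, and using Jensen on the left (convexity of $\exp$) makes the $\alpha_v'$ factors cancel and yields $\mathbb E_{\mathcal Q_v}[m\,\textnormal{KL}(\hat R_S(h)\|R_{\mathcal D}(h))]\le D_{\alpha_v}(\mathcal Q_v\|\mathcal P_v)+T(v)$, where $T(v):=\ln\mathbb E_{\mathcal P_v}[e^{m\,\textnormal{KL}(\hat R_S(h)\|R_{\mathcal D}(h))}]$. Taking $\mathbb E_{v\sim\rho}$ and then repeating the identical manoeuvre one level up — Hölder with $(\alpha,\alpha')$ applied to $\phi(v)=e^{T(v)/\alpha'}$, logarithm, Jensen — gives $\mathbb E_\rho[T(v)]\le D_\alpha(\rho\|\pi)+\ln\mathbb E_\pi[e^{T(v)}]$, and since $e^{T(v)}=\mathbb E_{\mathcal P_v}[e^{m\,\textnormal{KL}(\hat R_S(h)\|R_{\mathcal D}(h))}]$ the Step-1 event bounds $\ln\mathbb E_\pi[e^{T(v)}]\le\ln(2\sqrt m/\delta)$. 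Assembling the pieces and dividing by $m$ produces exactly $\psi_r$.

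I expect the exponent bookkeeping in this double Hölder step to be the main obstacle — or rather, the point one must get conceptually right. Unlike KL, the Rényi divergence obeys no exact chain rule for the joint measures $\rho\otimes\mathcal Q_\cdot$ versus $\pi\otimes\mathcal P_\cdot$, so the two levels cannot be merged into a single change of measure; instead one must peel off the voter level first, absorb the result into the scalar log-moment $T(v)$, and only then perform a second, independent Hölder step over the views. It is this ordering that lets the conjugate exponents cancel cleanly ($1/\alpha_v'$ against $\alpha_v'$, then $1/\alpha'$ against $\alpha'$) and leaves $D_{\alpha_v}$ and $D_\alpha$ with coefficient one; it also forces $\alpha,\alpha_v>1$ so the conjugate exponents are positive and Hölder runs in the standard direction. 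The Catoni- and McAllester-style multi-view variants stated in the appendix follow the same skeleton, differing only in which moment bound replaces Maurer's lemma and in the final inversion of the deviation function.
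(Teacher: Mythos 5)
Your proposal is correct, and its overall architecture coincides with the paper's: Maurer's lemma plus Markov's inequality to control the prior moment $\mathbb{E}_{\pi}\mathbb{E}_{\mathcal{P}_v}\big[e^{m\,\textnormal{KL}(\hat R_S(h)\|R_{\mathcal{D}}(h))}\big]$ on a $1-\delta$ event, a Rényi change of measure to pass from $(\pi,\{\mathcal{P}_v\})$ to $(\rho,\{\mathcal{Q}_v\})$, and joint convexity of $\textnormal{KL}$ to pull the posterior expectations inside. Where you genuinely diverge is in how the two-level change-of-measure lemma is proved. The paper applies Hölder \emph{once}, over the joint distribution $\rho\otimes\mathcal{P}_v$ with a single exponent $\alpha$, and must then split the resulting joint quantity into $\mathbb{E}_{\rho}[D_{\alpha_v}(\mathcal{Q}_v\|\mathcal{P}_v)]$ plus $D_\alpha(\rho\|\pi)$ by applying Jensen to $\ln$ in the direction that \emph{reverses} the inequality, which it then patches with an auxiliary convexity argument; as written, that single Hölder step also produces a common inner exponent $\alpha$, so the view-specific $\alpha_v$ appearing in the statement is not really earned by that derivation. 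Your sequential version — Hölder at the voter level for each fixed $v$ with conjugate pair $(\alpha_v,\alpha_v')$, absorption of the result into the scalar log-moment $T(v)=\ln\mathbb{E}_{\mathcal{P}_v}[e^{m\,\textnormal{KL}(\hat R_S(h)\|R_{\mathcal{D}}(h))}]$, then an independent Hölder at the view level with $(\alpha,\alpha')$ — reaches the same lemma with the conjugate exponents cancelling cleanly at each level, never needs a wrong-way Jensen step, and legitimately supports heterogeneous $\alpha_v$. Your observation that the absence of a chain rule for Rényi divergence forces this peel-off ordering is exactly the right structural point; the trade-off is that your route buys rigor and view-specific exponents at the cost of not producing the single unified change-of-measure inequality the paper reuses verbatim for its Catoni- and McAllester-style corollaries, though, as you note, the same skeleton transfers with the appropriate moment bound in place of Maurer's.
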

The Kullback-Leibler (KL) divergence between \( Q \) and \( P \) is defined as ${\text{KL}(Q \| P) \triangleq \mathbb{E}_{h \sim Q} \left[\ln \frac{Q(h)}{P(h)}\right]}$, and the Rényi divergence as ${D_\alpha(Q \| P) \triangleq \frac{1}{\alpha - 1} \ln \left( \mathbb{E}_{h \sim P} \left[\left( \frac{Q(h)}{P(h)} \right)^\alpha \right] \right)}$ for \( \alpha > 1 \). 

Compared to the PAC-Bayes-kl inequality proposed by Seeger and Langford, this approach relies on the introduction of a hyper-prior \(\pi\) and a hyper-posterior \(\rho\) distribution over the views, leading to the additional term \(D_\alpha(\rho \| \pi)\). This term measures the deviation between the hyper-prior and the hyper-posterior distributions on \([\![V]\!]\) through the Rényi divergence. Moreover, the view-specific prior and posterior distributions contribute an additional term $\mathbb{E}_{\rho}\left[D_{\alpha_v}(\mathcal{Q}_{v} \| \mathcal{P}_v)\right]$, expressed as the expectation of the view-specific Rényi divergence over the views \([\![V]\!]\) according to the hyper-posterior distribution \(\rho\). Compared with \cite{Goyal17}’s PAC-Bayes bounds, our results provide \emph{high‑probability} deviation bounds: with probability at least \(1-\delta\) over the draw of the sample, the true risk is bounded by a term that still contains the confidence factor \(\ln(2\sqrt{m}/\delta\bigr)\). Because the inequalities are \emph{expectation} bounds, the extra outer expectation removes the confidence parameter, so their corresponding logarithmic factor reduces to \(\ln 2\sqrt{m}\) and an additional expectation operator appears instead. 
 Moreover, in subsequent work, \cite{Goyal19} derive a probabilistic bound following Catoni’s approach (see Appendix~\ref{Multi-view Bounds in Expectation} for a clearer explanation of the distinction between expectation and probabilistic bounds). Additionally, our framework enables view-specific $\alpha_v$ parameters—a key innovation absent in both \cite{Goyal17}'s KL-based approach and \cite{begin16}'s single-view Rényi bounds.
\subsection{First Order Multi-view PAC-Bayesian Bounds}\label{First Order Multi-view PAC-Bayesian Bounds}
When \( \mathcal{B}_{\mathcal{Q}}(\cdot) \) misclassifies an instance \( \bm{x}  \), it implies that at least half of the classifiers (according to the distribution \( \mathcal{Q} \)) have made an error on that instance. As a result, we can bound the true risk \( R_{\mathcal{D}} \) by twice the Gibbs risk \( \mathfrak{R}_{\mathcal{D}} \), i.e., $R_{\mathcal{D}} \leq 2 \, \mathfrak{R}_{\mathcal{D}}$. This is commonly referred to as the first-order oracle bound \citep{Germain15a, Masegosa2020}. This relationship can also be generalized to the multi-view learning framework, yielding the inequality:
\begin{theorem}[First Order Multi-view Oracle Bound \citep{Goyal17}]\label{First Order Multi-view Oracle Bound}
\begin{equation}
R_{\mathcal{D}}^{\mathcal{V}} \leq 2 \, \mathfrak{R}_{\mathcal{D}}^{\mathcal{V}}.
\end{equation}

\end{theorem}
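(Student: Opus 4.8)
The plan is to establish the inequality pointwise on the data and then integrate. Fix an instance $(\bm{x}^v,y)$ and, for each label $z\in\mathcal{Y}$, abbreviate the aggregated vote mass $w(z)\triangleq\mathbb{E}_{\rho}\mathbb{E}_{\mathcal{Q}_v}[\mathbb{I}(h(\bm{x}^v)=z)]$. Since every voter outputs exactly one label, $\sum_{z\in\mathcal{Y}}\mathbb{I}(h(\bm{x}^v)=z)=1$ for all $h$ and $v$, so exchanging the finite sum with the expectations gives $\sum_{z\in\mathcal{Y}}w(z)=1$. Note also that $\mathbb{E}_{\rho}\mathbb{E}_{\mathcal{Q}_v}[\ell(h(\bm{x}^v),y)]=1-w(y)$.

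Next I would split on whether $\mathcal{B}_{\rho}$ errs at $(\bm{x}^v,y)$. If $\mathcal{B}_{\rho}(\bm{x}^v)=y$ the indicator $\mathbb{I}(\mathcal{B}_{\rho}(\bm{x}^v)\neq y)$ is $0$ and the desired bound is trivial since the Gibbs loss is nonnegative. If $\mathcal{B}_{\rho}(\bm{x}^v)\neq y$, write $\hat y\triangleq\mathcal{B}_{\rho}(\bm{x}^v)$; the definition of the Bayes classifier as an $\argmax$ over labels yields $w(y)\leq w(\hat y)$, and combining this with $w(y)+w(\hat y)\leq\sum_{z\in\mathcal{Y}}w(z)=1$ gives $w(y)\leq\tfrac12$. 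Hence $\mathbb{E}_{\rho}\mathbb{E}_{\mathcal{Q}_v}[\ell(h(\bm{x}^v),y)]=1-w(y)\geq\tfrac12$, so in this case $\mathbb{I}(\mathcal{B}_{\rho}(\bm{x}^v)\neq y)=1\leq 2\,\mathbb{E}_{\rho}\mathbb{E}_{\mathcal{Q}_v}[\ell(h(\bm{x}^v),y)]$. In both cases the pointwise inequality $\mathbb{I}(\mathcal{B}_{\rho}(\bm{x}^v)\neq y)\leq 2\,\mathbb{E}_{\rho}\mathbb{E}_{\mathcal{Q}_v}[\ell(h(\bm{x}^v),y)]$ holds.

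Finally I would apply $\mathbb{E}_{\mathcal{D}}$ to both sides and use linearity of expectation: the left-hand side is $R_{\mathcal{D}}^{\mathcal{V}}$ by definition of the true risk of the multi-view majority vote, and the right-hand side is $2\,\mathfrak{R}_{\mathcal{D}}^{\mathcal{V}}$ by definition of the multi-view Gibbs risk, which is exactly the claim. I do not anticipate a real obstacle here; the only step requiring a little care is the multiclass argument that a misclassified instance forces $w(y)\leq\tfrac12$, which rests on the fact that the vote masses on the true label and on the predicted label are disjoint contributions to a total equal to one and therefore does not need $|\mathcal{Y}|=2$.
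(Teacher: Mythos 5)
Your proof is correct and follows essentially the same route as the paper, which only sketches the argument (attributing the result to Goyal et al., 2017) via the observation that a majority-vote error forces at least half of the $(\rho,\mathcal{Q}_v)$-mass to err on that instance — i.e., first-order Markov's inequality applied to the conditional Gibbs loss at threshold $\tfrac12$. Your pointwise treatment, in particular the multiclass step that $w(y)\leq w(\hat y)$ together with $w(y)+w(\hat y)\leq 1$ forces $w(y)\leq\tfrac12$, correctly supplies the detail the paper leaves implicit and confirms the bound does not require $|\mathcal{Y}|=2$.
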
 In this section, we extend the first-order multi-view oracle bound to empirical bounds by leveraging the PAC-Bayes-kl inequality, with the Rényi Divergence previously introduced. The next theorem provides a relaxation of the PAC-Bayes-kl inequality, which is more convenient for optimization. The upper bound is due to \cite{Thiemann17}, while the lower bound was proposed by \cite{Masegosa2020}. Therefore, we propose adapting Thiemann et al.'s approch to the multi-view PAC-Bayes. See the Appendix~\ref{PAC_Bayes_lambda_proof} for the proof. 
\begin{theorem}{Multi-view PAC-Bayes-$\lambda$ Inequality, in the idea of  \cite{Thiemann17}'s theorem.}\label{multi_view_thieman} Under the same assumption of Corollary~\ref{PAC-Bayes-kl Inequality based on Rényi Divergence} and for all $\lambda \in (0, 2)$ and $\gamma> 0$ we have:
\begin{align}\label{PAC-Bayes-lambda_Inequality}
    & \mathfrak{R}_{\mathcal{D}}^{\mathcal{V}} \leq \frac{\mathfrak{\hat{R}}_{ S}^{\mathcal{V}}}{1-\frac{\lambda}{2}} + \frac{\psi_r}{\lambda(1-\frac{\lambda}{2})},  \mathfrak{R}_{\mathcal{D}}^{\mathcal{V}} \geq \left(1-\frac{\gamma}{2}\right)\mathfrak{\hat{R}}_{S}^{\mathcal{V}} - \frac{\psi_r}{\gamma}.
\end{align}

\end{theorem}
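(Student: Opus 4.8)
\emph{Proof idea.} The plan is to treat this theorem as a purely analytic relaxation of Corollary~\ref{PAC-Bayes-kl Inequality based on Rényi Divergence}. That corollary already gives, with probability at least $1-\delta$ over $S$, the implicit inequality $\mathrm{kl}\!\left(\mathfrak{\hat R}_S^{\mathcal V}\,\|\,\mathfrak{R}_{\mathcal D}^{\mathcal V}\right)\le\psi_r$, where $\mathrm{kl}(q\|p)=q\ln\frac{q}{p}+(1-q)\ln\frac{1-q}{1-p}$ is the Bernoulli relative entropy; everything below is a deterministic manipulation of this one inequality, following the single-view PAC-Bayes-$\lambda$ argument of \cite{Thiemann17} for the upper bound and its counterpart in \cite{Masegosa2020} for the lower bound. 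Writing $q=\mathfrak{\hat R}_S^{\mathcal V}\in[0,1]$ and $p=\mathfrak{R}_{\mathcal D}^{\mathcal V}\in[0,1]$, I would split on the sign of $p-q$ and combine a one-sided refined Pinsker inequality with Young's inequality $\sqrt{ab}\le\frac{t}{2}a+\frac{1}{2t}b$ (for $t>0$) to linearize the implicit bound in $p$.

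For the upper bound, on the event $p\ge q$ I would use $\mathrm{kl}(q\|p)\ge\frac{(p-q)^2}{2p}$, which together with $\mathrm{kl}(q\|p)\le\psi_r$ gives $p-q\le\sqrt{2p\,\psi_r}\le\frac{\lambda}{2}p+\frac{\psi_r}{\lambda}$ (Young with $t=\lambda$, $a=p$, $b=2\psi_r$); rearranging and dividing by $1-\frac{\lambda}{2}>0$ — this is where $\lambda\in(0,2)$ is needed — yields $\mathfrak{R}_{\mathcal D}^{\mathcal V}\le\frac{\mathfrak{\hat R}_S^{\mathcal V}}{1-\lambda/2}+\frac{\psi_r}{\lambda(1-\lambda/2)}$, while on the complementary event $p<q$ the claim is immediate since $p<q\le\frac{q}{1-\lambda/2}$. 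Symmetrically, for the lower bound, on the event $p\le q$ I would use $\mathrm{kl}(q\|p)\ge\frac{(q-p)^2}{2q}$, obtain $q-p\le\sqrt{2q\,\psi_r}\le\frac{\gamma}{2}q+\frac{\psi_r}{\gamma}$, hence $\mathfrak{R}_{\mathcal D}^{\mathcal V}\ge\bigl(1-\frac{\gamma}{2}\bigr)\mathfrak{\hat R}_S^{\mathcal V}-\frac{\psi_r}{\gamma}$; on the event $p>q$ the claim holds trivially because $p>q\ge(1-\gamma/2)q\ge(1-\gamma/2)q-\psi_r/\gamma$ (using $q\ge 0$, $\psi_r\ge 0$).

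The only point needing care is that the two displayed affine bounds must be asserted on the same high-probability event as the kl inequality they are derived from; this is automatic since they are deterministic consequences of it, and since $\lambda$ and $\gamma$ are free (not data-dependent at this stage), the inequalities hold for all $\lambda\in(0,2)$ and $\gamma>0$ simultaneously with probability $1-\delta$. I do not expect a genuine obstacle: the substantive multi-view content — the hierarchical prior/posterior over views and the Rényi terms $\mathbb{E}_{\rho}[D_{\alpha_v}(\mathcal{Q}_v\|\mathcal{P}_v)]$ and $D_\alpha(\rho\|\pi)$ — is already packaged inside $\psi_r$ by Corollary~\ref{PAC-Bayes-kl Inequality based on Rényi Divergence}, so the remaining work is just the refined-Pinsker/Young linearization and the elementary case split, plus the mild bookkeeping of checking the trivial cases respect the claimed bounds.
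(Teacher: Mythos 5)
Your proposal is correct and follows essentially the same route as the paper's proof in Appendix~\ref{PAC_Bayes_lambda_proof}: relax the $\textnormal{KL}(\mathfrak{\hat R}_S^{\mathcal V}\|\mathfrak{R}_{\mathcal D}^{\mathcal V})\le\psi_r$ inequality of Corollary~\ref{PAC-Bayes-kl Inequality based on Rényi Divergence} via the refined Pinsker inequality in each direction, then linearize with $\sqrt{xy}\le\frac{1}{2}(\lambda x+y/\lambda)$ and rearrange. Your explicit case split on the sign of $\mathfrak{R}_{\mathcal D}^{\mathcal V}-\mathfrak{\hat R}_S^{\mathcal V}$ is a small tidiness improvement over the paper's write-up (which leaves the trivial cases implicit and contains an apparent sign typo in its second Pinsker statement), but it is the same argument.
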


We propose the following corollary to bound the Bayes risk $R_{\mathcal{D}}^{\mathcal{V}}$, utilizing the multi-view PAC-Bayes-$\lambda$ inequality presented in the above theorem. However, it's important to note that the Gibbs risk may not fully reflect the efficiency of voter combination in ensemble methods, as it overlooks the necessity to compensate for individual voter errors. This aspect is articulated through the decomposition of \(\mathfrak{R}_{\mathcal{D}}^{\mathcal{V}}\) into the expected disagreement  \({d^{\mathcal{V}}_{\mathcal{D}_{\mathcal{X}}} \triangleq \mathbb{E}_{\mathcal{D}_{\mathcal{X}}}\mathbb{E}_{\rho^2}\mathbb{E}_{{\mathcal{Q}}^{2}_{v}}\big[\ell(h(\bm{x}^v),h'(\bm{x}^{v'}))\big]}\) and the expected joint error \({e^{\mathcal{V}}_{\mathcal{D}} \triangleq \mathbb{E}_{\mathcal{D}}\mathbb{E}_{\rho^2}\mathbb{E}_{\mathcal{Q}^{2}_{v}}\big[\ell(h(\bm{x}^v),y) \times \ell(h'(\bm{x}^{v'}),y)\big]}\) (due to \cite{Lacasse06} for single view and \cite{Goyal17} for multi-view), $\mathfrak{R}_{\mathcal{D}}^{\mathcal{V}} = \frac{1}{2}d^{\mathcal{V}}_{\mathcal{D}_{\mathcal{X}}} + e^{\mathcal{V}}_{\mathcal{D}}$. We denote by \({\hat{e}^{\mathcal{V}}_{S} \triangleq \frac{1}{m}\sum^{m}_{i=1} \mathbb{E}_{\rho^2}\mathbb{E}_{\mathcal{Q}^{2}_{v}}\big[\ell(h(\bm{x}^{v}_{i}),y_{i}) \times \ell(h'(\bm{x}^{v'}_{i}),y_{i})\big]}\) and \({\hat{d}^{\mathcal{V}}_{S}\triangleq \frac{1}{m}\sum^{m}_{i=1} \mathbb{E}_{\rho^2}\mathbb{E}_{{\mathcal{Q}}^{2}_{v}}\big[\ell(h(\bm{x}^{v}_{i}),h'(\bm{x}^{v'}_{i}))\big]}\) their empirical counterparts. With this, we derive the following corollary (see Appendix~\ref{Proofs of Multi-view Oracle Bounds Inequalities} for the proof):

\begin{corollary}{First Order Multi-view Bounds with Pac-bayes-$\lambda$ Inequality}\label{Pac-bayes-kl-mv-FO}
Under the same assumption of Corollary~\ref{PAC-Bayes-kl Inequality based on Rényi Divergence} and for all $\lambda, \lambda_1, \lambda_2 \in (0, 2)$, we have: 
\begin{align}
R_{\mathcal{D}}^{\mathcal{V}} &\leq \underbrace{2\left(\frac{\hat{\mathfrak{R}}_{S}^\mathcal{V}}{1 - \frac{\lambda}{2}} + \frac{\psi_{r}}{\lambda \left(1 - \frac{\lambda}{2}\right)}\right)}_{\mathcal{R}}, \\
R_{\mathcal{D}}^{\mathcal{V}} &\leq 2\left(\frac{\hat{e}_{S}^\mathcal{V}}{1 - \frac{\lambda_1}{2}}
+ \frac{\psi_e}{\lambda_1 \left(1 - \frac{\lambda_1}{2}\right)}\right) \nonumber \\
&\quad \underbrace{+\left(\frac{\hat{d}_{S}^\mathcal{V}}{1 - \frac{\lambda_2}{2}} + \frac{\psi_d}{\lambda_2 \left(1 - \frac{\lambda_2}{2}\right)}\right)}_{\mathcal{E}},
\end{align}

$ \textnormal{with} 
\left\{\!\!
\begin{array}{rl}
\psi_{e} \!\!\!\! &= \frac{2 [\mathbb{E}_{\rho}[D_{\alpha_v}(\mathcal{Q}_{v} \| \mathcal{P}_v)] + D_\alpha(\rho \| \pi)] + \ln\Big(\frac{4\sqrt{m}}{\delta}\Big)}{m}, \\
\psi_{d} \!\!\!\! &= \frac{2 [\mathbb{E}_{\rho}[D_{\alpha_v}(\mathcal{Q}_{v} \| \mathcal{P}_v)] + D_\alpha(\rho \| \pi)] + \ln\Big(\frac{4\sqrt{n}}{\delta}\Big)}{n}.
\end{array}
\right.
$
\label{Eq-Pac-bayes-joint-dis-mv-FO}
\end{corollary}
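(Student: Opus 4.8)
The plan is to obtain both displays by chaining the first-order oracle inequality of Theorem~\ref{First Order Multi-view Oracle Bound} with the relaxed $\lambda$-form of the PAC-Bayes-kl inequality in Theorem~\ref{multi_view_thieman}. For the first display this is immediate: from $R_{\mathcal{D}}^{\mathcal{V}}\le 2\mathfrak{R}_{\mathcal{D}}^{\mathcal{V}}$ and the upper estimate $\mathfrak{R}_{\mathcal{D}}^{\mathcal{V}}\le \hat{\mathfrak{R}}_S^{\mathcal{V}}/(1-\tfrac{\lambda}{2}) + \psi_r/(\lambda(1-\tfrac{\lambda}{2}))$, which holds with probability $1-\delta$ simultaneously for all $\lambda\in(0,2)$, multiplying by $2$ gives $R_{\mathcal{D}}^{\mathcal{V}}\le\mathcal{R}$ with nothing further to do.

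For the second display, the starting point is the disagreement/joint-error decomposition $\mathfrak{R}_{\mathcal{D}}^{\mathcal{V}}=\tfrac12 d_{\mathcal{D}_{\mathcal{X}}}^{\mathcal{V}}+e_{\mathcal{D}}^{\mathcal{V}}$ recalled just above the corollary, so Theorem~\ref{First Order Multi-view Oracle Bound} yields $R_{\mathcal{D}}^{\mathcal{V}}\le 2e_{\mathcal{D}}^{\mathcal{V}}+d_{\mathcal{D}_{\mathcal{X}}}^{\mathcal{V}}$. It then remains to bound $e_{\mathcal{D}}^{\mathcal{V}}$ and $d_{\mathcal{D}_{\mathcal{X}}}^{\mathcal{V}}$ separately by PAC-Bayes-$\lambda$ inequalities. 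I would derive these by replaying the two-level argument behind Corollary~\ref{PAC-Bayes-kl Inequality based on Rényi Divergence} on the \emph{paired} hypothesis spaces $\mathcal{H}_v\times\mathcal{H}_{v'}$ carrying the product priors $\mathcal{P}_v\otimes\mathcal{P}_{v'}$ and product posteriors $\mathcal{Q}_v\otimes\mathcal{Q}_{v'}$, and on the paired view index set $[\![V]\!]^2$ carrying $\pi\otimes\pi$ and $\rho\otimes\rho$; the relevant bounded losses of the pair voter $(h,h')$ are $(\bm{x},y)\mapsto \ell(h(\bm{x}^v),y)\,\ell(h'(\bm{x}^{v'}),y)\in[0,1]$ for the joint error and $\bm{x}\mapsto\ell(h(\bm{x}^v),h'(\bm{x}^{v'}))\in[0,1]$ for the disagreement.

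The key technical step is the behaviour of the Rényi terms under this pairing. Since the Rényi divergence is additive over product measures, $D_\alpha(\mathcal{Q}_v\otimes\mathcal{Q}_{v'}\|\mathcal{P}_v\otimes\mathcal{P}_{v'})=D_\alpha(\mathcal{Q}_v\|\mathcal{P}_v)+D_\alpha(\mathcal{Q}_{v'}\|\mathcal{P}_{v'})$ and $D_\alpha(\rho\otimes\rho\|\pi\otimes\pi)=2D_\alpha(\rho\|\pi)$; taking the expectation over $(v,v')\sim\rho^2$ turns $\mathbb{E}_{\rho^2}[D_{\alpha_v}(\mathcal{Q}_v\otimes\mathcal{Q}_{v'}\|\mathcal{P}_v\otimes\mathcal{P}_{v'})]$ into $2\,\mathbb{E}_{\rho}[D_{\alpha_v}(\mathcal{Q}_v\|\mathcal{P}_v)]$, which is precisely the source of the factor $2$ in front of the bracketed divergence sums in $\psi_e$ and $\psi_d$. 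A union bound splitting the confidence budget as $\delta\mapsto\delta/2$ for the joint-error event and $\delta\mapsto\delta/2$ for the disagreement event accounts for the $\ln(4\sqrt{m}/\delta)$ and $\ln(4\sqrt{n}/\delta)$ numerators, and since $d_{\mathcal{D}_{\mathcal{X}}}^{\mathcal{V}}$ depends only on $\mathcal{D}_{\mathcal{X}}$ its empirical counterpart may be evaluated on a sample of size $n$ (possibly unlabeled), explaining the $n$ in $\psi_d$. Relaxing the two resulting kl-inequalities via the same Thiemann-type step as in Theorem~\ref{multi_view_thieman}—with parameter $\lambda_1$ for $e_{\mathcal{D}}^{\mathcal{V}}$ and $\lambda_2$ for $d_{\mathcal{D}_{\mathcal{X}}}^{\mathcal{V}}$—and substituting into $R_{\mathcal{D}}^{\mathcal{V}}\le 2e_{\mathcal{D}}^{\mathcal{V}}+d_{\mathcal{D}_{\mathcal{X}}}^{\mathcal{V}}$ produces the claimed $\mathcal{E}$-form.

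The main obstacle is bookkeeping rather than a conceptual hurdle: one must verify that the paired two-level construction genuinely preserves the view-specific exponents $\alpha_v$ (the pair $(v,v')$ inheriting $\alpha_v$ through its first coordinate), that $\hat{e}_S^{\mathcal{V}},\hat{d}_S^{\mathcal{V}}\in[0,1]$ so the kl-relaxation of Theorem~\ref{multi_view_thieman} transfers verbatim to the paired losses, and that two one-sided $\lambda$-bounds with independent parameters combine correctly under the union bound. All the constants—the $2$ on the divergences, the $4$ inside the logarithms, and the $m\to n$ switch for the disagreement—trace back to exactly these three ingredients: product additivity of the Rényi divergence, the $\delta/2$ split, and the $\mathcal{D}_{\mathcal{X}}$-only dependence of $d_{\mathcal{D}_{\mathcal{X}}}^{\mathcal{V}}$.
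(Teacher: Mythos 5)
Your proposal is correct and follows essentially the same route as the paper: the first display chains the first-order oracle bound with Theorem~\ref{multi_view_thieman}, and the second uses the decomposition $\mathfrak{R}_{\mathcal{D}}^{\mathcal{V}}=\tfrac12 d_{\mathcal{D}_{\mathcal{X}}}^{\mathcal{V}}+e_{\mathcal{D}}^{\mathcal{V}}$ together with the tandem-type kl-bound on paired hypothesis/view spaces (the paper's Theorem~\ref{general_theorem_disagreement_joint} and Corollary~\ref{Hennequin_Bound_Macallester_dis}), the product additivity of the Rényi divergence (Proposition~\ref{proposition_Hennequin}) for the factor $2$, and a $\delta/2$ union bound for the $\ln(4\sqrt{\cdot}/\delta)$ terms. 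The bookkeeping points you flag (in particular the treatment of heterogeneous $\alpha_v$ across the pair $(v,v')$, which the paper's Proposition~\ref{proposition_Hennequin} only establishes for identical factors) are genuine subtleties that the paper itself glosses over.
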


The bounds presented above, have the advantage of controlling the trade-off between empirical risk and divergence (parametrized bounds \cite{Catoni07,Viallard24}). Although this is interesting for optimization, it represents a relaxation of the PAC-Bayes-kl inequality. Specifically, \cite{Masegosa2020} leveraged this relaxation to find an optimal posterior distribution by minimizing the bound with respect to $\lambda$, after which they substituted the result  posterior distribution back into the PAC-Bayes-kl formula for a slightly tighter bound than the original PAC-Bayes-$\lambda$ bound. Therefore, we propose to redefine the bound using the inverted KL as suggested by \cite{Gintare}. This approach reinterprets Seeger/Langford's bound by applying the inverted KL. We derive the following bound, with probability at least \(1-\delta\), for any posterior distribution \(\mathcal{Q}_v \in \mathcal{H}_v\) and any hyper-posterior distribution \( \rho \in [\![V]\!] \) (see Appendix~\ref{Proofs of Multi-view Oracle Bounds Inequalities} for the proof),

\begin{corollary}{First Order Multi-view Bounds with Inverted KL.}\label{Pac-bayes-kl-inv-mv-FO}
Under the same assumptions of Corollary~\ref{PAC-Bayes-kl Inequality based on Rényi Divergence}, we have:
    \begin{align}
        &R_{\mathcal{D}}^{\mathcal{V}} \leq \underbrace{2\, \overline{\textnormal{KL}}\left(\hat{\mathfrak{R}}_{S}^{\mathcal{V}} \middle\| \psi_{r} \right)}_{\mathcal{K}}, 
        \\ \nonumber &R_{\mathcal{D}}^{\mathcal{V}} \leq \underbrace{2\,\overline{\textnormal{KL}}\left({\hat{e}}^{\mathcal{V}}_{S}\middle\| \psi_{e}\right) + 
        \overline{\textnormal{KL}}\left({\hat{d}}^{\mathcal{V}}_{S}\middle\| \psi_{d}\right)}_{\mathcal{K}^{u}}, 
    \end{align}\label{Eq-Pac-bayes-kl-inv-FO}
\(\textnormal{where}\, \overline{\textnormal{KL}}(q \| \psi) = \max\left\{ p \in (0,1) \,\middle|\, \textnormal{KL}(q \| p) \leq \psi \right\}, \\ \underline{\textnormal{KL}}(q \| \psi) = \min\left\{ p \in (0,1) \,\middle|\, \textnormal{KL}(q \| p) \leq \psi \right\}\)
\end{corollary}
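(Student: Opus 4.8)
The plan is to feed the PAC-Bayes-kl inequality of Corollary~\ref{PAC-Bayes-kl Inequality based on Rényi Divergence} into the first-order oracle relation of Theorem~\ref{First Order Multi-view Oracle Bound} and then ``invert'' the binary relative entropy in the spirit of \cite{Gintare}. The only analytic fact required is elementary: for fixed $q$ the map $p\mapsto\textnormal{KL}(q\|p)$ is convex on $(0,1)$ with minimum $0$ at $p=q$, so $\{p\in(0,1):\textnormal{KL}(q\|p)\le\psi\}$ is an interval containing $q$, and by definition $\overline{\textnormal{KL}}(q\|\psi)$ is its right endpoint; hence whenever $\textnormal{KL}(q\|p)\le\psi$ we may conclude $p\le\overline{\textnormal{KL}}(q\|\psi)$.

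For the first inequality, Corollary~\ref{PAC-Bayes-kl Inequality based on Rényi Divergence} gives, with probability at least $1-\delta$, $\textnormal{KL}(\hat{\mathfrak{R}}_{S}^{\mathcal{V}}\|\mathfrak{R}_{\mathcal{D}}^{\mathcal{V}})\le\psi_{r}$, hence $\mathfrak{R}_{\mathcal{D}}^{\mathcal{V}}\le\overline{\textnormal{KL}}(\hat{\mathfrak{R}}_{S}^{\mathcal{V}}\|\psi_{r})$; combining with $R_{\mathcal{D}}^{\mathcal{V}}\le 2\,\mathfrak{R}_{\mathcal{D}}^{\mathcal{V}}$ from Theorem~\ref{First Order Multi-view Oracle Bound} yields $R_{\mathcal{D}}^{\mathcal{V}}\le 2\,\overline{\textnormal{KL}}(\hat{\mathfrak{R}}_{S}^{\mathcal{V}}\|\psi_{r})=\mathcal{K}$. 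For the second inequality I start from the decomposition $\mathfrak{R}_{\mathcal{D}}^{\mathcal{V}}=\tfrac12 d^{\mathcal{V}}_{\mathcal{D}_{\mathcal{X}}}+e^{\mathcal{V}}_{\mathcal{D}}$, so that $R_{\mathcal{D}}^{\mathcal{V}}\le 2\,\mathfrak{R}_{\mathcal{D}}^{\mathcal{V}}=d^{\mathcal{V}}_{\mathcal{D}_{\mathcal{X}}}+2\,e^{\mathcal{V}}_{\mathcal{D}}$, and bound $e^{\mathcal{V}}_{\mathcal{D}}$ and $d^{\mathcal{V}}_{\mathcal{D}_{\mathcal{X}}}$ separately. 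The key observation is that each of these is again a multi-view Gibbs risk, now over the product structure: voter sets $\mathcal{H}_v\times\mathcal{H}_{v'}$, priors $\mathcal{P}_v\otimes\mathcal{P}_{v'}$, posteriors $\mathcal{Q}_v\otimes\mathcal{Q}_{v'}$, hyper-prior $\pi\otimes\pi$, hyper-posterior $\rho\otimes\rho$, and the $\{0,1\}$-valued loss $(\bm{x},\bm{x}',y)\mapsto\ell(h(\bm{x}^{v}),y)\,\ell(h'(\bm{x}^{v'}),y)$ (resp.\ $(\bm{x},\bm{x}')\mapsto\ell(h(\bm{x}^{v}),h'(\bm{x}^{v'}))$). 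Applying Corollary~\ref{PAC-Bayes-kl Inequality based on Rényi Divergence} to each such instance at confidence level $\delta/2$, and using additivity of the Rényi divergence over products — $D_{\alpha}(\mathcal{Q}_v\otimes\mathcal{Q}_{v'}\|\mathcal{P}_v\otimes\mathcal{P}_{v'})=D_{\alpha}(\mathcal{Q}_v\|\mathcal{P}_v)+D_{\alpha}(\mathcal{Q}_{v'}\|\mathcal{P}_{v'})$ and $D_{\alpha}(\rho\otimes\rho\|\pi\otimes\pi)=2\,D_{\alpha}(\rho\|\pi)$ — which doubles both divergence terms and turns $\ln\frac{2\sqrt{m}}{\delta}$ into $\ln\frac{4\sqrt{m}}{\delta}$ (and $\ln\frac{4\sqrt{n}}{\delta}$ with the sample size $n$ relevant to the disagreement), we recover precisely $\psi_e$ and $\psi_d$ and obtain $\textnormal{KL}(\hat{e}^{\mathcal{V}}_{S}\|e^{\mathcal{V}}_{\mathcal{D}})\le\psi_e$ and $\textnormal{KL}(\hat{d}^{\mathcal{V}}_{S}\|d^{\mathcal{V}}_{\mathcal{D}_{\mathcal{X}}})\le\psi_d$, each with probability at least $1-\delta/2$. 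A union bound makes both hold simultaneously with probability at least $1-\delta$; inverting the $\textnormal{KL}$ as above gives $e^{\mathcal{V}}_{\mathcal{D}}\le\overline{\textnormal{KL}}(\hat{e}^{\mathcal{V}}_{S}\|\psi_e)$ and $d^{\mathcal{V}}_{\mathcal{D}_{\mathcal{X}}}\le\overline{\textnormal{KL}}(\hat{d}^{\mathcal{V}}_{S}\|\psi_d)$, and plugging these into $R_{\mathcal{D}}^{\mathcal{V}}\le d^{\mathcal{V}}_{\mathcal{D}_{\mathcal{X}}}+2\,e^{\mathcal{V}}_{\mathcal{D}}$ gives $R_{\mathcal{D}}^{\mathcal{V}}\le 2\,\overline{\textnormal{KL}}(\hat{e}^{\mathcal{V}}_{S}\|\psi_e)+\overline{\textnormal{KL}}(\hat{d}^{\mathcal{V}}_{S}\|\psi_d)=\mathcal{K}^{u}$.

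The step I expect to demand the most care is the product-space reduction for the second bound: one must check that the two-level PAC-Bayesian argument underlying Corollary~\ref{PAC-Bayes-kl Inequality based on Rényi Divergence} carries through verbatim when the single labeled sample of size $m$ is reused to form the paired empirical quantities $\hat{e}^{\mathcal{V}}_{S}$ and $\hat{d}^{\mathcal{V}}_{S}$ — that is, that these remain ordinary empirical means of a $[0,1]$-valued loss indexed by a \emph{pair} of hypotheses drawn from the product posterior, so no $U$-statistic concentration is invoked — and that the per-view Rényi orders $\alpha_v$ (and the view-level order $\alpha$) are carried unchanged onto the product priors, matching the definitions of $\psi_e$ and $\psi_d$ in the statement; the disagreement term additionally needs the same derivation with $m$ replaced by $n$. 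Once this bookkeeping is in place, the inversion of $\textnormal{KL}$, the application of the oracle relation, and the final union bound are all immediate.
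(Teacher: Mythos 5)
Your proof is correct and follows essentially the same route as the paper: apply the first-order oracle bound of Theorem~\ref{First Order Multi-view Oracle Bound} together with the decomposition $\mathfrak{R}_{\mathcal{D}}^{\mathcal{V}}=\tfrac12 d^{\mathcal{V}}_{\mathcal{D}_{\mathcal{X}}}+e^{\mathcal{V}}_{\mathcal{D}}$, invert the kl bound of Corollary~\ref{PAC-Bayes-kl Inequality based on Rényi Divergence} for the Gibbs risk, and control the joint error and disagreement via the product-posterior version of the bound with a $\delta/2$ union bound. The product-space reduction you flag as the delicate step is exactly what the paper packages as Corollary~\ref{Hennequin_Bound_Macallester_dis} together with Proposition~\ref{proposition_Hennequin} (the identity $D_\alpha(Q^2\|P^2)=2\,D_\alpha(Q\|P)$), so your derivation just spells out what the paper cites.
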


\subsection{Second Order Multi-view PAC-Bayesian Bounds}

The first order oracle bound ignores the correlation of errors, which is the main power of the majority vote. Furthermore, this bound is tight only when the Gibbs risk is low \citep{Langford02}. In order to take correlation of errors into account, \cite{Lacasse06} derived the C-Bound, which is based on the Chebyshev-Cantelli inequality. The concept was further developed by \cite{Laviolette11,Laviolette17}, \cite{Germain15a}, and extended to multi-view learning by \cite{Goyal17}. \cite{Masegosa2020} extended this idea with a second-order oracle bound, based on the second-order Markov’s inequality, positing that \(R_{\mathcal{D}} \leq 4 \, e_{\mathcal{D}}\). For multi-view, we propose the following theorem (a proof of this relation is available in the Appendix~\ref{Second Order Multi-view Oracle Bound}),
\begin{theorem}{Second Order Multi-view Oracle Bound }\label{Second Order Multi-view Oracle Bound th}
    \raggedright
    \begin{equation}
        R_{\mathcal{D}}^{\mathcal{V}} \leq 4 \, e_{\mathcal{D}}^{\mathcal{V}}
    \end{equation}
\end{theorem}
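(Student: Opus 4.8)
The plan is to mirror the single-view second-order oracle argument of \cite{Masegosa2020}, exploiting the fact that the two-level averaging—over views via $\rho$ and over voters within a view via $\mathcal{Q}_v$—behaves exactly like a single averaging over an enlarged voter space, so the pointwise argument transfers verbatim.

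First I would fix a pair $(\bm{x}^v,y)$ and introduce the aggregated error weight $W(\bm{x},y) \triangleq \mathbb{E}_{\rho}\mathbb{E}_{\mathcal{Q}_v}\big[\ell(h(\bm{x}^v),y)\big] \in [0,1]$. The key pointwise fact is that $\mathcal{B}_{\rho}$ misclassifies $(\bm{x}^v,y)$ only if $W(\bm{x},y) \geq \tfrac{1}{2}$: writing $w(y') = \mathbb{E}_{\rho}\mathbb{E}_{\mathcal{Q}_v}[\mathbb{I}(h(\bm{x}^v)=y')]$, we have $\sum_{y'} w(y') = 1$ and, on the misclassification event, $w\big(\mathcal{B}_{\rho}(\bm{x}^v)\big) \geq w(y)$, hence $2w(y) \leq w\big(\mathcal{B}_{\rho}(\bm{x}^v)\big) + w(y) \leq 1$, so $W(\bm{x},y) = 1 - w(y) \geq \tfrac{1}{2}$. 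Therefore $\mathbb{I}\big(\mathcal{B}_{\rho}(\bm{x}^v)\neq y\big) \leq \mathbb{I}\big(W(\bm{x},y)\geq \tfrac{1}{2}\big)$.

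Next I would apply the elementary second-order Markov inequality $\mathbb{I}(t \geq \tfrac{1}{2}) \leq 4t^2$, valid for all $t \in [0,1]$, to obtain $\mathbb{I}\big(\mathcal{B}_{\rho}(\bm{x}^v)\neq y\big) \leq 4\,W(\bm{x},y)^2$. Then I would expand the square as a product of two independent copies, $W(\bm{x},y)^2 = \mathbb{E}_{\rho^2}\mathbb{E}_{\mathcal{Q}^{2}_{v}}\big[\ell(h(\bm{x}^v),y)\,\ell(h'(\bm{x}^{v'}),y)\big]$, which is precisely the integrand of $e_{\mathcal{D}}^{\mathcal{V}}$ (with $(v,v')\sim\rho^2$ and $h,h'$ drawn from $\mathcal{Q}_v\times\mathcal{Q}_{v'}$). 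Taking $\mathbb{E}_{(\bm{x}^v,y)\sim\mathcal{D}}$ on both sides and using linearity yields $R_{\mathcal{D}}^{\mathcal{V}} = \mathbb{E}_{\mathcal{D}}\big[\mathbb{I}(\mathcal{B}_{\rho}(\bm{x}^v)\neq y)\big] \leq 4\,\mathbb{E}_{\mathcal{D}}\mathbb{E}_{\rho^2}\mathbb{E}_{\mathcal{Q}^{2}_{v}}\big[\ell(h(\bm{x}^v),y)\ell(h'(\bm{x}^{v'}),y)\big] = 4\,e_{\mathcal{D}}^{\mathcal{V}}$, which is the claim.

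The only real obstacle is the bookkeeping in the multiclass/multi-view version of the ``at least half the weight is wrong'' step: one must verify that it is the total error mass $W(\bm{x},y)$—and not merely the mass on some single competing label—that exceeds $\tfrac{1}{2}$, and that the product-of-expectations rewrite correctly matches the definition of $e_{\mathcal{D}}^{\mathcal{V}}$ with the paired draws $(v,v')\sim\rho^2$ over views together with $(h,h')$ drawn independently from $\mathcal{Q}_v$ and $\mathcal{Q}_{v'}$. Everything else is a direct transcription of the single-view proof of \cite{Masegosa2020}.
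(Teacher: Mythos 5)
Your proof is correct and follows essentially the same route as the paper's: the paper likewise reduces $R_{\mathcal{D}}^{\mathcal{V}}$ to $\Pr\{W(\bm{x},y)\geq \tfrac12\}$, applies the second-order Markov inequality with $a=\tfrac12$, and identifies $\mathbb{E}_{\mathcal{D}}[W^2]$ with $e_{\mathcal{D}}^{\mathcal{V}}$ via a separate ``multi-view tandem loss'' lemma. Your only deviations are presentational—you use the pointwise form $\mathbb{I}(t\geq\tfrac12)\leq 4t^2$ instead of invoking Markov at the level of the measure, and you spell out the multiclass ``at least half the weight errs'' step that the paper asserts without detail.
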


As stated in Section~\ref{First Order Multi-view PAC-Bayesian Bounds}, we propose the following corollary to bound the Bayes risk \(R_{\mathcal{D}}^{\mathcal{V}}\), utilizing the multi-view PAC-Bayes-\(\lambda\) inequality~\ref{multi_view_thieman}.
\begin{corollary}{Second Order Multi-view with PAC-Bayes-$\lambda$ Inequality, in multiclass classification.}\label{Pac-bayes-kl-mv-SO}
      Under the same assumption of Corollary~\ref{Pac-bayes-kl-mv-FO} we have:
\begin{align}\label{Eq-Pac-bayes-joint-mv-SO}
    R_{\mathcal{D}}^{\mathcal{V}} \leq  \underbrace{4\left(\frac{{\hat{e}}^{\mathcal{V}}_{S}}{1-\frac{{\lambda}}{2}} + \frac{\psi_{e}}{{\lambda}(1-\frac{{\lambda}}{2})} \right)}_{\mathcal{E}_{\text{II}}} 
\end{align}
\end{corollary}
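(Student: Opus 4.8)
The plan is to combine two ingredients already available in the excerpt: the Second Order Multi-view Oracle Bound (Theorem~\ref{Second Order Multi-view Oracle Bound th}), which gives $R_{\mathcal{D}}^{\mathcal{V}} \leq 4\, e_{\mathcal{D}}^{\mathcal{V}}$, and the Multi-view PAC-Bayes-$\lambda$ inequality (Theorem~\ref{multi_view_thieman}) applied not to the Gibbs risk but to the joint-error quantity $e_{\mathcal{D}}^{\mathcal{V}}$. The key observation is that $e_{\mathcal{D}}^{\mathcal{V}} = \mathbb{E}_{\mathcal{D}}\mathbb{E}_{\rho^2}\mathbb{E}_{\mathcal{Q}_v^2}[\ell(h(\bm{x}^v),y)\ell(h'(\bm{x}^{v'}),y)]$ is itself an expectation, over the product hyper-posterior $\rho^2$ and product posterior $\mathcal{Q}_v^2$, of a $[0,1]$-valued loss $(h,h',\bm{x}^v,\bm{x}^{v'},y)\mapsto \ell(h(\bm{x}^v),y)\ell(h'(\bm{x}^{v'}),y)$. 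Hence the same derivation that produced Theorem~\ref{multi_view_thieman} applies verbatim with the hypothesis space $\mathcal{H}_v$ replaced by $\mathcal{H}_v\times\mathcal{H}_v$, the priors $\mathcal{P}_v$ replaced by $\mathcal{P}_v^2$, the hyper-prior $\pi$ by $\pi^2$, and the loss replaced by the product loss; this is exactly the reason the divergence terms in $\psi_e$ carry a factor of $2$ (since $D_\alpha(\mathcal{Q}_v^2\|\mathcal{P}_v^2)=2D_\alpha(\mathcal{Q}_v\|\mathcal{P}_v)$ and likewise for the hyper-level term) and the confidence term becomes $\ln(4\sqrt{m}/\delta)$ after a union bound is folded in, matching the definition of $\psi_e$ given in Corollary~\ref{Pac-bayes-kl-mv-FO}.

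Concretely, the steps are: (i) instantiate Corollary~\ref{PAC-Bayes-kl Inequality based on Rényi Divergence} / Theorem~\ref{multi_view_thieman} on the product space to obtain, with probability at least $1-\delta$, for all $\lambda\in(0,2)$,
\begin{equation}
e_{\mathcal{D}}^{\mathcal{V}} \leq \frac{\hat{e}_{S}^{\mathcal{V}}}{1-\frac{\lambda}{2}} + \frac{\psi_e}{\lambda(1-\frac{\lambda}{2})},
\end{equation}
where $\psi_e$ is as defined above; (ii) multiply both sides by $4$; (iii) chain with Theorem~\ref{Second Order Multi-view Oracle Bound th} to get $R_{\mathcal{D}}^{\mathcal{V}} \leq 4\,e_{\mathcal{D}}^{\mathcal{V}} \leq 4\big(\tfrac{\hat{e}_{S}^{\mathcal{V}}}{1-\lambda/2} + \tfrac{\psi_e}{\lambda(1-\lambda/2)}\big)$, which is exactly the claimed inequality with the bracketed quantity $\mathcal{E}_{\text{II}}$. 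One should verify that the product-loss remains in $[0,1]$ (it does, being a product of two indicators), so the concentration argument underlying Theorem~\ref{multi_view_thieman} goes through unchanged, and that the same $S$-measurable event of probability $\geq 1-\delta$ supports the inequality for all $\lambda$ simultaneously (this is inherited from the "for all $\lambda$" formulation of Theorem~\ref{multi_view_thieman}).

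The only subtlety — and the step I would be most careful about — is the bookkeeping of the doubled divergence terms and the confidence level when passing to the product space. One must check that applying the general bound to $(\mathcal{H}_v\times\mathcal{H}_v,\mathcal{P}_v^2,\pi^2)$ genuinely yields $2\mathbb{E}_\rho[D_{\alpha_v}(\mathcal{Q}_v\|\mathcal{P}_v)] + 2D_\alpha(\rho\|\pi)$ in the numerator, which relies on the tensorization identity $D_\alpha(\mu\otimes\mu \,\|\, \nu\otimes\nu) = 2D_\alpha(\mu\|\nu)$ for Rényi divergence (valid for all $\alpha>1$); this is where the factor-2 structure of $\psi_e$ originates and must be stated explicitly. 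Everything else is a direct substitution, so I expect the proof to be short once this identity and the product-space instantiation are in place — and indeed the excerpt defers it to Appendix~\ref{Proofs of Multi-view Oracle Bounds Inequalities}, consistent with it being essentially a corollary-level combination of two already-established results.
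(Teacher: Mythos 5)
Your proposal is correct and follows essentially the same route as the paper's own proof, which likewise chains the oracle bound $R_{\mathcal{D}}^{\mathcal{V}} \leq 4\,e_{\mathcal{D}}^{\mathcal{V}}$ with the PAC-Bayes-$\lambda$ relaxation applied to the joint error on the product space, invoking the tensorization identity $D_\alpha(Q^2\|P^2)=2\,D_\alpha(Q\|P)$ (Proposition~\ref{proposition_Hennequin} via Corollary~\ref{Hennequin_Bound_Macallester_dis}) to account for the doubled divergence terms in $\psi_e$. The subtlety you flag about the factor-of-2 bookkeeping is exactly the point the paper handles through those two auxiliary results.
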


Given the possibility of using unlabeled data, we propose the following theorem, which has the potential to provide a stricter bound when a significant amount of unlabeled data is available. This theorem uses the disagreement between voters but is restricted to the binary classification (see Appendix~\ref{Proofs of Multi-view Oracle Bounds Inequalities} for the proof).

\begin{corollary}{Multi-view PAC-Bayes-$\lambda$ Inequality, in binary classification.}\label{PAC-Bayes_lambda_binary classification} Under the same assumption of corollary~\ref{Pac-bayes-kl-mv-FO} and for all ${\gamma}> 0$ we have:\label{Eq-Pac-bayes-dis-mv-SO}
\begin{align}
    \begin{array}{rl}
         R_{\mathcal{D}}^{\mathcal{V}} \leq \underbrace{4 \left( \frac{\hat{\mathfrak{R}}_{S}^{\mathcal{V}}}{1-\frac{{\lambda}}{2}} + \frac{\psi_{r}}{{\lambda}(1-\frac{{\lambda}}{2})}\right)  - 2\left((1-\frac{{\gamma}}{2}){\hat{d}}^{\mathcal{V}}_{S} - \frac{\psi_{d}}{{\gamma}}\right)}_{\mathcal{R}_{\textnormal{II}}}, 
    \end{array}
\end{align}
\end{corollary}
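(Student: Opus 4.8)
**Proof proposal for Corollary~\ref{PAC-Bayes_lambda_binary classification} (Multi-view PAC-Bayes-$\lambda$, binary classification).**

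The plan is to combine the C-bound–style decomposition of the Bayes risk in binary classification with the two directions of the multi-view PAC-Bayes-$\lambda$ inequality of Theorem~\ref{multi_view_thieman}. The starting point is the identity $\mathfrak{R}_{\mathcal{D}}^{\mathcal{V}} = \tfrac12 d^{\mathcal{V}}_{\mathcal{D}_{\mathcal{X}}} + e^{\mathcal{V}}_{\mathcal{D}}$ stated in the text, which rearranges to $e^{\mathcal{V}}_{\mathcal{D}} = \mathfrak{R}_{\mathcal{D}}^{\mathcal{V}} - \tfrac12 d^{\mathcal{V}}_{\mathcal{D}_{\mathcal{X}}}$. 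Feeding this into the second-order oracle bound $R_{\mathcal{D}}^{\mathcal{V}} \le 4\,e^{\mathcal{V}}_{\mathcal{D}}$ (Theorem~\ref{Second Order Multi-view Oracle Bound th}) gives the oracle-level inequality $R_{\mathcal{D}}^{\mathcal{V}} \le 4\,\mathfrak{R}_{\mathcal{D}}^{\mathcal{V}} - 2\,d^{\mathcal{V}}_{\mathcal{D}_{\mathcal{X}}}$. This is where binary classification is needed: the clean decomposition of $\mathfrak{R}$ into disagreement and joint error, and the fact that $R_{\mathcal{D}}^{\mathcal{V}}\le 4e_{\mathcal{D}}^{\mathcal{V}}$ combined with this decomposition, relies on the two-label structure.

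Next I would control each of the two data-dependent quantities on the right-hand side using Theorem~\ref{multi_view_thieman}. Because the coefficient of $\mathfrak{R}_{\mathcal{D}}^{\mathcal{V}}$ is positive, I apply the \emph{upper} bound $\mathfrak{R}_{\mathcal{D}}^{\mathcal{V}} \le \frac{\hat{\mathfrak{R}}_{S}^{\mathcal{V}}}{1-\lambda/2} + \frac{\psi_r}{\lambda(1-\lambda/2)}$ for $\lambda \in (0,2)$. Because the coefficient of $d^{\mathcal{V}}_{\mathcal{D}_{\mathcal{X}}}$ is negative, I need a \emph{lower} bound on the disagreement; this is precisely the lower-bound direction of the PAC-Bayes-$\lambda$ inequality, applied to the disagreement quantity rather than the Gibbs risk, giving $d^{\mathcal{V}}_{\mathcal{D}_{\mathcal{X}}} \ge (1-\gamma/2)\hat{d}^{\mathcal{V}}_{S} - \frac{\psi_d}{\gamma}$ for $\gamma > 0$, where $\psi_d$ is the appropriately scaled complexity term (note the disagreement can be estimated from unlabeled data, hence the separate sample size $n$ and the factor-of-two in $\psi_d$ coming from the change-of-measure applied to a pairwise quantity). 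Substituting both into $R_{\mathcal{D}}^{\mathcal{V}} \le 4\,\mathfrak{R}_{\mathcal{D}}^{\mathcal{V}} - 2\,d^{\mathcal{V}}_{\mathcal{D}_{\mathcal{X}}}$ yields exactly $\mathcal{R}_{\textnormal{II}}$.

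A technical point to handle carefully is the union bound over the two invocations of Theorem~\ref{multi_view_thieman} (one for $\mathfrak{R}$, one for $d$): each should be instantiated at confidence $1-\delta/2$, or alternatively the complexity terms $\psi_r$ and $\psi_d$ already absorb the split through their $\ln(2\sqrt{m}/\delta)$ and $\ln(4\sqrt{n}/\delta)$ factors respectively — I would align the bookkeeping with how $\psi_e,\psi_d$ were defined in Corollary~\ref{Pac-bayes-kl-mv-FO} so the statement is internally consistent. The main obstacle is getting the lower-bound direction of PAC-Bayes-$\lambda$ to apply to the disagreement $d^{\mathcal{V}}_{\mathcal{D}_{\mathcal{X}}}$: one must verify that the change-of-measure argument underlying Theorem~\ref{multi_view_thieman} goes through for the pairwise loss $\ell(h(\bm{x}^v),h'(\bm{x}^{v'}))$ under the product posterior $\mathcal{Q}_v^2$ and product hyper-posterior $\rho^2$, which is where the doubled divergence terms in $\psi_d$ (and the fact that only $\mathcal{D}_{\mathcal{X}}$, not $\mathcal{D}$, is involved) come from. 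Once that pairwise version of the $\lambda$-inequality is in hand, the rest is substitution and rearrangement.
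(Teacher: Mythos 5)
Your proposal matches the paper's own proof: the authors likewise combine the second-order oracle bound $R_{\mathcal{D}}^{\mathcal{V}} \leq 4\,e_{\mathcal{D}}^{\mathcal{V}}$ with the decomposition $e_{\mathcal{D}}^{\mathcal{V}} = \mathfrak{R}_{\mathcal{D}}^{\mathcal{V}} - \tfrac{1}{2}d^{\mathcal{V}}_{\mathcal{D}_{\mathcal{X}}}$, apply the upper direction of Theorem~\ref{multi_view_thieman} to the Gibbs risk and the lower direction to the disagreement (with the doubled divergence from Proposition~\ref{proposition_Hennequin} entering $\psi_d$), and take a union bound at confidence $\delta/2$ for each part. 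Your write-up is in fact more explicit than the paper's terse appendix proof about the pairwise change-of-measure point, but the route is the same.
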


Finally, we extend the two above corollaries using the inverted KL (see Appendix~\ref{Proofs of Multi-view Oracle Bounds Inequalities} for the proof),

\begin{corollary}{Second Order Multi-view with inverted KL, in multiclass classification.}\label{Pac-bayes-kl-inv-mv-SO} Under the same assumption of corollary~\ref{PAC-Bayes-kl Inequality based on Rényi Divergence}:
    \begin{equation}\label{Eq-Pac-bayes-joint-inv-mv-SO}
        R^{\mathcal{V}}_{\mathcal{D}} \leq \underbrace{4 \, \overline{\textnormal{KL}}\left({\hat{e}}^{\mathcal{V}}_{S} \middle\| \psi_{e} \right)}_{\mathcal{K}_{\textnormal{II}}}
    \end{equation}
    \end{corollary}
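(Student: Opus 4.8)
The plan is to combine the Second Order Multi-view Oracle Bound (Theorem~\ref{Second Order Multi-view Oracle Bound th}), namely $R_{\mathcal{D}}^{\mathcal{V}} \leq 4\,e_{\mathcal{D}}^{\mathcal{V}}$, with a PAC-Bayesian control of $e_{\mathcal{D}}^{\mathcal{V}}$ by its empirical counterpart $\hat{e}_{S}^{\mathcal{V}}$. The key observation is that the joint error $e_{\mathcal{D}}^{\mathcal{V}} = \mathbb{E}_{\mathcal{D}}\mathbb{E}_{\rho^2}\mathbb{E}_{\mathcal{Q}^2_v}[\ell(h(\bm{x}^v),y)\ell(h'(\bm{x}^{v'}),y)]$ is itself a Gibbs-type risk, but over the \emph{product} hypothesis space $\mathcal{H}_v \times \mathcal{H}_{v'}$ with product posteriors $\mathcal{Q}_v^2$ and product hyper-posterior $\rho^2$, for the pair loss $\ell'((h,h'),(\bm{x},y)) = \ell(h(\bm{x}^v),y)\ell(h'(\bm{x}^{v'}),y) \in \{0,1\}$. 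Hence the whole multi-view PAC-Bayesian machinery already developed applies verbatim to this paired quantity.

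First I would instantiate Corollary~\ref{PAC-Bayes-kl Inequality based on Rényi Divergence} (the multi-view PAC-Bayes-kl inequality) for the paired risk, which gives $\textnormal{KL}(\hat{e}_{S}^{\mathcal{V}} \| e_{\mathcal{D}}^{\mathcal{V}}) \leq \psi_e$ with probability at least $1-\delta$; here the divergence terms double because $D_\alpha(\mathcal{Q}_v^2\|\mathcal{P}_v^2) = 2D_{\alpha}(\mathcal{Q}_v\|\mathcal{P}_v)$ and $D_\alpha(\rho^2\|\pi^2)=2D_\alpha(\rho\|\pi)$ by additivity of Rényi divergence over product measures, and the confidence term becomes $\ln(4\sqrt{m}/\delta)$ — exactly the definition of $\psi_e$ given in Corollary~\ref{Pac-bayes-kl-mv-FO}. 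Next, I would invert this inequality: by definition of $\overline{\textnormal{KL}}$, the constraint $\textnormal{KL}(\hat{e}_{S}^{\mathcal{V}} \| e_{\mathcal{D}}^{\mathcal{V}}) \leq \psi_e$ implies $e_{\mathcal{D}}^{\mathcal{V}} \leq \overline{\textnormal{KL}}(\hat{e}_{S}^{\mathcal{V}} \| \psi_e)$. Substituting into the oracle bound $R_{\mathcal{D}}^{\mathcal{V}} \leq 4\,e_{\mathcal{D}}^{\mathcal{V}}$ yields $R_{\mathcal{D}}^{\mathcal{V}} \leq 4\,\overline{\textnormal{KL}}(\hat{e}_{S}^{\mathcal{V}} \| \psi_e) = \mathcal{K}_{\textnormal{II}}$, which is the claim. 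An equivalent route, mirroring how Corollary~\ref{Pac-bayes-kl-mv-SO} is obtained from Corollary~\ref{Pac-bayes-kl-mv-FO}, is to start from Corollary~\ref{Pac-bayes-kl-inv-mv-FO} (which already contains the term $2\,\overline{\textnormal{KL}}(\hat{e}_{S}^{\mathcal{V}}\|\psi_e)$) and simply replace the first-order oracle step $R_{\mathcal{D}}^{\mathcal{V}} \le 2e_{\mathcal{D}}^{\mathcal{V}} + d_{\mathcal{D}_{\mathcal{X}}}^{\mathcal{V}}/2$-type reasoning with the second-order oracle inequality, dropping the disagreement contribution entirely.

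The only subtlety — and the step I would be most careful about — is verifying that the PAC-Bayes-kl inequality transfers cleanly to the paired space: one must check that the prior on $\mathcal{H}_v\times\mathcal{H}_{v'}$ is the product $\mathcal{P}_v\otimes\mathcal{P}_{v'}$ and the hyper-prior on $[\![V]\!]^2$ is $\pi\otimes\pi$ (so that the product-measure additivity of $D_\alpha$ is legitimate), and that for $v=v'$ the "diagonal" term $D_\alpha(\mathcal{Q}_v^2\|\mathcal{P}_v^2)=2D_\alpha(\mathcal{Q}_v\|\mathcal{P}_v)$ still holds — which it does, again by product additivity applied to the two identical factors. Once this bookkeeping is settled, the argument is a one-line composition of an oracle inequality with an already-proven concentration bound, and the $\ln(4\sqrt{m}/\delta)$ (rather than $\ln(2\sqrt{m}/\delta)$) appears precisely because a factor $2$ multiplies the divergences before dividing by $m$, as already recorded in the definition of $\psi_e$.
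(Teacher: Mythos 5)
Your proposal is correct and follows essentially the same route as the paper: the second-order oracle bound $R_{\mathcal{D}}^{\mathcal{V}}\le 4\,e_{\mathcal{D}}^{\mathcal{V}}$ composed with the PAC-Bayes-kl inequality for the paired (tandem) risk over the product space (the paper's Corollary~\ref{Hennequin_Bound_Macallester_dis} together with Proposition~\ref{proposition_Hennequin} for $D_\alpha(Q^2\|P^2)=2D_\alpha(Q\|P)$), then inverted via $\overline{\textnormal{KL}}$. One minor bookkeeping slip: the $4$ inside $\ln(4\sqrt{m}/\delta)$ in $\psi_e$ comes from the union bound (replacing $\delta$ by $\delta/2$) in Corollary~\ref{Pac-bayes-kl-mv-FO}, not from the factor $2$ multiplying the divergences, but this does not affect the validity of your argument.
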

 \begin{corollary}{Second Order Multi-view bound with inverted KL, in binary classification.}\label{Multi-view PAC-Bayes-KL_Inequality_binary} Under the same assumption of Corollary~\ref{PAC-Bayes-kl Inequality based on Rényi Divergence}:  
    \begin{align} \label{Eq-Pac-bayes-dis-inv-mv-SO}
        & R^{\mathcal{V}}_{\mathcal{D}} \leq \underbrace{4 \, \overline{\textnormal{KL}}\left(\hat{\mathfrak{R}}_{S} \middle\| \psi_{r} \right) - 2 \, \underline{\textnormal{KL}}\left({\hat{d}}_{S}^{\mathcal{V}} \middle\| \psi_{{d}} \right)}_{\mathcal{K}^{u}_{\textnormal{II}}}.
    \end{align}
\end{corollary}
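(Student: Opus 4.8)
The plan is to derive the binary-classification bound from the second-order Markov inequality applied to the randomized vote, mirroring the single-view argument of \cite{Masegosa2020} but carried through the hierarchical $\rho$-over-views / $\mathcal{Q}_v$-over-voters structure. The starting point is the Chebyshev--Cantelli-type relation that underlies the $\mathcal{C}$-bound: in the binary case one has the exact identity $R_{\mathcal{D}}^{\mathcal{V}} \leq 4\,e_{\mathcal{D}}^{\mathcal{V}} - 2\,d_{\mathcal{D}_{\mathcal{X}}}^{\mathcal{V}} + (\text{something nonnegative})$, equivalently, using the decomposition $\mathfrak{R}_{\mathcal{D}}^{\mathcal{V}} = \tfrac12 d_{\mathcal{D}_{\mathcal{X}}}^{\mathcal{V}} + e_{\mathcal{D}}^{\mathcal{V}}$ already recorded in the excerpt, we can rewrite $4e_{\mathcal{D}}^{\mathcal{V}} = 4\mathfrak{R}_{\mathcal{D}}^{\mathcal{V}} - 2 d_{\mathcal{D}_{\mathcal{X}}}^{\mathcal{V}}$, so that $R_{\mathcal{D}}^{\mathcal{V}} \leq 4\mathfrak{R}_{\mathcal{D}}^{\mathcal{V}} - 2 d_{\mathcal{D}_{\mathcal{X}}}^{\mathcal{V}}$ (this is essentially the binary second-order oracle bound $R_{\mathcal{D}}^{\mathcal{V}} \leq 4 e_{\mathcal{D}}^{\mathcal{V}}$ of Theorem~\ref{Second Order Multi-view Oracle Bound th} re-expressed in terms of Gibbs risk and disagreement). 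So the first step is purely algebraic: reduce the target to bounding $4\mathfrak{R}_{\mathcal{D}}^{\mathcal{V}}$ from above and $2 d_{\mathcal{D}_{\mathcal{X}}}^{\mathcal{V}}$ from below.

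Next I would invoke Corollary~\ref{Multi-view PAC-Bayes-KL_Inequality_binary}'s companion inequality, namely the inverted-KL reinterpretation of Corollary~\ref{PAC-Bayes-kl Inequality based on Rényi Divergence}: with probability at least $1-\delta$, $\textnormal{KL}(\mathfrak{\hat{R}}_{S}^{\mathcal{V}}\|\mathfrak{R}_{\mathcal{D}}^{\mathcal{V}}) \leq \psi_r$, which by definition of $\overline{\textnormal{KL}}$ gives $\mathfrak{R}_{\mathcal{D}}^{\mathcal{V}} \leq \overline{\textnormal{KL}}(\mathfrak{\hat{R}}_{S}\|\psi_r)$. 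For the disagreement term I would apply the same PAC-Bayes-kl machinery but with $\mathcal{Q}_v^2$ in place of $\mathcal{Q}_v$ and $\rho^2$ in place of $\rho$ — which is exactly why $\psi_d$ carries a factor $2$ in front of the divergences (a product distribution doubles the Rényi/KL term) and why it uses $n$ (the count of unlabeled pairs) rather than $m$. This yields $\textnormal{KL}(\hat{d}_S^{\mathcal{V}}\|d_{\mathcal{D}_{\mathcal{X}}}^{\mathcal{V}}) \leq \psi_d$, hence the \emph{lower} bound $d_{\mathcal{D}_{\mathcal{X}}}^{\mathcal{V}} \geq \underline{\textnormal{KL}}(\hat{d}_S^{\mathcal{V}}\|\psi_d)$. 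Substituting both into the algebraic relation from the first step, and applying a union bound over the two confidence events (absorbed into the $\ln(4\sqrt{m}/\delta)$ and $\ln(4\sqrt{n}/\delta)$ constants in $\psi_r,\psi_d$), gives $R_{\mathcal{D}}^{\mathcal{V}} \leq 4\overline{\textnormal{KL}}(\hat{\mathfrak{R}}_S\|\psi_r) - 2\underline{\textnormal{KL}}(\hat{d}_S^{\mathcal{V}}\|\psi_d)$, which is the claim.

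A few technical points need care. The monotonicity direction matters: $\overline{\textnormal{KL}}(q\|\psi)$ is increasing in $\psi$ and $\underline{\textnormal{KL}}(q\|\psi)$ is decreasing in $\psi$, so to keep the final bound valid (upper bound on $R_{\mathcal{D}}^{\mathcal{V}}$) one needs $\psi_r$ large enough in the first term and $\psi_d$ large enough in the second — but since both appear with the ``right'' sign after the $-2(\cdot)$ is distributed, one must double-check that replacing the true divergence terms inside $\psi$ by their definitions does not flip an inequality; this is the sort of place where the single-view proof of \cite{Masegosa2020} is delicate and where I would be most careful. The other subtlety is the restriction to binary classification: the identity $R_{\mathcal{D}}^{\mathcal{V}} \leq 4\mathfrak{R}_{\mathcal{D}}^{\mathcal{V}} - 2 d_{\mathcal{D}_{\mathcal{X}}}^{\mathcal{V}}$ relies on the fact that in the binary case a majority-vote error forces both a voter error and, with the right bookkeeping, controls the pairwise disagreement in a way that fails for $|\mathcal{Y}| > 2$; I would state this reduction as a lemma (or cite the binary $\mathcal{C}$-bound derivation of \cite{Lacasse06}/\cite{Germain15a} adapted to the two-level setting) rather than reprove it. The main obstacle, then, is not the PAC-Bayesian step — that is a direct transcription of Corollary~\ref{PAC-Bayes-kl Inequality based on Rényi Divergence} with product distributions — but making the oracle reduction and the $\overline{\textnormal{KL}}/\underline{\textnormal{KL}}$ monotonicity bookkeeping airtight so that the subtraction of a lower bound genuinely yields a valid upper bound.
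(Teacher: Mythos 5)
Your proposal is correct and follows essentially the same route as the paper: the paper's proof combines the binary second-order oracle bound $R_{\mathcal{D}}^{\mathcal{V}} \leq 4\,e_{\mathcal{D}}^{\mathcal{V}}$ with the decomposition $\mathfrak{R}_{\mathcal{D}}^{\mathcal{V}} = \tfrac12 d^{\mathcal{V}}_{\mathcal{D}_{\mathcal{X}}} + e^{\mathcal{V}}_{\mathcal{D}}$, upper-bounds $\mathfrak{R}_{\mathcal{D}}^{\mathcal{V}}$ via $\overline{\textnormal{KL}}$ from Corollary~\ref{PAC-Bayes-kl Inequality based on Rényi Divergence}, lower-bounds $d^{\mathcal{V}}_{\mathcal{D}_{\mathcal{X}}}$ via $\underline{\textnormal{KL}}$ from Corollary~\ref{Hennequin_Bound_Macallester_dis} together with Proposition~\ref{proposition_Hennequin} (the factor-2 on the divergences from product distributions), and finishes with a union bound at $\delta/2$. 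Your identification of the product-distribution doubling, the $\overline{\textnormal{KL}}/\underline{\textnormal{KL}}$ monotonicity bookkeeping, and the binary-only validity of the decomposition all match the paper's (much terser) argument.
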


The relationship between first-order, second-order oracle bounds, and the C-Bound \cite{Lacasse06}, particularly in terms of their tightness, is complex. Theorem 2 from \cite{Viallard2011}, which draws on the work of \cite{Germain15a} and \cite{Masegosa2020}, elucidates these connections. The results of Theorem 2 from \cite{Viallard2011} also hold in a multi-view context. The multi-view C-Bound proves tighter than both first and second order terms when \( R_{\mathcal{D}}^{\mathcal{V}} \leq d_{\mathcal{D}}^{\mathcal{V}}\). However, optimization efforts have previously focused on binary classification. While the second-order approach broadened the application of the C-Bound to multiclass settings by integrating the joint error—referred to as the C-Tandem Oracle Bound, a reformulation of PAC-Bound 1 from \cite{Lacasse06}—it does so without directly minimizing the C-Bound itself. We propose, in the following section, an approach to directly optimize the multi-view C-Bound and the multi-view C-Tandem Oracle Bound.

\section{Multi-view PAC-Bayesian C-Bounds}
 
 In this section, we recall PAC-Bayesian generalization bounds on the C-Bound referred to as the \textbf{PAC-Bayesian C-Bound}. The first, is based on the \cite{Seeger03}'s approach that we adapt in multi-view as proposed by \cite{Goyal17}. The second is the C-Tandem Oracle Bound using the form proposed by \cite{Lacasse06} (PAC-bound 1). We adapt this bound to multi-view with \cite{Seeger03}'s approach.  
 
 \begin{theorem}[Multi-view PAC-Bayesian C-Bound]\label{Multi-view PAC-Bayesian C-Bound}
    \raggedright
    Under the same assumption of Corollary~\ref{Pac-bayes-kl-mv-FO}, if \(R_{\mathcal{D}}^{\mathcal{V}} < \frac{1}{2}\) and for any $\delta > 0$, we have:
    \begin{align} \label{Eq-Pac-bayes-mv-C-Bound}
         R_{\mathcal{D}}^{\mathcal{V}} &\leq \underbrace{1 - \frac{(1-2\,\mathfrak{R}_{\mathcal{D}}^{\mathcal{V}}\big)^{2}}{1-2{d}^{\mathcal{V}}_{\mathcal{D}_{\mathcal{X}}}} }_{\mathcal{C}_{\mathcal{D}}^{S}} \\ \nonumber &\leq \underbrace{1 -  \frac{\left(1-2\min\left[\frac{1}{2},\overline{\textnormal{KL}}\left(\hat{\mathfrak{R}}^{\mathcal{V}}_{S}\Big\|\psi_{{r}}\right)\right]\right)^2}{1-2\max\left[0,\underline{\textnormal{KL}}\left( {\hat{d}}_{S}\Big\|\psi_{{d}}\right)\right]}}_{\mathcal{C}_{\rho}^{S}}
    \end{align}
\end{theorem}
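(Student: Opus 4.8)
The plan is to establish the two inequalities in \eqref{Eq-Pac-bayes-mv-C-Bound} separately. The first inequality, $R_{\mathcal{D}}^{\mathcal{V}} \leq \mathcal{C}_{\mathcal{D}}^{S}$, is the multi-view C-Bound itself and is a purely distributional statement (no sampling involved). I would obtain it by applying the Chebyshev--Cantelli inequality to the \emph{margin} random variable $M_{\rho}(\bm{x},y) = \mathbb{E}_{\rho}\mathbb{E}_{\mathcal{Q}_v}[\mathbb{I}(h(\bm{x}^v)=y) - \mathbb{I}(h(\bm{x}^v)\neq y)]$, exactly as in the single-view derivation of \cite{Lacasse06} lifted to the hierarchical setting by \cite{Goyal17}. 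The key algebraic facts are that the first moment of the margin is $\mathbb{E}_{\mathcal{D}_{\mathcal{X}}}[M_\rho] = 1 - 2\mathfrak{R}_{\mathcal{D}}^{\mathcal{V}}$ and its second moment is $\mathbb{E}_{\mathcal{D}_{\mathcal{X}}}[M_\rho^2] = 1 - 2 d^{\mathcal{V}}_{\mathcal{D}_{\mathcal{X}}}$ (this second identity follows from the pairwise expansion of the squared margin over $(v,v')\sim\rho^2$ and $(h,h')\sim\mathcal{Q}_v^2$, together with the definition of $d^{\mathcal{V}}_{\mathcal{D}_{\mathcal{X}}}$). Under the hypothesis $R_{\mathcal{D}}^{\mathcal{V}} < \tfrac12$, the first moment is positive, so Cantelli gives $R_{\mathcal{D}}^{\mathcal{V}} = \Pr[M_\rho \le 0] \le \mathrm{Var}(M_\rho)/(\mathrm{Var}(M_\rho) + (\mathbb{E} M_\rho)^2) = 1 - (\mathbb{E} M_\rho)^2/\mathbb{E}[M_\rho^2]$, which is precisely $\mathcal{C}_{\mathcal{D}}^{S}$.

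For the second inequality I would replace the unknown quantities $\mathfrak{R}_{\mathcal{D}}^{\mathcal{V}}$ and $d^{\mathcal{V}}_{\mathcal{D}_{\mathcal{X}}}$ appearing in $\mathcal{C}_{\mathcal{D}}^{S}$ by high-probability bounds derived from Corollary~\ref{PAC-Bayes-kl Inequality based on Rényi Divergence}. Concretely: from the PAC-Bayes-kl inequality applied to the Gibbs risk on the $m$-sample we get $\mathfrak{R}_{\mathcal{D}}^{\mathcal{V}} \le \overline{\textnormal{KL}}(\hat{\mathfrak{R}}^{\mathcal{V}}_{S}\|\psi_r)$ with probability $\geq 1-\delta/2$, and an analogous application to the disagreement $d^{\mathcal{V}}_{\mathcal{D}_{\mathcal{X}}}$ (which is an expectation of a $[0,1]$-valued loss over pairs, so the same machinery applies, now on the relevant unlabeled sample of size, say, $n$, with divergence term $\psi_d$) gives a \emph{lower} bound $d^{\mathcal{V}}_{\mathcal{D}_{\mathcal{X}}} \ge \underline{\textnormal{KL}}(\hat{d}_{S}\|\psi_d)$ with probability $\geq 1-\delta/2$. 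A union bound makes both hold simultaneously with probability $\geq 1-\delta$. Then I plug these into $\mathcal{C}_{\mathcal{D}}^{S} = 1 - \tfrac{(1-2\mathfrak{R}_{\mathcal{D}}^{\mathcal{V}})^2}{1-2d^{\mathcal{V}}_{\mathcal{D}_{\mathcal{X}}}}$: the numerator is decreasing in $\mathfrak{R}_{\mathcal{D}}^{\mathcal{V}}$ (as long as $1-2\mathfrak{R}_{\mathcal{D}}^{\mathcal{V}} \ge 0$, which is why the $\min\{\tfrac12,\cdot\}$ clip is needed) and the denominator is increasing in $d^{\mathcal{V}}_{\mathcal{D}_{\mathcal{X}}}$ (kept positive by the $\max\{0,\cdot\}$ clip), so both substitutions only increase $\mathcal{C}_{\mathcal{D}}^{S}$, yielding $\mathcal{C}_{\mathcal{D}}^{S} \le \mathcal{C}_{\rho}^{S}$.

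The main obstacle I anticipate is the careful monotonicity/clipping argument in the second step: one must verify that the map $(r,d)\mapsto 1-\tfrac{(1-2r)^2}{1-2d}$ is genuinely monotone in the right directions over the range where the substituted estimates live, and that the truncations $\min\{\tfrac12,\overline{\textnormal{KL}}(\cdot)\}$ and $\max\{0,\underline{\textnormal{KL}}(\cdot)\}$ are exactly what is required to keep $1-2r \ge 0$ and $1-2d > 0$ so that the substitution is valid and the inequality direction is preserved (in particular, if $\overline{\textnormal{KL}}(\hat{\mathfrak{R}}^{\mathcal{V}}_S\|\psi_r) > \tfrac12$ the clip replaces it by $\tfrac12$, making the numerator zero, which is still a valid upper bound on $\mathcal{C}_{\mathcal{D}}^{S}$ since $R_{\mathcal{D}}^{\mathcal{V}}<\tfrac12$ was assumed). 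A secondary technical point is ensuring the two PAC-Bayes-kl applications are legitimate — the disagreement term is a second-order (pairwise) quantity, so I must confirm the bound of Corollary~\ref{PAC-Bayes-kl Inequality based on Rényi Divergence} transfers to it, e.g.\ by treating the pair loss $\ell(h(\bm{x}^v),h'(\bm{x}^{v'}))$ as a $[0,1]$ loss of a "paired voter" drawn from $\mathcal{Q}_v^2$ under $\rho^2$, with $\psi_d$ absorbing the doubled divergence terms. Modulo these checks, chaining the two inequalities completes the proof.
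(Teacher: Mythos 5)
Your proposal is correct and matches the route the paper relies on: the theorem is stated without a separate proof, inheriting the oracle inequality $R_{\mathcal{D}}^{\mathcal{V}} \le \mathcal{C}_{\mathcal{D}}^{S}$ from the Chebyshev--Cantelli/margin argument of Lacasse et al.\ and Goyal et al.\ (valid in the binary setting, where your identity $\mathbb{E}[M_\rho^2]=1-2\,d^{\mathcal{V}}_{\mathcal{D}_{\mathcal{X}}}$ actually holds), and obtaining the empirical bound $\mathcal{C}_{\rho}^{S}$ exactly as you describe, via Corollary~\ref{PAC-Bayes-kl Inequality based on Rényi Divergence} for the Gibbs risk and Corollary~\ref{Hennequin_Bound_Macallester_dis} for the disagreement, inverted KL, a union bound at confidence $\delta/2$ each, and the monotonicity/clipping substitution. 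The only nit is your phrase that ``the denominator is increasing in $d^{\mathcal{V}}_{\mathcal{D}_{\mathcal{X}}}$'' --- $1-2d$ is decreasing in $d$ --- but since $\mathcal{C}_{\mathcal{D}}^{S}$ is itself decreasing in $d$, substituting the lower bound $\max\left[0,\underline{\textnormal{KL}}\left(\hat{d}_{S}\middle\|\psi_d\right)\right]$ still yields an upper bound, so your conclusion stands.
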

\begin{theorem}{Multi-view PAC-Bayesian C-Tandem Oracle Bound}\label{Multi-view PAC-Bayesian C-tandem Oracle Bound}
     Under the same assumption of Theorem~\ref{Multi-view PAC-Bayesian C-Bound}, we have:
    \begin{align} 
          R_{\mathcal{D}}^{\mathcal{V}} &\leq \underbrace{\frac{{e}^{\mathcal{V}}_{\mathcal{D}} - (\mathfrak{R}_{\mathcal{D}}^{\mathcal{V}})^{2}}{{e}^{\mathcal{V}}_{\mathcal{D}} - \mathfrak{R}_{\mathcal{D}}^{\mathcal{V}} + \frac{1}{4}} }_{\mathcal{C}_{\mathcal{D}}^{T}} \\ \nonumber &\leq \underbrace{ \frac{\overline{\textnormal{KL}}\left({\hat{e}}^{\mathcal{V}}_{S}\Big\|\psi_{{e}}\right)- \left[\underline{\textnormal{KL}}\left(\hat{\mathfrak{R}}^{\mathcal{V}}_{S}\Big\|\psi_{{r}}\right)\right]^2}{\overline{\textnormal{KL}}\left({\hat{e}}^{\mathcal{V}}_{S}\Big\|\psi_{{e}}\right) - \overline{\textnormal{KL}}\left(\hat{\mathfrak{R}}^{\mathcal{V}}_{S}\Big\|\psi_{{r}}\right) + \frac{1}{4}}}_{\mathcal{C}_{\rho}^{T}} 
    \end{align}\label{Eq-Pac-bayes-mv-C-tandem-Bound}
\end{theorem}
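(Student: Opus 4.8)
The plan is to obtain the oracle inequality from a one-sided Chebyshev (Cantelli) bound on a single scalar random variable, and then to transfer it to data by inverting the empirical Gibbs-risk and joint-error concentration inequalities established earlier.

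\textbf{Oracle bound.} On the probability space $(\bm{x},y)\sim\mathcal{D}$, introduce the multi-view weighted error $W(\bm{x},y)\triangleq\mathbb{E}_{\rho}\mathbb{E}_{\mathcal{Q}_v}[\mathbb{I}(h(\bm{x}^v)\neq y)]\in[0,1]$. If $\mathcal{B}_{\rho}$ misclassifies $(\bm{x},y)$, then the $\rho$-$\mathcal{Q}_v$ mass placed on labels $\neq y$ is at least $\tfrac12$, so $W(\bm{x},y)\geq\tfrac12$ and hence $R_{\mathcal{D}}^{\mathcal{V}}\leq\Pr_{\mathcal{D}}(W\geq\tfrac12)$. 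Squaring the expectation defining $W$ brings in an independent second view/voter pair, giving the two moment identities $\mathbb{E}_{\mathcal{D}}[W]=\mathfrak{R}_{\mathcal{D}}^{\mathcal{V}}$ and $\mathbb{E}_{\mathcal{D}}[W^{2}]=e_{\mathcal{D}}^{\mathcal{V}}$. Since $\mathfrak{R}_{\mathcal{D}}^{\mathcal{V}}<\tfrac12$ under our hypotheses (so the threshold exceeds the mean), Cantelli's inequality yields
\[
R_{\mathcal{D}}^{\mathcal{V}}\leq\Pr_{\mathcal{D}}\!\left(W\geq\tfrac12\right)\leq\frac{\mathrm{Var}(W)}{\mathrm{Var}(W)+\bigl(\tfrac12-\mathfrak{R}_{\mathcal{D}}^{\mathcal{V}}\bigr)^{2}}=\frac{e_{\mathcal{D}}^{\mathcal{V}}-(\mathfrak{R}_{\mathcal{D}}^{\mathcal{V}})^{2}}{e_{\mathcal{D}}^{\mathcal{V}}-\mathfrak{R}_{\mathcal{D}}^{\mathcal{V}}+\tfrac14},
\]
using $\mathrm{Var}(W)=e_{\mathcal{D}}^{\mathcal{V}}-(\mathfrak{R}_{\mathcal{D}}^{\mathcal{V}})^{2}$ and $\bigl(\tfrac12-\mathfrak{R}_{\mathcal{D}}^{\mathcal{V}}\bigr)^{2}=\tfrac14-\mathfrak{R}_{\mathcal{D}}^{\mathcal{V}}+(\mathfrak{R}_{\mathcal{D}}^{\mathcal{V}})^{2}$ and cancelling $(\mathfrak{R}_{\mathcal{D}}^{\mathcal{V}})^{2}$ in the denominator. (Equivalently the first inequality follows from the multi-view C-Bound $\mathcal{C}_{\mathcal{D}}^{S}$ of Theorem~\ref{Multi-view PAC-Bayesian C-Bound} by substituting $d_{\mathcal{D}_{\mathcal{X}}}^{\mathcal{V}}=2(\mathfrak{R}_{\mathcal{D}}^{\mathcal{V}}-e_{\mathcal{D}}^{\mathcal{V}})$ from $\mathfrak{R}_{\mathcal{D}}^{\mathcal{V}}=\tfrac12 d_{\mathcal{D}_{\mathcal{X}}}^{\mathcal{V}}+e_{\mathcal{D}}^{\mathcal{V}}$, but the Cantelli route is the one that also handles the multiclass case.)

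\textbf{Empirical bound.} I would invoke Corollary~\ref{PAC-Bayes-kl Inequality based on Rényi Divergence} for $\mathfrak{R}^{\mathcal{V}}$ together with its joint-error analogue $\mathrm{KL}(\hat{e}_{S}^{\mathcal{V}}\,\|\,e_{\mathcal{D}}^{\mathcal{V}})\leq\psi_e$ (obtained by running the same argument over the product of the voter and view classes, which is what yields the factor $2$ on the divergences in $\psi_e$; cf.\ Corollary~\ref{Pac-bayes-kl-mv-FO}). By the definitions of $\overline{\mathrm{KL}}$ and $\underline{\mathrm{KL}}$ and a union bound (absorbed into the logarithmic terms), with probability at least $1-\delta$ one has both $\underline{\mathrm{KL}}(\hat{\mathfrak{R}}_{S}^{\mathcal{V}}\|\psi_r)\leq\mathfrak{R}_{\mathcal{D}}^{\mathcal{V}}\leq\overline{\mathrm{KL}}(\hat{\mathfrak{R}}_{S}^{\mathcal{V}}\|\psi_r)$ and $e_{\mathcal{D}}^{\mathcal{V}}\leq\overline{\mathrm{KL}}(\hat{e}_{S}^{\mathcal{V}}\|\psi_e)$. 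It then remains to substitute these into the oracle bound. I would do it in a fixed order to preserve monotonicity: (i) the map $e\mapsto\frac{e-r^{2}}{e-r+1/4}$ is nondecreasing (derivative $(r-\tfrac12)^{2}/(\cdot)^{2}\geq0$) and its numerator stays nonnegative because $e_{\mathcal{D}}^{\mathcal{V}}\geq(\mathfrak{R}_{\mathcal{D}}^{\mathcal{V}})^{2}$ by Jensen, so raise $e_{\mathcal{D}}^{\mathcal{V}}$ to $\overline{\mathrm{KL}}(\hat{e}_{S}^{\mathcal{V}}\|\psi_e)$; (ii) in the numerator the coefficient is decreasing in the $r$ it multiplies, so lower the $r$ occurring as $r^{2}$ to $\underline{\mathrm{KL}}(\hat{\mathfrak{R}}_{S}^{\mathcal{V}}\|\psi_r)$; (iii) $e-r+\tfrac14$ is decreasing in $r$, so raise the $r$ occurring there to $\overline{\mathrm{KL}}(\hat{\mathfrak{R}}_{S}^{\mathcal{V}}\|\psi_r)$. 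Each step keeps the numerator $\geq0$ and the denominator positive (the latter holding at the true values since $e_{\mathcal{D}}^{\mathcal{V}}-\mathfrak{R}_{\mathcal{D}}^{\mathcal{V}}+\tfrac14\geq(\mathfrak{R}_{\mathcal{D}}^{\mathcal{V}}-\tfrac12)^{2}$), and the outcome is exactly $\mathcal{C}_{\rho}^{T}$; if the resulting denominator is non-positive the bound is vacuous and nothing is to prove.

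\textbf{Main obstacle.} The delicate step is (ii)--(iii): once the two occurrences of $\mathfrak{R}_{\mathcal{D}}^{\mathcal{V}}$ in the oracle expression are no longer constrained to be equal, the rational function is not jointly monotone, so one must treat the $r^{2}$-term and the $(-r)$-term separately and, crucially, substitute the $e$-bound \emph{first}, before decoupling the two $r$'s, so that every intermediate fraction remains a genuine upper bound and stays inside the region (numerator $\geq0$, denominator $>0$) on which these monotonicities hold — this is where $\mathfrak{R}_{\mathcal{D}}^{\mathcal{V}}<\tfrac12$, the Jensen inequality $e_{\mathcal{D}}^{\mathcal{V}}\geq(\mathfrak{R}_{\mathcal{D}}^{\mathcal{V}})^{2}$, and (if needed) clipping of the $\overline{\mathrm{KL}}$-terms are used. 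The remaining pieces — the two moment identities for $W$ and the KL-inversion — are routine given the earlier results.
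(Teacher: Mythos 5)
Your proof is correct and follows exactly the route the paper implicitly relies on: the paper gives no explicit proof of this theorem (it only cites Lacasse et al.'s PAC-Bound~1 and says it adapts it via Seeger's approach), but the ingredients you use are precisely the ones the paper supplies elsewhere — the moment identity $\mathbb{E}_{\mathcal{D}}[W^2]=e^{\mathcal{V}}_{\mathcal{D}}$ is Lemma~\ref{Tandem_Loss} in the appendix, the Cantelli step is the standard C-Bound derivation, and the KL-inversion with the factor-$2$ divergence in $\psi_e$ matches Corollary~\ref{Hennequin_Bound_Macallester_dis} together with Proposition~\ref{proposition_Hennequin}. Your care in ordering the substitutions (raise $e$ first, then decouple the two occurrences of $r$, lowering the squared one and raising the subtracted one) is exactly what is needed, and the sign computation $\partial f/\partial e=(r-\tfrac12)^2/(\cdot)^2$ is right. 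One caveat: your closing remark that a non-positive denominator makes the bound ``vacuous and nothing is to prove'' is not quite accurate — if $\overline{\textnormal{KL}}(\hat{e}^{\mathcal{V}}_{S}\|\psi_e)-\overline{\textnormal{KL}}(\hat{\mathfrak{R}}^{\mathcal{V}}_{S}\|\psi_r)+\tfrac14\le 0$ while the numerator is positive, the displayed $\mathcal{C}_{\rho}^{T}$ is negative and the asserted inequality is false rather than vacuous. This is really a defect of the theorem statement (the companion Theorem~\ref{Multi-view PAC-Bayesian C-Bound} guards against it by clipping with $\min[\tfrac12,\cdot]$ and $\max[0,\cdot]$, whereas the C-Tandem statement does not), so your proof is as complete as the claim permits; to be airtight one should either add the analogous clipping or restrict to the event $\overline{\textnormal{KL}}(\hat{\mathfrak{R}}^{\mathcal{V}}_{S}\|\psi_r)<\tfrac12$.
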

\section{Self-Bounding Algorithms}\label{Self-Bounding Algorithms}
\textbf{Optimization of PAC-Bayes-$\lambda$ inequality Bounds}. First, we propose an optimization approach based on the PAC-Bayes-$\lambda$ inequality. Unlike the optimization procedure proposed by \cite{Masegosa2020}, we impose constraints based on the work of \cite{Germain15a} and \cite{Viallard2011}. The bounds proposed with \cite{Thiemann17}'s approach parameterize the trade-off between empirical risk and divergence, introducing the $\lambda$ parameter. In the optimization procedure, the choice of $\lambda$ as a gradient parameter can be made in two ways: the primary method is outlined in our Algorithm~\ref{alg:compute_lambda}, with the secondary choice being to calculate it using the methods described by \cite{Thiemann17}. The entire optimization procedure is detailed in Appendix~\ref{Optimization of Inverted Multi-View KL Bounds}. 

We aim to minimize the following constrained objective functions derived from the Pac-Bayes-$\lambda$ inequality bounds from Corollaries~\ref{Pac-bayes-kl-mv-FO},~\ref{Pac-bayes-kl-mv-SO},~\ref{PAC-Bayes_lambda_binary classification}:
\begin{align}
\min_{\mathcal{Q}_v,\rho,\lambda}  \mathcal{R}  \text{ s.t. } &\left\{\left(\frac{\hat{\mathfrak{R}}_{S}^\mathcal{V}}{1-\frac{\lambda}{2}} + \frac{\psi_{r}}{\lambda \left(1-\frac{\lambda}{2}\right)}\right) \leq \frac{1}{2}\right\} \\ \nonumber &\to  \mathbf{B}_t \left(\frac{\hat{\mathfrak{R}}_{S}^\mathcal{V}}{1-\frac{\lambda}{2}} + \frac{\psi_{r}}{\lambda \left(1-\frac{\lambda}{2}\right)} - \frac{1}{2}\right)
\end{align},\label{min_R}

where $\mathbf{B}_t(a) = \begin{cases}
    -\frac{1}{t} \ln(-a), & \text{if } a \leq -\frac{1}{t^2}, \\
    t a - \frac{1}{t} \ln\left( \frac{1}{t^2} \right) + \frac{1}{t}, & \text{otherwise}.
\end{cases}$

 the log-barrier extension introduced by \cite{Kervadec17}. The log-barrier extension plays a dual role: it acts as a soft penalty function that enforces constraints indirectly by integrating them into the objective function as penalty terms. 
\begin{align}\label{min_e}
\min_{\mathcal{Q}_v,\rho,\lambda_1,\lambda_2} \mathcal{E} \;  \textnormal{s.t.} &\left\{
        \begin{array}{l}
            e \leq 1/4, \\
            d \leq 2\left(\sqrt{e}-e\right)
        \end{array}
    \right\} \\ \nonumber &\to 
  \mathbf{B}_t \left(e  - \frac{1}{4}\right) + \mathbf{B}_t \left( d - 2(\sqrt{e}-e)\right).
\end{align}
where $d = \frac{\hat{d}^{\mathcal{V}}_{\rho}}{1-\frac{\lambda_2}{2}} + \frac{\psi_d}{\lambda_2(1-\frac{\lambda_2}{2})} \textnormal{and} \; e = \frac{\hat{e}^{\mathcal{V}}_{\rho}}{1-\frac{\lambda_1}{2}} + \frac{\psi_e}{\lambda_1(1-\frac{\lambda_1}{2})} $
\begin{align}\label{min_e_second order}
    \min_{\mathcal{Q}_v,\rho,\lambda} \mathcal{E}_{\text{II}} \;  \textnormal{s.t.} &\left\{ \left(\frac{\hat{e}^{\mathcal{V}}_{\rho}}{1-\frac{\lambda}{2}} + \frac{\psi_e}{\lambda(1-\frac{\lambda}{2})} \right) \leq \frac{1}{4} \right\} \\ \nonumber &\to \mathbf{B}_t \left(\frac{\hat{e}^{\mathcal{V}}_{\rho}}{1-\frac{\lambda}{2}} + \frac{\psi_e}{\lambda(1-\frac{\lambda}{2})}  - \frac{1}{4}\right).
\end{align}
\begin{align}\label{min_R_second_order}
\min_{\mathcal{Q}_v,\rho,\lambda,\gamma} \mathcal{R}_{\textnormal{II}} \;  \textnormal{s.t.} &\left\{
        \begin{array}{l}
            r \leq 1/2, 
            d \leq 1/2
        \end{array}
    \right\} \\ \nonumber &\to \mathbf{B}_t\left(r-\frac{1}{2}\right) + \mathbf{B}_t\left(d-\frac{1}{2}\right).
\end{align}
where $r = \frac{\hat{\mathfrak{R}}^{\mathcal{V}}_{\rho}}{1-\frac{\lambda}{2}} + \frac{\psi_r}{\lambda(1-\frac{\lambda}{2})}$ and $d = \left(1-\frac{\gamma}{2}\right)\hat{d}_{\rho} - \frac{\psi_d}{\gamma}$.

\textbf{Optimization of Inverted KL Bounds}. The main challenge in optimizing the multi-view first- and second-order inverted KL bounds is to evaluate $\overline{\textnormal{KL}}$ and $\underline{\textnormal{KL}}$ and to compute their derivatives. To achieve this, we employ the bisection method proposed by \cite{Reeb18} for calculating $\textnormal{KL}$. This method is outlined in the functions Compute-$\overline{\textnormal{KL}}(q\|\psi)$, Compute-$\underline{\textnormal{KL}}(q\|\psi)$ of Algorithm~\ref{alg:kl_computation} in the Appendix~\ref{Optimization of Inverted Multi-View KL Bounds}. It involves iteratively refining an interval $[p_{\min}, p_{\max}]$ such that $p \in [p_{\min}, p_{\max}]$ and $\textnormal{KL}(q \parallel p) = \psi$. We aim to minimize the following constrained objective functions derived from the inverted KL bounds from Corollaries~\ref{Pac-bayes-kl-inv-mv-FO}, \ref{Pac-bayes-kl-inv-mv-SO}, \ref{Multi-view PAC-Bayes-KL_Inequality_binary} and \ref{Multi-view PAC-Bayes-KL_Inequality_binary}:
\begin{align}
   \min_{\mathcal{Q}_v,\rho} \mathcal{K}  \text{ s.t. } &\left\{\overline{\textnormal{KL}}\left(\hat{\mathfrak{R}}^{\mathcal{V}}_{\rho} \middle\| \psi_r \right) \leq \frac{1}{2}\right\} \\ \nonumber &\to  \mathbf{B}_t \left(\overline{\textnormal{KL}}\left(\hat{\mathfrak{R}}^{\mathcal{V}}_{\rho} \middle\| \psi_r \right) - \frac{1}{2}\right). 
\label{min_k}\end{align}
\begin{align}
\min_{\mathcal{Q}_v,\rho,\lambda_1,\lambda_2} \mathcal{K}^{u}\text{s.t.} &\left\{
        \begin{array}{l}
            e \leq 1/4, 
            d \leq 2\left(\sqrt{e}-e\right)
        \end{array}
    \right\}  \\ \nonumber &\to \mathbf{B}_t \left(e - \frac{1}{4}\right) + \mathbf{B}_t \left( d - 2(\sqrt{e}-e)\right).
\label{min_K_u}\end{align}

where $e = \overline{\textnormal{KL}}\left(\hat{e}^{\mathcal{V}}_{\rho}\middle\| \psi_e\right)$ and $d = \overline{\textnormal{KL}}\left(\hat{d}^{\mathcal{V}}_{\rho} \middle\| \psi_d\right)$
\begin{align}
\min_{\mathcal{Q}_v,\rho} \mathcal{K}_{\textnormal{II}} \text{ s.t. } &\left\{\overline{\textnormal{KL}}\left(e^{\mathcal{V}}_{\rho} \middle\| \psi_e \right) \leq \frac{1}{4}\right\} \\ \nonumber &\to  \mathbf{B}_t \left(\overline{\textnormal{KL}}\left(e^{\mathcal{V}}_{\rho} \middle\| \psi_e \right) - \frac{1}{4}\right),
\label{min_K_SO}\end{align}
\begin{align}
\min_{\mathcal{Q}_v,\rho} \mathcal{K}^{u}_{\textnormal{II}} \text{s.t.} &\left\{ 
        \begin{array}{l}
            r \leq 1/2, 
            d \leq 1/2
        \end{array}
     \right\} \\ \nonumber &\to \mathbf{B}_t\left(r - \frac{1}{2}\right) + \mathbf{B}_t\left(d - \frac{1}{2}\right).
\end{align}\label{min_K_SO_u}

where $r = \overline{\textnormal{KL}}\left(\hat{\mathfrak{R}}^{\mathcal{V}}_{\rho} \middle\| \psi_r \right)$ and $d = \underline{\textnormal{KL}}\left(\hat{d}^{\mathcal{V}}_{\rho} \middle\| \psi_r \right)$ 

\textbf{Optimization of Multi-view PAC-Bayesian C-Bound}. In this section, we present self-bounding algorithms to directly minimize the PAC-Bayesian C-Bounds. We aim at minimizing the following constraint optimization problem:
\begin{align}\label{min_C_bound}
     \min_{\mathcal{Q}_v,\rho}  \mathcal{C}^{S}_{\rho} \textnormal{s.t.} &\left\{\overline{\textnormal{KL}}\left(\hat{\mathfrak{R}}^{\mathcal{V}}_{\rho}\|\psi_{r}\right) \leq \frac{1}{2}\right\} \\ \nonumber &\to \mathbf{B}_{t}\left( \overline{\textnormal{KL}}\left(\hat{\mathfrak{R}}^{\mathcal{V}}_{\rho}\|\psi_{r}\right) - \frac{1}{2} \right). 
\end{align}
From the equation~\ref{Eq-Pac-bayes-mv-C-tandem-Bound} of Theorem~\ref{Multi-view PAC-Bayesian C-tandem Oracle Bound} we aim at minimizing the following constraint optimization problem:
\begin{align}\label{C_tandem_bound}
\min_{\mathcal{Q}_v,\rho} \mathcal{C}_{T}^{S}  \; \textnormal{s.t.} & \left\{
        \begin{array}{l}
            r \leq 1/2, 
            e \leq 1/4
        \end{array}
     \right\} \\ \nonumber &\to \mathbf{B}_t(r-\frac{1}{2}) + \mathbf{B}_t(e-\frac{1}{4}).
\end{align}
where $r = \overline{\textnormal{KL}}\left(\hat{\mathfrak{R}}^{\mathcal{V}}_{\rho} \middle\| \psi_r \right)$ and $e = \overline{\textnormal{KL}}\left(\hat{e}^{\mathcal{V}}_{\rho}\|\psi_{e}\right)$ 

\section{Experiments}
\label{Experiments}

In this section, we evaluate our proposed algorithms on multi-view datasets. Our experiments focus on two aspects: (1) analyzing the intra- and inter-view PAC-Bayesian bounds and (2) examining how the parameter $\alpha$ and the proportion of labeled data influence the bounds' values.

To assess the effectiveness of our approach, we employ a total of 10 datasets\footnote{Processed datasets are available for download at \url{https://osf.io/xh5qs/?view_only=966ab35b04bd4e478491038941f7c141}.}. While some datasets are inherently multi-view, others were originally mono-view and required transformation and feature extraction to fit our multi-view setting.\footnote{Each dataset contains $|V|$ views, along with a concatenated representation of all views.} Datasets with multiple classes were selected to facilitate the optimization of both multi-classification and binary classification bounds. For a detailed description of each dataset and its source, refer to Table~\ref{tab:datasets} in the Appendix. Following the methodology of \cite{Masegosa2020}, we evaluate the effectiveness of our proposed bounds.\footnote{The codebase and obtained results are available at \url{https://anonymous.4open.science/r/Multi-View-Majority-Vote-Learning-Algorithms-Direct-Minimization-of-PAC-Bayesian-Bounds-4B77}.} Additional details on the experimental setup and hyperparameter choices can be found in Appendix~\ref{sup_experimental}.

\paragraph{Results.} Figures~\ref{figure:mfeat-binary-4-9} and~\ref{figure:mfeat-mult} display the optimized Bayes risk and bound values for each of our proposed self-bounding algorithms, allowing comparisons across individual views, the concatenated view, and the multi-view setting. For single-view experiments, some methods were adapted from previous work \citep{Masegosa2020, Viallard2011}, while others, such as the first- and second-order inverted KL bounds, are newly implemented.

We primarily report results for the “mfeat-large” dataset in both binary and multi-class classification scenarios. This dataset offers the most views and the largest number of samples among those we considered, providing a rich multi-view setting and enhancing the statistical significance of our results. To save space, only the concatenated and multi-view subplots are included for the multi-class plot; Results on the other datasets are included in the Appendix~\ref{App:results}.

We note that the slashed bars (\textbackslash) represent the Bayes risks $R^{\mathcal{V}}_T$ on the test data. Our multi-view method generally outperforms single-view approaches and the concatenated view in terms of Bayes risk. While we obtain tighter bounds than single-view methods, concatenated views frequently achieve tighter bounds than our multi-view approach, except the C-Tandem Oracle bound, particularly on datasets where views are artificially constructed by splitting single data sources (e.g., dividing images into quadrants as in \cite{Goyal19}, or extracting feature sets from images). This advantage likely stems from the additional divergence terms in multi-view bounds: $D_\alpha(\rho \| \pi)$ for the view-level distribution and $\mathbb{E}_{\rho}[D_{\alpha_v}(\mathcal{Q}_v \| \mathcal{P}_v)]$ for within-view distributions. On naturally multi-view data, however, our approach shows comparable or better performance while providing theoretical guarantees that concatenation cannot offer (ALOI dataset, Figure~\ref{figure:aloi-mult-full-plot}).

Across all settings—including single views, the concatenated view, and our multi-view method—the first-order bound yields the tightest results, which aligns with Theorem 2 from \cite{Viallard2011}, stating that when $R_{\mathcal{D}}^{\mathcal{V}} > d_{\mathcal{D}}^{\mathcal{V}}$, the first-order bound is theoretically the tightest compared to the C-bound and the second-order bound. In practice, we observe this condition holds in most of our experiments.

We explore a broader range of configurations, including variations in $\alpha$ and labeled data proportions, as shown in Appendix~\ref{App:analysis}. Figure~\ref{figure:vary-labeled} highlights the effects of varying the proportion of labeled data ($s\_labeled\_size$) on bound values, with a fixed $\alpha = 1.1$. As labeled data increases, bounds improve, with $\mathcal{K}^{u}_{\textnormal{II}}$ (Equation~\ref{Eq-Pac-bayes-dis-inv-mv-SO}) achieving tighter values than $\mathcal{K}_{\textnormal{II}}$ (Equation~\ref{Eq-Pac-bayes-joint-inv-mv-SO}), especially with more unlabeled data and the inclusion of the disagreement term. This suggests that incorporating disagreement enhances the bound's tightness by enabling learning from unlabeled data, which aligns with the theoretical difference between Equations~\ref{Eq-Pac-bayes-dis-inv-mv-SO} and~\ref{Eq-Pac-bayes-joint-inv-mv-SO}.

In contrast, Figure~\ref{figure:vary-alpha} examines the effect of varying $\alpha$ on bound values with a fixed $s\_labeled\_size = 0.5$. The bounds generally tighten around $\alpha = 1.1$, suggesting that this value provides an optimal trade-off for controlling the Rényi divergence. This observation is further supported by the results in Figure~\ref{figure:mfeat-binary-4-9-alpha1.1vsoptim}, where setting $\alpha$ as an optimizable parameter leads to convergence near 1.1. This trend highlights the importance of $\alpha$ in regulating bound tightness. Under this optimization setting, Table~\ref{tab:optim_aplha} demonstrates that different views may converge to different $\alpha_v$ values. These results confirm our hypothesis that heterogeneous views have intrinsically different complexities and characteristics that benefit from view-specific regularization strengths.

\begin{figure}[h!]
\centering
\includegraphics[width=\linewidth]{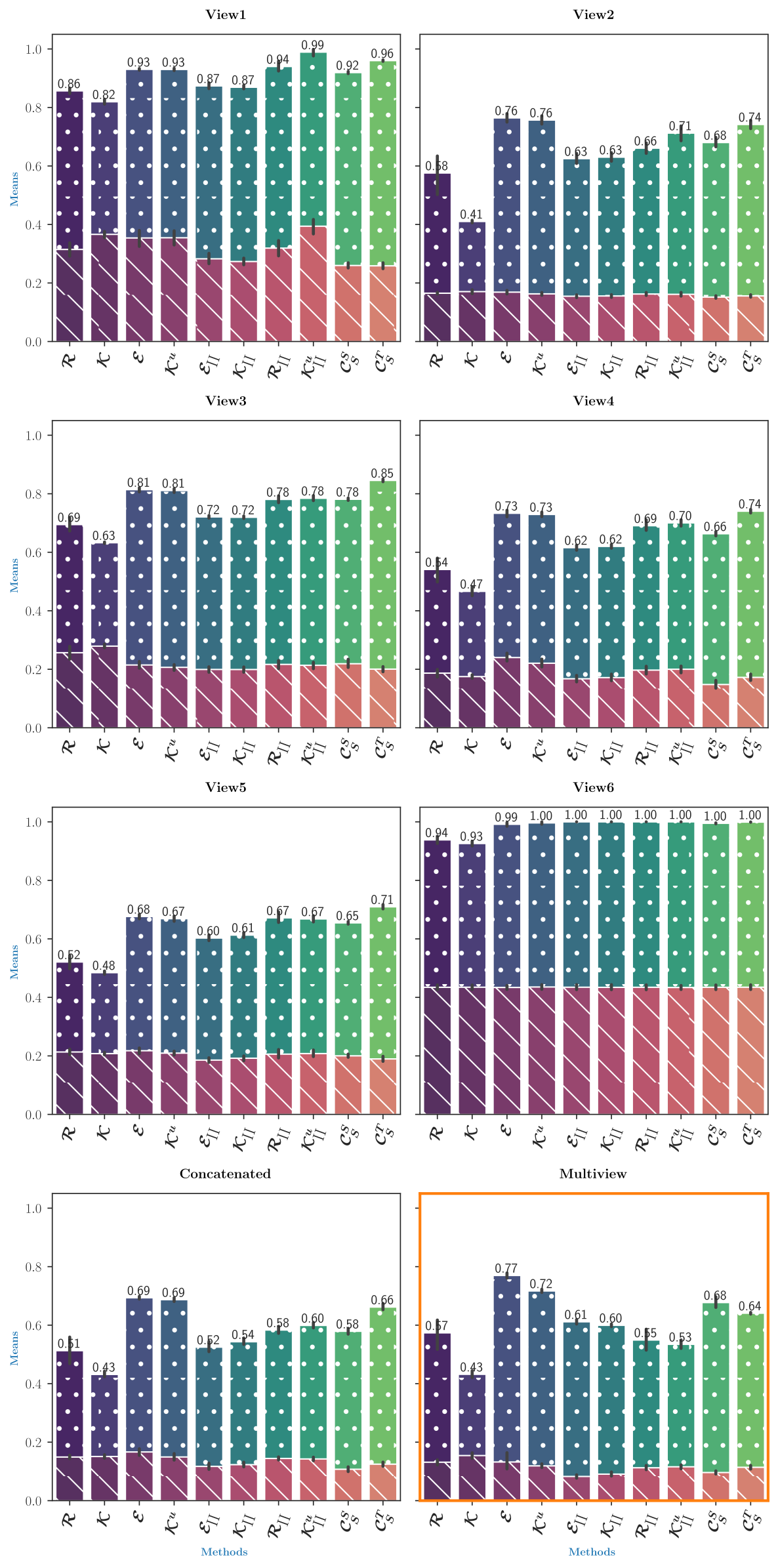}
\caption{Test error rates and PAC-Bayesian bounds for binary classification between labels 4 and 9 on the mfeat-large dataset, averaged over 10 runs. Each subplot represents a different view. Dotted bars ($\bullet$) indicate bounds, while slashed bars (\textbackslash) represent risks. Colors distinguish between bounds, risks, and methods within each subplot. The experiment uses KL divergence for single-view and Rényi divergence ($\alpha=1.1$) for multi-view, with a stump configuration and 50\% labeled data. Multi-view results are highlighted in orange.}
\label{figure:mfeat-binary-4-9}
\end{figure}
\begin{figure}[h!]
\centering
\includegraphics[width=\linewidth]{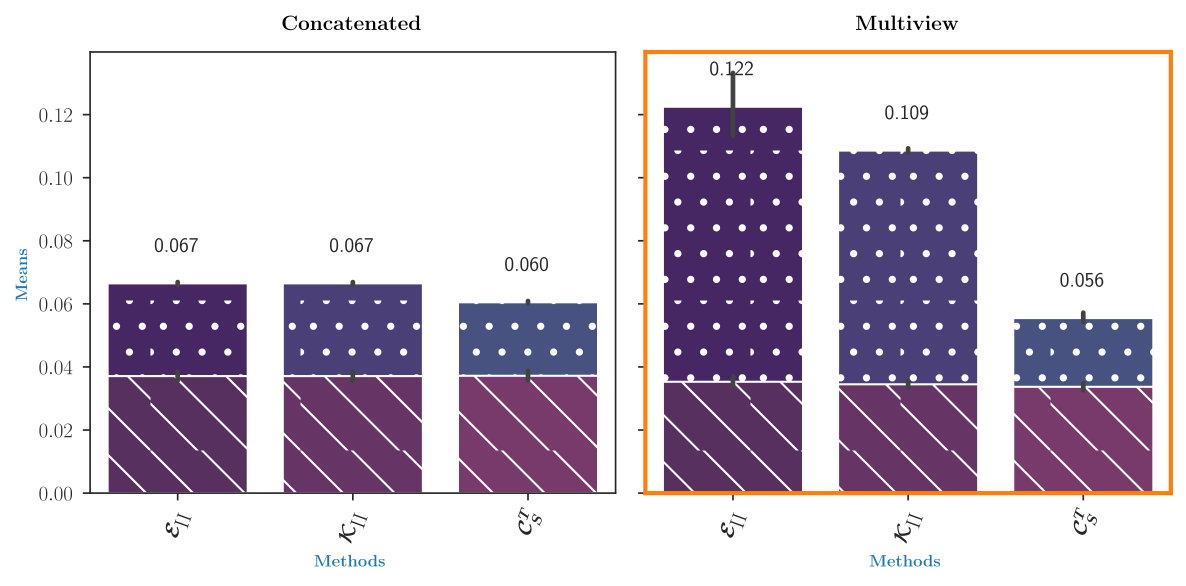}
\caption{Test error rates and PAC-Bayesian bounds for multiclass classification on the mfeat-large dataset, averaged over 10 runs. Only the concatenated view and the multi-view are shown (full plot with all views in Appendix). The experiment uses the same configuration as Figure~\ref{figure:mfeat-binary-4-9} with modifications to aid multi-class learning, strong learners with depth=20, and 100\% labeled data. Multi-view results are highlighted in orange.}
\label{figure:mfeat-mult}
\end{figure}
\begin{table}[h!]
\centering
\caption{The final optimized $\alpha_v$ per view and $\alpha$ for multi-view. Same configuration as Figure~\ref{figure:mfeat-binary-4-9} with optimizable $\alpha$.}
\begin{tabular}{p{0.4cm}p{0.55cm}p{0.55cm}p{0.55cm}p{0.55cm}p{0.55cm}p{0.6cm}|p{0.55cm}}
\toprule
B & $\alpha_1$ & $\alpha_2$ & $\alpha_3$ & $\alpha_4$ & $\alpha_5$ & $\alpha_6$ & $\alpha$\\
\midrule
\boldmath{$\mathcal{R}$} & 1.254 & 1.183 & 1.203 & 1.175 & 1.176 & 1.254 & 1.189 \\
\boldmath{$\mathcal{K}$} & 1.127 & 1.082 & 1.113 & 1.059 & 1.063 & 1.128 & 1.073 \\
\boldmath{$\mathcal{E}$} & 2.134 & 2.179 & 2.148 & 2.168 & 2.152 & 2.093 & 2.215 \\
\boldmath{$\mathcal{K}^{u}$} & 1.062 & 1.062 & 1.061 & 1.059 & 1.056 & 1.059 & 1.124 \\
\boldmath{$\mathcal{E}_{\textnormal{II}}$} & 1.097 & 1.086 & 1.087 & 1.082 & 1.087 & 1.095 & 1.105 \\
\boldmath{$\mathcal{K}_{\textnormal{II}}$} & 1.068 & 1.073 & 1.069 & 1.067 & 1.071 & 1.060 & 1.107 \\
\boldmath{$\mathcal{R}_{\textnormal{II}}$} & 1.425 & 1.464 & 1.470 & 1.435 & 1.467 & 1.431 & 1.439 \\
\boldmath{$\mathcal{K}^{u}_{\textnormal{II}}$} & 1.306 & 1.346 & 1.402 & 1.315 & 1.360 & 1.332 & 1.299 \\
\boldmath{$\mathcal{C}_{\rho}^{S}$} & 1.739 & 1.740 & 1.740 & 1.740 & 1.739 & 1.739 & 1.739 \\
\boldmath{$\mathcal{C}_{S}^{T}$} & 1.000 & 1.000 & 1.000 & 1.000 & 1.000 & 1.000 & 1.000 \\
\bottomrule
\end{tabular}
\label{tab:optim_aplha}
\end{table}

\section{Conclusion}

This paper establishes the first complete optimization framework for PAC-Bayesian multi-view learning, bridging the gap between theory and practice. Our self-bounding algorithms enable direct minimization of multi-view PAC-Bayesian bounds, including the \emph{in-probability} versions of \cite{Goyal17,Goyal19}'s formulations, and single-view methods \citep{Masegosa2020,Viallard2011}, making these theoretical guarantees practically accessible with a unified framework. Beyond implementation, we extended the framework with Rényi divergence and view-specific $\alpha_v$ parameters, allowing more flexibility in experimenting with different divergence measures. These advances are comprehensively summarized in Table~\ref{tab:comparison_detailed} in the Appendix.

Our framework's limitations point to promising research directions. The restriction to $\alpha > 1$ in our multi-view Rényi change of measure inequality (Appendix~\ref{proof of General Multiview PAC-Bayesian Theorem}) yields values exceeding KL divergence as noted by \cite{Erven212}, potentially loosening bounds compared to KL-based alternatives. Exploring $\alpha = \frac{1}{2}$ (corresponding to Hellinger distance) could provide tighter bounds but requires resolving theoretical constraints in the change of measure inequality. Finally, integrating adversarial robustness techniques \citep{SUN22} could strengthen view-specific learning through stability-based approaches, particularly when some views are noisy or corrupted (see Appendix~\ref{App:poisoning}).

\bibliographystyle{apalike}   %
\bibliography{main}            

\begin{thebibliography}{}

\bibitem[Alquier, 2024]{Alquier_2024}
Alquier, P. (2024).
\newblock User-friendly introduction to pac-bayes bounds.
\newblock {\em Foundations and Trends® in Machine Learning}, 17(2):174–303.

\bibitem[Blum and Mitchell, 1998]{Blum1998}
Blum, A. and Mitchell, T. (1998).
\newblock Combining labeled and unlabeled data with co-training.
\newblock In {\em Proceedings of the eleventh annual conference on Computational learning theory}, pages 92--100.

\bibitem[Bégin et~al., 2016]{begin16}
Bégin, L., Germain, P., Laviolette, F., and Roy, J.-F. (2016).
\newblock Pac-bayesian bounds based on the rényi divergence.
\newblock In Gretton, A. and Robert, C.~C., editors, {\em Proceedings of the 19th International Conference on Artificial Intelligence and Statistics}, volume~51 of {\em Proceedings of Machine Learning Research}, pages 435--444, Cadiz, Spain. PMLR.

\bibitem[Catoni et~al., 2007]{Catoni07}
Catoni, O., Euclid, P., Library, C.~U., and Press, D.~U. (2007).
\newblock {\em PAC-Bayesian Supervised Classification: The Thermodynamics of Statistical Learning}.
\newblock Lecture notes-monograph series. Cornell University Library.

\bibitem[Chua et~al., 2009]{nus-wide-civr09}
Chua, T.-S., Tang, J., Hong, R., Li, H., Luo, Z., and Zheng, Y.-T. (July 8-10, 2009).
\newblock Nus-wide: A real-world web image database from national university of singapore.
\newblock In {\em Proc. of ACM Conf. on Image and Video Retrieval (CIVR'09)}, Santorini, Greece.

\bibitem[Cohen et~al., 2017]{cohen2017emnist}
Cohen, G., Afshar, S., Tapson, J., and van Schaik, A. (2017).
\newblock Emnist: an extension of mnist to handwritten letters.

\bibitem[Dalalyan and Tsybakov, 2008]{Dalalyan_2008}
Dalalyan, A. and Tsybakov, A.~B. (2008).
\newblock Aggregation by exponential weighting, sharp pac-bayesian bounds and sparsity.
\newblock {\em Machine Learning}, 72(1–2):39–61.

\bibitem[Dasgupta et~al., 2001]{Dasgupta2001}
Dasgupta, S., Littman, M., and McAllester, D. (2001).
\newblock Pac generalization bounds for co-training.
\newblock In Dietterich, T., Becker, S., and Ghahramani, Z., editors, {\em Advances in Neural Information Processing Systems}, volume~14. MIT Press.

\bibitem[Donsker and Varadhan, 1975]{Donsker1975AsymptoticEO}
Donsker, M.~D. and Varadhan, S. R.~S. (1975).
\newblock Asymptotic evaluation of certain markov process expectations for large time, i.
\newblock {\em Communications on Pure and Applied Mathematics}, 28(1):1--47.

\bibitem[Duin, 1998]{misc_multiple_features_72}
Duin, R. (1998).
\newblock {Multiple Features}.
\newblock UCI Machine Learning Repository.
\newblock {DOI}: https://doi.org/10.24432/C5HC70.

\bibitem[Dziugaite and Roy, 2017]{Gintare}
Dziugaite, G.~K. and Roy, D.~M. (2017).
\newblock Computing nonvacuous generalization bounds for deep (stochastic) neural networks with many more parameters than training data.
\newblock {\em CoRR}, abs/1703.11008.

\bibitem[Fang et~al., 2023]{Fang2023}
Fang, U., Li, M., Li, J., Gao, L., Jia, T., and Zhang, Y. (2023).
\newblock A comprehensive survey on multi-view clustering.
\newblock {\em IEEE Transactions on Knowledge and Data Engineering}, 35(12):12350--12368.

\bibitem[Farquhar et~al., 2005]{Farquhar2005}
Farquhar, J., Hardoon, D., Meng, H., Shawe-taylor, J., and Szedm\'{a}k, S. (2005).
\newblock Two view learning: Svm-2k, theory and practice.
\newblock In Weiss, Y., Sch\"{o}lkopf, B., and Platt, J., editors, {\em Advances in Neural Information Processing Systems}, volume~18. MIT Press.

\bibitem[Foong et~al., 2021]{Foong21}
Foong, A., Bruinsma, W., Burt, D., and Turner, R. (2021).
\newblock How tight can pac-bayes be in the small data regime?
\newblock {\em Advances in Neural Information Processing Systems}, 34:4093--4105.

\bibitem[Germain et~al., 2015a]{germain2015a}
Germain, P., Lacasse, A., Laviolette, F., March, M., and Roy, J.-F. (2015a).
\newblock Risk bounds for the majority vote: From a pac-bayesian analysis to a learning algorithm.
\newblock {\em Journal of Machine Learning Research}, 16(26):787--860.

\bibitem[Germain et~al., 2015b]{Germain15a}
Germain, P., Lacasse, A., Laviolette, F., March, M., and Roy, J.-F. (2015b).
\newblock Risk bounds for the majority vote: From a pac-bayesian analysis to a learning algorithm.
\newblock {\em Journal of Machine Learning Research}, 16(26):787--860.

\bibitem[Germain et~al., 2009]{Germain09}
Germain, P., Lacasse, A., Laviolette, F., and Marchand, M. (2009).
\newblock Pac-bayesian learning of linear classifiers.
\newblock In {\em Proceedings of the 26th Annual International Conference on Machine Learning}, ICML '09, page 353–360, New York, NY, USA. Association for Computing Machinery.

\bibitem[Goyal et~al., 2017]{Goyal17}
Goyal, A., Morvant, E., Germain, P., and Amini, M.-R. (2017).
\newblock Pac-bayesian analysis for a two-step hierarchical multiview learning approach.
\newblock In Ceci, M., Hollm{\'e}n, J., Todorovski, L., Vens, C., and D{\v{z}}eroski, S., editors, {\em Machine Learning and Knowledge Discovery in Databases}, pages 205--221, Cham. Springer International Publishing.

\bibitem[Goyal et~al., 2019a]{Goyal19}
Goyal, A., Morvant, E., Germain, P., and Amini, M.-R. (2019a).
\newblock Multiview boosting by controlling the diversity and the accuracy of view-specific voters.
\newblock {\em Neurocomputing}, 358:81--92.

\bibitem[Goyal et~al., 2019b]{GOYAL201981}
Goyal, A., Morvant, E., Germain, P., and Amini, M.-R. (2019b).
\newblock Multiview boosting by controlling the diversity and the accuracy of view-specific voters.
\newblock {\em Neurocomputing}, 358:81--92.

\bibitem[Grunwald et~al., 2021]{pmlr-v134-grunwald21a}
Grunwald, P., Steinke, T., and Zakynthinou, L. (2021).
\newblock Pac-bayes, mac-bayes and conditional mutual information: Fast rate bounds that handle general vc classes.
\newblock In Belkin, M. and Kpotufe, S., editors, {\em Proceedings of Thirty Fourth Conference on Learning Theory}, volume 134 of {\em Proceedings of Machine Learning Research}, pages 2217--2247. PMLR.

\bibitem[Kervadec et~al., 2019]{Kervadec17}
Kervadec, H., Dolz, J., Yuan, J., Desrosiers, C., Granger, E., and Ayed, I.~B. (2019).
\newblock Log-barrier constrained cnns.
\newblock {\em CoRR}, abs/1904.04205.

\bibitem[Lacasse et~al., 2006]{Lacasse06}
Lacasse, A., Laviolette, F., Marchand, M., Germain, P., and Usunier, N. (2006).
\newblock Pac-bayes bounds for the risk of the majority vote and the variance of the gibbs classifier.
\newblock In Sch\"{o}lkopf, B., Platt, J., and Hoffman, T., editors, {\em Advances in Neural Information Processing Systems}, volume~19. MIT Press.

\bibitem[Langford, 2005]{Langford05a}
Langford, J. (2005).
\newblock Tutorial on practical prediction theory for classification.
\newblock {\em Journal of Machine Learning Research}, 6(10):273--306.

\bibitem[Langford and Shawe-Taylor, 2002]{Langford02}
Langford, J. and Shawe-Taylor, J. (2002).
\newblock Pac-bayes \&amp; margins.
\newblock In Becker, S., Thrun, S., and Obermayer, K., editors, {\em Advances in Neural Information Processing Systems}, volume~15. MIT Press.

\bibitem[Laviolette et~al., 2011]{Laviolette11}
Laviolette, F., Marchand, M., and Roy, J.-F. (2011).
\newblock From pac-bayes bounds to quadratic programs for majority votes.
\newblock In {\em Proceedings of the 28th International Conference on International Conference on Machine Learning}, ICML'11, page 649–656, Madison, WI, USA. Omnipress.

\bibitem[Laviolette et~al., 2017]{Laviolette17}
Laviolette, F., Morvant, E., Ralaivola, L., and Roy, J.-F. (2017).
\newblock Risk upper bounds for general ensemble methods with an application to multiclass classification.
\newblock {\em Neurocomputing}, 219:15--25.

\bibitem[LeCun et~al., 2010]{lecun2010mnist}
LeCun, Y., Cortes, C., and Burges, C. (2010).
\newblock Mnist handwritten digit database.
\newblock {\em ATT Labs [Online]. Available: http://yann.lecun.com/exdb/mnist}, 2.

\bibitem[Li and Wang, 2008]{corel}
Li, J. and Wang, J. (2008).
\newblock Real-time computerized annotation of pictures.
\newblock {\em IEEE transactions on pattern analysis and machine intelligence}, 30:985--1002.

\bibitem[Loshchilov and Hutter, 2017]{Ilya17}
Loshchilov, I. and Hutter, F. (2017).
\newblock Fixing weight decay regularization in adam.
\newblock {\em CoRR}, abs/1711.05101.

\bibitem[Ma et~al., 2024]{Ma2024}
Ma, G., Lu, J., Fang, Z., Liu, F., and Zhang, G. (2024).
\newblock Multiview classification through learning from interval-valued data.
\newblock {\em IEEE Transactions on Neural Networks and Learning Systems}, pages 1--15.

\bibitem[Masegosa et~al., 2020]{Masegosa2020}
Masegosa, A., Lorenzen, S., Igel, C., and Seldin, Y. (2020).
\newblock Second order pac-bayesian bounds for the weighted majority vote.
\newblock In Larochelle, H., Ranzato, M., Hadsell, R., Balcan, M., and Lin, H., editors, {\em Advances in Neural Information Processing Systems}, volume~33, pages 5263--5273. Curran Associates, Inc.

\bibitem[Matkowski and R{\"a}tz, 1997]{Matkowski1997}
Matkowski, J. and R{\"a}tz, J. (1997).
\newblock Convexity of power functions with respect to symmetric homogeneous means.
\newblock In Bandle, C., Everitt, W.~N., Losonczi, L., and Walter, W., editors, {\em General Inequalities 7}, pages 231--247, Basel. Birkh{\"a}user Basel.

\bibitem[Maurer, 2004]{maurer2004notepacbayesiantheorem}
Maurer, A. (2004).
\newblock A note on the pac bayesian theorem.

\bibitem[McAllester, 1998]{McAllester99}
McAllester, D.~A. (1998).
\newblock Some pac-bayesian theorems.
\newblock In {\em Proceedings of the Eleventh Annual Conference on Computational Learning Theory}, COLT' 98, page 230–234, New York, NY, USA. Association for Computing Machinery.

\bibitem[McAllester, 2003]{McAllester03a}
McAllester, D.~A. (2003).
\newblock Pac-bayesian stochastic model selection.
\newblock {\em Machine Learning}, 51:5--21.

\bibitem[Orabona and Tommasi, 2017]{COCOB}
Orabona, F. and Tommasi, T. (2017).
\newblock Backprop without learning rates through coin betting.
\newblock {\em CoRR}, abs/1705.07795.

\bibitem[Padmanabhan et~al., 2016]{reuters}
Padmanabhan, D., Bhat, S., Shevade, S.~K., and Narahari, Y. (2016).
\newblock Topic model based multi-label classification from the crowd.
\newblock {\em CoRR}, abs/1604.00783.

\bibitem[Paszke et~al., 2019]{paszke2019pytorch}
Paszke, A., Gross, S., Massa, F., Lerer, A., Bradbury, J., Chanan, G., Killeen, T., Lin, Z., Gimelshein, N., Antiga, L., et~al. (2019).
\newblock Pytorch: An imperative style, high-performance deep learning library.
\newblock NeurIPS.

\bibitem[P\'{e}rez-Ortiz et~al., 2021]{Perez2021}
P\'{e}rez-Ortiz, M., Rivasplata, O., Shawe-Taylor, J., and Szepesv\'{a}ri, C. (2021).
\newblock Tighter risk certificates for neural networks.
\newblock {\em J. Mach. Learn. Res.}, 22(1).

\bibitem[Redko et~al., 2019]{Redko2019}
Redko, I., Morvant, E., Habrard, A., Sebban, M., and Bennani, Y. (2019).
\newblock {\em Advances in domain adaptation theory}.
\newblock Elsevier.

\bibitem[Reeb et~al., 2018]{Reeb18}
Reeb, D., Doerr, A., Gerwinn, S., and Rakitsch, B. (2018).
\newblock Learning gaussian processes by minimizing pac-bayesian generalization bounds.
\newblock In Bengio, S., Wallach, H., Larochelle, H., Grauman, K., Cesa-Bianchi, N., and Garnett, R., editors, {\em Advances in Neural Information Processing Systems}, volume~31. Curran Associates, Inc.

\bibitem[Rosenberg and Bartlett, 2007]{rosenberg07a}
Rosenberg, D.~S. and Bartlett, P.~L. (2007).
\newblock The rademacher complexity of co-regularized kernel classes.
\newblock In Meila, M. and Shen, X., editors, {\em Proceedings of the Eleventh International Conference on Artificial Intelligence and Statistics}, volume~2 of {\em Proceedings of Machine Learning Research}, pages 396--403, San Juan, Puerto Rico. PMLR.

\bibitem[Rosenberg et~al., 2009]{Rosenberg2009}
Rosenberg, D.~S., Sindhwani, V., Bartlett, P.~L., and Niyogi, P. (2009).
\newblock Multiview point cloud kernels for semisupervised learning [lecture notes].
\newblock {\em IEEE Signal Processing Magazine}, 26(5):145--150.

\bibitem[Schlimmer, 1987]{misc_mushroom_73}
Schlimmer, J. (1987).
\newblock {Mushroom}.
\newblock UCI Machine Learning Repository.
\newblock {DOI}: https://doi.org/10.24432/C5959T.

\bibitem[Schubert and Zimek, 2010]{elki-aloi-dataset_2010}
Schubert, E. and Zimek, A. (2010).
\newblock Elki multi-view clustering data sets based on the amsterdam library of object images (aloi) (1.0) [data set].

\bibitem[Seeger, 2003]{Seeger03}
Seeger, M. (2003).
\newblock Pac-bayesian generalisation error bounds for gaussian process classification.
\newblock {\em J. Mach. Learn. Res.}, 3(null):233–269.

\bibitem[Sindhwani and Rosenberg, 2008]{Sindhwani2008}
Sindhwani, V. and Rosenberg, D.~S. (2008).
\newblock An rkhs for multi-view learning and manifold co-regularization.
\newblock In {\em Proceedings of the 25th International Conference on Machine Learning}, ICML '08, page 976–983, New York, NY, USA. Association for Computing Machinery.

\bibitem[Strodthoff et~al., 2023a]{ptb-xl+}
Strodthoff, N., Mehari, T., Nagel, C., Aston, P., Sundar, A., Graff, C., Kanters, J., Haverkamp, W., Doessel, O., Loewe, A., Bär, M., and Schaeffter, T. (2023a).
\newblock Ptb-xl+, a comprehensive electrocardiographic feature dataset (version 1.0.1).

\bibitem[Strodthoff et~al., 2023b]{ptb-xl}
Strodthoff, N., Mehari, T., Nagel, C., Aston, P.~J., Sundar, A., Graff, C., Kanters, J.~K., Haverkamp, W., D{\"o}ssel, O., Loewe, A., B{\"a}r, M., and Schaeffter, T. (2023b).
\newblock {PTB-XL+}, a comprehensive electrocardiographic feature dataset.
\newblock {\em Scientific Data}, 10(1):279.

\bibitem[Sun, 2011]{Sun2011}
Sun, S. (2011).
\newblock Multi-view laplacian support vector machines.
\newblock In Tang, J., King, I., Chen, L., and Wang, J., editors, {\em Advanced Data Mining and Applications}, pages 209--222, Berlin, Heidelberg. Springer Berlin Heidelberg.

\bibitem[Sun, 2013]{Sun2013}
Sun, S. (2013).
\newblock A survey of multi-view machine learning.
\newblock {\em Neural computing and applications}, 23:2031--2038.

\bibitem[Sun and Shawe-Taylor, 2010]{Shiliang2010}
Sun, S. and Shawe-Taylor, J. (2010).
\newblock Sparse semi-supervised learning using conjugate functions.
\newblock {\em Journal of Machine Learning Research}, 11(84):2423--2455.

\bibitem[Sun et~al., 2017]{SUN17}
Sun, S., Shawe-Taylor, J., and Mao, L. (2017).
\newblock Pac-bayes analysis of multi-view learning.
\newblock {\em Information Fusion}, 35:117--131.

\bibitem[Sun et~al., 2022]{SUN22}
Sun, S., Yu, M., Shawe-Taylor, J., and Mao, L. (2022).
\newblock Stability-based pac-bayes analysis for multi-view learning algorithms.
\newblock {\em Information Fusion}, 86-87:76--92.

\bibitem[Szedmak and Shawe-Taylor, 2007]{SZEDMAK2007}
Szedmak, S. and Shawe-Taylor, J. (2007).
\newblock Synthesis of maximum margin and multiview learning using unlabeled data.
\newblock {\em Neurocomputing}, 70(7):1254--1264.
\newblock Advances in Computational Intelligence and Learning.

\bibitem[Tang et~al., 2023]{TANG2023}
Tang, J., He, H., Fu, S., Tian, Y., Kou, G., and Xu, S. (2023).
\newblock Robust multi-view learning with the bounded linex loss.
\newblock {\em Neurocomputing}, 518:384--400.

\bibitem[Thiemann et~al., 2017]{Thiemann17}
Thiemann, N., Igel, C., Wintenberger, O., and Seldin, Y. (2017).
\newblock A strongly quasiconvex pac-bayesian bound.
\newblock In {\em International Conference on Algorithmic Learning Theory}, pages 466--492. PMLR.

\bibitem[Tian et~al., 2021]{TIAN2021}
Tian, Y., Fu, S., and Tang, J. (2021).
\newblock Incomplete-view oriented kernel learning method with generalization error bound.
\newblock {\em Information Sciences}, 581:951--977.

\bibitem[Truong, 2025]{truong2025rademachercomplexitybasedgeneralizationbounds}
Truong, L.~V. (2025).
\newblock On rademacher complexity-based generalization bounds for deep learning.

\bibitem[van Erven and Harremo{\"e}s, 2012]{Erven212}
van Erven, T. and Harremo{\"e}s, P. (2012).
\newblock R{\'e}nyi divergence and kullback-leibler divergence.
\newblock {\em IEEE Transactions on Information Theory}, 60:3797--3820.

\bibitem[Viallard et~al., 2021]{Viallard2011}
Viallard, P., Germain, P., Habrard, A., and Morvant, E. (2021).
\newblock Self-bounding majority vote learning algorithms by the direct minimization of a tight pac-bayesian c-bound.
\newblock In Oliver, N., P{\'e}rez-Cruz, F., Kramer, S., Read, J., and Lozano, J.~A., editors, {\em Machine Learning and Knowledge Discovery in Databases. Research Track}, pages 167--183, Cham. Springer International Publishing.

\bibitem[Viallard et~al., 2023]{Viallard24}
Viallard, P., Germain, P., Habrard, A., and Morvant, E. (2023).
\newblock A general framework for the practical disintegration of pac-bayesian bounds.
\newblock {\em Mach. Learn.}, 113(2):519–604.

\bibitem[Wu et~al., 2021]{WU2021}
Wu, Y.-S., Masegosa, A., Lorenzen, S., Igel, C., and Seldin, Y. (2021).
\newblock Chebyshev-cantelli pac-bayes-bennett inequality for the weighted majority vote.
\newblock In Ranzato, M., Beygelzimer, A., Dauphin, Y., Liang, P., and Vaughan, J.~W., editors, {\em Advances in Neural Information Processing Systems}, volume~34, pages 12625--12636. Curran Associates, Inc.

\bibitem[Xiao et~al., 2017]{xiao2017/online}
Xiao, H., Rasul, K., and Vollgraf, R. (2017).
\newblock Fashion-mnist: a novel image dataset for benchmarking machine learning algorithms.

\bibitem[Xu et~al., 2013]{XU013}
Xu, C., Tao, D., and Xu, C. (2013).
\newblock A survey on multi-view learning.
\newblock {\em arXiv preprint arXiv:1304.5634}.

\bibitem[Zantedeschi et~al., 2021]{ZAN2021}
Zantedeschi, V., Viallard, P., Morvant, E., Emonet, R., Habrard, A., Germain, P., and Guedj, B. (2021).
\newblock Learning stochastic majority votes by minimizing a pac-bayes generalization bound.
\newblock In {\em Proceedings of the 35th International Conference on Neural Information Processing Systems}, NIPS '21, Red Hook, NY, USA. Curran Associates Inc.

\bibitem[Zhao et~al., 2017]{ZHAO2017}
Zhao, J., Xie, X., Xu, X., Sun, S., and Wang, Y. (2017).
\newblock Multi-view learning overview: Recent progress and new challenges.
\newblock {\em Information Fusion}, 38:43--54.

\end{thebibliography}

\clearpage
\appendix
\thispagestyle{empty}

\onecolumn
\aistatstitle{Supplementary Materials}


\section{Notation Reference}\label{App:notation}

\subsection*{Basic Notation}
\begin{table}[H]
\centering
\caption{Basic learning framework notation}
\label{tab:basic_notation}
\begin{tabular}{@{}p{0.2\textwidth}p{0.75\textwidth}@{}}
\toprule
\textbf{Symbol} & \textbf{Description} \\ 
\midrule
$\mathcal{X} \subseteq \mathbb{R}^d$ & $d$-dimensional input space \\
$\mathcal{Y} \subseteq \mathbb{N}$ & Finite label space \\
$\mathcal{D}$ & Unknown data distribution on $\mathcal{X} \times \mathcal{Y}$ \\
$\mathcal{D}_{\mathcal{X}}$ & Marginal distribution on $\mathcal{X}$ \\
$S = \{(\bm{x}_i, y_i)\}_{i=1}^m$ & Learning sample of size $m$ \\
$\mathcal{H}$ & Hypothesis set (voters $h: \mathcal{X} \rightarrow \mathcal{Y}$) \\
$\mathbb{I}(\cdot)$ & Indicator function \\
$\ell(h(\bm{x}),y)$ & 0-1 loss: $\mathbb{I}(h(\bm{x}) \neq y)$ \\
\bottomrule
\end{tabular}
\end{table}
\subsection*{Multi-View Specific Notation}
\begin{table}[H]
\centering
\caption{Multi-view specific notation}
\label{tab:multiview_notation}
\begin{tabular}{@{}p{0.2\textwidth}p{0.75\textwidth}@{}}
\toprule
\textbf{Symbol} & \textbf{Description} \\ 
\midrule
$V \geq 2$ & Number of views \\
$[\![V]\!]$ & Set $\{1, 2, \ldots, V\}$ \\
$v \in [\![V]\!]$ & View index \\
$\mathcal{X}^v \subset \mathbb{R}^{d_v}$ & Input space for view $v$ with dimension $d_v$ \\
$d = d_1 \times \cdots \times d_V$ & Combined dimensions of all views \\
$\bm{x}^v$ & Data instance from view $v$ \\
$\mathcal{H}_v$ & View-specific hypothesis set \\
$S = \{(\bm{x}_i^v, y_i)\}_{i=1}^m$ & View-specific labeled sample \\
\bottomrule
\end{tabular}
\end{table}
\subsection*{Prior and Posterior Distributions}
\begin{table}[H]
\centering
\caption{Prior and posterior distributions}
\label{tab:distributions}
\begin{tabular}{@{}p{0.2\textwidth}p{0.75\textwidth}@{}}
\toprule
\textbf{Symbol} & \textbf{Description} \\ 
\midrule
$\mathcal{P}$ & Prior distribution over $\mathcal{H}$ (single-view) \\
$\mathcal{Q}$ & Posterior distribution over $\mathcal{H}$ (single-view) \\
$\mathcal{P}_v$ & Prior distribution for view $v$ \\
$\mathcal{Q}_v$ & Posterior distribution for view $v$ \\
$\pi$ & Hyper-prior distribution over views $[\![V]\!]$ \\
$\rho$ & Hyper-posterior distribution over views $[\![V]\!]$ \\
\bottomrule
\end{tabular}
\end{table}
\subsection*{Expectation Abbreviations}
\begin{table}[H]
\centering
\caption{Expectation notation abbreviations}
\label{tab:expectations}
\begin{tabular}{@{}p{0.2\textwidth}p{0.75\textwidth}@{}}
\toprule
\textbf{Symbol} & \textbf{Description} \\ 
\midrule
$\mathbb{E}\mathbb{E}[\cdot]$ & $\mathbb{E}[\mathbb{E}[\cdot]]$ (double expectation) \\
$\mathbb{E}_{\mathcal{D}}[\cdot]$ & $\mathbb{E}_{(\bm{x}^v,y) \sim \mathcal{D}}[\cdot]$ \\
$\mathbb{E}_{\mathcal{D}_{\mathcal{X}}}[\cdot]$ & $\mathbb{E}_{\bm{x} \sim \mathcal{D}_{\mathcal{X}}}[\cdot]$ \\
$\mathbb{E}_S[\cdot]$ & $\mathbb{E}_{S \sim \mathcal{D}^m}[\cdot]$ \\
$\mathbb{E}_{\rho}[\cdot]$ & $\mathbb{E}_{v \sim \rho}[\cdot]$ \\
$\mathbb{E}_{\mathcal{Q}}[\cdot]$ & $\mathbb{E}_{h \sim \mathcal{Q}}[\cdot]$ \\
$\mathbb{E}_{\rho^2}[\cdot]$ & $\mathbb{E}_{(v,v') \sim \rho^2}[\cdot]$ \\
$\mathbb{E}_{\mathcal{Q}^2}[\cdot]$ & $\mathbb{E}_{(h,h') \sim \mathcal{Q}^2}[\cdot]$ \\
\bottomrule
\end{tabular}
\end{table}
\subsection*{Classifiers and Risks}
\begin{table}[H]
\centering
\caption{Classifiers and risk measures}
\label{tab:risks}
\begin{tabular}{@{}p{0.2\textwidth}p{0.75\textwidth}@{}}
\toprule
\textbf{Symbol} & \textbf{Description} \\ 
\midrule
$\mathcal{B}_{\mathcal{Q}}(\bm{x})$ & Bayes (weighted majority vote) classifier: $\argmax_{y \in \mathcal{Y}} \mathbb{E}_{h \sim \mathcal{Q}}[\mathbb{I}(h(\bm{x}) = y)]$ \\[0.5em]
$\mathcal{B}_{\rho}(\bm{x}^v)$ & Multi-view Bayes classifier: $\argmax_{y \in \mathcal{Y}} \mathbb{E}_{v \sim \rho} \mathbb{E}_{h \sim \mathcal{Q}_v}[\mathbb{I}(h(\bm{x}^v) = y)]$ \\[0.5em]
$R_{\mathcal{D}}$ & True risk (single-view) \\
$R_{\mathcal{D}}^{\mathcal{V}}$ & Multi-view true risk \\
$\hat{R}_S$ & Empirical risk (single-view) \\
$\hat{R}_S^{\mathcal{V}}$ & Multi-view empirical risk \\
$\mathfrak{R}_{\mathcal{D}}$ & Gibbs risk (single-view): $\mathbb{E}_{\mathcal{D}}\mathbb{E}_{\mathcal{Q}}[\ell(h(\bm{x}),y)]$ \\
$\mathfrak{R}_{\mathcal{D}}^{\mathcal{V}}$ & Multi-view Gibbs risk: $\mathbb{E}_{\mathcal{D}}\mathbb{E}_{\rho}\mathbb{E}_{\mathcal{Q}_v}[\ell(h(\bm{x}^v),y)]$ \\
$\hat{\mathfrak{R}}_S^{\mathcal{V}}$ & Multi-view empirical Gibbs risk \\
\bottomrule
\end{tabular}
\end{table}
\subsection*{Error Decomposition Terms}
\begin{table}[H]
\centering
\caption{Error decomposition terms}
\label{tab:error_decomposition}
\begin{tabular}{@{}p{0.2\textwidth}p{0.75\textwidth}@{}}
\toprule
\textbf{Symbol} & \textbf{Description} \\ 
\midrule
$e_{\mathcal{D}}$ & Expected joint error (single-view) \\[0.3em]
$e_{\mathcal{D}}^{\mathcal{V}}$ & Multi-view expected joint error: $\mathbb{E}_{\mathcal{D}}\mathbb{E}_{\rho^2}\mathbb{E}_{\mathcal{Q}_v^2}[\ell(h(\bm{x}^v),y) \times \ell(h'(\bm{x}^{v'}),y)]$ \\[0.5em]
$\hat{e}_S^{\mathcal{V}}$ & Multi-view empirical joint error \\[0.3em]
$d_{\mathcal{D}_{\mathcal{X}}}$ & Expected disagreement (single-view) \\[0.3em]
$d_{\mathcal{D}_{\mathcal{X}}}^{\mathcal{V}}$ & Multi-view expected disagreement: $\mathbb{E}_{\mathcal{D}_{\mathcal{X}}}\mathbb{E}_{\rho^2}\mathbb{E}_{\mathcal{Q}_v^2}[\ell(h(\bm{x}^v),h'(\bm{x}^{v'}))]$ \\[0.5em]
$\hat{d}_S^{\mathcal{V}}$ & Multi-view empirical disagreement \\
\bottomrule
\end{tabular}
\end{table}
\subsection*{Divergence Measures}
\begin{table}[H]
\centering
\caption{Divergence measures}
\label{tab:divergences}
\begin{tabular}{@{}p{0.2\textwidth}p{0.75\textwidth}@{}}
\toprule
\textbf{Symbol} & \textbf{Description} \\ 
\midrule
$\text{KL}(Q \| P)$ & Kullback-Leibler divergence: $\mathbb{E}_{h \sim Q}[\ln \frac{Q(h)}{P(h)}]$ \\[0.5em]
$D_\alpha(Q \| P)$ & Rényi divergence ($\alpha > 1$): $\frac{1}{\alpha - 1} \ln \mathbb{E}_{h \sim P}\left[\left(\frac{Q(h)}{P(h)}\right)^\alpha\right]$ \\[0.5em]
$\alpha$ & Rényi divergence parameter (global) \\
$\alpha_v$ & View-specific Rényi divergence parameter for view $v$ \\
\bottomrule
\end{tabular}
\end{table}
\subsection*{Inverted KL Functions}
\begin{table}[H]
\centering
\caption{Inverted KL functions}
\label{tab:inverted_kl}
\begin{tabular}{@{}p{0.2\textwidth}p{0.75\textwidth}@{}}
\toprule
\textbf{Symbol} & \textbf{Description} \\ 
\midrule
$\overline{\text{KL}}(q \| \psi)$ & Upper inverted KL: $\max\{p \in (0,1) \mid \text{KL}(q \| p) \leq \psi\}$ \\[0.3em]
$\underline{\text{KL}}(q \| \psi)$ & Lower inverted KL: $\min\{p \in (0,1) \mid \text{KL}(q \| p) \leq \psi\}$ \\
\bottomrule
\end{tabular}
\end{table}
\subsection*{Bound-Related Terms}
\begin{table}[H]
\centering
\caption{Bound-related complexity terms and parameters}
\label{tab:bound_terms}
\begin{tabular}{@{}p{0.2\textwidth}p{0.75\textwidth}@{}}
\toprule
\textbf{Symbol} & \textbf{Description} \\ 
\midrule
$\delta$ & Confidence parameter (probability of failure) \\
$m$ & Number of labeled samples \\
$n$ & Number of unlabeled samples (when applicable) \\[0.3em]
$\psi_r$ & Complexity term for Gibbs risk: $\left[\mathbb{E}_{\rho}[D_{\alpha_v}(\mathcal{Q}_v \| \mathcal{P}_v)] + D_\alpha(\rho \| \pi) + \ln(2\sqrt{m}/\delta)\right]/m$ \\[0.5em]
$\psi_e$ & Complexity term for joint error \\
$\psi_d$ & Complexity term for disagreement \\
$\lambda, \lambda_1, \lambda_2$ & Trade-off parameters in $(0,2)$ for PAC-Bayes-$\lambda$ inequality \\
$\gamma$ & Parameter for lower bound in PAC-Bayes-$\lambda$ inequality \\
\bottomrule
\end{tabular}
\end{table}
\subsection*{Bound Names}
\begin{table}[H]
\centering
\caption{Bound names and their descriptions}
\label{tab:bound_names}
\begin{tabular}{@{}p{0.2\textwidth}p{0.75\textwidth}@{}}
\toprule
\textbf{Symbol} & \textbf{Description} \\ 
\midrule
$\mathcal{R}$ & First-order bound (PAC-Bayes-$\lambda$) \\
$\mathcal{E}$ & First-order bound with joint error and disagreement \\
$\mathcal{K}$ & First-order bound (inverted KL) \\
$\mathcal{K}^u$ & First-order bound with joint error and disagreement (inverted KL) \\
$\mathcal{E}_{\text{II}}$ & Second-order bound with joint error \\
$\mathcal{R}_{\text{II}}$ & Second-order bound with disagreement (binary) \\
$\mathcal{K}_{\text{II}}$ & Second-order bound (inverted KL) \\
$\mathcal{K}^u_{\text{II}}$ & Second-order bound with disagreement (inverted KL, binary) \\
$\mathcal{C}_{\mathcal{D}}^S$ & C-Bound (population version) \\
$\mathcal{C}_{\rho}^S$ & C-Bound (empirical version) \\
$\mathcal{C}_{\mathcal{D}}^T$ & C-Tandem Oracle Bound (population version) \\
$\mathcal{C}_{\rho}^T$ & C-Tandem Oracle Bound (empirical version) \\
\bottomrule
\end{tabular}
\end{table}
\subsection*{Optimization-Related}
\begin{table}[H]
\centering
\caption{Optimization-related notation}
\label{tab:optimization}
\begin{tabular}{@{}p{0.2\textwidth}p{0.75\textwidth}@{}}
\toprule
\textbf{Symbol} & \textbf{Description} \\ 
\midrule
$\mathbf{B}_t(a)$ & Log-barrier extension function for constraint handling \\
$t$ & Barrier parameter \\
\bottomrule
\end{tabular}
\end{table}

\subsection*{Multi-view hierarchy}

\begin{figure}[H]
      \centering
      \begin{subfigure}[t]{0.49\textwidth}
          \centering
          \includegraphics[width=\linewidth]{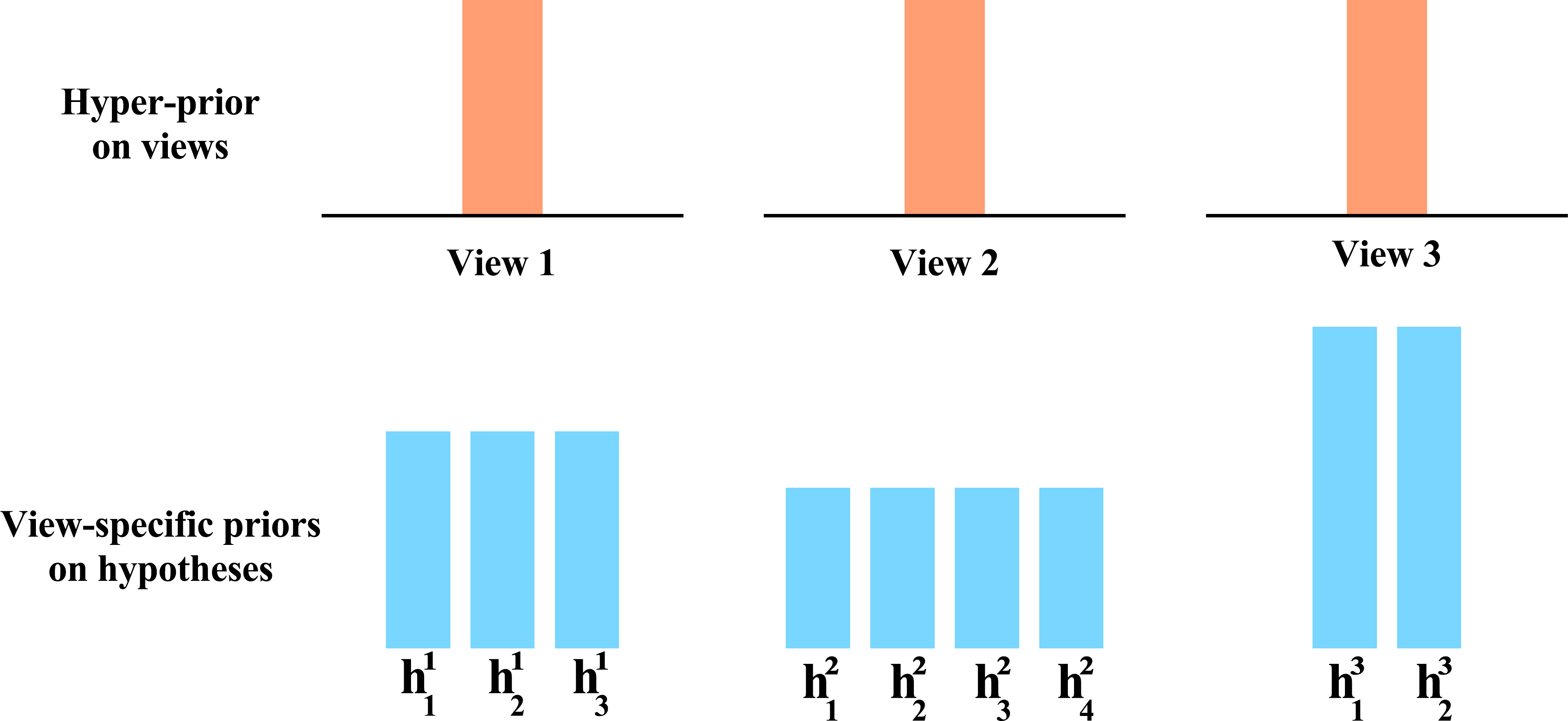}

          \caption{Before Learning}
          \label{fig:before_learning}
      \end{subfigure}
      \hfill
      \begin{subfigure}[t]{0.49\textwidth}
          \centering
          \includegraphics[width=\linewidth]{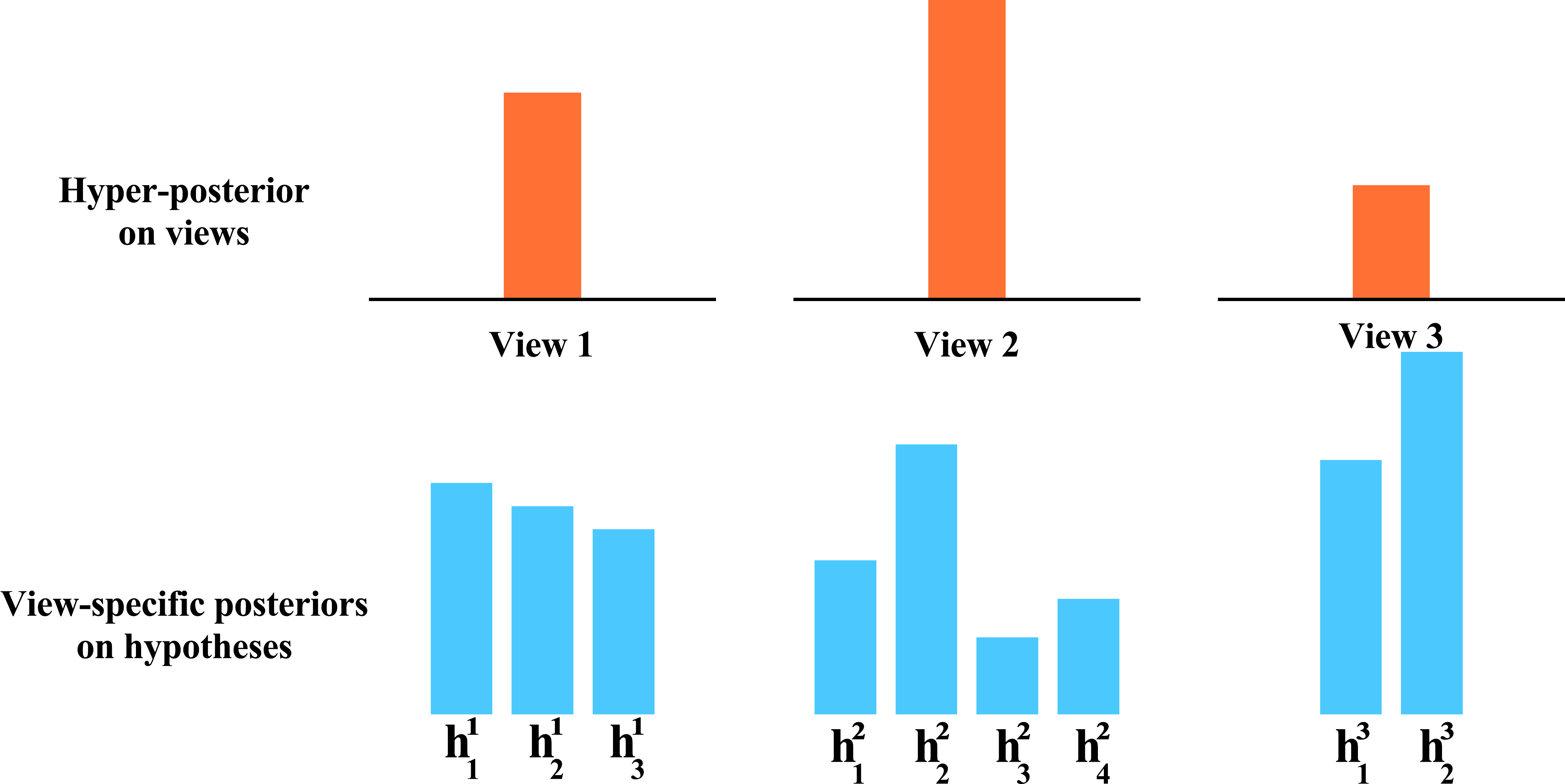}
          \caption{After Learning}
          \label{fig:after_learning}
      \end{subfigure}
      \caption{Hierarchical structure of multi-view distributions for $V=3$ views (adapted from \cite{Goyal17}). Each view has voters $\mathcal{H}_v = \{h_1^v, \dots, h_2^v\}$ with prior $\mathcal{P}_v$ before learning \textbf{(a, blue)} updated to a posterior $\mathcal{Q}_v$ after learning \textbf{(b, blue)}. And a hyper-prior $\pi$ over views \textbf{(a, orange)} is updated to hyper-posterior $\rho$ \textbf{(b, orange)}. Rectangle heights represent probability weights assigned to voters and views.}
      \label{fig:mv-hierarchy}
  \end{figure}

\section{Mathematical Tools}

\label{math_tools}

\begin{theorem}{Markov's Inequality.}\label{Markov's Inequality}
    For any random variable \(X\) such that \(\mathbb{E}[|X|] = \mu\), for any \(a > 0\), we have
\[
\mathbb{P}\{|X| \geq a\} \leq \frac{\mu}{a}.
\]

\end{theorem}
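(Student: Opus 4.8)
The plan is to run the standard indicator–function argument. Fix $a > 0$ and consider the nonnegative random variable $a\,\mathbb{I}(|X| \geq a)$. The key observation is the pointwise inequality
\[
a\,\mathbb{I}(|X| \geq a) \;\leq\; |X|,
\]
which holds for every outcome: on the event $\{|X| \geq a\}$ the left-hand side equals $a$, which is at most $|X|$ there, and on the complement the left-hand side is $0 \leq |X|$. Hence the inequality is valid everywhere, not merely almost surely.

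Next I would take expectations of both sides and invoke monotonicity of expectation together with linearity, which gives
\[
a\,\mathbb{E}\big[\mathbb{I}(|X| \geq a)\big] \;\leq\; \mathbb{E}[|X|] \;=\; \mu .
\]
Since $\mathbb{E}[\mathbb{I}(|X| \geq a)] = \mathbb{P}\{|X| \geq a\}$ by definition of the expectation of an indicator, dividing through by $a > 0$ yields $\mathbb{P}\{|X| \geq a\} \leq \mu/a$, as claimed. An equivalent route, if one prefers to avoid the indicator shorthand, is to split the integral defining $\mu = \mathbb{E}[|X|]$ over $\{|X| \geq a\}$ and its complement, discard the (nonnegative) contribution from the complement, and lower-bound $|X|$ by $a$ on the remaining piece.

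There is essentially no obstacle here: the statement is elementary and the argument is a two-line computation. The only points requiring minor care are that the bound $a\,\mathbb{I}(|X|\geq a) \leq |X|$ must be verified on \emph{both} the event and its complement so that it holds identically, and that the expectations involved are well defined — which is guaranteed because all quantities are nonnegative and $\mu = \mathbb{E}[|X|]$ is assumed finite. I would present the indicator version as the main proof for brevity and perhaps remark that it specializes to the familiar form for discrete or absolutely continuous $X$ without change.
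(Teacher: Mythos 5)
Your proof is correct: the pointwise bound $a\,\mathbb{I}(|X|\geq a)\leq|X|$ followed by monotonicity of expectation is the standard and complete argument. The paper states Markov's inequality only as a background tool in its appendix and offers no proof of its own, so there is nothing to compare against; your two-line derivation is exactly what one would expect and fills the (intentional) omission correctly.
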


\begin{theorem}\label{Second Order Markov’s Inequality}{Second Order Markov’s Inequality.}
    For any random variable \(X\) with a finite second moment, i.e., \(\mathbb{E}[X^2] < \infty\), and for any \(a > 0\), we have
\[
\mathbb{P}\{X \geq a\} \leq \frac{\mathbb{E}[X^2]}{a^2}.
\]
\end{theorem}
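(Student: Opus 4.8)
The plan is to obtain the second-order bound as an immediate consequence of the ordinary Markov inequality (Theorem~\ref{Markov's Inequality}) applied to the squared variable. The key observation is that, because $a > 0$, the event $\{X \geq a\}$ is contained in the event $\{X^2 \geq a^2\}$: if $X \geq a$ then $X \geq a > 0$, and squaring an inequality between nonnegative numbers preserves its direction, so $X^2 \geq a^2$. Equivalently, $\{X \geq a\} \subseteq \{|X| \geq a\} = \{X^2 \geq a^2\}$.

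First I would set $Y \triangleq X^2$ and note that $Y$ is nonnegative with $\mathbb{E}[|Y|] = \mathbb{E}[X^2] < \infty$ by hypothesis, so Theorem~\ref{Markov's Inequality} applies to $Y$ with threshold $a^2 > 0$, giving $\mathbb{P}\{X^2 \geq a^2\} \leq \mathbb{E}[X^2]/a^2$. Then I would combine this with the event inclusion above and monotonicity of probability to conclude $\mathbb{P}\{X \geq a\} \leq \mathbb{P}\{X^2 \geq a^2\} \leq \mathbb{E}[X^2]/a^2$, which is exactly the claimed inequality.

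There is no substantive obstacle here; the statement is essentially a one-line corollary of the first-order Markov inequality. The only point deserving care is the event-inclusion step, where positivity of $a$ is used to legitimately pass from $X \geq a$ to $X^2 \geq a^2$ (this step fails for $a \leq 0$, e.g.\ for large negative values of $X$). The assumption $\mathbb{E}[X^2] < \infty$ enters only to guarantee that the right-hand side is finite and that Markov's inequality is non-vacuous.
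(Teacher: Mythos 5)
Your proof is correct: the event inclusion $\{X \geq a\} \subseteq \{X^2 \geq a^2\}$ for $a > 0$ followed by an application of Theorem~\ref{Markov's Inequality} to $Y = X^2$ is the standard derivation, and every step is justified. The paper states this result as a known mathematical tool without providing its own proof, so there is nothing to compare against; your argument is the canonical one and fills that gap correctly.
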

\begin{theorem}{Jensen's Inequality.}\label{Jensen's Inequality}
    For any random variable \(X\), and for any concave function \(\varphi\), we have
\[
\varphi(\mathbb{E}[X]) \geq \mathbb{E}[\varphi(X)].
\]
Additionally, for any convex function \(\varphi\), we have
\[
\varphi(\mathbb{E}[X]) \leq \mathbb{E}[\varphi(X)].
\]
\end{theorem}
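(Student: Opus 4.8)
The plan is to prove the convex case first and then recover the concave case by applying it to $-\varphi$. The central tool is the \emph{supporting line} characterization of convexity: at any interior point of its domain, a convex function lies on or above one of its (sub-)tangent lines. This single geometric fact, combined with linearity of expectation, delivers the whole result.

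First I would set $\mu \triangleq \mathbb{E}[X]$, assuming $\mu$ lies in the interior of the interval on which $\varphi$ is convex so that the construction below applies and the relevant expectations are finite. Since $\varphi$ is convex, its left and right derivatives exist at $\mu$, and for any slope $a$ lying between them the affine function $\ell(x) = \varphi(\mu) + a\,(x-\mu)$ is a supporting line, i.e. $\varphi(x) \geq \varphi(\mu) + a\,(x-\mu)$ for every $x$ in the domain. Establishing this inequality is the heart of the argument and the only place where convexity is genuinely exploited.

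Second, I would substitute the random variable $X$ into the supporting-line inequality pointwise, obtaining $\varphi(X) \geq \varphi(\mu) + a\,(X-\mu)$ almost surely, and then take expectations of both sides. By linearity, the right-hand side becomes $\varphi(\mu) + a\,(\mathbb{E}[X] - \mu) = \varphi(\mu)$ because $\mathbb{E}[X] = \mu$, which yields $\mathbb{E}[\varphi(X)] \geq \varphi(\mathbb{E}[X])$, exactly the convex claim. For the concave case, I would observe that $\varphi$ concave means $-\varphi$ is convex; applying the inequality just proved to $-\varphi$ and negating both sides gives $\varphi(\mathbb{E}[X]) \geq \mathbb{E}[\varphi(X)]$, as stated.

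The main obstacle will be justifying the existence of the supporting line cleanly, namely showing that a convex function on an interval admits at each interior point a slope sandwiched between its one-sided derivatives so that the resulting tangent line stays below the graph. Everything after that step is pure substitution and linearity of expectation. A secondary technical point is integrability: I would assume $\mathbb{E}[X]$ and $\mathbb{E}[\varphi(X)]$ exist so that the expectation step is valid, which is implicit in the statement's use of $\mathbb{E}[\cdot]$.
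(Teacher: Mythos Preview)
Your proof is correct and is the standard supporting-line argument for Jensen's inequality. The paper, however, does not actually prove this statement: it is listed in the Mathematical Tools appendix as a known result and invoked without proof, so there is no ``paper's own proof'' to compare against. Your argument is exactly the textbook one (convexity $\Rightarrow$ supporting affine minorant at $\mu=\mathbb{E}[X]$, then take expectations), and the technical caveats you flag about integrability and $\mu$ lying in the interior of the domain are the right ones.
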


\begin{theorem}{Cantelli-Chebyshev Inequality.}
    For any random variable \(X\) such that \(\mathbb{E}[X] = \mu\) and \(\operatorname{Var}[X] = \sigma^2\), and for any \(a > 0\), we have
\[
\mathbb{P}\{X - \mu \geq a\} \leq \frac{\sigma^2}{\sigma^2 + a^2}.
\]

\end{theorem}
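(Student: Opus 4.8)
The plan is to obtain this one-sided tail bound via a shift-and-square trick feeding into Markov's inequality. First I would reduce to the centered case: put $Y \triangleq X - \mu$, so that $\mathbb{E}[Y] = 0$ and $\operatorname{Var}[Y] = \mathbb{E}[Y^2] = \sigma^2$, and the claim becomes $\mathbb{P}\{Y \geq a\} \leq \sigma^2/(\sigma^2 + a^2)$ for every $a > 0$. The degenerate case $\sigma^2 = 0$ is trivial ($Y = 0$ almost surely, so both sides are $0$), hence I would assume $\sigma^2 > 0$ from here on.

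Next, I would introduce a free parameter $t > 0$ and observe that on the event $\{Y \geq a\}$ we have $Y + t \geq a + t > 0$, hence $(Y+t)^2 \geq (a+t)^2$; this gives the set inclusion $\{Y \geq a\} \subseteq \{(Y+t)^2 \geq (a+t)^2\}$. Applying Markov's inequality (stated above) to the nonnegative variable $(Y+t)^2$, together with $\mathbb{E}[(Y+t)^2] = \mathbb{E}[Y^2] + 2t\,\mathbb{E}[Y] + t^2 = \sigma^2 + t^2$, yields
\[
\mathbb{P}\{Y \geq a\} \;\leq\; \frac{\sigma^2 + t^2}{(a+t)^2} \;=:\; f(t), \qquad t > 0.
\]
Equivalently, one may invoke the second-order Markov inequality stated above directly on $Y + t$ to reach the same display in one step.

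Finally I would optimize over the free parameter: differentiating $f$ shows its unique stationary point on $(0,\infty)$ is $t^\star = \sigma^2/a > 0$, and the sign of $f'$ confirms it is the global minimum; substituting back gives $f(t^\star) = \sigma^2/(\sigma^2 + a^2)$. Since $\mathbb{P}\{Y \geq a\} \leq f(t)$ holds for all admissible $t$, it holds at $t^\star$, and undoing the centering $Y = X - \mu$ recovers exactly the stated inequality.

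I do not expect a real obstacle here, since this is a classical estimate; the only point deserving a moment's care is checking that the optimizer $t^\star = \sigma^2/a$ is strictly positive (needed so the set inclusion in the second step is valid) and that it is genuinely the minimizer of $f$ on $(0,\infty)$ rather than a saddle or boundary artifact — both are immediate once $f'$ is computed. If one prefers to avoid even the one-line calculus, one can simply fix $t = \sigma^2/a$ at the outset and verify the resulting inequality algebraically.
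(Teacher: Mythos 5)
Your proof is correct: the shift-and-square argument with the free parameter $t$, followed by optimization at $t^\star = \sigma^2/a$, is the classical derivation of Cantelli's one-sided inequality, and your handling of the degenerate case $\sigma^2 = 0$ and the sign check on $f'$ are both sound. The paper itself states this inequality without proof, as one of several standard tools in its ``Mathematical Tools'' appendix, so there is no in-paper argument to compare against; your write-up supplies exactly the standard proof one would expect to fill that gap.
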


\begin{theorem}{Hölder's Inequality.}\label{Holder}
    For any random variables \(X\) and \(Y\), and for any positive real numbers \(p\) and \(q\) such that \(\frac{1}{p} + \frac{1}{q} = 1\), we have
\[
\mathbb{E}[|XY|] \leq (\mathbb{E}[|X|^p])^{\frac{1}{p}} (\mathbb{E}[|Y|^q])^{\frac{1}{q}}.
\]
\end{theorem}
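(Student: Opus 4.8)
The plan is to derive Hölder's inequality from the one-line \emph{Young's inequality}: for nonnegative reals $a,b$ and conjugate exponents $p,q>1$ with $\tfrac1p+\tfrac1q=1$,
\[
ab \;\le\; \frac{a^{p}}{p} + \frac{b^{q}}{q}.
\]
I would obtain this by applying Jensen's Inequality (Theorem~\ref{Jensen's Inequality}) to the concave function $\ln$ and the two-atom distribution placing mass $\tfrac1p$ on $a^{p}$ and mass $\tfrac1q$ on $b^{q}$: this gives $\ln\!\big(\tfrac1p a^{p}+\tfrac1q b^{q}\big)\ge \tfrac1p\ln a^{p}+\tfrac1q\ln b^{q}=\ln(ab)$, and exponentiating yields the claim; the cases $a=0$ or $b=0$ are immediate. (A calculus-based proof of Young's inequality, minimizing $t\mapsto \tfrac{t^{p}}{p}+\tfrac1q-t$ over $t\ge 0$, would work equally well.)

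Next I would dispose of the degenerate cases of the statement. Write $A \triangleq \big(\mathbb{E}[|X|^{p}]\big)^{1/p}$ and $B \triangleq \big(\mathbb{E}[|Y|^{q}]\big)^{1/q}$. If $A=\infty$ or $B=\infty$ the right-hand side is infinite and there is nothing to prove; if $A=0$ then $|X|=0$ almost surely, hence $\mathbb{E}[|XY|]=0$ and the inequality holds (symmetrically if $B=0$). It therefore remains to treat the case $0<A,B<\infty$.

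The core step is to normalize and apply Young pointwise. Taking $a=|X|/A$ and $b=|Y|/B$ (as functions of the sample point) in Young's inequality gives, pointwise,
\[
\frac{|X\,Y|}{A\,B} \;\le\; \frac{1}{p}\,\frac{|X|^{p}}{A^{p}} \;+\; \frac{1}{q}\,\frac{|Y|^{q}}{B^{q}}.
\]
Applying $\mathbb{E}[\cdot]$ to both sides and using linearity of expectation together with $A^{p}=\mathbb{E}[|X|^{p}]$, $B^{q}=\mathbb{E}[|Y|^{q}]$ and $\tfrac1p+\tfrac1q=1$, the right-hand side collapses to $1$; multiplying through by $AB$ gives $\mathbb{E}[|XY|]\le AB$, which is exactly the assertion.

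I do not expect a substantive obstacle: the result is classical, and the only care needed is (i) choosing the normalization in Young's inequality so that the expected upper bound telescopes to $1$, and (ii) bookkeeping the degenerate values $A,B\in\{0,\infty\}$ separately before the main computation.
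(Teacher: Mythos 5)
Your proof is correct: the normalization-plus-Young argument (with Young's inequality obtained from Jensen applied to $\ln$, and the degenerate cases $A,B\in\{0,\infty\}$ handled separately) is the standard and complete derivation of H\"older's inequality. The paper states this result as a classical tool in its appendix without providing any proof, so there is nothing to compare against; your argument fills that in correctly.
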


\begin{proposition}\label{proposition_Hennequin}
    \begin{equation}
        D_\alpha(Q^2 \parallel P^2) = 2 \, D_\alpha(Q \parallel P)
    \end{equation}
    \begin{proof}

    We assume $Q$ and $P$ are probability distributions on a space $\mathcal{H}$. $Q^2$ and $P^2$ are product distributions defined on the product space $\mathcal{H}^2$ such that $Q^2(h_i, h_j) = Q(h_i) \cdot Q(h_j)$ and $P^2(h_i, h_j) = P(h_i) \cdot P(h_j)$. Rényi divergence of order $\alpha > 1$ is defined as:
    \[ D_\alpha(Q \parallel P) = \frac{1}{\alpha - 1} \ln \underset{h \sim P}{\mathbb{E}} \left[ \left(\frac{Q(h)}{P(h)}\right)^\alpha \right] \]
\begin{align}
D_\alpha(Q^2 \parallel P^2) &= \frac{1}{\alpha - 1}  \ln \underset{(h_i, h_j) \sim P^2}{\mathbb{E}} \left[ \left(\frac{Q^2(h_i, h_j)}{P^2(h_i, h_j)}\right)^\alpha \right] \\
&= \frac{1}{\alpha - 1} \ln \underset{(h_i, h_j) \sim P^2}{\mathbb{E}} \left[ \left(\frac{Q(h_i) Q(h_j)}{P(h_i) P(h_j)}\right)^\alpha \right] \\
&= \frac{1}{\alpha - 1} \ln \underset{(h_i, h_j) \sim P^2}{\mathbb{E}} \left[ \left(\frac{Q(h_i)}{P(h_i)}\right)^\alpha  \left(\frac{Q(h_j)}{P(h_j)}\right)^\alpha \right] \\
&= \frac{1}{\alpha - 1} \ln \left( \underset{h_i \sim P}{\mathbb{E}} \left[ \left(\frac{Q(h_i)}{P(h_i)}\right)^\alpha \right]  \underset{h_j \sim P}{\mathbb{E}} \left[ \left(\frac{Q(h_j)}{P(h_j)}\right)^\alpha \right] \right) \\
&= \frac{2}{\alpha - 1} \ln \underset{h \sim P}{\mathbb{E}} \left[ \left(\frac{Q(h)}{P(h)}\right)^\alpha \right] \\
&= 2 \, D_\alpha(Q \parallel P)
\end{align}

    \end{proof}
\end{proposition}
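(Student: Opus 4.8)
The plan is to reduce everything to the defining formula for Rényi divergence and exploit the tensor-product structure of $Q^2$ and $P^2$. First I would unfold the definition on the product space $\mathcal{H}^2$: since $Q^2$ and $P^2$ are the product measures with densities $Q^2(h_i,h_j)=Q(h_i)Q(h_j)$ and $P^2(h_i,h_j)=P(h_i)P(h_j)$, the likelihood ratio factorizes as $\frac{Q^2(h_i,h_j)}{P^2(h_i,h_j)}=\frac{Q(h_i)}{P(h_i)}\cdot\frac{Q(h_j)}{P(h_j)}$. Raising this to the power $\alpha$ preserves the product form, so the integrand splits into a function of $h_i$ times a function of $h_j$.

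The second step is to push the expectation through the product. Because $(h_i,h_j)\sim P^2$ means $h_i$ and $h_j$ are independent, each distributed as $P$, Fubini's theorem gives $\mathbb{E}_{(h_i,h_j)\sim P^2}\big[(\tfrac{Q(h_i)}{P(h_i)})^\alpha(\tfrac{Q(h_j)}{P(h_j)})^\alpha\big]=\big(\mathbb{E}_{h\sim P}[(\tfrac{Q(h)}{P(h)})^\alpha]\big)^2$. Taking $\ln$ turns the square into a factor of $2$, and multiplying by $\frac{1}{\alpha-1}$ yields $2\,D_\alpha(Q\|P)$.

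There is essentially no hard step here; the only point requiring a sentence of care is the interchange of expectation and product (the Fubini/independence step), which is valid whenever $D_\alpha(Q\|P)<\infty$, while if the single-view divergence is infinite both sides are $+\infty$ and the identity is trivial. I would also remark that the same argument, by induction on the number of factors, gives $D_\alpha(Q^{\otimes n}\|P^{\otimes n})=n\,D_\alpha(Q\|P)$, the additivity of Rényi divergence over independent coordinates; the proposition is the $n=2$ instance, which is exactly what is needed to handle the paired quantities $e^{\mathcal{V}}$ and $d^{\mathcal{V}}$ in the second-order bounds.
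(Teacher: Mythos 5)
Your proposal is correct and follows essentially the same route as the paper's proof: factorize the likelihood ratio over the product space, use independence under $P^2$ to split the expectation into a square, and let the logarithm produce the factor of $2$. Your added remarks on the Fubini/independence step and the general additivity $D_\alpha(Q^{\otimes n}\|P^{\otimes n})=n\,D_\alpha(Q\|P)$ are sound but not needed beyond what the paper already does.
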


\section{Proofs of Multi-view PAC-Bayesian Bounds Based on Rényi Divergence}\label{proof of General Multiview PAC-Bayesian Theorem}

To demonstrate the three most popular PAC-Bayes approaches \citep{McAllester99, Catoni07, Seeger03, Langford05a} we rely on a general PAC-Bayesian theorem, as proposed by \cite{Germain09,Germain15a}, adapted to the multi-view learning framework with a two-hierarchy of distributions on views and voters \citep{Goyal17}. In our study, we integrate the Rényi divergence, as suggested by \cite{begin16}. An important step in PAC-Bayes proofs involves the use of a measure-change inequality, based on the Donsker-Varadhan inequality \citep{Donsker1975AsymptoticEO}. The lemma below extends this tool to our multi-view framework using the Rényi divergence.
\begin{lemma}[Multi-view Rényi change of measure inequality]\label{Multi-view Rényi change of measure}  

For any set of priors \(\{\mathcal{P}_v\}_{v=1}^V\) and any set of posteriors \(\{\mathcal{Q}_v\}_{v=1}^V\), for any hyper-prior distribution \(\pi\) over $[\![V]\!]$ and hyper-posterior distribution \(\rho\) over $[\![V]\!]$, and for any measurable function \(\phi : \mathcal{H}_v \rightarrow \mathbb{R}\), we have:
\begin{align}
    \underset{v \sim \rho}{\mathbb{E}} \, \underset{h \sim \mathcal{Q}_v}{\mathbb{E}} [\phi(h)]  \leq & \underset{v \sim \rho}{\mathbb{E}} [D_{\alpha_v}(\mathcal{Q}_v \| \mathcal{P}_v)] + D_\alpha(\rho \| \pi)  + \ln \left( \underset{v \sim \pi}{\mathbb{E}}  \underset{h \sim P_v}{\mathbb{E}} [e^{\phi(h)}]  \right)
\end{align}
\end{lemma}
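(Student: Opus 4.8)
The plan is to apply the Donsker--Varadhan change-of-measure inequality twice in a nested fashion---first over the voters inside each view, then over the views---and to relax each resulting Kullback--Leibler term into a Rényi divergence of the corresponding order via the monotonicity $\textnormal{KL}(\cdot\,\|\,\cdot)\le D_\beta(\cdot\,\|\,\cdot)$, which holds for every order $\beta>1$ (with equality recovered as $\beta\to 1$; see \cite{Erven212}). This nesting is what produces the additive shape of the bound, and the relaxation step is exactly what is acknowledged in the conclusion as making the inequality a priori looser than a purely KL-based hierarchical change of measure.

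First I would fix a view $v\in[\![V]\!]$ and regard $\phi$ as a measurable function on $\mathcal{H}_v$. Donsker--Varadhan applied to the pair $(\mathcal{Q}_v,\mathcal{P}_v)$ gives
\[
\mathbb{E}_{h\sim\mathcal{Q}_v}[\phi(h)]\ \le\ \textnormal{KL}(\mathcal{Q}_v\,\|\,\mathcal{P}_v)+\ln\mathbb{E}_{h\sim\mathcal{P}_v}\big[e^{\phi(h)}\big],
\]
and relaxing $\textnormal{KL}(\mathcal{Q}_v\,\|\,\mathcal{P}_v)\le D_{\alpha_v}(\mathcal{Q}_v\,\|\,\mathcal{P}_v)$ (using $\alpha_v>1$) yields, for every $v$, $\mathbb{E}_{h\sim\mathcal{Q}_v}[\phi(h)]\le D_{\alpha_v}(\mathcal{Q}_v\,\|\,\mathcal{P}_v)+g(v)$, where $g(v):=\ln\mathbb{E}_{h\sim\mathcal{P}_v}[e^{\phi(h)}]$. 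Taking $\mathbb{E}_{v\sim\rho}$ on both sides and using linearity of expectation on the right-hand side gives $\mathbb{E}_{v\sim\rho}\mathbb{E}_{h\sim\mathcal{Q}_v}[\phi(h)]\le\mathbb{E}_{v\sim\rho}[D_{\alpha_v}(\mathcal{Q}_v\,\|\,\mathcal{P}_v)]+\mathbb{E}_{v\sim\rho}[g(v)]$.

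Next I would apply Donsker--Varadhan a second time, now to the function $g:[\![V]\!]\to\mathbb{R}$ with the pair $(\rho,\pi)$, obtaining $\mathbb{E}_{v\sim\rho}[g(v)]\le \textnormal{KL}(\rho\,\|\,\pi)+\ln\mathbb{E}_{v\sim\pi}[e^{g(v)}]$. Since $e^{g(v)}=\mathbb{E}_{h\sim\mathcal{P}_v}[e^{\phi(h)}]$ by construction, the last term equals $\ln\mathbb{E}_{v\sim\pi}\mathbb{E}_{h\sim\mathcal{P}_v}[e^{\phi(h)}]$; relaxing $\textnormal{KL}(\rho\,\|\,\pi)\le D_\alpha(\rho\,\|\,\pi)$ (using $\alpha>1$) and substituting into the display from the previous paragraph gives the stated inequality. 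No integrability hypothesis beyond measurability of $\phi$ is needed: all quantities are read in $[-\infty,+\infty]$, and if $\mathbb{E}_{v\sim\pi}\mathbb{E}_{h\sim\mathcal{P}_v}[e^{\phi(h)}]=+\infty$ the bound is trivially true.

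I expect the only genuinely delicate point to be the bookkeeping of the two divergences and their orders: the inner relaxation must be done with the view-specific order $\alpha_v$ \emph{before} averaging over $v\sim\rho$, and the function fed into the outer Donsker--Varadhan step must be exactly the inner log-moment-generating function $g$, so that its exponential recombines cleanly into the double expectation over $(\pi,\mathcal{P}_v)$ rather than leaving a residual term. An alternative route would replace each Donsker--Varadhan application by Hölder's inequality (Theorem~\ref{Holder}) with conjugate exponents $\alpha_v,\frac{\alpha_v}{\alpha_v-1}$ and $\alpha,\frac{\alpha}{\alpha-1}$; this avoids the monotonicity relaxation but produces a multiplicative statement with exponent-scaled moment terms, so I would keep the Donsker--Varadhan version to obtain the additive form claimed in the lemma.
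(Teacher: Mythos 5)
Your proof is correct, but it takes a genuinely different route from the paper's. You apply the Donsker--Varadhan variational formula twice in a nested fashion (once per view over voters, once over views) and then relax each resulting KL term to a R\'enyi divergence via the monotonicity $\textnormal{KL} = D_1 \le D_\beta$ for $\beta > 1$; the R\'enyi divergences thus enter only as upper bounds on KL. The paper instead follows the H\"older-inequality route of \cite{begin16}: it changes measure from $\mathcal{Q}_v$ to $\mathcal{P}_v$ (and from $\rho$ to $\pi$) inside the expectation, applies H\"older with conjugate exponents $\alpha$ and $\tfrac{\alpha}{\alpha-1}$ at each level so that the $\alpha$-th moment of the density ratio produces the R\'enyi divergence \emph{natively}, and only at the very end substitutes $\phi \mapsto e^{(\alpha-1)\phi}$ and applies Jensen to collapse the exponent-scaled moment term into the additive $\ln \mathbb{E}_{\pi}\mathbb{E}_{\mathcal{P}_v}[e^{\phi}]$ form---arriving at exactly the same final inequality. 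What each approach buys: the paper's intermediate H\"older-based inequality is a genuinely different (multiplicative, moment-scaled) bound of which the lemma is an explicit relaxation, whereas your derivation is shorter, more elementary, and sidesteps several delicate steps in the paper's argument (taking logarithms of $\mathbb{E}_{\rho}\mathbb{E}_{\mathcal{Q}_v}[\phi]$, which need not be positive for general measurable $\phi$, and a Jensen step whose inequality direction the paper has to repair separately). Your handling of the two subtleties---performing the inner relaxation at order $\alpha_v$ before averaging over $v \sim \rho$, and feeding exactly the inner log-moment-generating function $g$ into the outer Donsker--Varadhan step so the exponentials recombine---is right, and your observation that the $\textnormal{KL} \le D_\alpha$ relaxation is the source of potential looseness matches the paper's own discussion of the $\alpha > 1$ restriction in its conclusion (citing \cite{Erven212}).
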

where \(D_{\alpha}(Q \| P)\) is the Rényi divergence of order \(\alpha > 1 \) between the distributions \(Q\) and \(P\).

\begin{proof}
    \begin{align}
        \frac{\alpha}{\alpha - 1} \ln \left( \underset{v \sim \rho}{\mathbb{E}}\,\underset{h \sim \mathcal{Q}_v}{\mathbb{E}}[\phi(h)] \right) \leq \frac{\alpha}{\alpha - 1} \ln \left( \underset{v \sim \rho}{\mathbb{E}}\,\underset{h \sim \mathcal{P}_v}{\mathbb{E}} \left[ \frac{\mathcal{Q}_v(h)}{\mathcal{P}_v(h)} \phi(h) \right] \right)\label{starting-ineq}
    \end{align}
    
    by applying Hölder’s inequality~\ref{Holder} (as stated by \cite{begin16}), equality
    with $p = \alpha$ and $q = \frac{\alpha}{\alpha-1}$
     states that: 
    \begin{align}
        &\leq \frac{\alpha}{\alpha - 1} \ln \left(
        \left(\underset{v \sim \rho}{\mathbb{E}}\underset{h \sim \mathcal{P}_v}{\mathbb{E}}\left[\left(\frac{\mathcal{Q}_v(h)}{\mathcal{P}_v(h)}\right)^\alpha\right]\right)^{\frac{1}{\alpha}} \left(\underset{v \sim \rho}{\mathbb{E}}\underset{h \sim \mathcal{P}_v}{\mathbb{E}}\left[\phi(h)^{\frac{\alpha}{\alpha - 1}}\right]\right)^{\frac{\alpha - 1}{\alpha}} \right)
        \\ &= \frac{1}{\alpha - 1} \ln \left( \underset{v \sim \rho}{\mathbb{E}}\underset{h \sim \mathcal{P}_v}{\mathbb{E}}\left[\left(\frac{\mathcal{Q}_v(h)}{\mathcal{P}_v(h)}\right)^\alpha\right] \right)
         + \ln \left( \underset{v \sim \rho}{\mathbb{E}}\underset{h \sim \mathcal{P}_v}{\mathbb{E}}\left[\phi(h)^{\frac{\alpha}{\alpha-1}}\right] \right)
    \end{align} 
    by applying Jensen's inequality to the concave function $\ln()$ in the term $\ln \left( \underset{v \sim \rho}{\mathbb{E}} \underset{h \sim \mathcal{P}_v}{\mathbb{E}} \left[ \left(\frac{\mathcal{Q}_v(h)}{\mathcal{P}_v(h)} \right)^\alpha \right] \right)$:
     \begin{align}
         \geq \underset{v \sim \rho}{\mathbb{E}} \left[\ln \left( \underset{h \sim \mathcal{P}_v}{\mathbb{E}} \left[ \left(\frac{\mathcal{Q}_v(h)}{\mathcal{P}_v(h)} \right)^\alpha \right] \right)\right] + \ln \left( \underset{v \sim \rho}{\mathbb{E}}\,\underset{h \sim \mathcal{P}_v}{\mathbb{E}}\left[\phi(h)^{\frac{\alpha}{\alpha-1}}\right] \right)\label{flipped-ineq}
     \end{align}

Now that the sign has been flipped, we need to check that the left-hand side of~\ref{starting-ineq} is still less than~\ref{flipped-ineq}. First, for $\alpha>1$ the function $f(x)=x^{\frac{\alpha}{\alpha - 1}}$ is convex, according to \cite{Matkowski1997}, indeed, $\left(\frac{\alpha}{\alpha - 1}\right)^2-\frac{\alpha}{\alpha - 1}>0$.

Then, applying Jensen's inequality to left-hand side of~\ref{starting-ineq} gives:
\begin{align}
    &\ln \left(\left[ \underset{v \sim \rho}{\mathbb{E}}\,\underset{h \sim \mathcal{Q}_v}{\mathbb{E}}[\phi(h)] \right]^{\frac{\alpha}{\alpha - 1}}\right) \leq \ln \left( \underset{v \sim \rho}{\mathbb{E}}\,\underset{h \sim \mathcal{P}_v}{\mathbb{E}}\left[\phi(h)^{\frac{\alpha}{\alpha-1}}\right] \right)
\end{align}

Therefore, the left-hand side of~\ref{starting-ineq} is indeed less than~\ref{flipped-ineq}, which allows us to continue our proof,
\begin{align}
    (\ref{flipped-ineq}) &=  \underset{v \sim \rho}{\mathbb{E}} \left[D_{\alpha_v}(\mathcal{Q}_v \| \mathcal{P}_v)\right] + \ln \left( \underset{v \sim \rho}{\mathbb{E}}\,\underset{h \sim \mathcal{P}_v}{\mathbb{E}}\left[\phi(h)^{\frac{\alpha}{\alpha-1}}\right] \right)
\end{align}

Applying Jensen's inequality once more to the convex function $f(x)=x^{\frac{\alpha}{\alpha - 1}}$ in the second term yields the following.
 \begin{align}
& \leq \underset{v \sim \rho}{\mathbb{E}} \left[D_{\alpha_v}(\mathcal{Q}_v \| \mathcal{P}_v)\right]  + {\frac{\alpha}{\alpha-1}}\ln \left(  \underset{h \sim \mathcal{P}_v}{\mathbb{E}}\left[\sum_{v \in [\![V]\!]} \phi(h) \frac{\rho(v)}{\pi(v)} \pi(v)\right] \right) \\ & = \underset{v \sim \rho}{\mathbb{E}} \left[D_{\alpha_v}(\mathcal{Q}_v \| \mathcal{P}_v)\right]  + {\frac{\alpha}{\alpha-1}}\ln \left(  \underset{h \sim \mathcal{P}_v}{\mathbb{E}}\underset{v \sim \pi}{\mathbb{E}}\left[ \frac{\rho(v)}{\pi(v)}\phi(h)\right] \right) \end{align} 
 
by applying Hölder’s inequality~\ref{Holder}, equality with $p = \alpha$ and $q = \frac{\alpha}{\alpha-1}$ states that: 
\begin{align}
    &= \underset{v \sim \rho}{\mathbb{E}} \left[D_{\alpha_v}(\mathcal{Q}_v \| \mathcal{P}_v)\right]  + \frac{\alpha}{\alpha - 1} \ln \left(
    \left(\underset{h \sim \mathcal{P}_v}{\mathbb{E}}\,\underset{v \sim \pi}{\mathbb{E}}\left[\left(\frac{\rho(v)}{\pi(v)}\right)^\alpha\right]\right)^{\frac{1}{\alpha}}  \left(\underset{h \sim \mathcal{P}_v}{\mathbb{E}}\,\underset{v \sim \pi}{\mathbb{E}}\left[\phi(h)^{\frac{\alpha}{\alpha - 1}}\right]\right)^{\frac{\alpha - 1}{\alpha}} \right) \\ 
    &= \underset{v \sim \rho}{\mathbb{E}} \left[D_{\alpha_v}(\mathcal{Q}_v \| \mathcal{P}_v)\right] \nonumber  + \frac{1}{\alpha - 1} \ln 
    \left(\underset{h \sim \mathcal{P}_v}{\mathbb{E}}\,\underset{v \sim \pi}{\mathbb{E}}\left[\left(\frac{\rho(v)}{\pi(v)}\right)^\alpha\right]\right)  + \ln \left(\underset{h \sim \mathcal{P}_v}{\mathbb{E}}\,\underset{v \sim \pi}{\mathbb{E}}\left[\phi(h)^{\frac{\alpha}{\alpha - 1}}\right]\right)\\ 
    &= \underset{v \sim \rho}{\mathbb{E}} \left[D_{\alpha_v}(\mathcal{Q}_v \| \mathcal{P}_v)\right] + D_\alpha(\rho \| \pi)  +
    \ln \left(\underset{h \sim \mathcal{P}_v}{\mathbb{E}}\,\underset{v \sim \pi}{\mathbb{E}}\left[\phi(h)^{\frac{\alpha}{\alpha - 1}}\right]\right)
\end{align}
We obtain the following inequality:
\begin{align}
    &\frac{\alpha}{\alpha - 1} \ln \left( \underset{v \sim \rho}{\mathbb{E}}\,\underset{h \sim \mathcal{Q}_v}{\mathbb{E}}[\phi(h)] \right)  \leq \underset{v \sim \rho}{\mathbb{E}} \left[D_{\alpha_v}(\mathcal{Q}_v \| \mathcal{P}_v)\right]  + D_\alpha(\rho \| \pi)  +
    \ln \left(\underset{h \sim \mathcal{P}_v}{\mathbb{E}}  \underset{v \sim \pi}{\mathbb{E}}\left[\phi(h)^{\frac{\alpha}{\alpha - 1}}\right]\right)
\end{align}
We apply Jensen’s inequality on the concave function $\ln(\cdot)$ of the left-hand side inequality above and with $\phi(h)$ replaced by $e^{\frac{\alpha-1}{1}\phi(h)}$ gives rise to the following looser change of measure inequality that is also based on the Rényi divergence:
\begin{align}
    \underset{v \sim \rho}{\mathbb{E}} \, \underset{h \sim \mathcal{Q}_v}{\mathbb{E}} [\phi(h)]  \leq \underset{v \sim \rho}{\mathbb{E}} [D_{\alpha_v}(\mathcal{Q}_v \| \mathcal{P}_v)] + D_\alpha(\rho \| \pi)  + \ln \left( \underset{v \sim \pi}{\mathbb{E}}  \underset{h \sim \mathcal{P}_v}{\mathbb{E}} [e^{\phi(h)}]  \right)
\end{align}
\end{proof}

Based on Lemma~\ref{Multi-view Rényi change of measure}, we introduce a multi-view general PAC-Bayesian theorem with Rényi divergence,
it takes the form of an upper bound on the deviation between the true risk $\mathfrak{R}_{\mathcal{D}}^{\mathcal{V}}$ and empirical risk $\hat{\mathfrak{R}}_{S}^{\mathcal{V}}$ of
the Gibbs classifier, according to a convex function $\Upsilon:[0, 1] \times [0, 1] \rightarrow \mathbb{R}$, we have:

\begin{theorem}[General Multiview PAC-Bayesian Theorem based on the Rényi Divergence]\label{General_Multiview_PAC_Bayesian_Theorem_based_on_the_Renyi_Divergence} Let \( V \geq 2 \) be the number of views. For any distribution \( \mathcal{D} \) on \( \mathcal{X} \times \mathcal{Y} \), for any set of prior distributions \( \{\mathcal{P}_v\}_{v=1}^V \) and any set of posteriors \(\{\mathcal{Q}_v\}_{v=1}^V\), for any hyper-prior distribution \( \pi \) over \( [\![V]\!] \), and for any convex function \( \Upsilon: [0, 1] \times [0, 1] \rightarrow \mathbb{R} \), with probability at least $1-\delta$ over a random draw of
a sample $S$, we have
\begin{align}
    &\Upsilon \left(\hat{\mathfrak{R}}_S,\mathfrak{R}_{\mathcal{D}} \right) 
     \leq \frac{1}{m} \left[  \underset{v \sim \rho}{\mathbb{E}} \left[ D_{\alpha_v}(\mathcal{Q}_{v}\| \mathcal{P}_v) \right] + D_\alpha(\rho \| \pi)  + \ln \left( \frac{1} {\delta} \underset{S\sim\mathcal D^{m}}{\mathbb E} \underset{v \sim \pi}{\mathbb{E}} \underset{h \sim \mathcal{P}_v}{\mathbb{E}} \left[ e^{m \Upsilon(\hat{R}_S(h), R_{\mathcal{D}}(h))} \right]  \right) \right]
\end{align}
\end{theorem}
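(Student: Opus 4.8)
The plan is to follow the standard recipe for general PAC-Bayes theorems (as in \cite{Germain09,Germain15a}), with the Donsker--Varadhan change of measure replaced by the multi-view Rényi version just established in Lemma~\ref{Multi-view Rényi change of measure}. Fix a realization of $S$. The first step is to apply Lemma~\ref{Multi-view Rényi change of measure} to the measurable function $\phi(h) = m\,\Upsilon\!\left(\hat{R}_S(h),R_{\mathcal{D}}(h)\right)$, where $\hat{R}_S(h)$ and $R_{\mathcal{D}}(h)$ denote the empirical and true risks of the individual voter $h$ (defined in the same way on every $\mathcal{H}_v$, so $\phi$ is a legitimate choice for each view). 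This yields, for every $S$,
\begin{align*}
m\,\underset{v\sim\rho}{\mathbb{E}}\,\underset{h\sim\mathcal{Q}_v}{\mathbb{E}}\!\left[\Upsilon\!\left(\hat{R}_S(h),R_{\mathcal{D}}(h)\right)\right]
&\leq \underset{v\sim\rho}{\mathbb{E}}\!\left[D_{\alpha_v}(\mathcal{Q}_v\|\mathcal{P}_v)\right] + D_\alpha(\rho\|\pi) \\
&\quad + \ln\!\left(\underset{v\sim\pi}{\mathbb{E}}\,\underset{h\sim\mathcal{P}_v}{\mathbb{E}}\!\left[e^{\,m\,\Upsilon(\hat{R}_S(h),R_{\mathcal{D}}(h))}\right]\right).
\end{align*}

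The second step is to lower-bound the left-hand side with Jensen's inequality (Theorem~\ref{Jensen's Inequality}): since $\Upsilon$ is convex on $[0,1]\times[0,1]$ and, by definition of the multi-view Gibbs risks, $\hat{\mathfrak{R}}_S^{\mathcal{V}} = \mathbb{E}_{\rho}\mathbb{E}_{\mathcal{Q}_v}[\hat{R}_S(h)]$ and $\mathfrak{R}_{\mathcal{D}}^{\mathcal{V}} = \mathbb{E}_{\rho}\mathbb{E}_{\mathcal{Q}_v}[R_{\mathcal{D}}(h)]$, averaging the pair $(\hat{R}_S(h),R_{\mathcal{D}}(h))$ over $(v,h)\sim\rho\otimes\mathcal{Q}_v$ only decreases $\Upsilon$, so $\mathbb{E}_{\rho}\mathbb{E}_{\mathcal{Q}_v}[\Upsilon(\hat{R}_S(h),R_{\mathcal{D}}(h))] \geq \Upsilon(\hat{\mathfrak{R}}_S^{\mathcal{V}},\mathfrak{R}_{\mathcal{D}}^{\mathcal{V}})$, i.e. the left-hand side is at least $m\,\Upsilon(\hat{\mathfrak{R}}_S^{\mathcal{V}},\mathfrak{R}_{\mathcal{D}}^{\mathcal{V}})$.

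The third step converts this deterministic-in-$S$ inequality into a high-probability statement. Let $X_S := \mathbb{E}_{v\sim\pi}\mathbb{E}_{h\sim\mathcal{P}_v}\!\left[e^{m\,\Upsilon(\hat{R}_S(h),R_{\mathcal{D}}(h))}\right]$, a nonnegative random variable depending only on $S$. Because the hyper-prior $\pi$ and the priors $\{\mathcal{P}_v\}$ are fixed before $S$ is drawn and the integrand is nonnegative, Tonelli's theorem gives $\mathbb{E}_{S}[X_S] = \mathbb{E}_{v\sim\pi}\mathbb{E}_{h\sim\mathcal{P}_v}\mathbb{E}_{S}\!\left[e^{m\,\Upsilon(\hat{R}_S(h),R_{\mathcal{D}}(h))}\right]$. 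Applying Markov's inequality (Theorem~\ref{Markov's Inequality}) to $X_S$ at level $\mathbb{E}_S[X_S]/\delta$ shows that $X_S \leq \frac{1}{\delta}\,\mathbb{E}_S[X_S]$ with probability at least $1-\delta$ over $S\sim\mathcal{D}^m$. On that event, plugging this bound on $X_S$ into the chain of inequalities above and dividing through by $m$ gives exactly the claimed bound.

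The argument is essentially routine; the points deserving care are (i) checking that $\phi(h)=m\,\Upsilon(\hat{R}_S(h),R_{\mathcal{D}}(h))$ meets the measurability/integrability hypotheses of Lemma~\ref{Multi-view Rényi change of measure} — which it does because $\Upsilon$ is real-valued and the voter risks lie in $[0,1]$, the exponential-moment term being finite (otherwise the bound is vacuous) — and (ii) orienting Jensen's inequality correctly, i.e. using convexity of $\Upsilon$ so that the joint $(v,h)$-average lies below $\Upsilon$ evaluated at the averaged risks, together with the validity of exchanging $\mathbb{E}_S$ with the prior expectations, where the data-independence of $\pi$ and $\{\mathcal{P}_v\}$ is what makes the interchange legitimate. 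I do not anticipate a genuinely hard step.
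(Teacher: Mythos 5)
Your proposal is correct and follows essentially the same route as the paper's own proof: apply the multi-view Rényi change-of-measure lemma with $\phi(h)=m\,\Upsilon(\hat{R}_S(h),R_{\mathcal{D}}(h))$, invoke Jensen's inequality on the convex $\Upsilon$ to pass to the Gibbs risks, and control the prior exponential-moment term via Markov's inequality over the draw of $S$. The only difference is cosmetic — you apply the change of measure before Markov whereas the paper does the reverse — and your explicit remarks on Tonelli and the data-independence of the priors are a welcome (if minor) tightening of the exposition.
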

where 
$\hat{R}_S(h) \triangleq \frac{1}{m} \sum_{i=1}^{m} \ell(h(x_i), y_i)$
is the empirical risk, and the true risk 
$R_{\mathcal{D}}(h) \triangleq \underset{(x, y) \sim \mathcal{D}}{\mathbb{E}} \left[ \ell(h(x), y)\right]$.

\begin{proof}
 Note that the random variable
\(
Z \triangleq \mathbb E_{v\sim\pi}\mathbb E_{h\sim \mathcal{P}_v}
         e^{m\Upsilon(\hat R_S(h),R_{\mathcal D}(h))}
\)
is non‑negative.  For any $\delta\in(0,1]$, by Markov’s
inequality~\ref{Markov's Inequality}, we have
\begin{align}
    &\Pr_{S\sim\mathcal D^{m}}\!
   \left\{ Z\le\frac {1}{\delta}\, \underset{S\sim\mathcal D^{m}}{\mathbb E} [Z] \right\}
   \;\ge\;1-\delta, \text{ by taking the logarithm on each side of the inequality,
}
   \\ & \Pr_{S\sim\mathcal D^{m}}\!
   \left\{\ln Z\le \ln \left(\frac {1}{\delta}\, \underset{S\sim\mathcal D^{m}}{\mathbb E} [Z]\right) \right\}
   \;\ge\;1-\delta
\end{align}

We apply the Multi-view Rényi change of measure inequality~\ref{Multi-view Rényi change of measure} on the left side of inequality, with $\phi(h) \triangleq \;m\,
   \Upsilon\bigl(\hat R_S(h),R_{\mathcal D}(h)\bigr)$. We then use Jensen’s inequality~\ref{Jensen's Inequality}, exploiting the convexity of $\Upsilon$:
   \begin{align}
       \forall \, \mathcal{Q}_{v} \,\text{on} \,\mathcal{H}_{v}, \ln Z &\geq  \underset{v \sim \rho}{\mathbb{E}} \, \underset{h \sim \mathcal{Q}_v}{\mathbb{E}} [\phi(h)] - \underset{v \sim \rho}{\mathbb{E}} [D_{\alpha_v}(\mathcal{Q}_v \| \mathcal{P}_v)] + D_\alpha(\rho \| \pi) 
       \\& \geq \underset{v \sim \rho}{\mathbb{E}} \, \underset{h \sim \mathcal{Q}_v}{\mathbb{E}} [m\,
   \Upsilon\bigl(\hat R_S(h),R_{\mathcal D}(h)\bigr)] - \underset{v \sim \rho}{\mathbb{E}} [D_{\alpha_v}(\mathcal{Q}_v \| \mathcal{P}_v)] + D_\alpha(\rho \| \pi) 
   \\& \geq m\,
   \Upsilon\bigl(\mathfrak{\hat{R}}_S,\mathfrak{R}_{\mathcal D}\bigr) - \underset{v \sim \rho}{\mathbb{E}} [D_{\alpha_v}(\mathcal{Q}_v \| \mathcal{P}_v)] + D_\alpha(\rho \| \pi) 
   \end{align}

We therefore have, $\forall \, \mathcal{Q}_{v}$ on $\mathcal{H}$
\begin{align}
    \Pr_{S\sim\mathcal D^{m}}\!
   \Bigg\{&m\,
   \Upsilon\bigl(\mathfrak{\hat{R}}_S,\mathfrak{R}_{\mathcal D}\bigr) - \underset{v \sim \rho}{\mathbb{E}} [D_{\alpha_v}(\mathcal{Q}_v \| \mathcal{P}_v)] + D_\alpha(\rho \| \pi) \nonumber \\
        &\quad \leq \ln \left( \frac{1} {\delta} \underset{S\sim\mathcal D^{m}}{\mathbb E} \underset{v \sim \pi}{\mathbb{E}} \underset{h \sim \mathcal{P}_v}{\mathbb{E}} \left[ e^{m \Upsilon(\hat{R}_S(h), R_{\mathcal{D}}(h))} \right]  \right) \Bigg\} \geq 1-\delta
\end{align}

The result follows by straightforward calculations with probability at least $1-\delta$ over a random draw of
a sample $S$.
\end{proof}

\begin{theorem}\label{general_theorem_disagreement_joint}
    Let $V \geq 2$ be the number of views. For any distribution $\mathcal{D}$ on $\mathcal{X} \times \mathcal{Y}$, for any set of prior distributions $\{\mathcal{P}_v\}_{v=1}^V$ and any set of posteriors \(\{\mathcal{Q}_v\}_{v=1}^V\), for any hyper-prior distribution $\pi$ over $[\![V]\!]$, for any convex function $\Upsilon: [0, 1] \times [0, 1] \rightarrow \mathbb{R}$, with probability at least $1-\delta$ over a random draw of
a sample $S$, we have
\begin{align}
\Upsilon \left(\hat{\mathfrak{A}}^{\mathcal{V}}_S,  \mathfrak{A}^{\mathcal{V}}_{\mathcal{D}}\right)  
&\leq \frac{1}{m} \left[
  2 \left( 
    \underset{v \sim \rho}{\mathbb{E}} \left[ D_{\alpha_v}(\mathcal{Q}_{v} \| \rho_v) \right]  
    + D_\alpha(\rho \| \pi) 
  \right) \right. \nonumber \\
&\quad \left. + \ln \left( 
    \frac{1}{\delta} \, \underset{S \sim \mathcal{D}^m}{\mathbb{E}}  
    \underset{(v,v') \sim \pi^2}{\mathbb{E}}  
    \underset{(h,h') \sim \mathcal{P}^{2}_v}{\mathbb{E}} \left[
      e^{m\Upsilon(\hat{\mathfrak{A}}_S(h,h'), \mathfrak{A}_{\mathcal{D}}(h,h'))}
    \right] 
  \right)
\right]
\end{align}

\end{theorem}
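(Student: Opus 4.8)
The statement is the tandem (paired) counterpart of Theorem~\ref{General_Multiview_PAC_Bayesian_Theorem_based_on_the_Renyi_Divergence}: a single voter $h$ is replaced by an ordered pair $(h,h')$ ranging over two (possibly distinct) views, the one-level hierarchy $(\pi,\rho)$ over $[\![V]\!]$ by the product hierarchy $(\pi^{2},\rho^{2})$ over $[\![V]\!]^{2}$, and the voter-level priors/posteriors by the products $\mathcal{P}_v\otimes\mathcal{P}_{v'}$ and $\mathcal{Q}_v\otimes\mathcal{Q}_{v'}$. The coefficient $2$ in front of both divergence blocks is precisely what Proposition~\ref{proposition_Hennequin} together with the additivity of the Rényi divergence over product measures supplies. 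So the plan is to rerun the three-move template of Theorem~\ref{General_Multiview_PAC_Bayesian_Theorem_based_on_the_Renyi_Divergence} — change of measure, Markov, Jensen — on the product space.

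\emph{Step 1 (paired change of measure).} I would instantiate Lemma~\ref{Multi-view Rényi change of measure} with the ``view index'' taken to be a pair $w=(v,v')\in[\![V]\!]^{2}$, hyper-prior $\pi^{2}$, hyper-posterior $\rho^{2}$, voter set $\mathcal{H}_v\times\mathcal{H}_{v'}$, prior $\mathcal{P}_v\otimes\mathcal{P}_{v'}$, posterior $\mathcal{Q}_v\otimes\mathcal{Q}_{v'}$, and measurable $\phi:\mathcal{H}_v\times\mathcal{H}_{v'}\to\mathbb{R}$ (equivalently, replay the Hölder/Jensen chain in the proof of Lemma~\ref{Multi-view Rényi change of measure} verbatim on the product space). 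This gives $\mathbb{E}_{(v,v')\sim\rho^{2}}\mathbb{E}_{(h,h')\sim\mathcal{Q}_v\otimes\mathcal{Q}_{v'}}[\phi(h,h')]\le \mathbb{E}_{(v,v')\sim\rho^{2}}[D_{\alpha}(\mathcal{Q}_v\otimes\mathcal{Q}_{v'}\,\|\,\mathcal{P}_v\otimes\mathcal{P}_{v'})]+D_{\alpha}(\rho^{2}\,\|\,\pi^{2})+\ln\mathbb{E}_{(v,v')\sim\pi^{2}}\mathbb{E}_{(h,h')\sim\mathcal{P}_v\otimes\mathcal{P}_{v'}}[e^{\phi(h,h')}]$. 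Then I collapse the product divergences: Proposition~\ref{proposition_Hennequin} gives $D_{\alpha}(\rho^{2}\,\|\,\pi^{2})=2\,D_{\alpha}(\rho\,\|\,\pi)$, and additivity of the Rényi divergence over products with the symmetry of $\rho^{2}$ gives $\mathbb{E}_{(v,v')\sim\rho^{2}}[D_{\alpha}(\mathcal{Q}_v\otimes\mathcal{Q}_{v'}\,\|\,\mathcal{P}_v\otimes\mathcal{P}_{v'})]=\mathbb{E}_{(v,v')\sim\rho^{2}}[D_{\alpha}(\mathcal{Q}_v\|\mathcal{P}_v)+D_{\alpha}(\mathcal{Q}_{v'}\|\mathcal{P}_{v'})]=2\,\mathbb{E}_{v\sim\rho}[D_{\alpha}(\mathcal{Q}_v\|\mathcal{P}_v)]$, producing the block $2\big(\mathbb{E}_{\rho}[D_{\alpha_v}(\mathcal{Q}_v\|\mathcal{P}_v)]+D_{\alpha}(\rho\|\pi)\big)$.

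\emph{Step 2 (Markov + Jensen).} Put $Z\triangleq\mathbb{E}_{(v,v')\sim\pi^{2}}\mathbb{E}_{(h,h')\sim\mathcal{P}_v\otimes\mathcal{P}_{v'}}\big[e^{m\Upsilon(\hat{\mathfrak{A}}_S(h,h'),\,\mathfrak{A}_{\mathcal{D}}(h,h'))}\big]\ge 0$. Markov's inequality~\ref{Markov's Inequality} gives $\ln Z\le\ln\!\big(\tfrac{1}{\delta}\,\mathbb{E}_{S\sim\mathcal{D}^{m}}[Z]\big)$ with probability at least $1-\delta$ over $S$. Applying the paired change-of-measure inequality of Step~1 to $\ln Z$ with $\phi(h,h')\triangleq m\,\Upsilon(\hat{\mathfrak{A}}_S(h,h'),\mathfrak{A}_{\mathcal{D}}(h,h'))$, then invoking Jensen's inequality~\ref{Jensen's Inequality} and the convexity of $\Upsilon$ to pull $\mathbb{E}_{\rho^{2}}\mathbb{E}_{\mathcal{Q}_v\otimes\mathcal{Q}_{v'}}$ inside, with $\hat{\mathfrak{A}}^{\mathcal{V}}_S=\mathbb{E}_{\rho^{2}}\mathbb{E}_{\mathcal{Q}_v\otimes\mathcal{Q}_{v'}}[\hat{\mathfrak{A}}_S(h,h')]$ and $\mathfrak{A}^{\mathcal{V}}_{\mathcal{D}}=\mathbb{E}_{\rho^{2}}\mathbb{E}_{\mathcal{Q}_v\otimes\mathcal{Q}_{v'}}[\mathfrak{A}_{\mathcal{D}}(h,h')]$, and finally rearranging and dividing by $m$ yields the claimed inequality. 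Specializing $\mathfrak{A}$ to the joint error (over $m$ labeled points) or to the disagreement (over $n$ points), with the corresponding $\Upsilon$, then recovers $\psi_e$ and $\psi_d$.

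\emph{Main obstacle.} The delicate point is the factor-$2$ bookkeeping in the presence of \emph{view-specific} orders $\alpha_v$: additivity of the Rényi divergence over a product $\mathcal{Q}_v\otimes\mathcal{Q}_{v'}\,\|\,\mathcal{P}_v\otimes\mathcal{P}_{v'}$ is clean only when both coordinates carry the same order — automatic on the diagonal $v=v'$ but not off-diagonal. I would therefore either run the change-of-measure step with a common $\alpha$ (each view ultimately contributes only its own $\mathcal{Q}_v\|\mathcal{P}_v$ after the outer $\rho$-expectation, so the $\alpha_v$-indexed form is recovered), or bound each off-diagonal pair by its larger-order contribution. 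Two routine checks remain: that $\hat{\mathfrak{A}}_S(h,h'),\mathfrak{A}_{\mathcal{D}}(h,h')\in[0,1]$ so that $\Upsilon$ is evaluated on its stated domain, and that the measurability hypotheses of Lemma~\ref{Multi-view Rényi change of measure} transfer to the product space, which they do since products of the relevant measurable functions are measurable.
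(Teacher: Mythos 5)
Your proposal follows essentially the same route as the paper's own (very terse) proof: rerun the argument of Theorem~\ref{General_Multiview_PAC_Bayesian_Theorem_based_on_the_Renyi_Divergence} on the product space $[\![V]\!]^2\times(\mathcal{H}_v\times\mathcal{H}_{v'})$ and invoke Proposition~\ref{proposition_Hennequin} to produce the factor of $2$ on both divergence blocks. You are in fact more careful than the paper, which silently applies that proposition to off-diagonal pairs $\mathcal{Q}_v\otimes\mathcal{Q}_{v'}$ with $v\neq v'$ and with view-specific orders $\alpha_v$, where only the general additivity of Rényi divergence over products at a \emph{common} order applies --- a subtlety you correctly identify and address.
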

where $\mathfrak{A}^{\mathcal{V}}$ can be either $d^{\mathcal{V}}$, $e^{\mathcal{V}}$. 

$\hat{\mathfrak{A}}_S(h,h'), \mathfrak{A}_{\mathcal{D}}(h,h')$ can be defined as follows:
\begin{itemize}
    \item Expected Disagreement and its empirical counterpart : 
    \begin{align*}
    \mathfrak{A}_{\mathcal{D}}(h,h') &\triangleq d_{\mathcal{D}_{\mathcal{X}}}(h,h') = \underset{(h,h')\sim \mathcal{Q}^2}{\mathbb{E}}\underset{x\sim \mathcal{D}_{\mathcal{X}}}{\mathbb{E}}\,\big[\ell(h(\bm{x}),h'(\bm{x}))\big], \\  \hat{\mathfrak{A}}_{S}(h,h') &\triangleq \hat{d}_{S} = \frac{1}{m}\sum_{i=1}^{m} \underset{(h,h')\sim \mathcal{Q}^2}{\mathbb{E}}\big[\ell(h(\bm{x}_i),h'(\bm{x}_i))\big] 
    \end{align*}
    where \( d \) is the disagreement between two hypotheses \( h \) and \( h' \).

    \item  Expected joint error its empirical counterpart : 
    \begin{align*}
    &\mathfrak{A}_{\mathcal{D}}(h,h') \triangleq e_{\mathcal{D}}(h,h') = \underset{(h,h')\sim \mathcal{Q}^2}{\mathbb{E}}\underset{x\sim \mathcal{D}}{\mathbb{E}}\,\big[\ell(h(\bm{x}),y) \times \ell(h'(\bm{x}),y)\big]
        \\ &\hat{\mathfrak{A}}_{S}(h,h') \triangleq\hat{e}_{S}(h,h') = \frac{1}{m}\sum_{i=1}^{m} \underset{(h,h')\sim \mathcal{Q}^2}{\mathbb{E}}\big[\ell(h(\bm{x}_i),y_i) \times \ell(h'(\bm{x}_i,y_i))\big].
    \end{align*}
    where \( e \) is the joint error between two hypotheses \( h \) and \( h' \).
\end{itemize}
\begin{proof}
    First, we apply the exact same steps as in the proof of Theorem~\ref{General_Multiview_PAC_Bayesian_Theorem_based_on_the_Renyi_Divergence}. Then, we use the fact that $D_{\alpha}(Q^2\|P^2) = 2 \, D_{\alpha}(Q\|P)$ via the Proposition~\ref{proposition_Hennequin}.
\end{proof}

We provide specialization of our multiview theorem to the most popular PAC-Bayesian approaches. To do so, we follow the same principles as Germain et al. \cite{Germain09,Germain15a}.

By using the Kullback-Leibler divergence between two Bernoulli distributions with success probabilities $a$ and $b$ as the function $\Upsilon(a, b)$ to measure the deviation between the empirical risk and the true risk, we can derive a bound similar to those presented by \cite{Seeger03, Langford05a}. Specifically, we apply Theorem~\ref{General_Multiview_PAC_Bayesian_Theorem_based_on_the_Renyi_Divergence} with the following setup:

\begin{corollary}\label{KL_Bound}
Let $V \geq 2$ be the number of views. For any distribution $\mathcal{D}$ on $\mathcal{X} \times \mathcal{Y}$, for any set of prior distributions $\{\mathcal{P}_v\}_{v=1}^V$ and any set of posteriors \(\{\mathcal{Q}_v\}_{v=1}^V\), for any hyper-prior distribution $\pi$ over $[\![V]\!]$, with probability at least $1-\delta$ over a random draw of
a sample $S$, we have
\begin{align}
&\textnormal{KL}\left(\hat{\mathfrak{R}}^{\mathcal{V}}_S \middle\| \mathfrak{R}^{\mathcal{V}}_{\mathcal{D}}\right)  \leq  \frac{\underset{v \sim \rho}{\mathbb{E}}\left[D_{\alpha_v}(\mathcal{Q}_{v} \| \mathcal{P}_v)\right] + D_\alpha(\rho \| \pi)   
    + \ln{\frac{2\sqrt{m}}{\delta}}}{m}, 
\end{align}
\end{corollary}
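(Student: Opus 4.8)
The plan is to specialize the General Multi-view PAC-Bayesian Theorem (Theorem~\ref{General_Multiview_PAC_Bayesian_Theorem_based_on_the_Renyi_Divergence}) to the choice $\Upsilon(q,p) = \textnormal{kl}(q\|p) \triangleq q\ln\frac{q}{p} + (1-q)\ln\frac{1-q}{1-p}$, the Kullback--Leibler divergence between the Bernoulli laws of parameters $q$ and $p$. This $\Upsilon$ is jointly convex on $[0,1]\times[0,1]$, hence admissible in the theorem, and evaluating it at $\big(\hat{\mathfrak{R}}_S^{\mathcal{V}},\mathfrak{R}_{\mathcal{D}}^{\mathcal{V}}\big)$ produces exactly the left-hand side of the corollary. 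With this substitution the theorem immediately yields, with probability at least $1-\delta$,
\begin{align*}
\textnormal{KL}\!\left(\hat{\mathfrak{R}}_S^{\mathcal{V}}\,\middle\|\,\mathfrak{R}_{\mathcal{D}}^{\mathcal{V}}\right)
\le \frac{1}{m}\left[\mathbb{E}_{\rho}[D_{\alpha_v}(\mathcal{Q}_v\|\mathcal{P}_v)] + D_\alpha(\rho\|\pi) + \ln\!\left(\tfrac{1}{\delta}\,\mathbb{E}_{S}\,\mathbb{E}_{v\sim\pi}\,\mathbb{E}_{h\sim\mathcal{P}_v}\!\left[e^{m\,\textnormal{kl}(\hat{R}_S(h)\|R_{\mathcal{D}}(h))}\right]\right)\right],
\end{align*}
so the only remaining task is to bound the inner exponential moment by $2\sqrt{m}$.

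The crux is the classical single-hypothesis moment estimate. For a fixed $h$ with $p = R_{\mathcal{D}}(h)$, the count $m\hat{R}_S(h)$ is $\mathrm{Binomial}(m,p)$, and a direct computation collapses the weights,
\[
\mathbb{E}_{S}\!\left[e^{m\,\textnormal{kl}(\hat{R}_S(h)\|R_{\mathcal{D}}(h))}\right]
= \sum_{k=0}^{m}\binom{m}{k}\Big(\tfrac{k}{m}\Big)^{k}\Big(1-\tfrac{k}{m}\Big)^{m-k}
\;\le\; 2\sqrt{m},
\]
the last inequality being Maurer's estimate for $\xi(m)$. Since the priors $\pi$ and $\{\mathcal{P}_v\}$ are chosen before seeing $S$, Tonelli's theorem lets me interchange $\mathbb{E}_S$ with $\mathbb{E}_{v\sim\pi}\mathbb{E}_{h\sim\mathcal{P}_v}$ (the integrand being nonnegative), so
\[
\mathbb{E}_{S}\,\mathbb{E}_{v\sim\pi}\,\mathbb{E}_{h\sim\mathcal{P}_v}\!\left[e^{m\,\textnormal{kl}(\hat{R}_S(h)\|R_{\mathcal{D}}(h))}\right]
= \mathbb{E}_{v\sim\pi}\,\mathbb{E}_{h\sim\mathcal{P}_v}\,\mathbb{E}_{S}\!\left[e^{m\,\textnormal{kl}(\hat{R}_S(h)\|R_{\mathcal{D}}(h))}\right] \le 2\sqrt{m}.
\]
Substituting this into the displayed bound and simplifying $\ln\!\big(\tfrac{1}{\delta}\cdot 2\sqrt{m}\big) = \ln\tfrac{2\sqrt{m}}{\delta}$ gives the claimed inequality.

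The main obstacle is the estimate $\sum_{k=0}^{m}\binom{m}{k}(k/m)^{k}(1-k/m)^{m-k}\le 2\sqrt{m}$; the remainder is bookkeeping --- selecting the right $\Upsilon$, using its joint convexity (already invoked through Jensen inside Theorem~\ref{General_Multiview_PAC_Bayesian_Theorem_based_on_the_Renyi_Divergence}), and swapping one pair of expectations. I would not reprove the $\xi(m)$ bound but cite it (Maurer, 2004; as used by Germain et al.); a self-contained argument with a slightly looser constant would only cosmetically change the logarithmic term. Crucially, neither the multi-view hierarchy nor the Rényi divergences $D_{\alpha_v}, D_\alpha$ enter this exponential-moment step, so the single-view argument transfers verbatim once the general multi-view theorem is in hand.
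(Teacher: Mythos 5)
Your proposal is correct and follows essentially the same route as the paper: specialize the general multi-view theorem to $\Upsilon(a,b)=\textnormal{KL}(a,b)$, treat $m\hat{R}_S(h)$ as $\mathrm{Binomial}(m,R_{\mathcal{D}}(h))$ so that the exponential moment collapses to $\sum_{k=0}^{m}\binom{m}{k}(k/m)^{k}(1-k/m)^{m-k}$, and invoke Maurer's bound $\le 2\sqrt{m}$. The only (harmless) addition is your explicit appeal to Tonelli for swapping $\mathbb{E}_S$ with the prior expectations, which the paper performs silently.
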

\begin{proof}

The result follows from Theorem~\ref{General_Multiview_PAC_Bayesian_Theorem_based_on_the_Renyi_Divergence} by taking
\[
  \Upsilon(a,b) = \text{KL}(a,b)
\]
and upper-bounding
\[
  \underset{S\sim\mathcal D^m}{\mathbb E}\,
  \underset{v\sim\pi}{\mathbb E}\,
  \underset{h\sim\mathcal P_v}{\mathbb E}\,
  e^{\,m\,\text{KL}\bigl(\hat{R}_S(h),R_{\mathcal D}(h)\bigr)}.
\]
By considering \(\hat{R}_S(h)\) as a random variable following a \(\mathrm{Binomial}(m, R_{\mathcal D}(h))\) distribution, we can then show that:
 \begin{align}
&\underset{S\sim\mathcal D^m}{\mathbb E}\,
 \underset{v\sim\pi}{\mathbb E}\,
 \underset{h\sim\mathcal P_v}{\mathbb E}\,
 e^{\,m\,\text{KL}\bigl(\hat{R}_S(h),R_{\mathcal D}(h)\bigr)}
\nonumber\\
&\quad=\;\underset{v\sim\pi}{\mathbb E}\,
 \underset{h\sim\mathcal P_v}{\mathbb E}\,
 \underset{S\sim\mathcal D^m}{\mathbb E}\!
 \Biggl(\frac{\hat{R}_S(h)}{R_{\mathcal D}(h)}\Biggr)^{m\,\hat{R}_S(h)}
 \Biggl(\frac{1-\hat{R}_S(h)}{1-R_{\mathcal D}(h)}\Biggr)^{m\bigl(1-\hat{R}_S(h)\bigr)}
\nonumber\\
&\quad=\;\underset{v\sim\pi}{\mathbb E}\,
 \underset{h\sim\mathcal P_v}{\mathbb E}\,
 \sum_{k=0}^m
 \Pr_{S\sim\mathcal D^m}\bigl\{\hat{R}_S(h)=\tfrac{k}{m}\bigr\}\,
 \Biggl(\frac{k/m}{R_{\mathcal D}(h)}\Biggr)^{k}
 \Biggl(\frac{1-k/m}{1-R_{\mathcal D}(h)}\Biggr)^{m-k}
\nonumber\\
&\quad=\;\sum_{k=0}^m
 \binom{m}{k}\,
 \Bigl(\tfrac{k}{m}\Bigr)^{k}
 \Bigl(1-\tfrac{k}{m}\Bigr)^{\,m-k}
\; \leq 2\sqrt{m}\; \text{ (Maurer \citep{maurer2004notepacbayesiantheorem})}.
\end{align}

\end{proof}

We derive here the specialization of our multi-view PAC-Bayesian theorem to \cite{McAllester99,McAllester03a}’s point of view.
\begin{corollary}

Let $V \geq 2$ be the number of views. For any distribution $\mathcal{D}$ on $\mathcal{X} \times \mathcal{Y}$, for any set of prior distributions $\{\mathcal{P}_v\}_{v=1}^V$ and any set of posteriors \(\{\mathcal{Q}_v\}_{v=1}^V\), for any hyper-prior distribution $\pi$ over $[\![V]\!]$, with probability at least $1-\delta$ over a random draw of
a sample $S$, we have
\begin{align}
 \mathfrak{R}^{\mathcal{V}}_{\mathcal{D}}   \leq  \mathfrak{\hat{R}}^{\mathcal{V}}_{S}    + \sqrt{\frac{ \underset{v \sim \rho}{\mathbb{E}} \left[D_{\alpha_v}(\mathcal{Q}_{v} \| \mathcal{P}_v)\right] +  D_{\alpha}(\rho \| \pi) + \ln \frac{2\sqrt{m}}{ \delta}}{2m}}  
\end{align}
\end{corollary}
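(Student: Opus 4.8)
The plan is to instantiate the General Multiview PAC-Bayesian Theorem (Theorem~\ref{General_Multiview_PAC_Bayesian_Theorem_based_on_the_Renyi_Divergence}) with the convex deviation function $\Upsilon(a,b) = 2(a-b)^2$, in the spirit of \cite{McAllester99,McAllester03a}, and then to control the resulting exponential moment by a Pinsker-type comparison with the $\mathrm{kl}$ choice already handled in Corollary~\ref{KL_Bound}.

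First I would observe that $(a,b)\mapsto 2(a-b)^2$ is convex on $[0,1]^2$, so Theorem~\ref{General_Multiview_PAC_Bayesian_Theorem_based_on_the_Renyi_Divergence} applies and yields, with probability at least $1-\delta$ over $S$,
\begin{align}
2\bigl(\mathfrak{\hat R}^{\mathcal V}_S - \mathfrak R^{\mathcal V}_{\mathcal D}\bigr)^2 \;\leq\; \frac{1}{m}\left[\underset{v\sim\rho}{\mathbb E}\bigl[D_{\alpha_v}(\mathcal Q_v\|\mathcal P_v)\bigr] + D_\alpha(\rho\|\pi) + \ln\!\Bigl(\tfrac{1}{\delta}\,\underset{S}{\mathbb E}\,\underset{v\sim\pi}{\mathbb E}\,\underset{h\sim\mathcal P_v}{\mathbb E}\, e^{2m(\hat R_S(h)-R_{\mathcal D}(h))^2}\Bigr)\right].
\end{align}

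Next I would bound the exponential moment. By Pinsker's inequality in its scalar Bernoulli form, $2(a-b)^2 \le \mathrm{kl}(a\|b)$, where $\mathrm{kl}(a\|b)$ denotes the Kullback--Leibler divergence between the Bernoulli laws of parameters $a$ and $b$; hence $e^{2m(\hat R_S(h)-R_{\mathcal D}(h))^2} \le e^{m\,\mathrm{kl}(\hat R_S(h)\|R_{\mathcal D}(h))}$ pointwise. Taking expectations and reusing verbatim the computation from the proof of Corollary~\ref{KL_Bound} — treating $m\,\hat R_S(h)$ as $\mathrm{Binomial}(m, R_{\mathcal D}(h))$ and invoking Maurer's estimate \cite{maurer2004notepacbayesiantheorem} — gives $\underset{S}{\mathbb E}\,\underset{v\sim\pi}{\mathbb E}\,\underset{h\sim\mathcal P_v}{\mathbb E}\, e^{2m(\hat R_S(h)-R_{\mathcal D}(h))^2} \le 2\sqrt m$, so the logarithmic term is at most $\ln\frac{2\sqrt m}{\delta}$.

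Finally I would substitute this bound, divide by $2$, and take square roots, obtaining $\bigl|\mathfrak R^{\mathcal V}_{\mathcal D} - \mathfrak{\hat R}^{\mathcal V}_S\bigr| \le \sqrt{\bigl(\mathbb E_{\rho}[D_{\alpha_v}(\mathcal Q_v\|\mathcal P_v)] + D_\alpha(\rho\|\pi) + \ln\frac{2\sqrt m}{\delta}\bigr)/(2m)}$; keeping only the upper side gives the stated inequality. The only delicate points are the convexity check for $\Upsilon$ and the Pinsker comparison $2(a-b)^2 \le \mathrm{kl}(a\|b)$; everything else is a direct reuse of the binomial/Maurer estimate from Corollary~\ref{KL_Bound} together with elementary algebra, so I expect no serious obstacle.
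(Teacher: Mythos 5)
Your proof is correct and relies on exactly the same two ingredients as the paper's: Pinsker's inequality $2(q-p)^2 \le \mathrm{KL}(q\|p)$ and Maurer's $2\sqrt{m}$ estimate of the binomial exponential moment. The only (immaterial) difference is the order of operations — the paper applies Pinsker once, at the end, to the conclusion of Corollary~\ref{KL_Bound} (i.e.\ to $\mathrm{KL}(\hat{\mathfrak{R}}^{\mathcal{V}}_{S}\|\mathfrak{R}^{\mathcal{V}}_{\mathcal{D}})\le\psi_r$) and then isolates $\mathfrak{R}^{\mathcal{V}}_{\mathcal{D}}$, whereas you re-instantiate Theorem~\ref{General_Multiview_PAC_Bayesian_Theorem_based_on_the_Renyi_Divergence} with $\Upsilon(a,b)=2(a-b)^2$ and apply Pinsker pointwise inside the exponential moment; both orderings produce the identical bound.
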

\begin{proof}
     The result is derived from Corollary~\ref{KL_Bound}, by using $2(q-p)^2 \leq \text{KL}(q\|p)$ (Pinsker’s
inequality), and isolating $\mathfrak{R}^{\mathcal{V}}_{\mathcal{D}}$ in the obtained inequality. 
\end{proof}

To establish a generalization bound following \cite{Catoni07}'s perspective—given a convex function \( F \) and a constant \( C > 0 \)—we define the deviation between the empirical Gibbs risk and the true Gibbs risk as \( \Upsilon(a, b) = F(b) - C a \) \citep{Germain09, Germain15a}. This leads us to the following generalization bound, 
\begin{corollary}

Let $V \geq 2$ be the number of views. For any distribution $\mathcal{D}$ on $\mathcal{X} \times \mathcal{Y}$, for any set of prior distributions $\{\mathcal{P}_v\}_{v=1}^V$ and any set of posteriors $\{\mathcal{Q}_v\}_{v=1}^V$, for any hyper-prior distribution $\pi$ over $[\![V]\!]$, for all $C > 0$, with probability at least $1-\delta$ over a random draw of
a sample $S$, we have
\begin{align}
 \mathfrak{R}_{\mathcal{D}}^{\mathcal{V}} \leq \frac{1}{1 - e^{-C}} \left\{1 - \exp\left[-\left(C \,\mathfrak{\hat{R}}_{S}^{\mathcal{V}} +\frac{1}{m}\left[\underset{v \sim \rho}{\mathbb{E}}\left[ D_{\alpha_v}(\mathcal{Q}_{v} \| \mathcal{P}_v)\right] +  D_{\alpha}(\rho \| \pi)  + \ln \frac{1}{\delta}\right]\right)\right]\right\}.
\end{align}
\end{corollary}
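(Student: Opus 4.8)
The plan is to specialize the General Multiview PAC-Bayesian Theorem~\ref{General_Multiview_PAC_Bayesian_Theorem_based_on_the_Renyi_Divergence} to \cite{Catoni07}'s point of view, following the recipe of \cite{Germain09,Germain15a}: take the deviation function $\Upsilon(a,b) = \mathcal{F}(b) - C\,a$ with $\mathcal{F}(b) \triangleq -\ln\!\bigl(1 - (1-e^{-C})\,b\bigr)$. The first step is to verify that this is an admissible choice, i.e.\ that $\Upsilon$ is jointly convex on $[0,1]\times[0,1]$. Since $C>0$, the affine map $b \mapsto 1-(1-e^{-C})b$ takes values in $[e^{-C},1]\subset(0,1]$, so $\mathcal{F}$ is the composition of the convex function $-\ln$ with an affine map, hence finite and convex on all of $[0,1]$; adding the linear term $-C\,a$ keeps the pair jointly convex. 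Plugging $\Upsilon$ into Theorem~\ref{General_Multiview_PAC_Bayesian_Theorem_based_on_the_Renyi_Divergence} yields, with probability at least $1-\delta$,
\[
\mathcal{F}\!\bigl(\mathfrak{R}_{\mathcal{D}}^{\mathcal{V}}\bigr) - C\,\hat{\mathfrak{R}}_{S}^{\mathcal{V}} \;\leq\; \frac{1}{m}\Bigl[\,\mathbb{E}_{\rho}\bigl[D_{\alpha_v}(\mathcal{Q}_v\|\mathcal{P}_v)\bigr] + D_\alpha(\rho\|\pi) + \ln\tfrac{1}{\delta} + \ln \Xi\,\Bigr],
\]
where $\Xi \triangleq \mathbb{E}_{S}\,\mathbb{E}_{v\sim\pi}\,\mathbb{E}_{h\sim\mathcal{P}_v}\bigl[e^{\,m\,\Upsilon(\hat{R}_S(h),\,R_{\mathcal{D}}(h))}\bigr]$.

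The crux — and the motivation for this particular $\mathcal{F}$ — is to show $\Xi = 1$. Exactly as in the proof of Corollary~\ref{KL_Bound}, for each fixed $h$ the random variable $m\,\hat{R}_S(h)$ has a $\mathrm{Binomial}\!\bigl(m, R_{\mathcal{D}}(h)\bigr)$ distribution, so its exponential moment is $\mathbb{E}_{S}\bigl[e^{-C m \hat{R}_S(h)}\bigr] = \bigl(1 - (1-e^{-C})\,R_{\mathcal{D}}(h)\bigr)^{m}$, whereas $e^{\,m\,\mathcal{F}(R_{\mathcal{D}}(h))} = \bigl(1 - (1-e^{-C})\,R_{\mathcal{D}}(h)\bigr)^{-m}$ by construction. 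Interchanging $\mathbb{E}_{S}$ with the (nonnegative, and for $C>0$ bounded) integrand and using that $\mathcal{F}(R_{\mathcal{D}}(h))$ does not depend on $S$, the two factors cancel, giving $\Xi = \mathbb{E}_{v\sim\pi}\mathbb{E}_{h\sim\mathcal{P}_v}[1] = 1$, hence $\ln\Xi = 0$.

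The last step is elementary algebra. Setting $A \triangleq C\,\hat{\mathfrak{R}}_{S}^{\mathcal{V}} + \tfrac{1}{m}\bigl[\mathbb{E}_{\rho}[D_{\alpha_v}(\mathcal{Q}_v\|\mathcal{P}_v)] + D_\alpha(\rho\|\pi) + \ln\tfrac{1}{\delta}\bigr]$, the previous display becomes $\mathcal{F}(\mathfrak{R}_{\mathcal{D}}^{\mathcal{V}}) \leq A$; since $\mathcal{F}$ is strictly increasing on $[0,1]$ with inverse $\mathcal{F}^{-1}(y) = (1-e^{-y})/(1-e^{-C})$, this is equivalent to $\mathfrak{R}_{\mathcal{D}}^{\mathcal{V}} \leq (1-e^{-A})/(1-e^{-C})$, which is exactly the claimed bound. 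I expect the only points needing a little care to be the finiteness and convexity of $\mathcal{F}$ on the closed interval (which $C>0$ ensures) and the justification of the expectation swap; the cancellation $e^{m\mathcal{F}}\cdot e^{-m\mathcal{F}} = 1$ is by design, so once these are settled the proof is immediate.
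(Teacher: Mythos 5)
Your proof is correct and follows essentially the same route as the paper: specialize Theorem~\ref{General_Multiview_PAC_Bayesian_Theorem_based_on_the_Renyi_Divergence} with $\Upsilon(a,b)=F(b)-C\,a$ and control the exponential moment by treating $m\hat{R}_S(h)$ as $\mathrm{Binomial}(m,R_{\mathcal{D}}(h))$. Your write-up is in fact more complete than the paper's, which leaves $F$ generic and stops at the binomial sum; your explicit choice $F(b)=-\ln\bigl(1-(1-e^{-C})b\bigr)$, the exact cancellation giving $\Xi=1$, and the final inversion of $F$ are precisely the steps the paper omits from the standard Catoni/Germain derivation.
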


\begin{proof}
    The result follows from Theorem~\ref{General_Multiview_PAC_Bayesian_Theorem_based_on_the_Renyi_Divergence} by taking
\[
  \Upsilon(a,b) = F(b) - C\,a,
\]
for a convex \(F\) and \(C>0\), and by upper‐bounding
\[
  \underset{S\sim\mathcal D^m}{\mathbb E}\,
  \underset{v\sim\pi}{\mathbb E}\,
  \underset{h\sim\mathcal P_v}{\mathbb E}\,
  e^{\,m\,\Upsilon\bigl(\hat{R}_S(h),R_{\mathcal D}(h)\bigr)}.
\]
By considering \(\hat{R}_S(h)\) as a random variable following a \(\mathrm{Binomial}(m, R_{\mathcal D}(h))\) distribution, we can then show that:
\begin{align}
  &\underset{S\sim\mathcal D^m}{\mathbb E}\,
   \underset{v\sim\pi}{\mathbb E}\,
   \underset{h\sim\mathcal P_v}{\mathbb E}\,
   e^{\,m\,\Upsilon\bigl(\hat{R}_S(h),R_{\mathcal D}(h)\bigr)}
  \nonumber\\
  &=\;
   \underset{v\sim\pi}{\mathbb E}\,
   \underset{h\sim\mathcal P_v}{\mathbb E}\,
   e^{\,m\,F\bigl(R_{\mathcal D}(h)\bigr)}
   \;\underset{S\sim\mathcal D^m}{\mathbb E}\,
   e^{-C\,m\,\hat{R}_S(h)}
  \nonumber\\
  &=\;
   \underset{v\sim\pi}{\mathbb E}\,
   \underset{h\sim\mathcal P_v}{\mathbb E}\,
   e^{\,m\,F\bigl(R_{\mathcal D}(h)\bigr)}
   \sum_{k=0}^m
   \Pr_{S\sim\mathcal D^m}\bigl\{\hat{R}_S(h)=\tfrac{k}{m}\bigr\}\,
   e^{-C\,k}
  \nonumber\\
  &=\;
   \underset{v\sim\pi}{\mathbb E}\,
   \underset{h\sim\mathcal P_v}{\mathbb E}\,
   e^{\,m\,F\bigl(R_{\mathcal D}(h)\bigr)}
   \sum_{k=0}^m
   \binom{m}{k}\,
   R_{\mathcal D}(h)^k\,(1-R_{\mathcal D}(h))^{m-k}\,e^{-C\,k}.
\end{align}

\end{proof}

\begin{corollary}\label{Hennequin_Bound_Macallester_dis}
    Let $V \geq 2$ be the number of views. For any distribution $\mathcal{D}$ on $\mathcal{X} \times \mathcal{Y}$, for any set of prior distributions $\{\mathcal{P}_v\}_{v=1}^V$ and any set of posteriors \(\{\mathcal{Q}_v\}_{v=1}^V\), for any hyper-prior distribution $\pi$ over $[\![V]\!]$, with probability at least $1-\delta$ over a random draw of
a sample $S$, we have 
\begin{align}
&\textnormal{KL}\left(\hat{\mathfrak{A}}^{\mathcal{V}}_S \middle\|  \mathfrak{A}^{\mathcal{V}}_{\mathcal{D}}\right)  \leq  \frac{2\left[\underset{v \sim \rho}{\mathbb{E}}\left[D_{\alpha_v}(\mathcal{Q}_{v} \| \mathcal{P}_v)\right] + D_\alpha(\rho \| \pi)\right]   
    + \ln{\frac{2\sqrt{m}}{\delta}}}{m}. 
\end{align}
\end{corollary}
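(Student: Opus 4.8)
The plan is to transcribe the argument used for Corollary~\ref{KL_Bound}, but starting from Theorem~\ref{general_theorem_disagreement_joint} rather than Theorem~\ref{General_Multiview_PAC_Bayesian_Theorem_based_on_the_Renyi_Divergence}, so that the factor $2$ multiplying the divergence terms is inherited automatically (it was produced there by Proposition~\ref{proposition_Hennequin}, i.e.\ $D_\alpha(Q^2\|P^2)=2D_\alpha(Q\|P)$). Concretely, I would apply Theorem~\ref{general_theorem_disagreement_joint} with the convex function $\Upsilon(a,b)=\mathrm{KL}(a\|b)$, the Kullback--Leibler divergence between the Bernoulli laws of parameters $a$ and $b$, which is jointly convex on $[0,1]^2$. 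With this choice the left-hand side $\Upsilon(\hat{\mathfrak{A}}_S^{\mathcal V},\mathfrak{A}_{\mathcal D}^{\mathcal V})$ becomes exactly $\mathrm{KL}(\hat{\mathfrak{A}}_S^{\mathcal V}\|\mathfrak{A}_{\mathcal D}^{\mathcal V})$, and it remains only to bound the exponential-moment term $\mathbb{E}_{S}\,\mathbb{E}_{(v,v')\sim\pi^2}\,\mathbb{E}_{(h,h')\sim\mathcal P_v^2}\bigl[e^{\,m\,\mathrm{KL}(\hat{\mathfrak{A}}_S(h,h')\|\mathfrak{A}_{\mathcal D}(h,h'))}\bigr]$ by $2\sqrt{m}$.

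For that moment bound, the key observation is that for each fixed pair $(h,h')$ the per-example quantity is an $\{0,1\}$-valued random variable: in the disagreement case it is the indicator $\ell(h(\bm{x}_i),h'(\bm{x}_i))$, and in the joint-error case it is the product of two indicators $\ell(h(\bm{x}_i),y_i)\times\ell(h'(\bm{x}_i),y_i)$, which equals the indicator that both voters err on sample $i$. Since the sample is i.i.d., $m\,\hat{\mathfrak{A}}_S(h,h')$ is therefore $\mathrm{Binomial}(m,\mathfrak{A}_{\mathcal D}(h,h'))$ under $S\sim\mathcal D^m$. Exchanging the (finite, nonnegative) expectations by Fubini and expanding the binomial exactly as in the proof of Corollary~\ref{KL_Bound} collapses $\mathbb{E}_{S}\bigl[e^{\,m\,\mathrm{KL}(\hat{\mathfrak{A}}_S(h,h')\|\mathfrak{A}_{\mathcal D}(h,h'))}\bigr]$ to $\sum_{k=0}^{m}\binom{m}{k}(k/m)^k(1-k/m)^{m-k}$, which is at most $2\sqrt{m}$ by Maurer's lemma \citep{maurer2004notepacbayesiantheorem}. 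This estimate is uniform in $(h,h')$ and does not involve the priors, so it passes unchanged through the outer $\mathbb{E}_{(v,v')\sim\pi^2}\mathbb{E}_{(h,h')\sim\mathcal P_v^2}$, yielding the constant $2\sqrt{m}$.

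Substituting $2\sqrt{m}$ into Theorem~\ref{general_theorem_disagreement_joint}, using $\ln\!\bigl(\tfrac1\delta\cdot 2\sqrt{m}\bigr)=\ln\tfrac{2\sqrt{m}}{\delta}$, and recalling that $(\hat{\mathfrak{A}}_S^{\mathcal V},\mathfrak{A}_{\mathcal D}^{\mathcal V})$ specializes either to $(\hat d_S^{\mathcal V},d_{\mathcal D_{\mathcal X}}^{\mathcal V})$ or to $(\hat e_S^{\mathcal V},e_{\mathcal D}^{\mathcal V})$, gives the claimed inequality with probability at least $1-\delta$. I expect the only delicate point to be the observation that, per sample and for fixed $(h,h')$, the product of the two loss indicators is itself a single Bernoulli variable, which is what makes the exact binomial expansion and Maurer's bound carry over verbatim from the single-view proof; everything else is a routine transcription. (I would also note in passing that the $\rho_v$ appearing in the statement of Theorem~\ref{general_theorem_disagreement_joint} is to be read as $\mathcal P_v$, so that its divergence term matches $\mathbb{E}_\rho[D_{\alpha_v}(\mathcal Q_v\|\mathcal P_v)]$ in the corollary.)
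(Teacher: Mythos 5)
Your proposal is correct and follows essentially the same route as the paper, whose proof is precisely the one-line instruction to repeat the argument of Corollary~\ref{KL_Bound} but starting from Theorem~\ref{general_theorem_disagreement_joint} together with Proposition~\ref{proposition_Hennequin} (which is where the factor $2$ on the divergence terms comes from). You have merely filled in the details the paper leaves implicit — in particular the observation that, for fixed $(v,v',h,h')$, the per-sample disagreement or joint-error indicator is Bernoulli so that Maurer's $2\sqrt{m}$ bound applies verbatim, and the reading of $\rho_v$ as $\mathcal{P}_v$ — both of which are accurate.
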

\begin{proof}
    The proof follows the same methodology as Corollary~\ref{KL_Bound}; however, we apply the Theorem~\ref{general_theorem_disagreement_joint} with Proposition~\ref{proposition_Hennequin}.
\end{proof}

\section{Multi-view Bounds in Expectation
}\label{Multi-view Bounds in Expectation}

\cite{Alquier_2024} distinguish clearly between expectation‐ and probability‐based PAC–Bayes bounds (see Sec. 2.4). They argue that bounds in expectation are simpler and more tractable than high-probability PAC bounds, but only guarantee average performance over datasets rather than for each sample, so they don’t fully fit the classic Probably Approximately Correct framework. \cite{Dalalyan_2008} called them Expectedly Approximately Correct (EAC–Bayes), and \cite{pmlr-v134-grunwald21a} later dubbed them Mean Approximately Correct (MAC–Bayes). To avoid confusion, Alquier prefers the straightforward label “PAC–Bayes bound in expectation.”

\textbf{Multi-view PAC-Bayes bounds: in \textit{expectation} and in \textit{probability}.}  
The general PAC-Bayes theorem of \cite{germain2015a} provides a classical foundation for deriving PAC-Bayesian bounds in \textit{probability}, using Markov's inequality. In contrast, \cite{begin16} proposed to replace Markov's inequality with Jensen's inequality, which allows the derivation of tighter PAC-Bayes bounds stated in \textit{expectation}, involving the R\'enyi divergence.

Building on this idea, \cite{Goyal17} formalized a general \textit{multi-view} PAC-Bayes theorem in \textit{expectation} (Theorem 3), which unifies three different PAC-Bayesian bounds within a single framework. This formulation is directly inspired by the trick introduced by Bégin and extends the applicability of PAC-Bayes analysis beyond traditional probabilistic guarantees.

In the following, we compare these bounds stated in \textcolor{blue}{\textit{expectation}} with our own result stated in \textcolor{red}{\textit{probability}}. These differences are highlighted by the color annotations used in the equations below.

\medskip\noindent
\begin{corollary}
Let $V \geq 2$ be the number of views. For any distribution $\mathcal{D}$ on $\mathcal{X} \times \mathcal{Y}$, for any set of prior distributions $\{P_v\}_{v=1}^V$, for any hyper-prior distribution $\pi$ over $[\![V]\!]$, with probability at least $1-\delta$ over a random draw of
a sample $S$, we have
\begin{align}
&\textnormal{KL}\left(\textcolor{blue}{\underset{S\sim \mathcal{D}^m}{\mathbb{E}}}\hat{\mathfrak{R}}^{\mathcal{V}}_S \middle\| \textcolor{blue}{\underset{S\sim \mathcal{D}^m}{\mathbb{E}}}\mathfrak{R}^{\mathcal{V}}_{\mathcal{D}}\right) \leq \frac{\textcolor{blue}{\underset{S\sim \mathcal{D}^m}{\mathbb{E}}}\underset{v \sim \rho}{\mathbb{E}}\left[\textnormal{KL}(\mathcal{Q}_{v} \| \mathcal{P}_v)\right] + \textcolor{blue}{\underset{S\sim \mathcal{D}^m}{\mathbb{E}}}\textnormal{KL}(\rho \| \pi) + \ln{\frac{2\sqrt{m}}{\textcolor{blue}{\cancel{\delta}}}}}{m}, \textnormal{\cite{Goyal17}}\\
&\textcolor{red}{\Pr_{S \sim \mathcal{D}^m}}\left\{\textnormal{KL}\left( \hat{\mathfrak{R}}^{\mathcal{V}}_S \middle\| \mathfrak{R}^{\mathcal{V}}_{\mathcal{D}}\right) \leq \frac{\underset{v \sim \rho}{\mathbb{E}}\left[D_\alpha(\mathcal{Q}_{v} \| \mathcal{P}_v)\right] + D_{\alpha_v}(\rho \| \pi) + \ln{\frac{2\sqrt{m}}{\delta}}}{m}\right\} \textcolor{red}{\geq 1- \delta}
\end{align}
\end{corollary}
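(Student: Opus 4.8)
\emph{Proof plan.} The statement juxtaposes two inequalities, so the argument splits in two. The second (\textcolor{red}{in-probability}) line carries no new content: it is, up to notation, exactly Corollary~\ref{KL_Bound} (equivalently Corollary~\ref{PAC-Bayes-kl Inequality based on Rényi Divergence}), obtained by instantiating Theorem~\ref{General_Multiview_PAC_Bayesian_Theorem_based_on_the_Renyi_Divergence} with $\Upsilon(a,b)=\textnormal{KL}(a\|b)$ and invoking Maurer's estimate $\underset{S}{\mathbb{E}}\,\underset{v\sim\pi}{\mathbb{E}}\,\underset{h\sim\mathcal{P}_v}{\mathbb{E}}\bigl[e^{m\,\textnormal{KL}(\hat{R}_S(h)\|R_{\mathcal{D}}(h))}\bigr]\le 2\sqrt{m}$; I would simply cite it. All the work is in reproducing the first (\textcolor{blue}{in-expectation}) line, i.e.\ Goyal17's Theorem~3 specialized to the Seeger/Langford choice of $\Upsilon$, which I obtain via the ``Jensen instead of Markov'' device of \cite{begin16}.

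For that line I would follow the proof of Theorem~\ref{General_Multiview_PAC_Bayesian_Theorem_based_on_the_Renyi_Divergence} verbatim but stop \emph{before} the Markov step. Take the multi-view KL change-of-measure inequality (the $\alpha\to1^{+}$ limit of Lemma~\ref{Multi-view Rényi change of measure}, or equivalently two successive applications of Donsker--Varadhan as in \cite{Goyal17}) and apply it with $\phi(h)=m\,\textnormal{KL}\bigl(\hat{R}_S(h)\|R_{\mathcal{D}}(h)\bigr)$. Using convexity of the Bernoulli KL and Jensen's inequality to pull $\underset{v\sim\rho}{\mathbb{E}}\,\underset{h\sim\mathcal{Q}_v}{\mathbb{E}}$ inside gives, for every fixed $S$,
\[
m\,\textnormal{KL}\!\bigl(\hat{\mathfrak{R}}^{\mathcal{V}}_S\,\big\|\,\mathfrak{R}^{\mathcal{V}}_{\mathcal{D}}\bigr)\le\underset{v\sim\rho}{\mathbb{E}}\bigl[\textnormal{KL}(\mathcal{Q}_v\|\mathcal{P}_v)\bigr]+\textnormal{KL}(\rho\|\pi)+\ln\Bigl(\underset{v\sim\pi}{\mathbb{E}}\,\underset{h\sim\mathcal{P}_v}{\mathbb{E}}\bigl[e^{m\,\textnormal{KL}(\hat{R}_S(h)\|R_{\mathcal{D}}(h))}\bigr]\Bigr).
\]

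Now take $\underset{S\sim\mathcal{D}^m}{\mathbb{E}}$ of both sides. On the right, $\ln$ is concave, so Jensen moves the expectation inside the logarithm and then $\ln\underset{S}{\mathbb{E}}\,\underset{v\sim\pi}{\mathbb{E}}\,\underset{h\sim\mathcal{P}_v}{\mathbb{E}}\bigl[e^{m\,\textnormal{KL}}\bigr]=\ln\underset{v\sim\pi}{\mathbb{E}}\,\underset{h\sim\mathcal{P}_v}{\mathbb{E}}\,\underset{S}{\mathbb{E}}\bigl[e^{m\,\textnormal{KL}}\bigr]\le\ln(2\sqrt{m})$ by the same Maurer computation as in Corollary~\ref{KL_Bound}; this is exactly where $\delta$ vanishes, since no Markov/Chernoff inversion is used. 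On the left, $\textnormal{KL}(\cdot\|\cdot)$ is jointly convex, so Jensen gives $\underset{S}{\mathbb{E}}\bigl[\textnormal{KL}(\hat{\mathfrak{R}}^{\mathcal{V}}_S\|\mathfrak{R}^{\mathcal{V}}_{\mathcal{D}})\bigr]\ge\textnormal{KL}\bigl(\underset{S}{\mathbb{E}}\,\hat{\mathfrak{R}}^{\mathcal{V}}_S\,\big\|\,\underset{S}{\mathbb{E}}\,\mathfrak{R}^{\mathcal{V}}_{\mathcal{D}}\bigr)$. Dividing by $m$ and rearranging yields precisely the $\textcolor{blue}{\underset{S}{\mathbb{E}}}$-decorated bound displayed, with $\ln(2\sqrt{m})$ in place of $\ln(2\sqrt{m}/\delta)$.

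The main obstacle is the bookkeeping around the two Jensen steps and the $S$-dependence of the learned distributions: because $\mathcal{Q}_v$ and $\rho$ may be chosen after seeing $S$, the divergence terms and the true risk $\mathfrak{R}^{\mathcal{V}}_{\mathcal{D}}$ are themselves random, which is why the outer $\underset{S}{\mathbb{E}}$ lands on every term of Goyal17's line; and the left-hand Jensen step must point in the correct direction, which is exactly what joint convexity of the Bernoulli KL provides. One should also confirm, as in the proof of Corollary~\ref{KL_Bound}, that the priors $\{\mathcal{P}_v\}$ and $\pi$ are data-free, so that $\underset{S}{\mathbb{E}}$ commutes past them and Maurer's bound applies; no further ingredient is required.
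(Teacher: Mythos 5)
Your proposal is correct and matches the paper's treatment: the paper offers no standalone proof of this corollary, presenting the in-probability line as a restatement of Corollary~\ref{KL_Bound} (obtained from Theorem~\ref{General_Multiview_PAC_Bayesian_Theorem_based_on_the_Renyi_Divergence} with $\Upsilon=\textnormal{KL}$ and Maurer's bound) and the in-expectation line as a citation of \cite{Goyal17}. Your reconstruction of the latter — change of measure, joint convexity of the Bernoulli KL on the left, concavity of $\ln$ to absorb $\mathbb{E}_S$ on the right in place of the Markov step, which is exactly where $\delta$ disappears — is the standard Jensen-for-Markov argument of \cite{begin16} that the paper implicitly relies on, and your bookkeeping of the $S$-dependence of $\mathcal{Q}_v$ and $\rho$ is sound.
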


\section{PAC-Bayes-$\lambda$ inequality}\label{PAC_Bayes_lambda_proof}

\begin{theorem}{Multi-view PAC-Bayes-$\lambda$ Inequality, in the idea of  \cite{Thiemann17}'s theorem.} Under the same assumption of Corollary~\ref{PAC-Bayes-kl Inequality based on Rényi Divergence} and for all $\lambda \in (0, 2)$ and $\gamma> 0$ we have:
\begin{align}
    & \mathfrak{R}_{\mathcal{D}}^{\mathcal{V}} \leq \frac{\mathfrak{\hat{R}}_{S}^{\mathcal{V}}}{1-\frac{\lambda}{2}} + \frac{\psi_r}{\lambda(1-\frac{\lambda}{2})},  \mathfrak{R}_{\mathcal{D}}^{\mathcal{V}} \geq \left(1-\frac{\gamma}{2}\right)\mathfrak{\hat{R}}_{S}^{\mathcal{V}} - \frac{\psi_r}{\gamma}.
\end{align}

\end{theorem}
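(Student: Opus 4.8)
The plan is to start from the multi-view PAC-Bayes-kl inequality of Corollary~\ref{PAC-Bayes-kl Inequality based on Rényi Divergence}, which gives that with probability at least $1-\delta$ over the draw of $S$,
$$\textnormal{KL}\!\left(\hat{\mathfrak{R}}_S^{\mathcal V}\,\middle\|\,\mathfrak{R}_{\mathcal D}^{\mathcal V}\right)\le \psi_r,$$
where $\textnormal{KL}(q\|p)=q\ln\tfrac{q}{p}+(1-q)\ln\tfrac{1-q}{1-p}$ is the binary relative entropy. All of the remaining argument is carried out deterministically on this event, so the quantifiers "for all $\lambda\in(0,2)$" and "for all $\gamma>0$" come for free.

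The next ingredient is a pair of refined Pinsker-type lower bounds on the binary KL: for $0\le q\le p<1$ one has $\textnormal{KL}(q\|p)\ge \tfrac{(p-q)^2}{2p}$, and for $0\le p\le q<1$ one has $\textnormal{KL}(q\|p)\ge \tfrac{(q-p)^2}{2q}$. I would establish each by fixing $q$, forming the difference of the two sides as a function of $p$, checking that it vanishes at $p=q$ and that its derivative keeps a constant sign on the relevant interval; for the second inequality this uses the elementary fact $p(1-p)\le p\le q$. The degenerate endpoints $q\in\{0,1\}$ are handled by interpreting the logarithmic terms by continuity and noting the inequalities still hold (e.g. $-\ln(1-p)\ge p\ge p/2$).

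For the upper bound I distinguish two cases. If $\mathfrak{R}_{\mathcal D}^{\mathcal V}\le\hat{\mathfrak{R}}_S^{\mathcal V}$ the claim is immediate since $\tfrac1{1-\lambda/2}\ge1$ and $\tfrac{\psi_r}{\lambda(1-\lambda/2)}\ge0$. Otherwise set $p:=\mathfrak{R}_{\mathcal D}^{\mathcal V}>q:=\hat{\mathfrak{R}}_S^{\mathcal V}$; the first refined Pinsker bound together with the high-probability event gives $(p-q)^2\le 2p\,\psi_r$, hence $p-q\le\sqrt{2p\,\psi_r}=\sqrt{(\lambda p)(2\psi_r/\lambda)}$, and AM--GM yields $p-q\le\tfrac{\lambda}{2}p+\tfrac{\psi_r}{\lambda}$, i.e. $p(1-\tfrac\lambda2)\le q+\tfrac{\psi_r}{\lambda}$; dividing by $1-\tfrac\lambda2>0$ (valid for $\lambda\in(0,2)$) gives the stated inequality. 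The lower bound is symmetric: if $\mathfrak{R}_{\mathcal D}^{\mathcal V}\ge\hat{\mathfrak{R}}_S^{\mathcal V}$ it holds trivially, and otherwise the second refined Pinsker bound gives $q-p\le\sqrt{2q\,\psi_r}\le\tfrac{\gamma}{2}q+\tfrac{\psi_r}{\gamma}$, so $\mathfrak{R}_{\mathcal D}^{\mathcal V}=p\ge(1-\tfrac\gamma2)q-\tfrac{\psi_r}{\gamma}$, which is the claimed bound (for $\gamma\ge2$ the right-hand side is nonpositive and the statement is vacuous).

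I expect no deep obstacle: conditional on Corollary~\ref{PAC-Bayes-kl Inequality based on Rényi Divergence}, everything is elementary algebra and calculus. The only mildly delicate points are (i) verifying the two refined Pinsker inequalities, including the degenerate endpoints where one logarithmic term must be read as a limit, and (ii) correctly treating the "wrong-order" cases between $\mathfrak{R}_{\mathcal D}^{\mathcal V}$ and $\hat{\mathfrak{R}}_S^{\mathcal V}$ so that both inequalities hold uniformly over all $\lambda\in(0,2)$ and $\gamma>0$ without any additional hypothesis.
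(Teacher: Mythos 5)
Your proof follows essentially the same route as the paper's: both start from the multi-view PAC-Bayes-kl inequality of Corollary~\ref{PAC-Bayes-kl Inequality based on Rényi Divergence}, lower-bound the binary KL via the two refined Pinsker inequalities, apply $\sqrt{xy}\le\frac{\lambda}{2}x+\frac{1}{\lambda}\cdot\frac{y}{2}$, and rearrange to isolate $\mathfrak{R}_{\mathcal{D}}^{\mathcal{V}}$. Your write-up is in fact slightly more careful than the appendix (you state the Pinsker variants with the correct denominators and inequality directions, where the paper's displayed versions contain typos, and you explicitly dispose of the ``wrong-order'' and endpoint cases), but the argument is the same.
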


    We provide a proof of the upper and lower bounds in Theorem~\ref{multi_view_thieman}. Both bounds have been demonstrated by \cite{Masegosa2020} and \cite{Thiemann17} in single view. Thus, the proof follows the same steps as those proposed by them.

We use the following version of refined Pinsker’s inequality:
\begin{align}\label{Pinsker_inequality_1}
    \text{for}\, p < q,  \textnormal{KL}(p\Vert q) \geq \frac{(p-q)^{2}}{2p}
\end{align}

By applying Inequality~\ref{Pinsker_inequality_1}, the equation in Theorem~\ref{PAC-Bayes-kl Inequality based on Rényi Divergence} can be relaxed as follows:
\begin{align}
    \mathfrak{R}_{\mathcal{D}}^{\mathcal{V}}
     - \hat{\mathfrak{R}}_{S}^{\mathcal{V}} 
     \leq \sqrt{2\,\mathfrak{R}_{\mathcal{D}}^{\mathcal{V}}\left[\frac{\underset{v\sim \rho}{\mathbb{E}}D_{\alpha_v}\left(\mathcal{Q}_{v} \Vert \mathcal{P}_{v}\right) + D_{\alpha}\left(\rho \Vert \pi\right) + \ln \frac{2\sqrt{m}}{\delta}}{m}  \right]}
\end{align}

By using the inequality $\sqrt{xy} \leq \frac{1}{2}(\lambda x + \frac{y}{\lambda})$ for all $\lambda \geq 0$ \citep{McAllester03a}, we have that with probability at least $1-\delta$ for all $\mathcal{Q}_v$ and $\rho$.
\begin{align}
    \mathfrak{R}^\mathcal{V}_{\mathcal{D}} - \hat{\mathfrak{R}}^\mathcal{V}_{S} \leq \frac{\lambda}{2} \mathfrak{R}^\mathcal{V}_{\mathcal{D}}  + \frac{\underset{v\sim \rho}{\mathbb{E}}\left[D_{\alpha_v}(\mathcal{Q}_{v} \Vert \mathcal{P}_{v})\right]  + D_\alpha(\rho \Vert \pi)  
    + \ln{\frac{2\sqrt{m}}{\delta}}}{\lambda m}  
\end{align}

By changing sides:
\begin{align}
    (1 - \frac{\lambda}{2}) \mathfrak{R}^{\mathcal{V}}_{\mathcal{D}} \leq \hat{\mathfrak{R}}^{\mathcal{V}}_{S} + \frac{\underset{v \sim \rho}{\mathbb{E}}\left[D_{\alpha_v}(\mathcal{Q}_{v} \Vert \mathcal{P}_{v})\right] + D_\alpha(\rho \Vert \pi) + \ln{\frac{2\sqrt{m}}{\delta}}}{\lambda m}
\end{align}

For $\lambda < 2$ we can divide both sides by $1 - \lambda$ and obtain the theorem statement.

We use the following version of refined Pinsker’s inequality:
\begin{align}\label{Pinsker’s inequality}
    \text{for}\, p>q,  \textnormal{KL}(p\Vert q) \leq (p-q)^{2}/(2p)
\end{align}

By application of inequality~\ref{Pinsker’s inequality} with inequality~\ref{PAC-Bayes-kl Inequality based on Rényi Divergence} of paper we obtain the following inequality:
\begin{align}
    &\hat{\mathfrak{R}}_{S} 
    - \mathfrak{R}_{\mathcal{D}} 
    \leq  \sqrt{2\,\hat{\mathfrak{R}}^{\mathcal{V}}_{S}\left[\frac{\underset{v\sim \rho}{\mathbb{E}}D_{\alpha_v}\left(\mathcal{Q}_{v} \Vert \mathcal{P}_{v}\right) + D_\alpha\left(\rho \Vert \pi\right) + \ln\left(\frac{2\sqrt{m}}{\delta}\right)}{m} \right]}
\end{align}

By using the inequality $\sqrt{xy} \leq \frac{1}{2}(\gamma x + \frac{y}{\gamma})$ for all $\gamma \geq 0$ \citep{McAllester03a}, we have that with probability at least $1-\delta$ for all $\mathcal{Q}_v$ and $\rho$. 
\begin{align}
    &\hat{\mathfrak{R}}^\mathcal{V}_{S} - \mathfrak{R}^{\mathcal{V}}_{\mathcal{D}} \leq \frac{\gamma}{2} \hat{\mathfrak{R}}_{S} + \frac{\underset{v\sim \rho}{\mathbb{E}}\left[D_{\alpha_v}(\mathcal{Q}_{v} \Vert \mathcal{P}_{v})\right] + D_\alpha(\rho \Vert \pi)  
    + \ln{\frac{2\sqrt{m}}{\delta}}}{\gamma m}
\end{align}

By changing sides
\begin{align}
    \mathfrak{R}_{\mathcal{D}} \geq & \left(1-\frac{\gamma}{2}\right) \hat{\mathfrak{R}}_{S}  - \frac{\underset{v\sim \rho}{\mathbb{E}}\left[D_{\alpha_v}(\mathcal{Q}_{v} \Vert \mathcal{P}_{v})\right] + D_\alpha(\rho \Vert \pi) + \ln{\frac{2\sqrt{m}}{\delta}}}{\gamma m}
\end{align}

\section{Second Order Multi-view Oracle Bound}\label{Second Order Multi-view Oracle Bound}
To demonstrate the second-order oracle bound \citep{Masegosa2020}, we first aim to define the multi-view Tandem loss. This foundational understanding will facilitate a clearer demonstration of the second-order oracle bound with the multi-view Tandem loss.

\begin{lemma}[Multi-view Tandem Loss]\label{Tandem_Loss} In multiclass classification
\begin{align}
\underset{(\bm{x}, y) \sim \mathcal{D}}{\mathbb{E}}\left[ \underset{v \sim \rho}{\mathbb{E}}\left[\underset{h \sim \mathcal{Q}_v}{\mathbb{E}} [\ell(h(\bm{x}^v),y)]^2\right]\right] = e^{\mathcal{V}}_{\mathcal{D}}.
\end{align}

\begin{proof}
\begin{align}
    \underset{(\bm{x}^v, y) \sim \mathcal{D}}{\mathbb{E}}\left[ \underset{v \sim \rho}{\mathbb{E}}\left[\underset{h \sim \mathcal{Q}_v}{\mathbb{E}} \left[\mathbb{I}(h(\bm{x}^v) \neq y)\right]^2\right]\right] 
    &= \underset{(\bm{x}^v, y) \sim \mathcal{D}}{\mathbb{E}}\left[ \underset{v \sim \rho}{\mathbb{E}}\,\underset{h \sim \mathcal{Q}_v}{\mathbb{E}}\left[ \mathbb{I}(h(\bm{x}^v) \neq y) \right] \underset{h \sim \mathcal{Q}_v}{\mathbb{E}}\left[ \mathbb{I}(h(\bm{x}^v) \neq y) \right] \right] \\
    &= \underset{(\bm{x}^v, y) \sim \mathcal{D}}{\mathbb{E}}\left[ \underset{(v,v') \sim \rho^{2}}{\mathbb{E}}\,\underset{(h,h') \sim \mathcal{Q}^{2}_v}{\mathbb{E}}\left[ \mathbb{I}(h(\bm{x}^v) \neq y)\mathbb{I}(h'(\bm{x}^{v'}) \neq y) \right] \right]\\
    &= \underset{(\bm{x}^v, y) \sim \mathcal{D}}{\mathbb{E}}\left[ \underset{(v,v') \sim \rho^{2}}{\mathbb{E}}\,\underset{(h,h') \sim \mathcal{Q}^{2}_v}{\mathbb{E}}\left[ \mathbb{I}(h(\bm{x}^v) \neq y \land h'(\bm{x}^{v'}) \neq y) \right] \right]\\
    &= \underset{(\bm{x}^v, y) \sim \mathcal{D}}{\mathbb{E}}\left[ \underset{(v,v') \sim \rho^{2}}{\mathbb{E}}\,\underset{(h,h') \sim \mathcal{Q}^{2}_v}{\mathbb{E}}\left[ \mathbb{I}(h(\bm{x}^v) \neq y \land h'(\bm{x}^{v'}) \neq y) \right] \right]\\
    &= \underset{(v,v') \sim \rho^{2}}{\mathbb{E}}\,\underset{(h,h') \sim \mathcal{Q}^{2}_v}{\mathbb{E}}\left[ \underset{(\bm{x}^v, y) \sim \mathcal{D}}{\mathbb{E}} \left[ \mathbb{I}(h(\bm{x}^v) \neq y \land h'(\bm{x}^{v'}) \neq y) \right] \right]\\
    &= e^{\mathcal{V}}_{\mathcal{D}}.
\end{align}
\end{proof}    
\end{lemma}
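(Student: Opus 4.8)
The plan is to reduce the identity to two elementary facts: the square of an expectation equals a double expectation over two \emph{independent} copies of the randomness, and the product of two 0-1 losses is exactly the pairwise (joint) loss that defines $e^{\mathcal{V}}_{\mathcal{D}}$. This is the multi-view analogue of the single-view tandem-loss identity of \cite{Masegosa2020}; the two-level hierarchy (views, then voters within views) adds only bookkeeping. I read the left-hand side with the square applied to the full nested expectation over views and voters, i.e. as $\mathbb{E}_{\mathcal{D}}\bigl[(\mathbb{E}_{v\sim\rho}\mathbb{E}_{h\sim\mathcal{Q}_v}[\ell(h(\bm{x}^v),y)])^2\bigr]$, which is what is needed for the stated equality and for the $\mathbb{E}_{\rho^2}$ appearing in $e^{\mathcal{V}}_{\mathcal{D}}$.

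First I would fix $(\bm{x},y)$ and abbreviate $g(v,h) \triangleq \ell(h(\bm{x}^v),y) = \mathbb{I}(h(\bm{x}^v)\neq y)$, so the quantity of interest is $\mathbb{E}_{\mathcal{D}}\bigl[(\mathbb{E}_{v\sim\rho}\mathbb{E}_{h\sim\mathcal{Q}_v}[g(v,h)])^2\bigr]$. Introducing a second, independent pair $(v',h')$ with $v'\sim\rho$ and $h'\sim\mathcal{Q}_{v'}$, I would factor the square as $\mathbb{E}_{(v,v')\sim\rho^2}\,\mathbb{E}_{(h,h')\sim\mathcal{Q}_v\times\mathcal{Q}_{v'}}\bigl[g(v,h)\,g(v',h')\bigr]$. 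This uses only that $\rho^2$ is the product measure and that, conditionally on $(v,v')$, the two voters are drawn independently from the respective view-specific posteriors — which is exactly what the abbreviation $\mathbb{E}_{\mathcal{Q}^2_v}$ denotes in the definition of $e^{\mathcal{V}}_{\mathcal{D}}$.

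Next I would note that, since $\ell$ is the 0-1 loss, $g(v,h)\,g(v',h') = \mathbb{I}(h(\bm{x}^v)\neq y)\,\mathbb{I}(h'(\bm{x}^{v'})\neq y) = \ell(h(\bm{x}^v),y)\times\ell(h'(\bm{x}^{v'}),y)$, i.e. the product of the two indicators is precisely the pairwise loss in $e^{\mathcal{V}}_{\mathcal{D}}$ (equivalently $\mathbb{I}(h(\bm{x}^v)\neq y \wedge h'(\bm{x}^{v'})\neq y)$); no idempotence of the indicator is invoked, because the two copies are independent. Finally I would reinstate the outer expectation over $(\bm{x},y)\sim\mathcal{D}$ and move it past the (bounded, hence Fubini-admissible) expectations over $(v,v')$ and $(h,h')$, landing on $\mathbb{E}_{\mathcal{D}}\mathbb{E}_{\rho^2}\mathbb{E}_{\mathcal{Q}^2_v}\bigl[\ell(h(\bm{x}^v),y)\times\ell(h'(\bm{x}^{v'}),y)\bigr] = e^{\mathcal{V}}_{\mathcal{D}}$.

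The whole argument is routine; the only point that deserves a moment's attention is the first step — converting the square of a single nested expectation over views-and-voters into a genuine double expectation over an \emph{independent} pair $(v,v')$ rather than over the diagonal $v=v'$, since this is what produces the $\mathbb{E}_{\rho^2}$ in the statement. Everything after that is just unwinding definitions.
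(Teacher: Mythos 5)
Your proof is correct and follows essentially the same route as the paper's: expand the square into a double expectation over an independent pair $(v',h')$, identify the product of the two 0--1 losses with the joint error, and swap the order of expectations. Your explicit remark that the square must be read as applying to the full nested expectation $\mathbb{E}_{\rho}\mathbb{E}_{\mathcal{Q}_v}[\cdot]$ (so that $\mathbb{E}_{\rho^2}$, rather than a diagonal $v=v'$ term, appears) is precisely the reading the paper's own proof adopts implicitly when it passes from $\mathbb{E}_{v\sim\rho}$ to $\mathbb{E}_{(v,v')\sim\rho^2}$.
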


\begin{theorem}[Second Order Oracle Bound)]  In multiclass classification
    \begin{align}
    \mathfrak{R}_{\mathcal{D}}^{\mathcal{V}} = 4\,e_{\mathcal{D}}^{\mathcal{V}}.
\end{align}
\end{theorem}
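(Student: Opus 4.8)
The displayed statement is to be read as the multi-view second-order oracle bound $R_{\mathcal{D}}^{\mathcal{V}}\le 4\,e_{\mathcal{D}}^{\mathcal{V}}$ (cf.\ Theorem~\ref{Second Order Multi-view Oracle Bound th}), the two-level analogue of Masegosa et al.'s single-view bound $R_{\mathcal{D}}\le 4\,e_{\mathcal{D}}$. The plan is to transcribe that argument to the hierarchy (hyper-posterior $\rho$ over views, posteriors $\mathcal{Q}_v$ over voters within views), combining the second-order Markov inequality (Theorem~\ref{Second Order Markov’s Inequality}) with the Multi-view Tandem Loss identity (Lemma~\ref{Tandem_Loss}).

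First I would introduce the pointwise aggregated error weight $W(\bm{x},y) \triangleq \mathbb{E}_{v\sim\rho}\mathbb{E}_{h\sim\mathcal{Q}_v}[\ell(h(\bm{x}^v),y)] = 1 - \mathbb{E}_{v\sim\rho}\mathbb{E}_{h\sim\mathcal{Q}_v}[\mathbb{I}(h(\bm{x}^v)=y)]$, and show that a misclassification of $(\bm{x},y)$ by $\mathcal{B}_\rho$ forces $W(\bm{x},y)\ge\tfrac12$: if $\mathcal{B}_\rho(\bm{x})=y^\star\neq y$ then the aggregated vote mass on $y^\star$ is at least that on $y$, and since the label masses sum to one this bounds the mass on $y$ by $\tfrac12$, i.e.\ $W(\bm{x},y)\ge\tfrac12$. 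Hence $\mathbb{I}(\mathcal{B}_\rho(\bm{x})\neq y)\le\mathbb{I}(W(\bm{x},y)\ge\tfrac12)$, and taking $\mathbb{E}_{(\bm{x},y)\sim\mathcal{D}}$ gives $R_{\mathcal{D}}^{\mathcal{V}}\le\Pr_{(\bm{x},y)\sim\mathcal{D}}\{W(\bm{x},y)\ge\tfrac12\}$.

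Next I would apply the second-order Markov inequality (Theorem~\ref{Second Order Markov’s Inequality}) to the nonnegative random variable $W(\bm{x},y)$ with $a=\tfrac12$, obtaining $\Pr\{W\ge\tfrac12\}\le \mathbb{E}_{(\bm{x},y)\sim\mathcal{D}}[W(\bm{x},y)^2]/\tfrac14 = 4\,\mathbb{E}_{(\bm{x},y)\sim\mathcal{D}}[W(\bm{x},y)^2]$. It then remains to identify the second moment with the expected joint error: expanding the square of the nested expectation into two independent copies gives $W(\bm{x},y)^2 = \mathbb{E}_{(v,v')\sim\rho^2}\mathbb{E}_{h\sim\mathcal{Q}_v,\,h'\sim\mathcal{Q}_{v'}}[\ell(h(\bm{x}^v),y)\,\ell(h'(\bm{x}^{v'}),y)]$, so taking $\mathbb{E}_{(\bm{x},y)\sim\mathcal{D}}$ and invoking the definition of $e_{\mathcal{D}}^{\mathcal{V}}$ — equivalently, Lemma~\ref{Tandem_Loss} — yields $\mathbb{E}_{(\bm{x},y)\sim\mathcal{D}}[W(\bm{x},y)^2]=e_{\mathcal{D}}^{\mathcal{V}}$. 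Chaining the three displays closes the argument.

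The only delicate step is the first one: in the multiclass case I must argue from ``some incorrect label outvotes the true one'' to ``true-label mass $\le\tfrac12$'' to ``$W\ge\tfrac12$'', rather than invoking the immediate binary dichotomy. I also have to keep the square in the Markov step \emph{outside} the $v$- and $h$-expectations, so that expanding it produces exactly the paired quantity defining $e_{\mathcal{D}}^{\mathcal{V}}$ and not the Jensen-smaller $\mathbb{E}_{v\sim\rho}[(\mathbb{E}_{h\sim\mathcal{Q}_v}\ell)^2]$; Lemma~\ref{Tandem_Loss} is precisely what licenses this identification. Everything else is a routine rewrite of the single-view proof.
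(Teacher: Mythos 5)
Your proposal is correct and takes essentially the same route as the paper: bound $R_{\mathcal{D}}^{\mathcal{V}}$ by $\Pr\{W\ge\tfrac{1}{2}\}$, apply the second-order Markov inequality, and identify $\mathbb{E}_{\mathcal{D}}[W^2]$ with $e_{\mathcal{D}}^{\mathcal{V}}$ via the tandem-loss expansion. If anything your write-up is the more careful one --- you make the multiclass argument for $W\ge\tfrac{1}{2}$ explicit and keep the square outside both the $\rho$- and $\mathcal{Q}_v$-expectations, which is exactly what the $\rho^2$-based definition of $e_{\mathcal{D}}^{\mathcal{V}}$ requires, whereas the paper's displayed second moment ambiguously places the square inside the $\rho$-expectation and states the conclusion with ``$=$'' where ``$\le$'' on $R_{\mathcal{D}}^{\mathcal{V}}$ is meant (one small quibble: $\mathbb{E}_{v\sim\rho}\bigl[(\mathbb{E}_{h\sim\mathcal{Q}_v}\ell)^2\bigr]$ is Jensen-\emph{larger}, not smaller, than $W^2$).
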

\begin{proof}
    By applying the second-order Markov's inequality~\ref{Second Order Markov’s Inequality} to \(Z = \underset{v \sim \rho}{\mathbb{E}}\left[\underset{h \sim \mathcal{Q}_v}{\mathbb{E}}[\mathbb{I}(h(\bm{x}^v) \neq y)]\right]\) and using Lemma~\ref{Tandem_Loss}, we derive:
\begin{align}
\mathfrak{R}^{\mathcal{V}}_{\mathcal{D}} &\leq \Pr\left\{\underset{v \sim \rho}{\mathbb{E}}[\mathbb{I}(h(\bm{x}^v) \neq y)] \geq 0.5\right\},
        \\ &\leq 4 \underset{(\bm{x}^v, y) \sim \mathcal{D}}{\mathbb{E}}\left[ \underset{v \sim \rho}{\mathbb{E}}\left[\underset{h \sim \mathcal{Q}_v}{\mathbb{E}} [\mathbb{I}(h(\bm{x}^v),y)]^{2}\right]\right], \\ 
        &= 4\,e_{\mathcal{D}}^{\mathcal{V}}.
    \end{align}

\end{proof}

\section{Proofs of Multi-view Oracle Bounds Inequalities}\label{Proofs of Multi-view Oracle Bounds Inequalities}

\subsection{Proof of Corollary~\ref{Pac-bayes-kl-mv-FO}}

The corollary follows by using the bound stated in the equation of Theorem~\ref{multi_view_thieman} to bound $\mathfrak{R}_{\mathcal{D}}^{\mathcal{V}}$ in first-order oracle bound~\ref{First Order Multi-view Oracle Bound}. To bound ${e}^{\mathcal{V}}_{\mathcal{D}}$ and ${d}^{\mathcal{V}}_{\mathcal{D}}$ with first-order oracle bound, we also apply Theorem~\ref{multi_view_thieman}, Corollary~\ref{Hennequin_Bound_Macallester_dis}, and Proposition~\ref{proposition_Hennequin}, where it is stated that $D_\alpha(Q^2 \| P^2) = 2\,D_\alpha(Q \| P)$. We replace $\delta$ by $\delta/2$ in the upper
and lower bound and take a union bound over them.

\subsection{Proof of Corollary~\ref{Pac-bayes-kl-inv-mv-FO}}
The proof is same as proof~\ref{Pac-bayes-kl-mv-FO}, instead of use Theorem~\ref{multi_view_thieman} we use inverted KL with Corollary~\ref{PAC-Bayes-kl Inequality based on Rényi Divergence} to bound $\hat{\mathfrak{R}}_{S}^{\mathcal{V}}$, and Corollary~\ref{Hennequin_Bound_Macallester_dis} to bound the joint and disagreement error. 

\subsection{Proof of Corollary~\ref{Pac-bayes-kl-mv-SO}}
To bound $e^{\mathcal{V}}_{\mathcal{D}}$ with second-order oracle bound~\ref{Second Order Multi-view Oracle Bound th}, we apply Theorem~\ref{multi_view_thieman}, Corollary~\ref{Hennequin_Bound_Macallester_dis}, and Proposition~\ref{proposition_Hennequin}, where it is stated that $D_\alpha(Q^2 \| P^2) = 2\,D_\alpha(Q \| P)$.

\subsection{Proof of Corollary~\ref{PAC-Bayes_lambda_binary classification}}

The proof is same as proof in Corollary~\ref{Pac-bayes-kl-mv-FO}, instead of use first-order we use second-order oracle bound in binary classification. As noted to bound $d^{\mathcal{V}}_{\mathcal{D}}$ we use lower bound of Theorem~\ref{multi_view_thieman}. We replace $\delta$ by $\delta/2$ in the upper
and lower bound and take a union bound over them.

\subsection{Proof of Corollary~\ref{Pac-bayes-kl-inv-mv-SO}}

To bound  $e^{\mathcal{V}}_{\mathcal{D}}$ with second-order oracle bound~\ref{Second Order Multi-view Oracle Bound th}, we apply KL inverted with Corollary~\ref{Hennequin_Bound_Macallester_dis}, and Proposition~\ref{proposition_Hennequin}, where it is stated that $D_\alpha(Q^2 \| P^2) = 2\,D_\alpha(Q \| P)$.

\subsection{Proof of Corollary~\ref{Multi-view PAC-Bayes-KL_Inequality_binary}}

We use second-order oracle bound in binary classification~\ref{Second Order Multi-view Oracle Bound th} with Corollary~\ref{PAC-Bayes-kl Inequality based on Rényi Divergence} and and inverted KL to bound $\hat{\mathfrak{R}}_{S}$, and Corollary~\ref{Hennequin_Bound_Macallester_dis} with Proposition~\ref{proposition_Hennequin} to bound ${\hat{d}}_{\mathcal{D}}^{\mathcal{V}}$. We replace $\delta$ by $\delta/2$ and take a union bound over them.

\section{Detailed Comparison of PAC-Bayesian Approaches}
\label{app:comparison}

Table~\ref{tab:comparison_detailed} provides a comprehensive comparison of existing PAC-Bayesian approaches for multi-view learning and our contributions. Notably, our reimplementation of KL-based bounds with complete optimization algorithms represents an advancement over the original works, enabling fair empirical comparison.

\begin{table}[H]
\caption{Detailed comparison of PAC-Bayesian approaches for multi-view learning}
\label{tab:comparison_detailed}
\begin{center}
\begin{tabular}{p{2.5cm}p{2cm}p{2.5cm}p{2cm}p{3cm}p{3cm}}
\hline \\
\textbf{Method} & \textbf{Setting} & \textbf{Divergence} & \textbf{Bound Type} & \textbf{Optimization} & \textbf{Comments} \\
\hline \\
\cite{Goyal17} & Multi-view & KL only & Expectation & Partial (theory only) & General formulation without explicit algorithms \\
\cite{Goyal19} & Multi-view & KL only & Probability & C-Bound only & Only optimized Lemma 1, Eq. 3 \\
\cite{begin16} & Single-view & Rényi ($\alpha > 1$) & Probability & None provided & Theoretical bounds only \\
\hline
\textbf{Our KL baseline} & Multi-view & KL only & Probability & Complete (Alg.~\ref{alg:compute_lambda}) & Our implementation for comparison \\
\textbf{Our approach} & Multi-view & Rényi with view-specific $\alpha_v$ & Probability & Complete (Alg.\ref{alg:compute_lambda}) & Adaptive $\alpha_v$ per view \\
\hline
\end{tabular}
\end{center}
\end{table}

\section{Optimization of Multi-View Bounds}\label{Optimization of Inverted Multi-View KL Bounds}

\begin{algorithm}[H]
\caption{Minimization of Equations {\color{purple}\ref{min_R}}, {\color{purple}\ref{min_e_second order}}, {\color{teal}\ref{min_e}},  {\color{orange}\ref{min_R_second_order}}, \textcolor{blue}{\ref{min_k},~\ref{min_K_u},~\ref{min_K_SO},~\ref{min_K_SO_u},~\ref{min_C_bound},~\ref{C_tandem_bound}} by Gradient Descent}
\label{alg:compute_lambda}
\begin{algorithmic}[1]
   \STATE \textbf{Given:} learning sample $S$, prior distribution $\mathcal{P}_v$ on $\mathcal{H}_v$, hyper-prior distribution $\pi$ on $[\![V]\!]$, the objective function {\color{purple}\ref{min_R},~\ref{min_e_second order}}, {\color{teal}\ref{min_e}}, {\color{orange}\ref{min_R_second_order}},\textcolor{blue}{\ref{min_k},~\ref{min_K_u},~\ref{min_K_SO},~\ref{min_K_SO_u},~\ref{min_C_bound},~\ref{C_tandem_bound}}
   \STATE \textbf{Hyperparameters:} number of iterations $T$, learning rate $lr$, parameter of log-barrier $t$ (\cite{Kervadec17}), convergence criterion $\epsilon$, parameter $\bm{\alpha}$ of Rényi divergence

   \STATE \textbf{Initialize} $\mathcal{Q}_v \gets \mathcal{P}_v$, $\rho \gets \pi$, $\bm{\lambda} \in (0.0001, 1.9999)$ \COMMENT{$\forall \,\lambda \in (0,2)$}, $\gamma > 0.0001$ \COMMENT{Keeps $\gamma > 0$}
   \FOR{$t = 1$ {\bfseries to} $T$}
       \STATE \textcolor{blue}{\textbf{Compute}~\ref{min_k},~\ref{min_K_u},~\ref{min_K_SO},~\ref{min_K_SO_u},~\ref{min_C_bound},~\ref{C_tandem_bound} using Compute-$\overline{\textnormal{KL}}(q\|\psi)$
       \& Compute-$\underline{\textnormal{KL}}(q\|\psi)$} (Algorithm~\ref{alg:kl_computation})
       \STATE \textcolor{blue}{$\mathcal{Q}_v, \rho \gets \text{update}(\mathcal{Q}_v, \rho,~\ref{min_k},~\ref{min_K_u},~\ref{min_K_SO},\ref{min_K_SO_u},~\ref{min_C_bound},~\ref{C_tandem_bound})$}
       \STATE {\color{purple}$\mathcal{Q}_v, \rho, \lambda \gets \text{update}(\mathcal{Q}_v, \rho, \lambda,~\ref{min_R},~\ref{min_e_second order})$} 
       \STATE {\color{teal}$\mathcal{Q}_v, \rho, \lambda_1, \lambda_2 \gets \text{update}(\mathcal{Q}_v, \rho, \lambda_1,\lambda_2,~\ref{min_e})$}
       \STATE {\color{orange}$\mathcal{Q}_v, \rho, \lambda, \gamma \gets \text{update}(\mathcal{Q}_v, \rho, \lambda,\gamma,~\ref{min_R_second_order})$}

       \STATE Clip $\bm{\lambda}$: $\bm{\lambda} \leftarrow \text{clamp}(\lambda, 0.0001, 1.9999)$ 
       \STATE Clip $\gamma$: $\gamma \leftarrow \text{clamp}(\gamma, 0.0001)$ 
       \STATE \texttt{with torch.no\_grad():} 
       \STATE Apply \textbf{Softmax} to $\mathcal{Q}_v$ and $\rho$ \COMMENT{Normalize}
       \IF{Convergence criterion $\leq \epsilon$}
           \STATE \textbf{break}
       \ENDIF
   \ENDFOR
   
    \STATE {\color{blue}\textbf{return} $\mathcal{Q}_v, \rho$}
   \STATE {\color{purple}\textbf{return} $\mathcal{Q}_v, \rho, \lambda$}
   \STATE {\color{teal}\textbf{return} $\mathcal{Q}_v, \rho, \lambda_1, \lambda_2$}
   \STATE {\color{orange}\textbf{return} $\mathcal{Q}_v, \rho, \lambda, \gamma$}
   
\end{algorithmic}
\begin{flushleft}
\textbf{Note:} The function $\textbf{update}$ is a generic update function, such as Gradient Descent or any other algorithm; in our implementation, we use AdamW \citep{Ilya17} in PyTorch \citep{paszke2019pytorch}.
\end{flushleft}
\end{algorithm}

 \begin{algorithm}[H]
    \caption{Compute-$\overline{\textnormal{KL}}(q\|\psi)$ and $\underline{\textnormal{KL}}(q\|\psi)$ \cite{Reeb18}}
    \label{alg:kl_computation}
 \begin{algorithmic}[1]
    \STATE \textbf{Hyperparameters:} Tolerance $\epsilon$, maximum number of iterations $T_{\max}$
    \STATE Initialize $p_{\max}$ and $p_{\min}$:
    \STATE \quad For $\overline{\textnormal{KL}}$: $p_{\max} \gets 1$, $p_{\min} \gets q$
    \STATE \quad For $\underline{\textnormal{KL}}$: $p_{\max} \gets q$, $p_{\min} \gets 0$
    \FOR{$t=1$ {\bfseries to} $T_{\max}$}
        \STATE $p \gets \dfrac{1}{2}(p_{\min} + p_{\max})$
        \IF{$\textnormal{KL}(q \parallel p) = \psi$ \textbf{ or } $(p_{\max} - p_{\min}) < \epsilon$}
            \STATE \textbf{return} $p$
        \ENDIF
        \IF{$\textnormal{KL}(q \parallel p) > \psi$}
            \STATE For $\overline{\textnormal{KL}}$: $p_{\max} \gets p$
            \STATE For $\underline{\textnormal{KL}}$: $p_{\min} \gets p$
        \ELSE
            \STATE For $\overline{\textnormal{KL}}$: $p_{\min} \gets p$
            \STATE For $\underline{\textnormal{KL}}$: $p_{\max} \gets p$
        \ENDIF
    \ENDFOR
    \STATE \textbf{return} $p$
 \end{algorithmic}
 \end{algorithm}

\section{Experiments}\label{sup_experimental}

\subsection{Multi-view Datasets}\label{sup_datasets}

We have chosen some readily available multi-view datasets, as shown in Table~\ref{tab:datasets}, with varying sizes $1000 \leq N \leq 110250$. Additionally, we created multi-view versions of known benchmark datasets. To transform the datasets into binary classification tasks, we consider the strategies \textit{one-versus-all}, \textit{one-versus-one}, and some dataset-specific transformations. For all experiments, we merge, shuffle, and split the datasets. Below, we explain how we created each one:

\begin{table}[h!]
\centering
\caption{Multi-view Dataset Information with Original Locations. "Was MV" means "Was multi-view".}
\label{tab:datasets}
\begin{tabular}{lllllll}
\hline
\hline
Dataset Name & Original location & Was MV & Views & Samples & Classes & Size (MB) \\ \hline
aloi\_csv & \cite{elki-aloi-dataset_2010} & Yes & 4 & 110250 & 1000 & 673.4 \\
corel\_features & \cite{corel} & No & 7 & 1000 & 10 & 29.9 \\
MNIST\_1 & \cite{GOYAL201981} & Yes & 4 & 70000 & 10 & 318.7 \\
MNIST\_2 & \cite{GOYAL201981} & Yes & 4 & 70000 & 10 & 338.3 \\
Fash\_MNIST\_1 & \cite{xiao2017/online} & No & 4 & 70000 & 10 & 155.6 \\
Fash\_MNIST\_2 & \cite{xiao2017/online} & No & 4 & 70000 & 10 & 177.6 \\
EMNIST\_Letters\_1 & \cite{cohen2017emnist} & No & 4 & 70000 & 10 & 201.1 \\
EMNIST\_Letters\_2 & \cite{cohen2017emnist} & No & 4 & 70000 & 10 & 227.7 \\
mfeat & \cite{misc_multiple_features_72} & Yes & 6 & 2000 & 10 & 17.5 \\
mfeat-large & \cite{lecun2010mnist} & No & 6 & 70000 & 10 & 389.5 \\
Mushroom & \cite{misc_mushroom_73} & No & 2 & 8124 & 2 & 0.4 \\
NUS-WIDE-OBJECT & \cite{nus-wide-civr09} & Yes & 5 & 30000 & 31 & 231.4 \\
PTB-XL-plus & \makecell[tl]{\cite{ptb-xl} \\ \& \cite{ptb-xl+}} & Yes & 3 & 21800 & 5 Superclasses & 248 \\
ReutersEN & \cite{reuters} & Yes & 5 & 1200 & 6 & 22.1 \\
\hline
\hline
\end{tabular}
\end{table}

\textbf{Fash\_MNIST\_\{1-2\} and EMNIST\_Letters\_\{1-2\}.} we performed the same transformation made by \cite{GOYAL201981}, that is we  generated 2 four-view datasets where each view is a vector of $R^{14\times14}$\footnote{Their repository containing MNIST\_\{1-2\} can be found at this link \url{https://github.com/goyalanil/Multiview_Dataset_MNIST}}:
\begin{itemize}
    \item \textbf{EMNIST\_Letters\_1 and Fash\_MNIST\_1}: It is generated by considering 4 quarters of an image as 4 views.
    \item \textbf{EMNIST\_Letters\_2 and Fash\_MNIST\_2}: It is generated by considering 4 overlapping views around the center of images: this dataset brings redundancy between the views.
\end{itemize}

\textbf{Feature Extraction Methods for corel\_features.} 
The following features are extracted from the Corel dataset\footnote{Because we didn't find the original large dataset, we took a subset of it that was available on Kaggle: \url{https://www.kaggle.com/datasets/elkamel/corel-images}}:

\begin{itemize}

\item \textbf{Histogram of Oriented Gradients (HOG)}: \texttt{extract\_hog\_features(image)} converts the image to grayscale and computes HOG features using 9 orientations, $32\times32$ pixels per cell, and $2\times2$ cells per block.

\item \textbf{Local Binary Pattern (LBP)}: \texttt{extract\_lbp\_features(image)} converts the image to grayscale and computes LBP features with 8 points and a radius of 1. A histogram of the LBP is then normalized.

\item \textbf{Color Histogram}: \texttt{extract\_color\_histogram(image)} calculates the color histogram of the image using 8 bins for each channel (RGB) and normalizes the histogram.

\item \textbf{Gray-Level Co-Occurrence Matrix (GLCM)}: \texttt{extract\_glcm\_features(image)} calculates GLCM properties such as contrast, dissimilarity, homogeneity, energy, correlation, and ASM from the grayscale image.

\item \textbf{Zernike Moments}: \texttt{extract\_zernike\_moments(image)} computes Zernike moments of the grayscale image.

\item \textbf{Hu Moments}: \texttt{extract\_hu\_moments(image)} calculates Hu moments from the grayscale image’s moments.

\item \textbf{Haralick Texture Features}: \texttt{extract\_haralick\_features(image)} computes the mean Haralick texture features from the grayscale image.

\end{itemize}

\textbf{mfeat-large.} Directly inspired by the Multiple Features dataset \cite{misc_multiple_features_72}, which contains only 1000 samples, we attempted to extract similar but not exactly the same features from the original MNIST dataset, which contains 70000 images. We describe the features below:

\begin{itemize}

\item \textbf{mfeat-fou}: 76 Fourier coefficients of the character shapes.

\item \textbf{mfeat-fac}: 216 profile correlations. These features are obtained by measuring the correlation between profiles of the character images.

\item \textbf{mfeat-kar}: 64 Karhunen-Love coefficients. These coefficients are derived from a Karhunen-Loève transform (also known as Principal Component Analysis) and represent the main components of variation in the character images.

\item \textbf{mfeat-pix}: 240 pixel averages in $2\times3$ windows. This feature set consists of average pixel values computed over $2\times3$ pixel windows, providing a downsampled representation of the character images.

\item \textbf{mfeat-zer}: 47 Zernike moments. These moments are calculated to capture the shape and structure of the characters in a way that is invariant to rotation, scaling, and translation.

\item \textbf{mfeat-mor}: 6 morphological features. These features describe the morphological properties of the character images, such as the structure and form of the shapes within the images.

\end{itemize}

\textbf{Mushroom.} We simply split the features of the original Mushroom dataset \citep{misc_mushroom_73} into features that specifically describe the top of the mushroom and features that describe the bottom, resulting in 2 views.

\subsection{Experimental Setup}\label{app:exp_setup}

Random forests were trained using the Gini criterion for splitting and considering $\sqrt{d}$ features in each split, under three configurations: \textbf{1) Stump} ($\texttt{max\_depth}=1$), \textbf{2) Weak learners} ($\texttt{max\_depth}=3$), and \textbf{3) Strong learners} ($\texttt{max\_depth}=6$).

To obtain a comprehensive view of how our approaches perform in different scenarios, we varied the Rényi divergence order $\alpha \in \{1.1, 2\}$ for our multi-view bounds, while retaining the usual Kullback-Leibler divergence for the view-specific bounds. We also considered setting $\alpha$ as a learnable parameter during the optimization of the bounds, allowing for a view-specific optimal $\alpha_v$ (used in $\mathbb{E}_{\rho}\left[D_{\alpha_v}(\mathcal{Q}_{S,v} \| \mathcal{P}_v)\right]$) and an optimized multi-view $\alpha$ (used in $D_\alpha(\rho \| \pi)$).

The size of the available unlabeled data was varied over $\{0.05, 0.1, 0.2, 0.3, 0.4, 0.5, 1\}$.

For some multi-class classification experiments, we used only strong learners with increased depth ($\texttt{max\_depth}=20$), given the greater difficulty of multi-class learning compared to binary classification. Additionally, we reduced the sample size to 50\% for certain large, time-intensive datasets.

\subsection{Hyper-parameters}\label{app:sup_hyperparams}

The parameters of the algorithms were selected as follows. \textbf{1)} We re-implemented \cite{Masegosa2020}'s and \cite{Viallard2011}'s algorithms in PyTorch \citep{paszke2019pytorch} to take advantage of the Autograd Engine while keeping the same parameters. That is, $\delta = 0.05$ (the bounds hold with probability at least 95\%), and the log barrier $\mathbf{B}_t$ parameter $t=100$ is used for all algorithms. We use the AdamW optimizer with $lr=0.1$ and $weight\_decay=0.05$ for all algorithms except for $\mathcal{C}_{\rho}^{T}$\ref{Multi-view PAC-Bayesian C-tandem Oracle Bound} where COCOB optimizer \citep{COCOB} is used. \textbf{2)} All of the prior distributions $\mathcal{P}_v$, the hyper-prior $\pi$, the posterior distributions $\mathcal{Q}_v$, and the hyper-posterior $\rho$ are set to the uniform distribution before optimization. The optimization process involves computing the gradient of the right-hand side of each of the inequalities mentioned above w.r.t $\mathcal{Q}_v$, and the parameters $\lambda$ and $\gamma$ of their corresponding bounds, and then updating all at once. We fix the tolerance $\epsilon=10^{-9}$ and the maximum number of iterations $T=1000$.

\subsection{Hardware}\label{sup_hardware}

The experiments were conducted on a high-performance computing cluster equipped with NVIDIA Tesla V100 GPUs. Given the significant number of experiments (considering each dataset version, each $s\_labeled\_size$, each random forest configuration, and each $\alpha$), we utilized additional NVIDIA A40 GPUs to expedite the process.

\section{Results}\label{App:results}

We present additional results on binary and multi-class classification, spanning from Figure~\ref{figure:mfeat-binary-4-9-all-configs} to Figure~\ref{figure:corel-mult-full-plot}.\footnote{Note that the total number of results exceeds 150.} These results explore different configurations, including variations in $\alpha \in \{1.1, 2\}$ (as well as an optimizable $\alpha$), $s\_labeled\_size \in \{0.05, 0.1, 0.2, 0.3, 0.4, 0.5, 1\}$, and data poisoning through the addition of Gaussian noise.

\subsection{Analysis}\label{App:analysis}

Our experimental results reveal several insights regarding the performance of various bounds and configurations:

In general, inverted KL bounds demonstrate tighter results compared to those optimized using the relaxation in \cite{Thiemann17}, suggesting that the inverted form may provide a stricter constraint on generalization in practice.

Across all settings—single views, concatenated view, and our multi-view method—the first-order bound consistently provides the tightest results, despite being theoretically the loosest. This suggests that, in practice, optimizing the first-order bound with the inverted-KL approach (Equation~\ref{Eq-Pac-bayes-kl-inv-FO}) offers surprisingly strong empirical tightness.

In most experiments, the multi-view Bayes risk is found to be lower than the view-specific Bayes risk after optimization. This outcome implies that optimizing across multiple views provides robustness that individual view-specific risks lack, likely due to the integration of complementary information from each view.

Results from the concatenated view generally surpass those from the multi-view approach, which we attribute to the additional divergence term in the bound inequalities for the multi-view setup, slightly increasing the bound value and optimization complexity.

We observe a progressively lower bound on the Bayes risk ratio as we move from stump to weak learners, and then to strong learners. This trend suggests that as learner strength increases, the bound becomes tighter, indicating predictive reliability.

In datasets created using the method in \cite{GOYAL201981} (where each image is split into four parts), the concatenated view frequently outperforms the multi-view setup. This advantage likely arises because concatenation effectively reconstructs the full image, thereby preserving more information.

\subsubsection{Comparison of PAC-Bayesian Bounds}

The results presented in Figures~\ref{figure:vary-labeled} and~\ref{figure:vary-alpha} demonstrate the performance of different PAC-Bayesian bounds on the mfeat-large dataset (4vs9) as a function of the labeled data size and the Rényi divergence order, $\alpha$. Figure~\ref{figure:vary-labeled} highlights the effects of varying the proportion of labeled data ($s\_labeled\_size$) on bound values, with a fixed $\alpha = 1.1$. As labeled data increases, bounds improve, with $\mathcal{K}^{u}_{\textnormal{II}}$ (Equation~\ref{Eq-Pac-bayes-dis-inv-mv-SO}) achieving tighter values than $\mathcal{K}_{\textnormal{II}}$ (Equation~\ref{Eq-Pac-bayes-joint-inv-mv-SO}), especially with more unlabeled data and the inclusion of the disagreement term. This suggests that incorporating the disagreement enhances the bound's tightness due to the fact that it can learn with unlabeled data.

Figure~\ref{figure:vary-alpha} examines the effect of varying $\alpha$ on bound values with a fixed $s\_labeled\_size = 0.5$. The bounds generally tighten around $\alpha = 1.1$, suggesting that this value may provide an optimal trade-off for controlling the Rényi divergence. This observation is further supported by the results in Figure~\ref{figure:mfeat-binary-4-9-alpha1.1vsoptim}, where setting $\alpha$ as an optimizable parameter leads to convergence near 1.1. This trend highlights the importance of $\alpha$ in regulating bound tightness.

\subsubsection{Poisoning the data} \label{App:poisoning}

The presented Figure~\ref{figure:poisoning} illustrates the impact of data poisoning on the hyper-posterior distributions ($\rho^*$) of the two best-performing algorithms evaluated on the mfeat-large dataset. The comparison is made by analyzing the posterior distributions before and after introducing Gaussian noise to the most effective views (2 and 5).

\textbf{Before data poisoning (upper panel).} The optimized hyper-posterior distributions show a higher concentration, indicating that the algorithms are more confident in their predictions across different views. This reflects the algorithms' ability to effectively utilize the clean data to achieve tight bounds and reliable performance.

\textbf{After data poisoning (lower panel).} The introduction of Gaussian noise results in a noticeable shift in the posterior distributions. This shift demonstrates the robustness of our approaches to mitigate the effects of such perturbations, that is weighting down the attacked views. The added noise slightly disrupts the algorithms' ability to generalize.

\subsubsection{The effect of overfitting} \label{App:overfitting}

In some multi-class experiments, as shown in Figure~\ref{figure:corel-mult-full-plot}, we observe indications of overfitting, likely due to the relatively small dataset size. Specifically, certain bounds, such as $\mathcal{E}_{\textnormal{II}}$ and $\mathcal{K}_{\textnormal{II}}$ in several individual views, fall below the Bayes risk on the test set. This suggests that the algorithm may be overfitting, as the PAC-Bayesian bounds are expected to be conservative estimates of generalization error.

\begin{figure}
\centering
\subfloat[Stump]{\includegraphics[width=.7\linewidth]{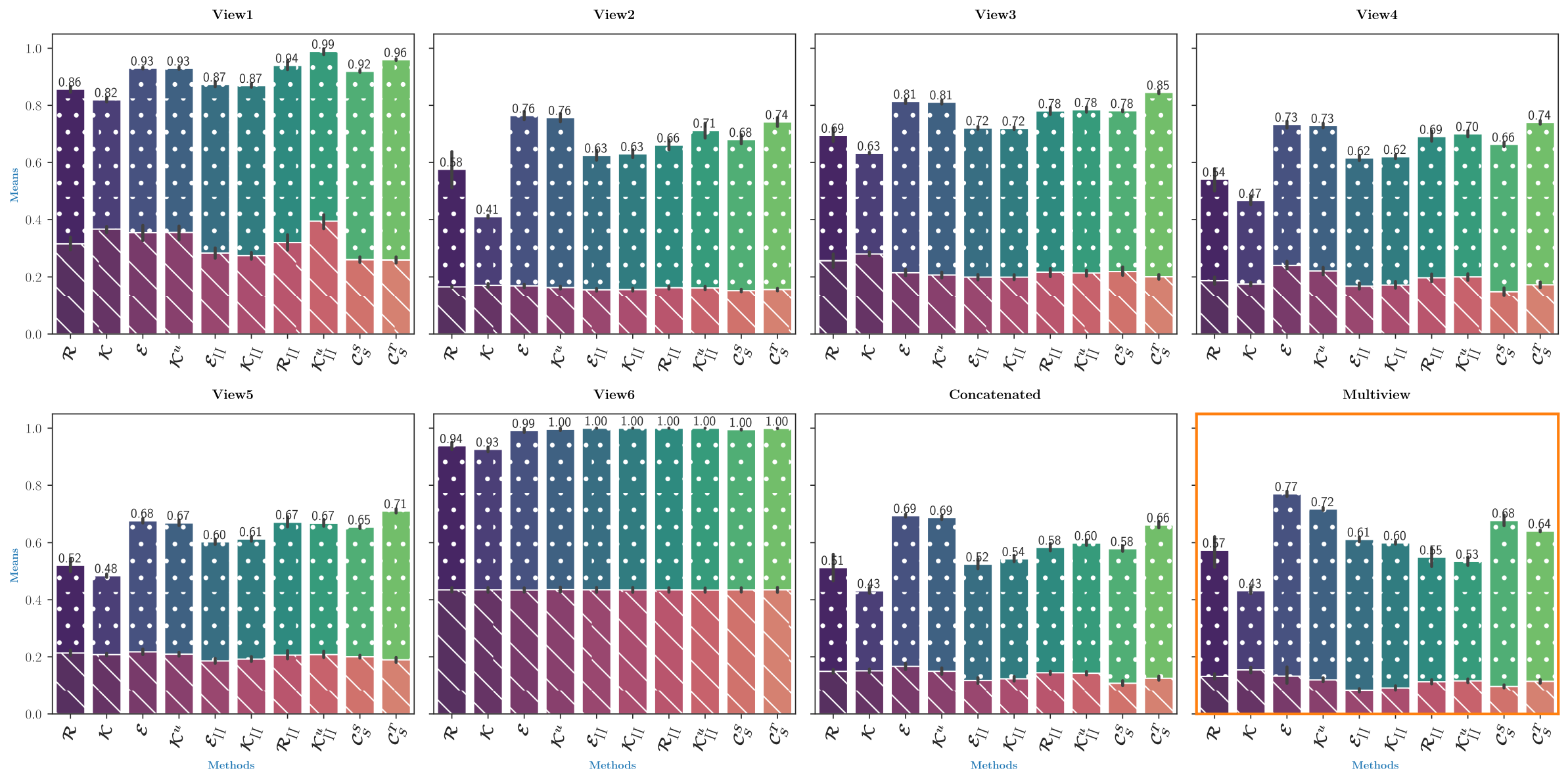}} \vspace{-1pt}\\

\subfloat[Weak learner]{\includegraphics[width=.7\linewidth]{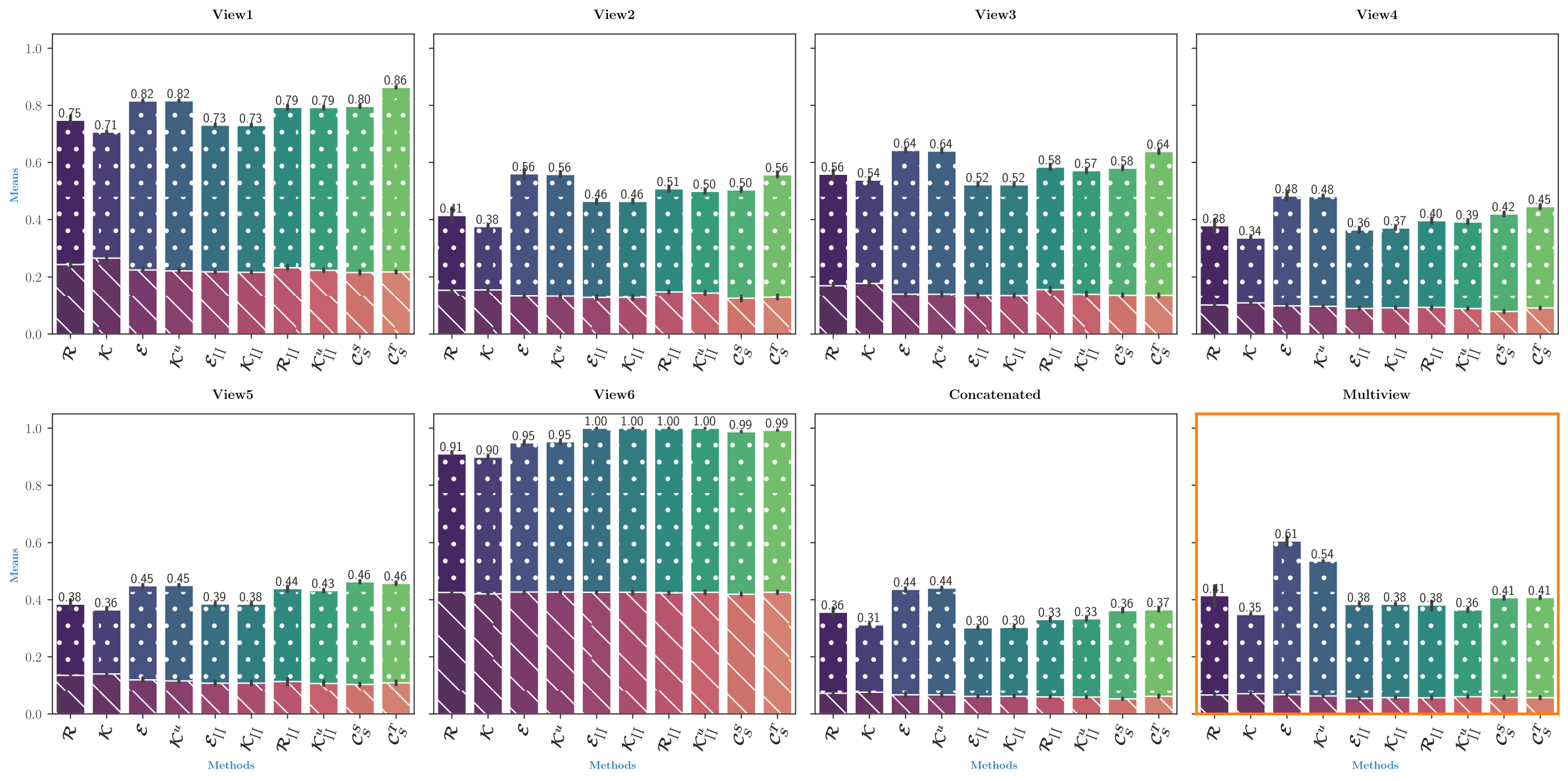}} \vspace{-1pt} \\

\subfloat[Strong learner]{\includegraphics[width=.7\linewidth]{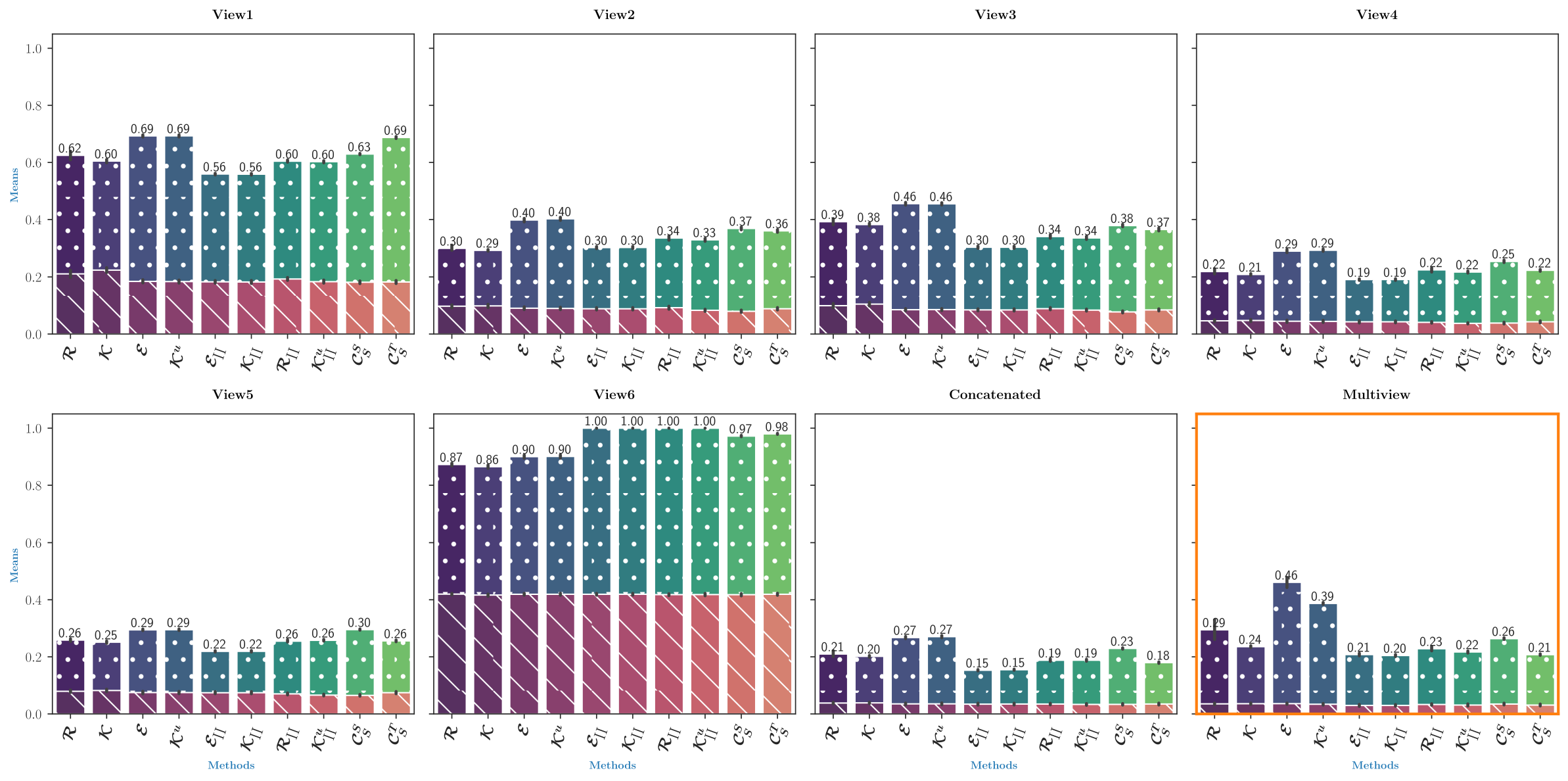}} \\

\caption{Test error rates and PAC-Bayesian bounds for binary classification between labels 4 and 9 on the mfeat-large dataset, averaged over 10 runs. The experiment uses KL divergence for single-view and Rényi divergence ($\alpha=1.1$) for multi-view, with a stump configuration for \textbf{(a)}, weak, and strong learners for \textbf{(b)} and \textbf{(c)} resp. and 50\% labeled data. Multi-view results are highlighted in orange.}
\label{figure:mfeat-binary-4-9-all-configs}

\end{figure}

\begin{figure}
\centering
\subfloat[$\alpha=1.1$]{\includegraphics[width=.8\linewidth]{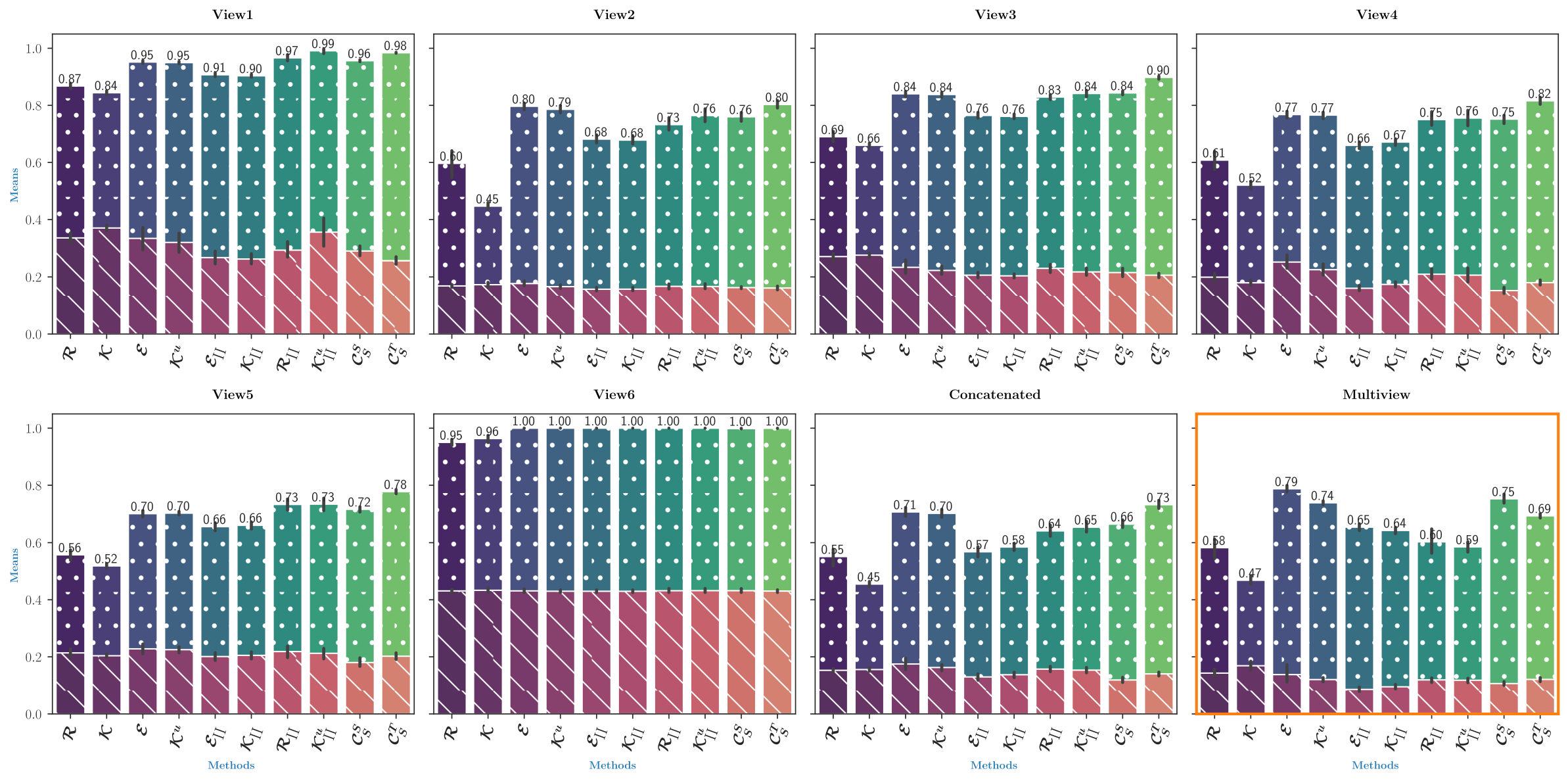}}\\

\subfloat[Optimizable $\alpha$]{\includegraphics[width=.8\linewidth]{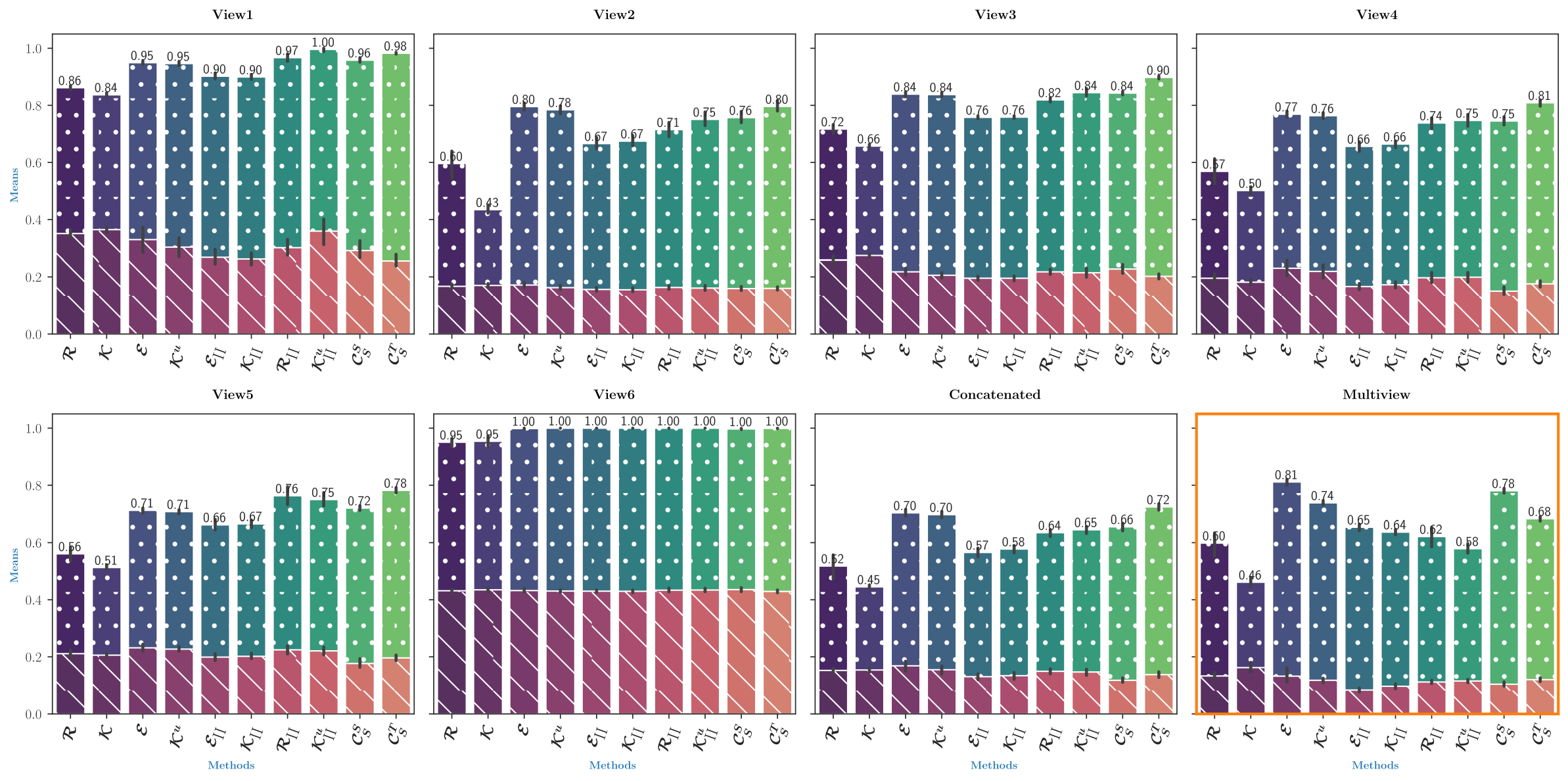}}\\

\caption{Test error rates and PAC-Bayesian bounds for binary classification between labels 4 and 9 on the mfeat-large dataset, averaged over 10 runs. The experiment uses KL divergence for single-view and Rényi divergence for multi-view, we compare between the setting \textbf{(a)} with $\alpha=1.1$ and \textbf{(b)} with $\alpha$ set as a learnable parameter. Using stump configuration and 20\% labeled data. Multi-view results are highlighted in orange.}
\label{figure:mfeat-binary-4-9-alpha1.1vsoptim}

\end{figure}

\begin{figure}
\centering
\subfloat[Stump]{\includegraphics[width=.55\linewidth]{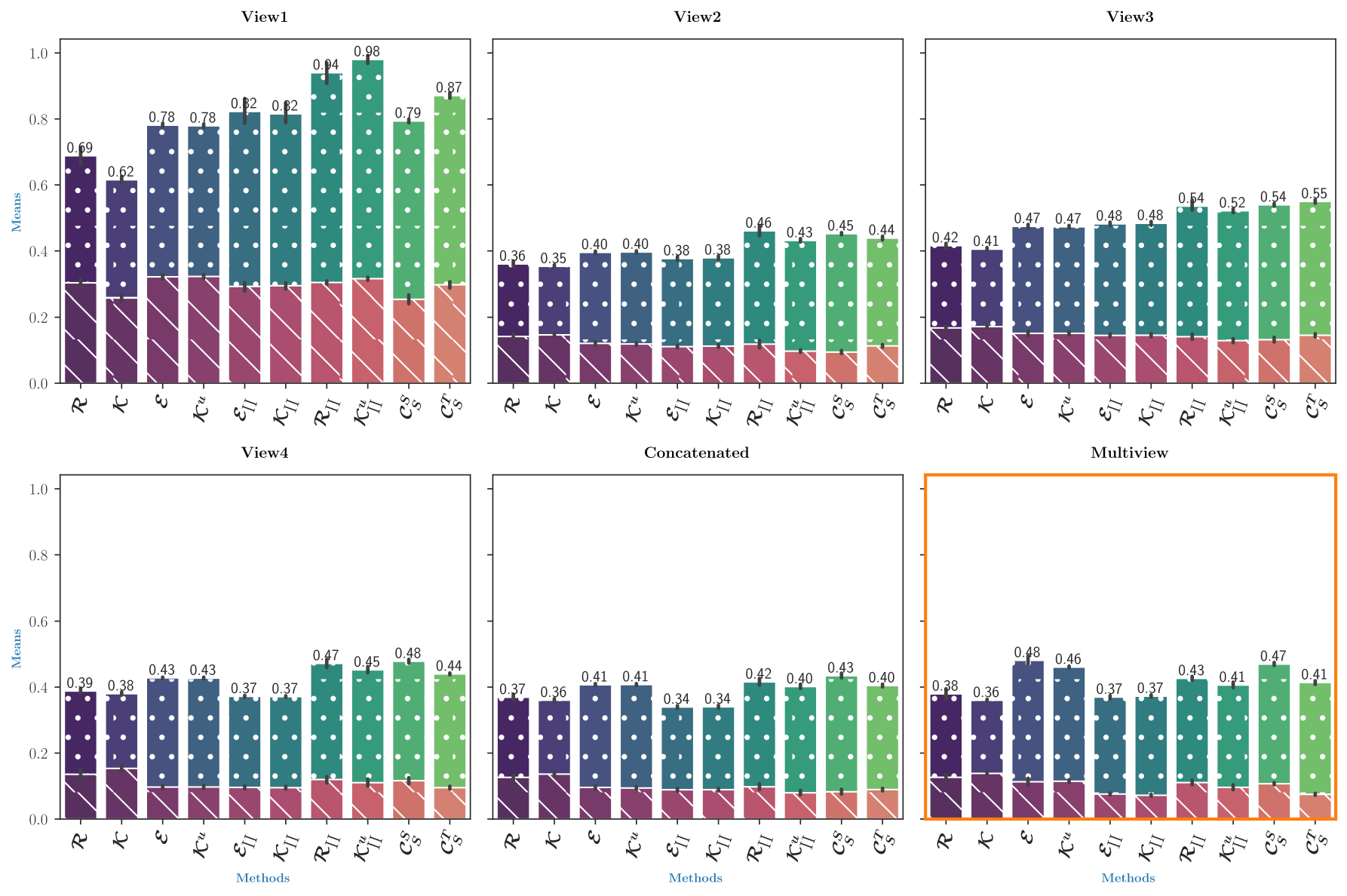}} \vspace{-1pt} \\

\subfloat[Weak learner]{\includegraphics[width=.55\linewidth]{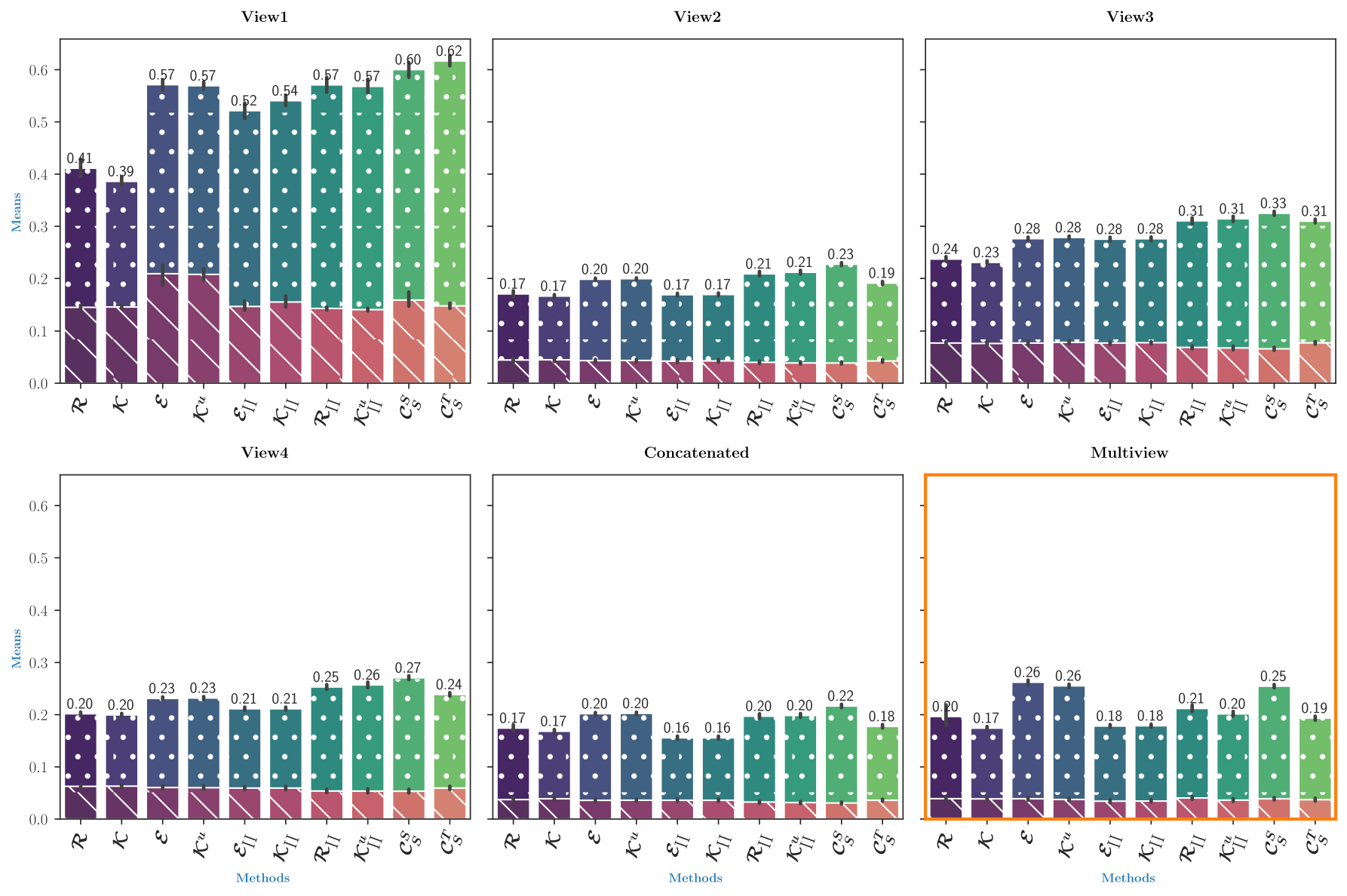}} \vspace{-1pt} \\

\subfloat[Strong learner]{\includegraphics[width=.55\linewidth]{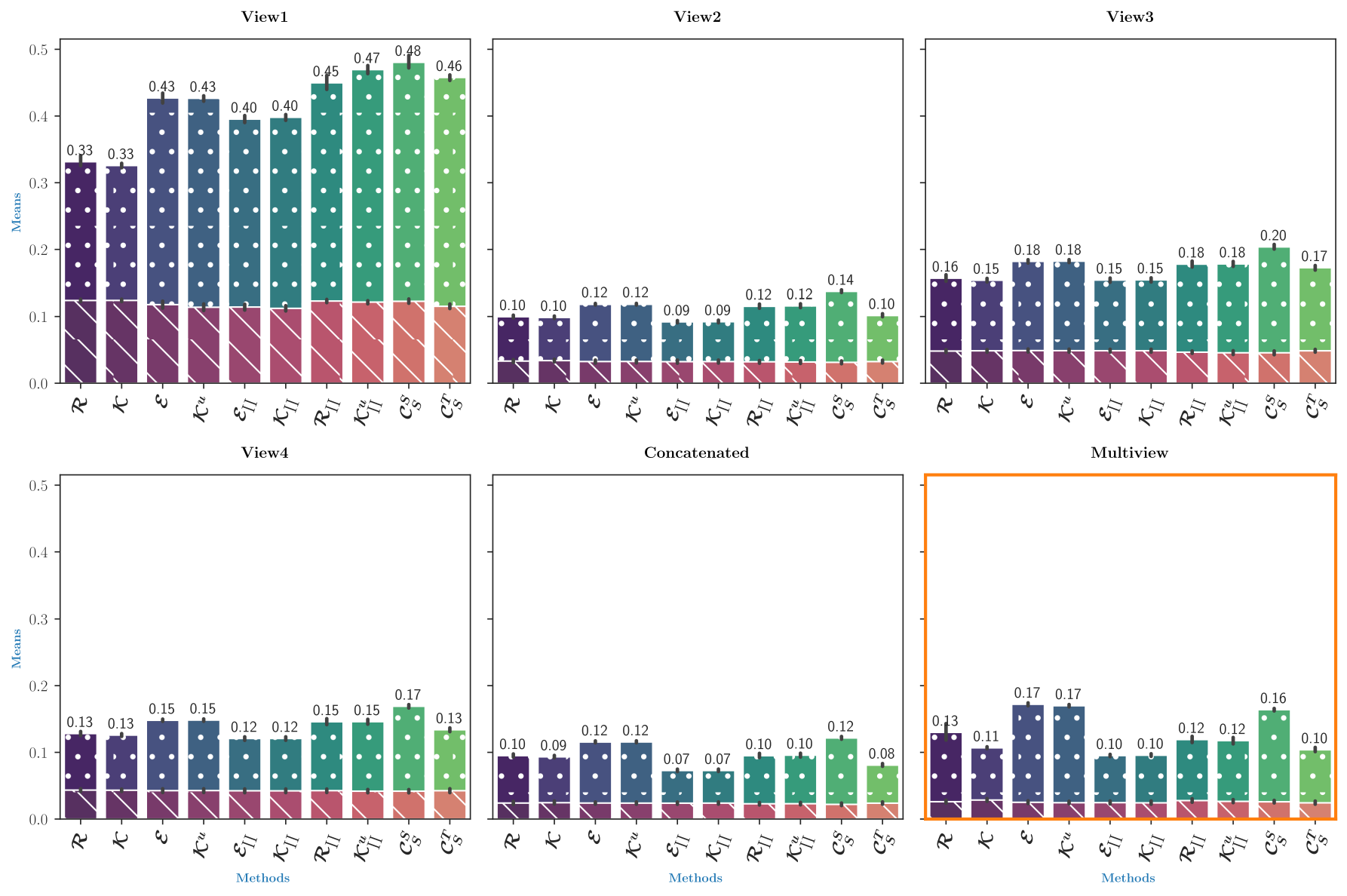}}\\

\caption{Test error rates and PAC-Bayesian bounds for binary classification between labels "Sandal" and "Ankle boot" on the Fashion-MNIST-MV dataset, averaged over 10 runs. The experiment uses KL divergence for single-view and Rényi divergence ($\alpha=1.1$) for multi-view, with a stump configuration for \textbf{(a)}, weak, and strong learners for \textbf{(b)} and \textbf{(c)} resp. and 50\% labeled data. Multi-view results are highlighted in orange.}
\label{figure:fash-mnist-binary-5-9-all-configs}

\end{figure}

\begin{figure}
\centering
\subfloat[Stump]{\includegraphics[width=.55\linewidth]{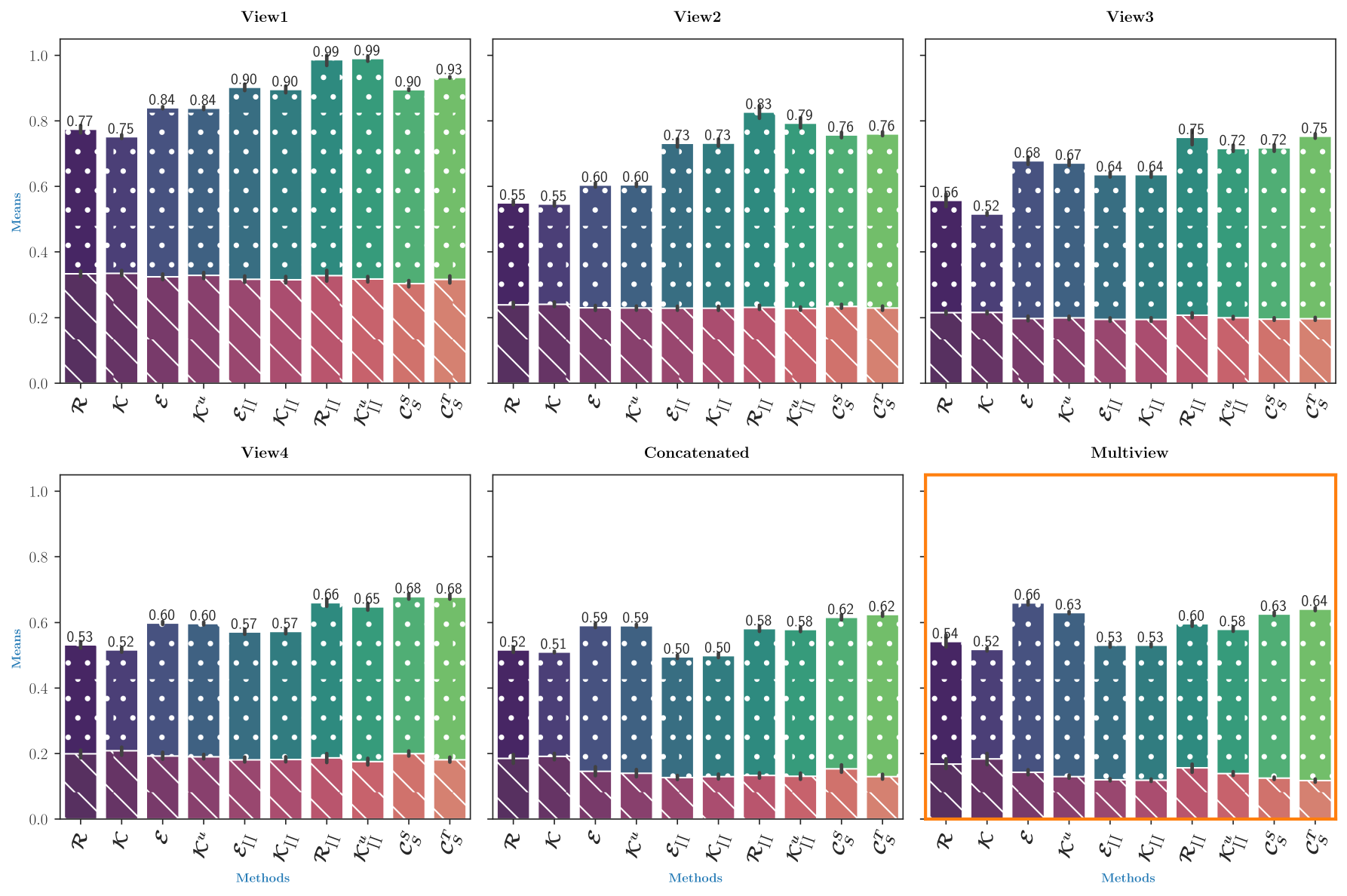}} \vspace{-1pt} \\

\subfloat[Weak learner]{\includegraphics[width=.55\linewidth]{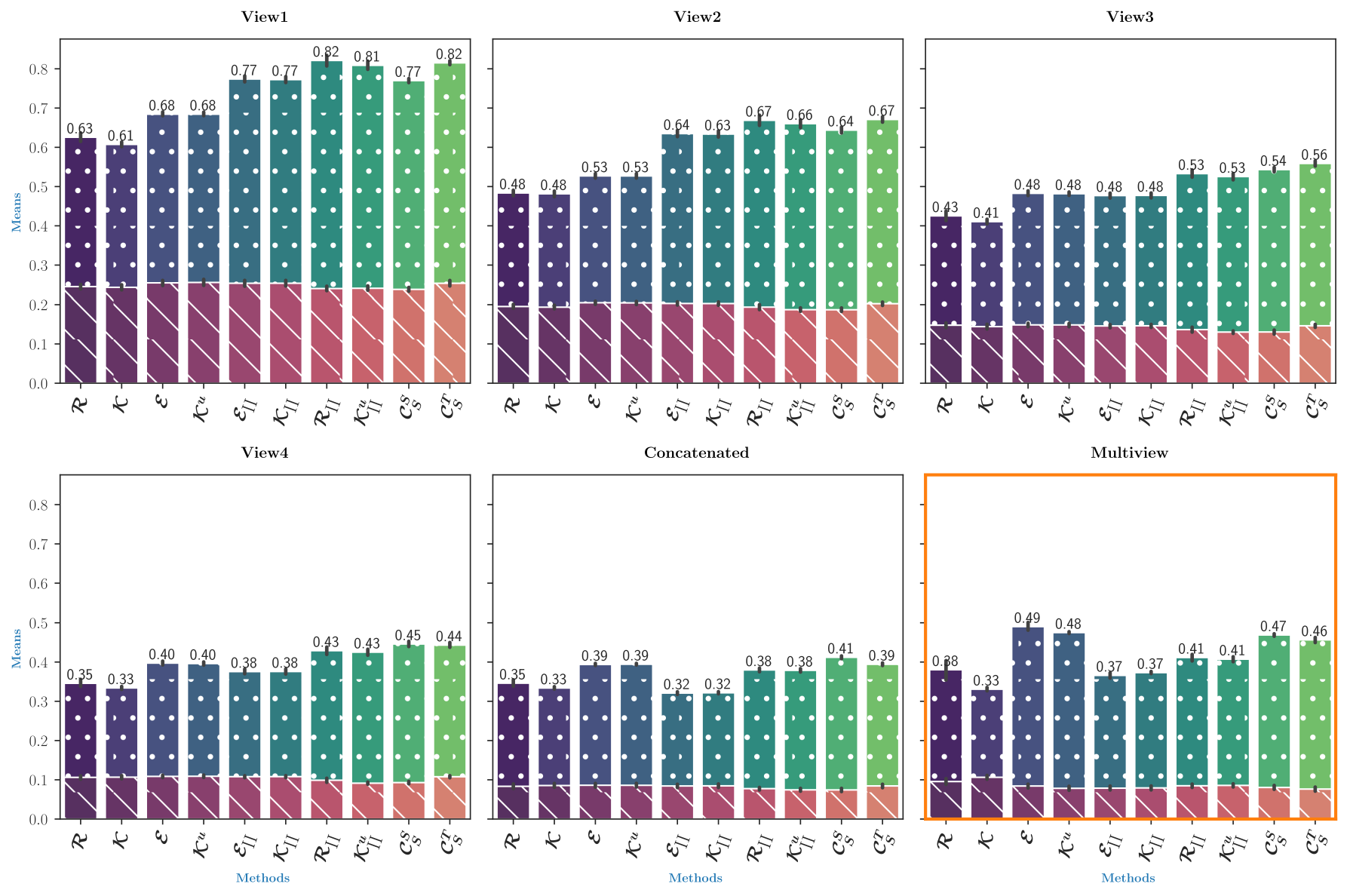}} \vspace{-1pt} \\

\subfloat[Strong learner]{\includegraphics[width=.55\linewidth]{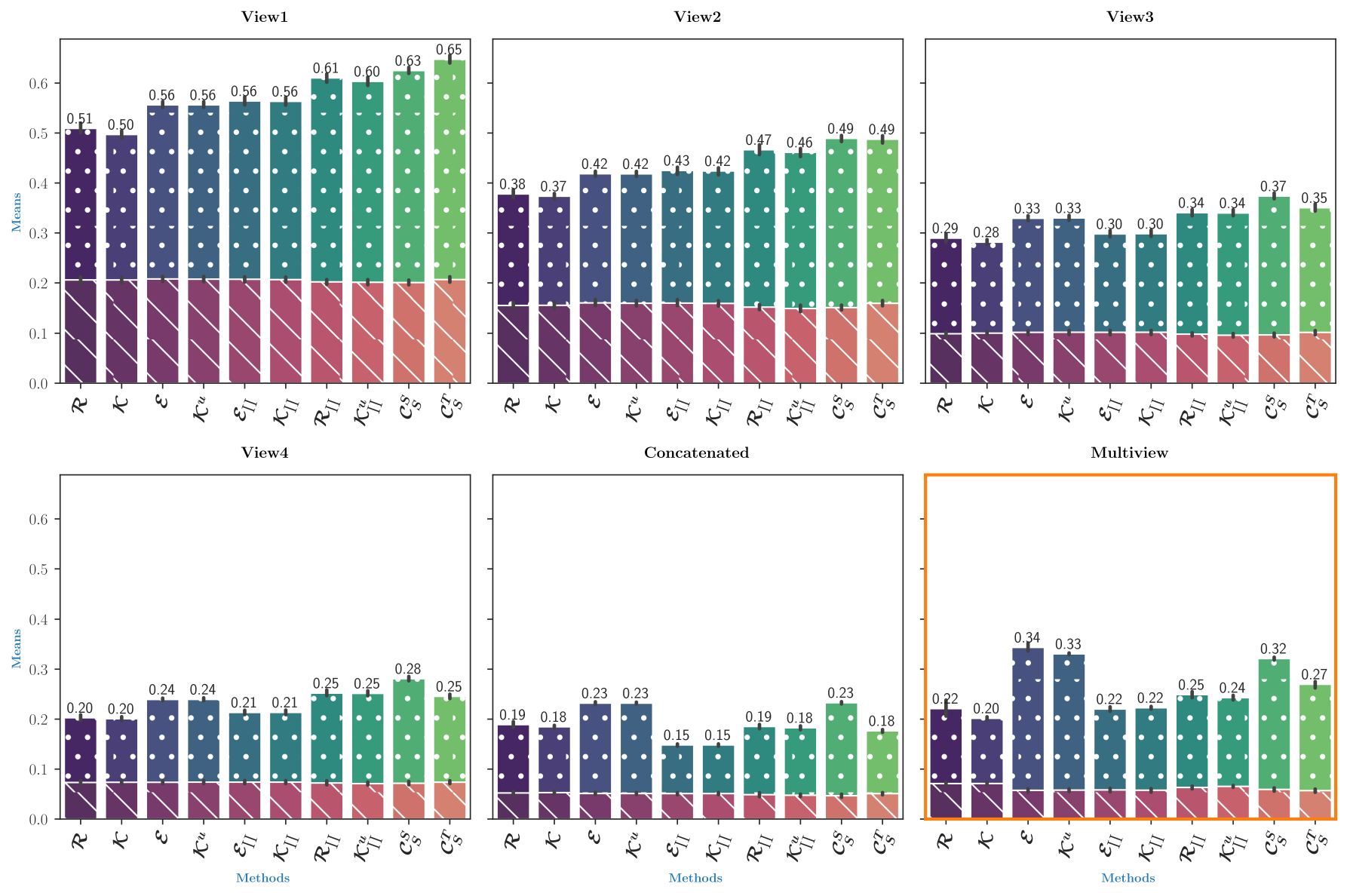}}\\

\caption{Test error rates and PAC-Bayesian bounds for binary classification between labels "M" and "N" on the EMNIST-Letters-MV dataset, averaged over 10 runs. The experiment uses KL divergence for single-view and Rényi divergence ($\alpha=1.1$) for multi-view, with a stump configuration for \textbf{(a)}, weak, and strong learners for \textbf{(b)} and \textbf{(c)} resp. and 50\% labeled data. Multi-view results are highlighted in orange.}
\label{figure:emnist-binary-13-14-all-configs}

\end{figure}

\begin{figure}
\centering
\subfloat[Stump]{\includegraphics[width=.95\linewidth]{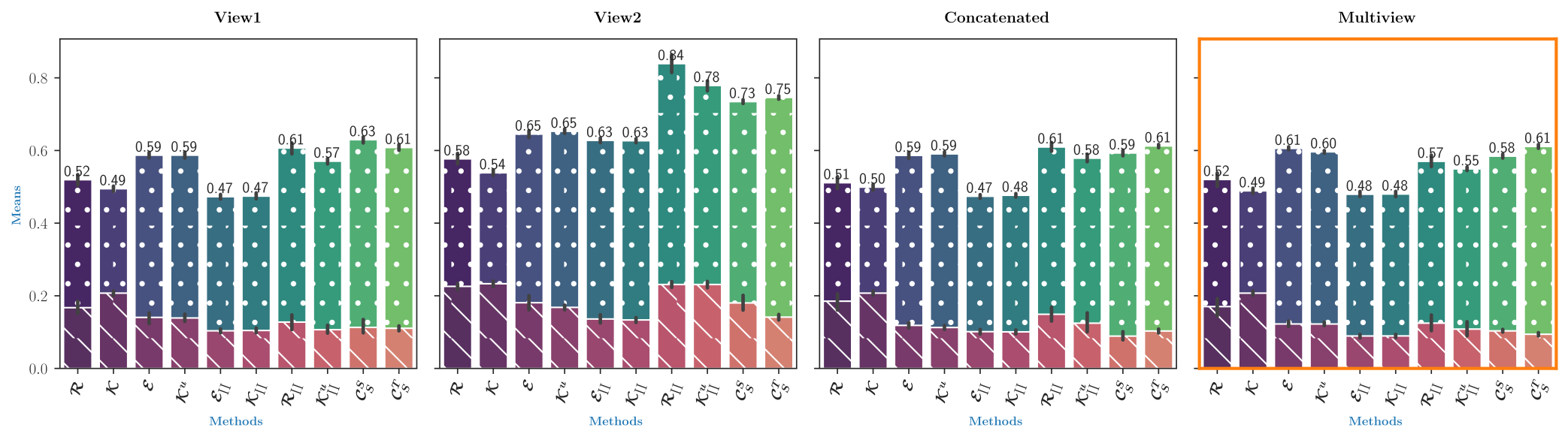}}\\

\subfloat[Weak learner]{\includegraphics[width=.95\linewidth]{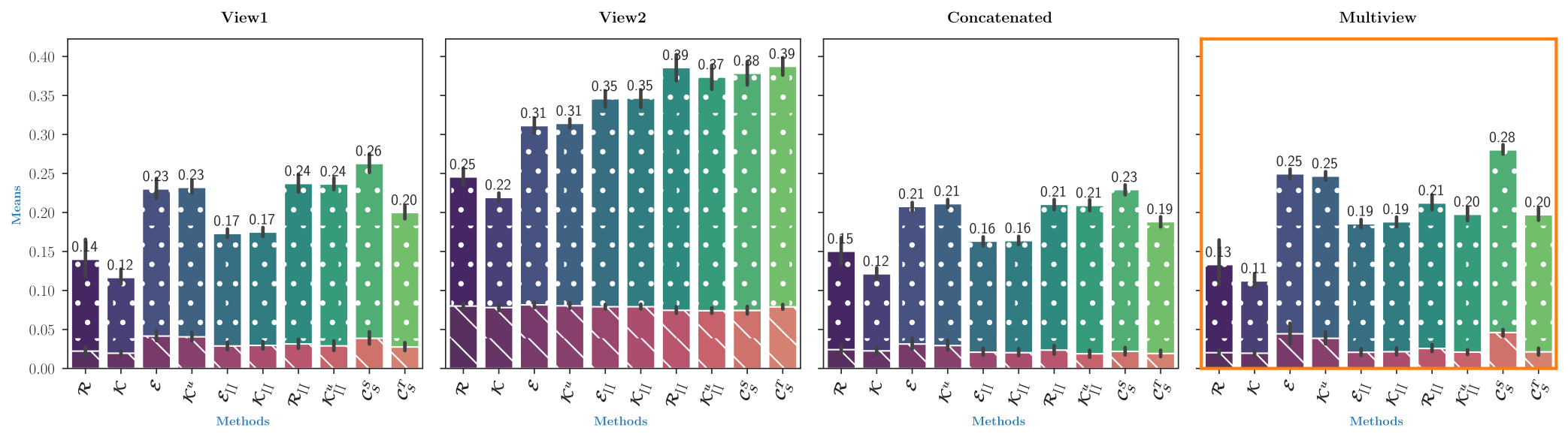}}\\

\subfloat[Strong learner]{\includegraphics[width=.95\linewidth]{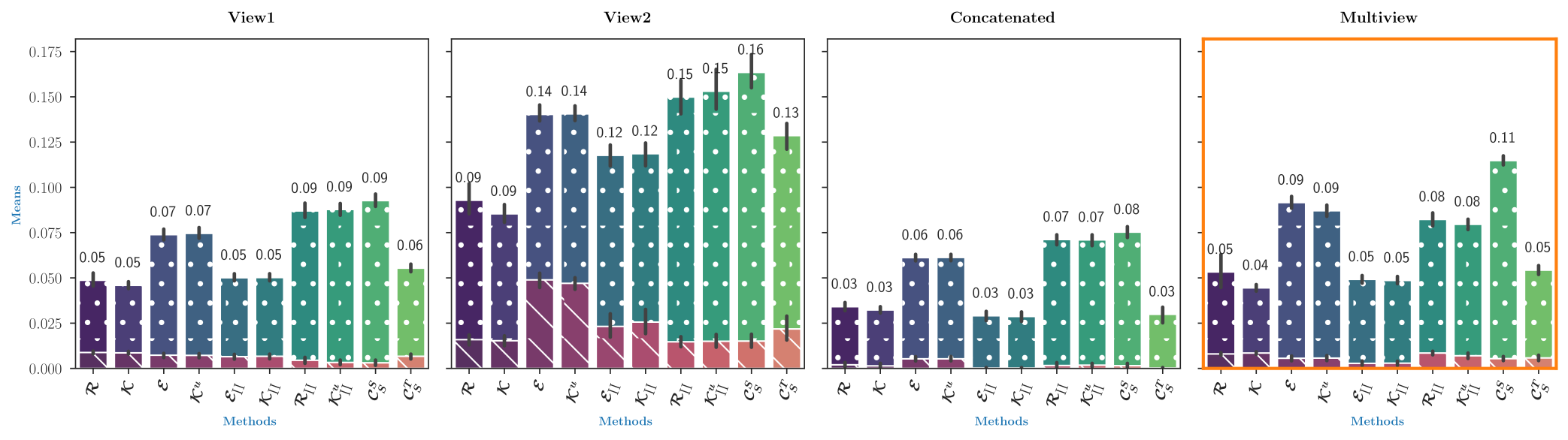}}\\

\caption{Test error rates and PAC-Bayesian bounds for binary classification on the Mushroom dataset, averaged over 10 runs. The experiment uses KL divergence or single-view and Rényi divergence ($\alpha=1.1$) for multi-view, with a stump configuration for \textbf{(a)}, weak, and strong learners for \textbf{(b)} and \textbf{(c)} resp. and 50\% labeled data. Multi-view results are highlighted in orange.}
\label{figure:mushroom-binary-e-p-all-configs}
\end{figure}

\begin{figure}
\centering
\subfloat[Stump]{\includegraphics[width=.55\linewidth]{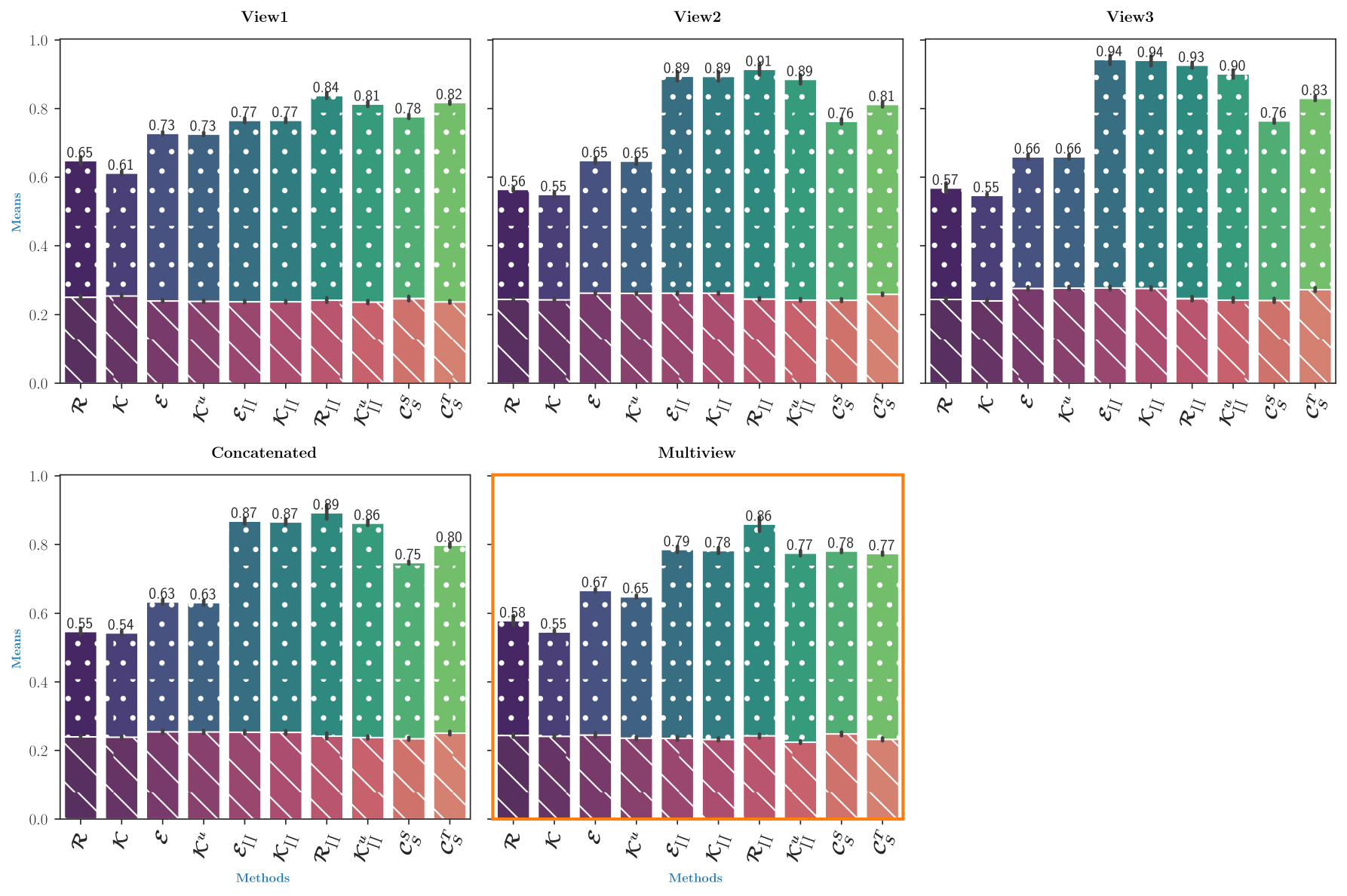}} \vspace{-1pt} \\

\subfloat[Weak learner]{\includegraphics[width=.55\linewidth]{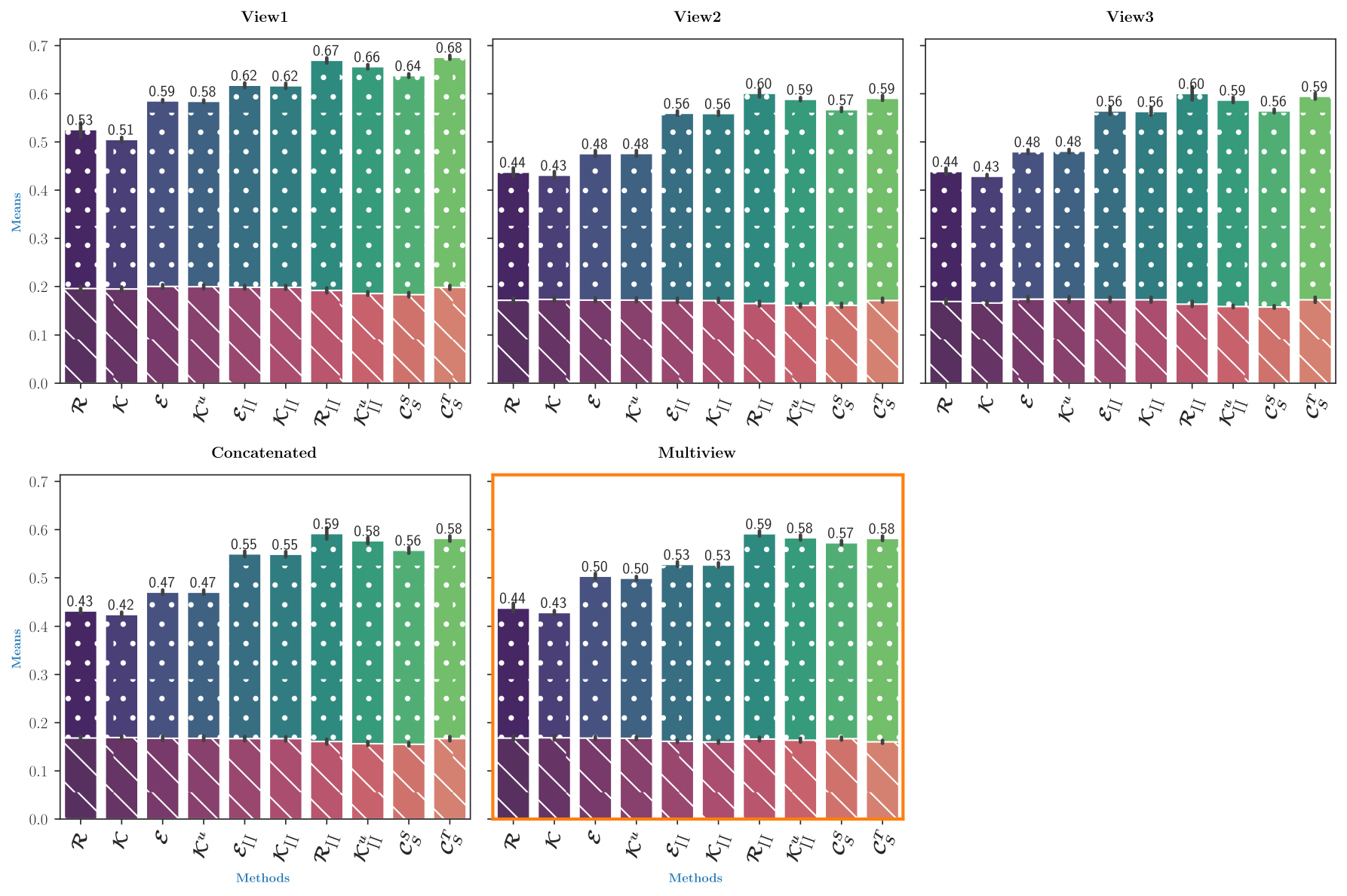}} \vspace{-1pt} \\

\subfloat[Strong learner]{\includegraphics[width=.55\linewidth]{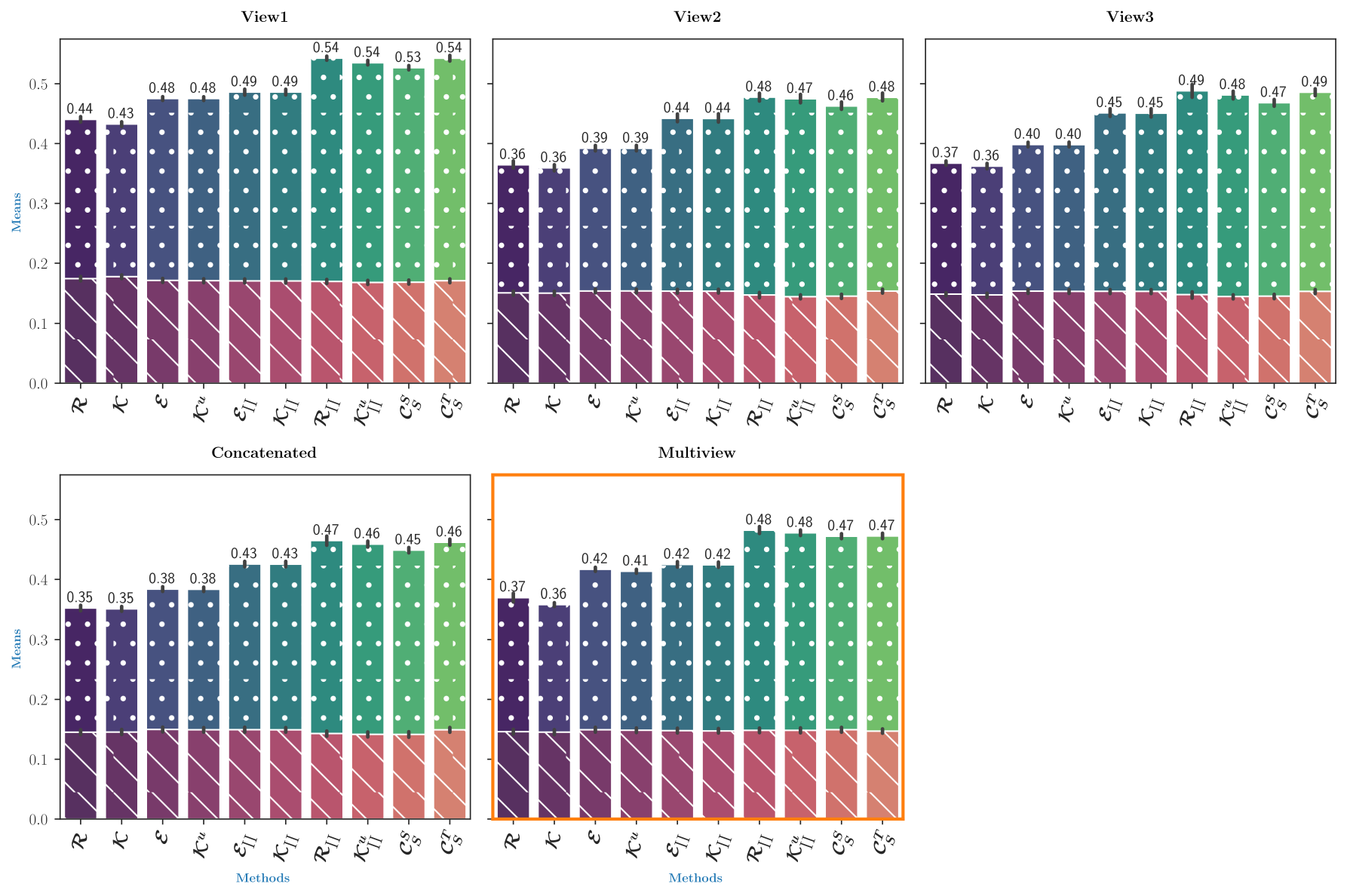}}\\

\caption{Test error rates and PAC-Bayesian bounds for binary classification on the PTB-XL+ dataset (Normal vs All), averaged over 10 runs. The experiment uses KL divergence or single-view and Rényi divergence ($\alpha=1.1$) for multi-view, with a stump configuration for \textbf{(a)}, weak, and strong learners for \textbf{(b)} and \textbf{(c)} resp. and 50\% labeled data. Multi-view results are highlighted in orange.}
\label{figure:ptb-xl-binary-normal-rest-all-configs}

\end{figure}

\begin{figure}
\centering
\includegraphics[width=0.98\linewidth]{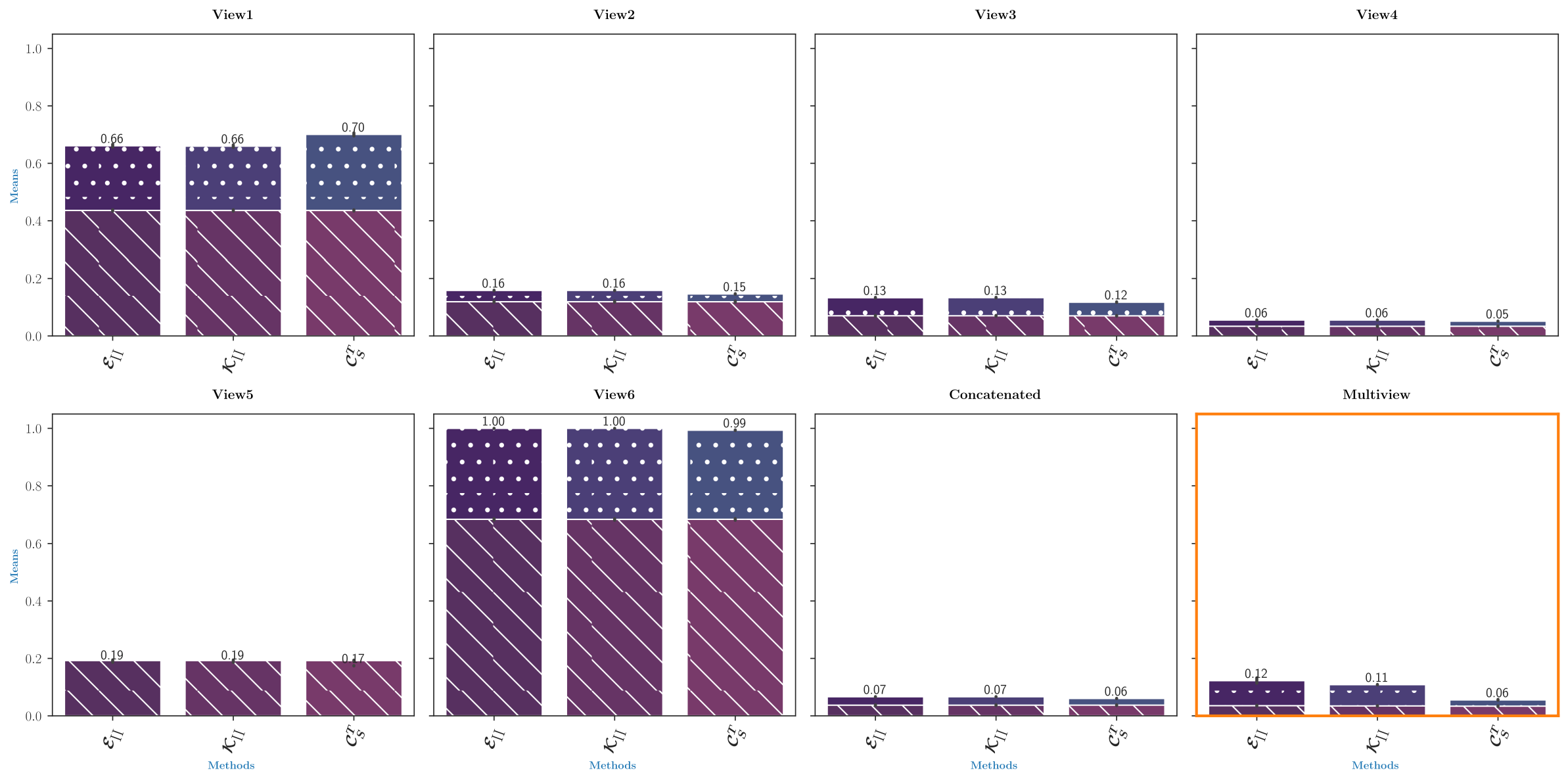}

\caption{Test error rates and PAC-Bayesian bounds for multiclass classification on the mfeat-large dataset, averaged over 10 runs. All views are shown.The experiment uses the same configuration as Figure~\ref{figure:mfeat-binary-4-9} with modifications to aid multi-class learning, strong learners with depth=20, and 100\% labeled data. Multi-view results are highlighted in orange.}
\label{figure:mfeat-mult-full-plot}
\end{figure}

\begin{figure}
\centering
\includegraphics[width=0.98\linewidth]{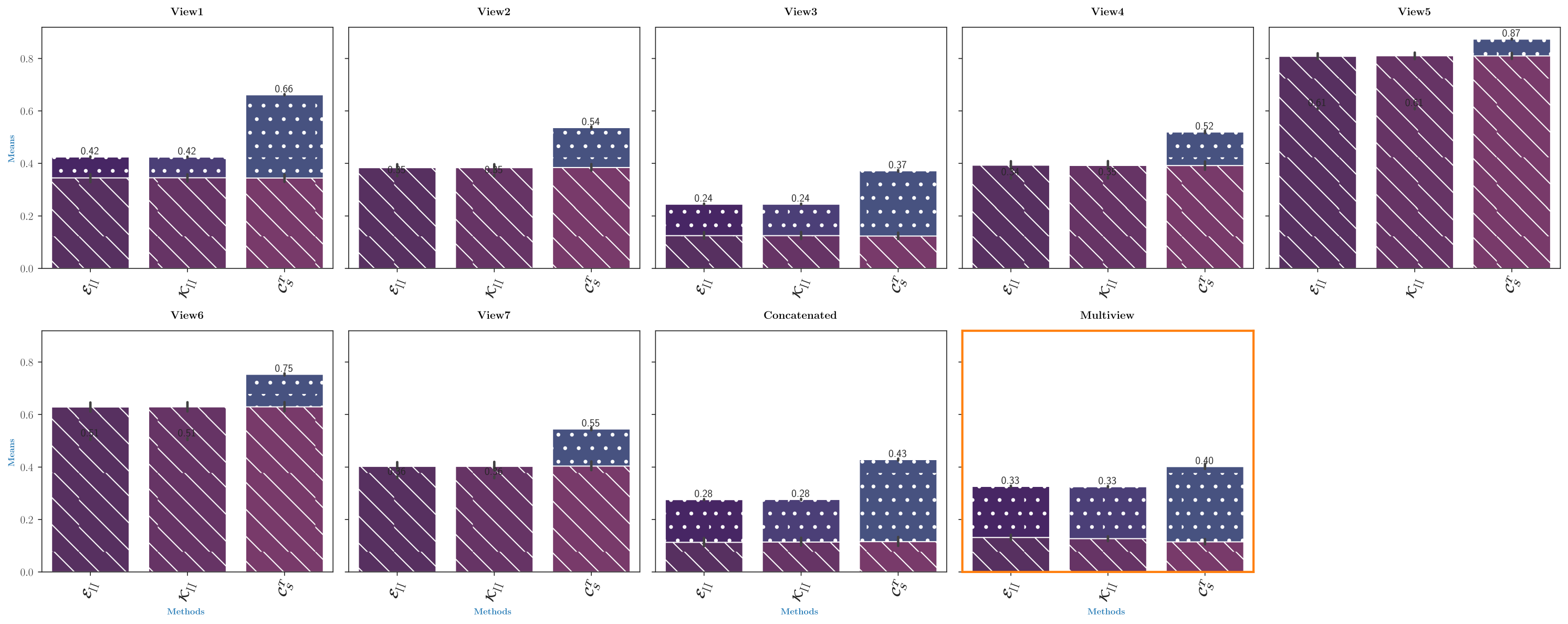}

\caption{Test error rates and PAC-Bayesian bounds for multiclass classification on the Corel-Image-Features dataset, averaged over 10 runs. All views are shown.The experiment uses the same configuration as Figure~\ref{figure:mfeat-mult-full-plot}.}
\label{figure:corel-mult-full-plot}
\end{figure}

\begin{figure}
\centering
\includegraphics[width=0.98\linewidth]{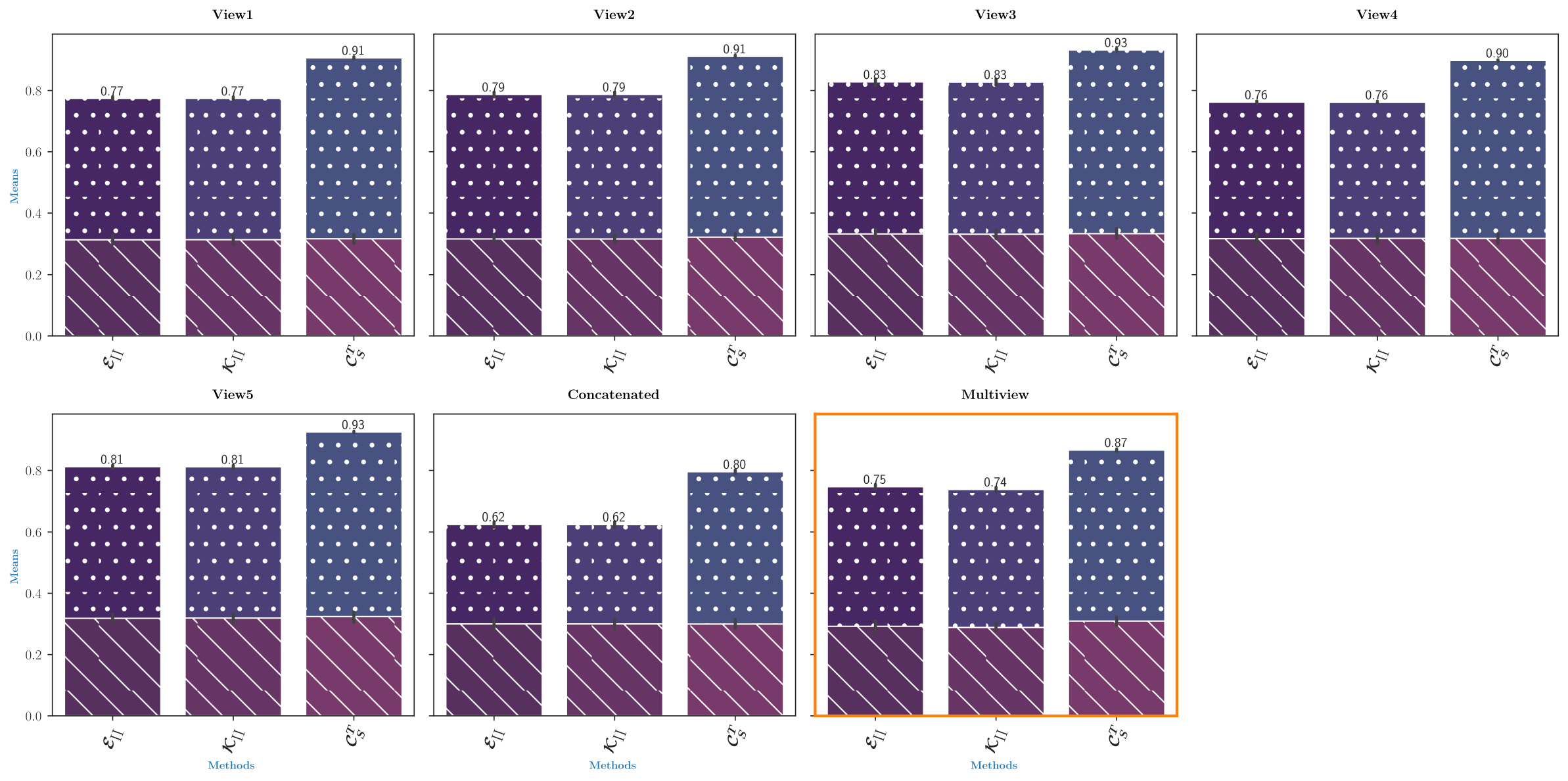}

\caption{Test error rates and PAC-Bayesian bounds for multiclass classification on the Reuters-EN dataset, averaged over 10 runs. All views are shown.The experiment uses the same configuration as Figure~\ref{figure:mfeat-mult-full-plot}.}
\label{figure:reuters-mult-full-plot}
\end{figure}

\begin{figure}
\centering
\includegraphics[width=0.7\linewidth]{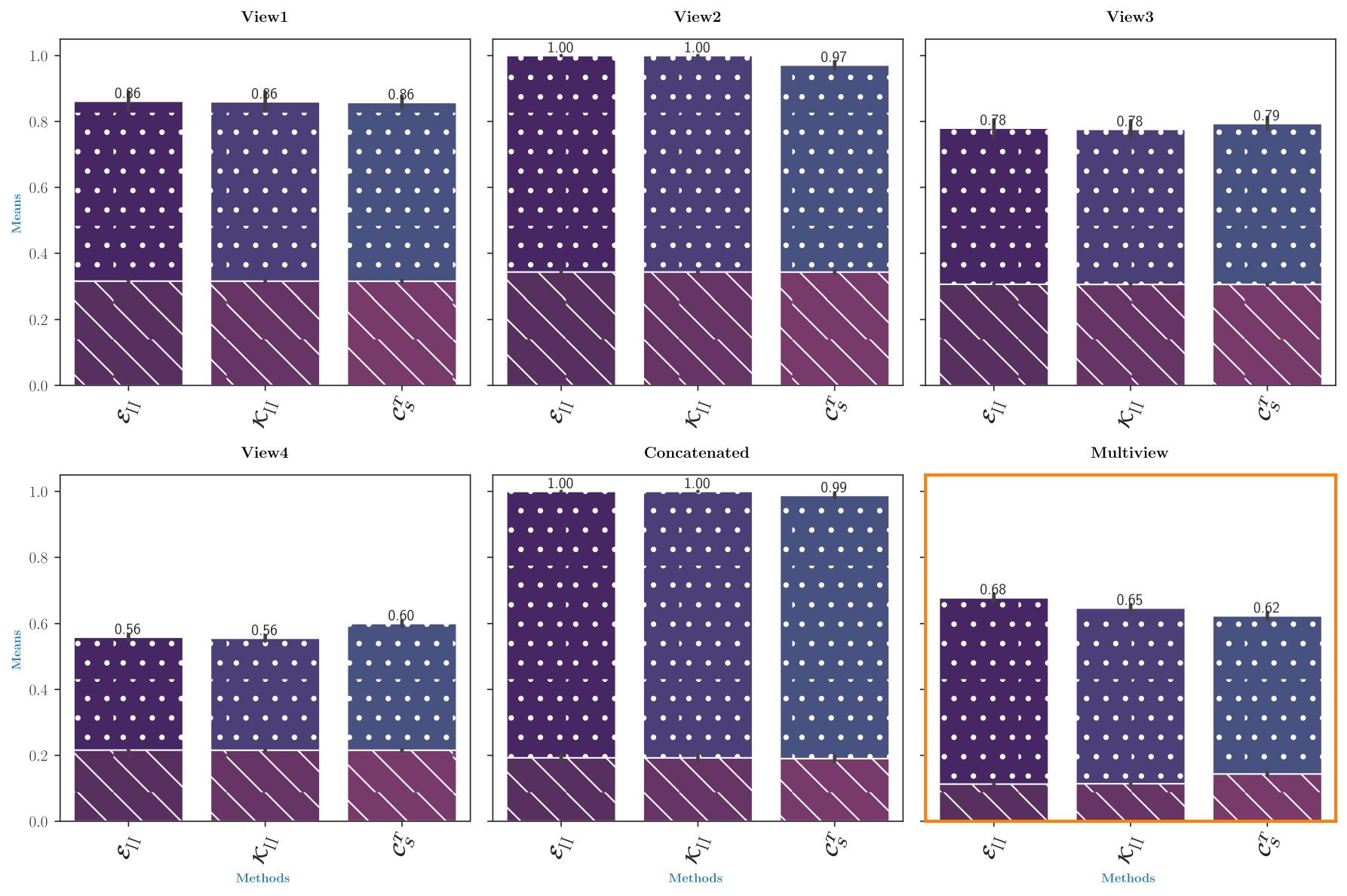}

\caption{Test error rates and PAC-Bayesian bounds for multiclass classification on the ALOI dataset, averaged over 10 runs. All views are shown.The experiment uses the same configuration as Figure~\ref{figure:mfeat-mult-full-plot}. with modifications of 50 estimators instead of 100, and 50\% labeled data to aid multi-class learning due to the large dataset size.}
\label{figure:aloi-mult-full-plot}
\end{figure}

\begin{table}[htbp]
\centering
\caption{Results for the dataset \textbf{mfeat-large (5vs6)}. Each column represents a different view of the dataset (or the concatenation or multi-view). The values for each bound method are shown in rows. The bold values indicate the triple (\textbf{Bnd}, $\bm{G}$, $\bm{\mathcal{B}}$) with the lowest total mean Bound, while the underlined values indicate the triple (\underline{Bnd}, $\underline{G}$, \underline{$\mathcal{B}$}) with the lowest total mean Risk.}
\begin{tabular}{|l|l||l||l|l|l|l|l|l||l|}
\hline
 \hline
\multirow{2}{*}{} & \multicolumn{8}{c||}{\textbf{mfeat-large 5vs6}} & \textbf{{Mean}} \\ 
& \multicolumn{1}{c}{\textbf{Con.}} & \multicolumn{1}{c}{\textbf{MV}} & \multicolumn{1}{c}{\textbf{View1}} & \multicolumn{1}{c}{\textbf{View2}} & \multicolumn{1}{c}{\textbf{View3}} & \multicolumn{1}{c}{\textbf{View4}} & \multicolumn{1}{c}{\textbf{View5}} & \multicolumn{1}{c||}{\textbf{View6}} & \\ 
\hline
 \hline
\multicolumn{1}{|c|}{\boldmath{$\mathcal{R}$\ref{Eq-Pac-bayes-kl-FO}}} \\ 
Bnd & .415 & .522 & .709 & .615 & .533 & .395 & .486 & .769 & .556 \\ 
$G$ & .185 & .237 & .329 & .282 & .242 & .175 & .219 & .358 & .254 \\ 
$\mathcal{B}$ & .067 & .072 & .198 & .156 & .133 & .091 & .182 & .342 & .155 \\ \cline{1-10}
\multicolumn{1}{|c|}{\boldmath{$\mathcal{K}$\ref{Eq-Pac-bayes-kl-inv-FO}}} \\ 
Bnd & .367 & .367 & .390 & .549 & .388 & .361 & .430 & .744 & \textbf{.450} \\ 
$G$ & .161 & .163 & .172 & .248 & .171 & .158 & .191 & .344 & \textbf{.201} \\ 
$\mathcal{B}$ & .064 & .107 & .154 & .228 & .152 & .101 & .185 & .343 & \textbf{.167} \\ \cline{1-10}
\multicolumn{1}{|c|}{\boldmath{$\mathcal{E}$\ref{Eq-Pac-bayes-joint-dis-mv-FO}}} \\ 
Bnd & .527 & .674 & .875 & .715 & .707 & .501 & .636 & .855 & .686 \\ 
$G$ & .238 & .288 & .405 & .328 & .324 & .225 & .288 & .394 & .311 \\ 
$\mathcal{B}$ & .078 & .079 & .309 & .130 & .134 & .087 & .197 & .339 & .169 \\ \cline{1-10}
\multicolumn{1}{|c|}{\boldmath{$\mathcal{K}^{u}$\ref{Eq-Pac-bayes-joint-dis-inv-mv-FO}}} \\ 
Bnd & .522 & .606 & .864 & .715 & .695 & .510 & .630 & .858 & .675 \\ 
$G$ & .235 & .232 & .400 & .328 & .318 & .229 & .286 & .396 & .303 \\ 
$\mathcal{B}$ & .072 & .065 & .273 & .127 & .120 & .083 & .186 & .338 & .158 \\ \cline{1-10}
\multicolumn{1}{|c|}{\boldmath{$\mathcal{E}_{\textnormal{II}}$\ref{Eq-Pac-bayes-joint-mv-SO}}} \\ 
Bnd & .345 & .462 & .700 & .582 & .514 & .352 & .582 & .946 & .560 \\ 
$G$ & .229 & .264 & .371 & .328 & .301 & .226 & .280 & .400 & .300 \\ 
$\mathcal{B}$ & .047 & .046 & .170 & .124 & .107 & .059 & .173 & .342 & .133 \\ \cline{1-10}
\multicolumn{1}{|c|}{\boldmath{$\mathcal{K}_{\textnormal{II}}$\ref{Eq-Pac-bayes-joint-inv-mv-SO}}} \\ 
Bnd & .350 & .448 & .717 & .581 & .525 & .358 & .592 & .946 & .565 \\ 
$G$ & .231 & .239 & .380 & .327 & .306 & .229 & .284 & .400 & .300 \\ 
$\mathcal{B}$ & .049 & .047 & .173 & .124 & .104 & .062 & .176 & .341 & .135 \\ \cline{1-10}
\multicolumn{1}{|c|}{\boldmath{$\mathcal{R}_{\textnormal{II}}$\ref{Eq-Pac-bayes-dis-mv-SO}}} \\ 
Bnd & .395 & .431 & .803 & .628 & .560 & .407 & .650 & 1.0 & .609 \\ 
$G$ & .196 & .255 & .371 & .302 & .267 & .192 & .233 & .358 & .272 \\ 
$\mathcal{B}$ & .072 & .056 & .223 & .150 & .124 & .085 & .182 & .342 & .154 \\ \cline{1-10}
\multicolumn{1}{|c|}{\boldmath{$\mathcal{K}^{u}_{\textnormal{II}}$\ref{Eq-Pac-bayes-dis-inv-mv-SO}}} \\ 
Bnd & .400 & .386 & .862 & .640 & .572 & .408 & .629 & 1.0 & .624 \\ 
$G$ & .213 & .239 & .391 & .317 & .289 & .212 & .245 & .379 & .286 \\ 
$\mathcal{B}$ & .057 & .056 & .273 & .136 & .112 & .069 & .180 & .338 & .153 \\ \cline{1-10}
\multicolumn{1}{|c|}{\boldmath{$\mathcal{C}_{S}^{S}$\ref{Eq-Pac-bayes-mv-C-Bound}}} \\ 
Bnd & .416 & .501 & .817 & .635 & .583 & .436 & .632 & .929 & .619 \\ 
$G$ & .212 & .289 & .386 & .301 & .285 & .217 & .250 & .348 & .286 \\ 
$\mathcal{B}$ & .052 & .052 & .178 & .136 & .113 & .065 & .175 & .344 & .139 \\ \cline{1-10}
\multicolumn{1}{|c|}{\boldmath{$\mathcal{C}_{S}^{T}$\ref{Eq-Pac-bayes-mv-C-tandem-Bound}}} \\ 
Bnd & .427 & .474 & .779 & .714 & .634 & .437 & .673 & .944 & .\underline{635} \\ 
$G$ & .231 & .204 & .342 & .324 & .298 & .228 & .274 & .394 & \underline{.287} \\ 
$\mathcal{B}$ & .049 & .049 & .154 & .127 & .104 & .062 & .173 & .338 & \underline{.132} \\ \cline{1-10}
\textbf{Mean} \\ \cline{1-9}
Bnd & \underline{\textbf{.416}} & .487 & .752 & .637 & .571 & \textbf{.416} & .594 & .908 \\ 
$G$ & \underline{\textbf{.213}} & .241 & .355 & .309 & .280 & \textbf{.209} & .255 & .377 \\ 
$\mathcal{B}$ & \underline{\textbf{.061}} & .063 & .210 & .144 & .120 & \textbf{.076} & .181 & .341 \\ \cline{1-9}
\end{tabular}
\label{tab:my_label}
\end{table}

\begin{table}[htbp]
\centering
\caption{Results for the dataset \textbf{mfeat-large (4vs9)}. Each column represents a different view of the dataset (or the concatenation or multi-view). The values for each bound method are shown in rows. The bold values indicate the triple (\textbf{Bnd}, $\bm{G}$, $\bm{\mathcal{B}}$) with the lowest total mean Bound, while the underlined values indicate the triple (\underline{Bnd}, $\underline{G}$, \underline{$\mathcal{B}$}) with the lowest total mean Risk.}
\begin{tabular}{|l|l||l||l|l|l|l|l|l||l|}
\hline
 \hline
\multirow{2}{*}{} & \multicolumn{8}{c||}{\textbf{mfeat-large 4vs9}} & \textbf{{Mean}} \\ 
& \multicolumn{1}{c}{\textbf{\bf{Con.}}} & \multicolumn{1}{c}{\textbf{\bf{MV}}} & \multicolumn{1}{c}{\textbf{\bf{View1}}} & \multicolumn{1}{c}{\textbf{\bf{View2}}} & \multicolumn{1}{c}{\textbf{\bf{View3}}} & \multicolumn{1}{c}{\textbf{\bf{View4}}} & \multicolumn{1}{c}{\textbf{\bf{View5}}} & \multicolumn{1}{c||}{\textbf{\bf{View6}}} & \\ 
\hline
 \hline
\multicolumn{1}{|c|}{\boldmath{$\mathcal{R}$\ref{Eq-Pac-bayes-kl-FO}}} \\ 
Bnd & .513 & .574 & .857 & .576 & .695 & .541 & .521 & .938 & .652 \\ 
$G$ & .233 & .263 & .402 & .264 & .322 & .246 & .236 & .442 & .301 \\ 
$\mathcal{B}$ & .149 & .132 & .315 & .165 & .257 & .187 & .213 & .434 & .231 \\ \cline{1-10}
\multicolumn{1}{|c|}{\boldmath{$\mathcal{K}$\ref{Eq-Pac-bayes-kl-inv-FO}}} \\ 
Bnd & .432 & .432 & .820 & .410 & .633 & .466 & .484 & .926 & \textbf{.575} \\ 
$G$ & .192 & .194 & .381 & .182 & .289 & .209 & .218 & .435 & \textbf{.263} \\ 
$\mathcal{B}$ & .151 & .154 & .366 & .171 & .279 & .175 & .208 & .435 & \textbf{.242} \\ \cline{1-10}
\multicolumn{1}{|c|}{\boldmath{$\mathcal{E}$\ref{Eq-Pac-bayes-joint-dis-mv-FO}}} \\ 
Bnd & .694 & .770 & .930 & .764 & .814 & .733 & .677 & .992 & .797 \\ 
$G$ & .318 & .335 & .432 & .352 & .376 & .337 & .309 & .462 & .365 \\ 
$\mathcal{B}$ & .167 & .132 & .354 & .169 & .215 & .240 & .218 & .434 & .241 \\ \cline{1-10}
\multicolumn{1}{|c|}{\boldmath{$\mathcal{K}^{u}$\ref{Eq-Pac-bayes-joint-dis-inv-mv-FO}}} \\ 
Bnd & .687 & .717 & .930 & .757 & .811 & .729 & .669 & .996 & .787 \\ 
$G$ & .315 & .293 & .433 & .349 & .375 & .335 & .305 & .466 & .359 \\ 
$\mathcal{B}$ & .149 & .119 & .355 & .163 & .207 & .221 & .209 & .436 & .232 \\ \cline{1-10}
\multicolumn{1}{|c|}{\boldmath{$\mathcal{E}_{\textnormal{II}}$\ref{Eq-Pac-bayes-joint-mv-SO}}} \\ 
Bnd & .525 & .612 & .874 & .625 & .721 & .616 & .603 & 1.0 & .697 \\ 
$G$ & .306 & .333 & .429 & .340 & .373 & .334 & .302 & .466 & .360 \\ 
$\mathcal{B}$ & .117 & .083 & .283 & .155 & .199 & .168 & .186 & .435 & .203 \\ \cline{1-10}
\multicolumn{1}{|c|}{\boldmath{$\mathcal{K}_{\textnormal{II}}$\ref{Eq-Pac-bayes-joint-inv-mv-SO}}} \\ 
Bnd & .543 & .600 & .869 & .630 & .720 & .620 & .613 & 1.0 & .699 \\ 
$G$ & .313 & .315 & .428 & .343 & .373 & .335 & .305 & .466 & .360 \\ 
$\mathcal{B}$ & .123 & .091 & .273 & .156 & .199 & .172 & .192 & .434 & .205 \\ \cline{1-10}
\multicolumn{1}{|c|}{\boldmath{$\mathcal{R}_{\textnormal{II}}$\ref{Eq-Pac-bayes-dis-mv-SO}}} \\ 
Bnd & .583 & .549 & .940 & .661 & .781 & .691 & .672 & 1.0 & .735 \\ 
$G$ & .280 & .298 & .421 & .304 & .363 & .302 & .263 & .444 & .334 \\ 
$\mathcal{B}$ & .144 & .113 & .319 & .162 & .217 & .197 & .206 & .434 & .224 \\ \cline{1-10}
\multicolumn{1}{|c|}{\boldmath{$\mathcal{K}^{u}_{\textnormal{II}}$\ref{Eq-Pac-bayes-dis-inv-mv-SO}}} \\ 
Bnd & .599 & .535 & .989 & .712 & .785 & .700 & .668 & 1.0 & .749 \\ 
$G$ & .295 & .297 & .435 & .346 & .369 & .316 & .279 & .461 & .350 \\ 
$\mathcal{B}$ & .143 & .116 & .394 & .161 & .214 & .200 & .208 & .434 & .234 \\ \cline{1-10}
\multicolumn{1}{|c|}{\boldmath{$\mathcal{C}_{S}^{S}$\ref{Eq-Pac-bayes-mv-C-Bound}}} \\ 
Bnd & .579 & .677 & .919 & .680 & .781 & .663 & .655 & .995 & \underline{.744} \\ 
$G$ & .295 & .354 & .420 & .323 & .355 & .322 & .272 & .444 & \underline{.348} \\ 
$\mathcal{B}$ & .107 & .096 & .260 & .152 & .219 & .148 & .200 & .434 & \underline{.202} \\ \cline{1-10}
\multicolumn{1}{|c|}{\boldmath{$\mathcal{C}_{S}^{T}$\ref{Eq-Pac-bayes-mv-C-tandem-Bound}}} \\ 
Bnd & .662 & .641 & .960 & .742 & .846 & .740 & .710 & .999 & .787 \\ 
$G$ & .308 & .257 & .424 & .329 & .367 & .327 & .297 & .466 & .347 \\ 
$\mathcal{B}$ & .125 & .114 & .259 & .156 & .200 & .172 & .190 & .435 & .206 \\ \cline{1-10}
\textbf{Mean} \\ \cline{1-9}
Bnd & \textbf{.582} & \underline{.611} & .909 & .656 & .758 & .650 & .627 & .985 \\ 
$G$ & \textbf{.285} & \underline{.294} & .421 & .313 & .356 & .306 & .279 & .455 \\ 
$\mathcal{B}$ & \textbf{.138} & \underline{.115} & .318 & .161 & .221 & .188 & .203 & .435 \\ \cline{1-9}
\end{tabular}
\label{tab:mfeat-4-}
\end{table}

\begin{figure}
\centering
\subfloat[All methods]{\includegraphics[width=.8\linewidth]{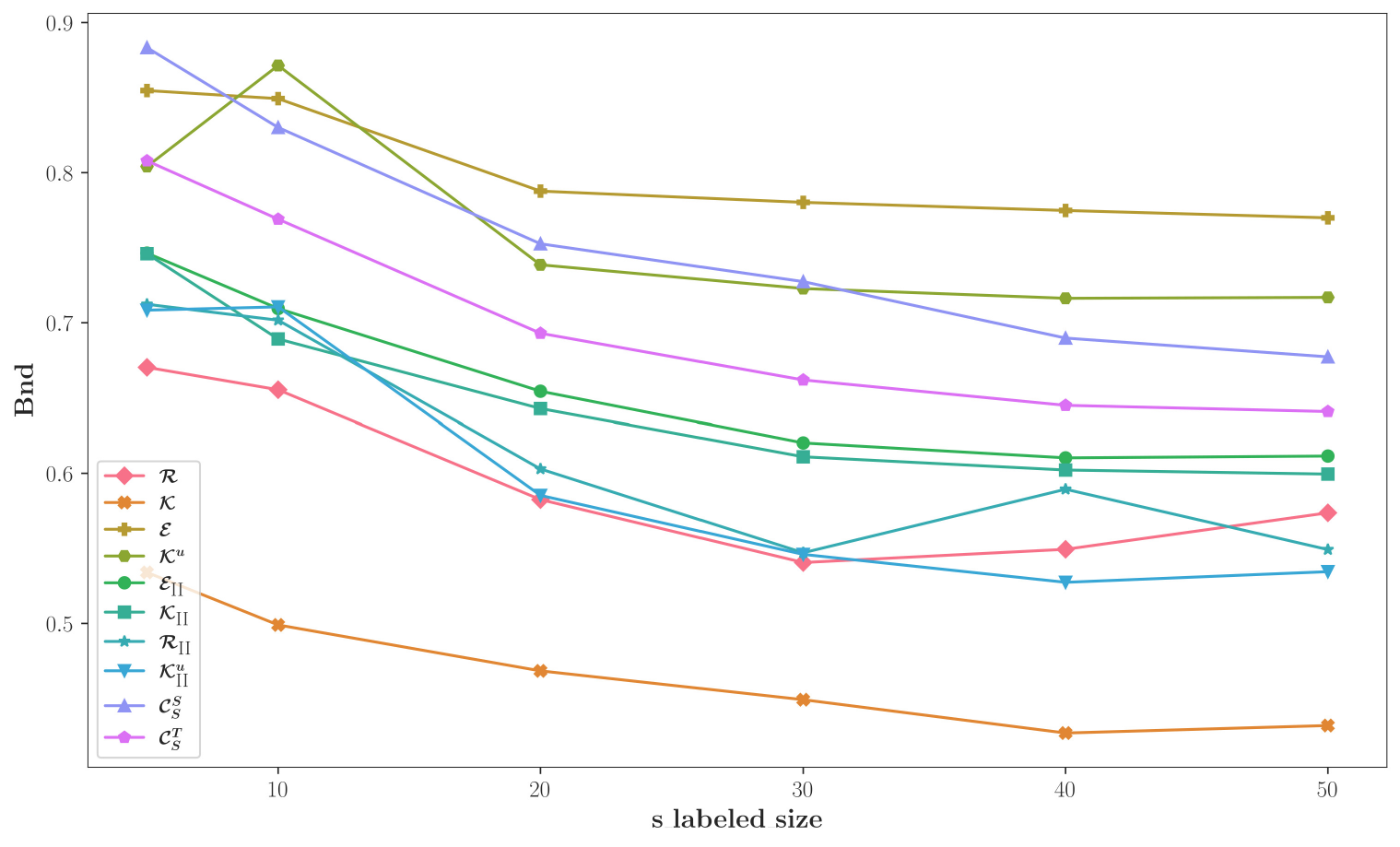}}\\

\subfloat[$\mathcal{K}^{u}_{\textnormal{II}}$\ref{Eq-Pac-bayes-dis-inv-mv-SO} vs $\mathcal{K}_{\textnormal{II}}$\ref{Eq-Pac-bayes-joint-inv-mv-SO}]{
  \includegraphics[width=0.8\linewidth]{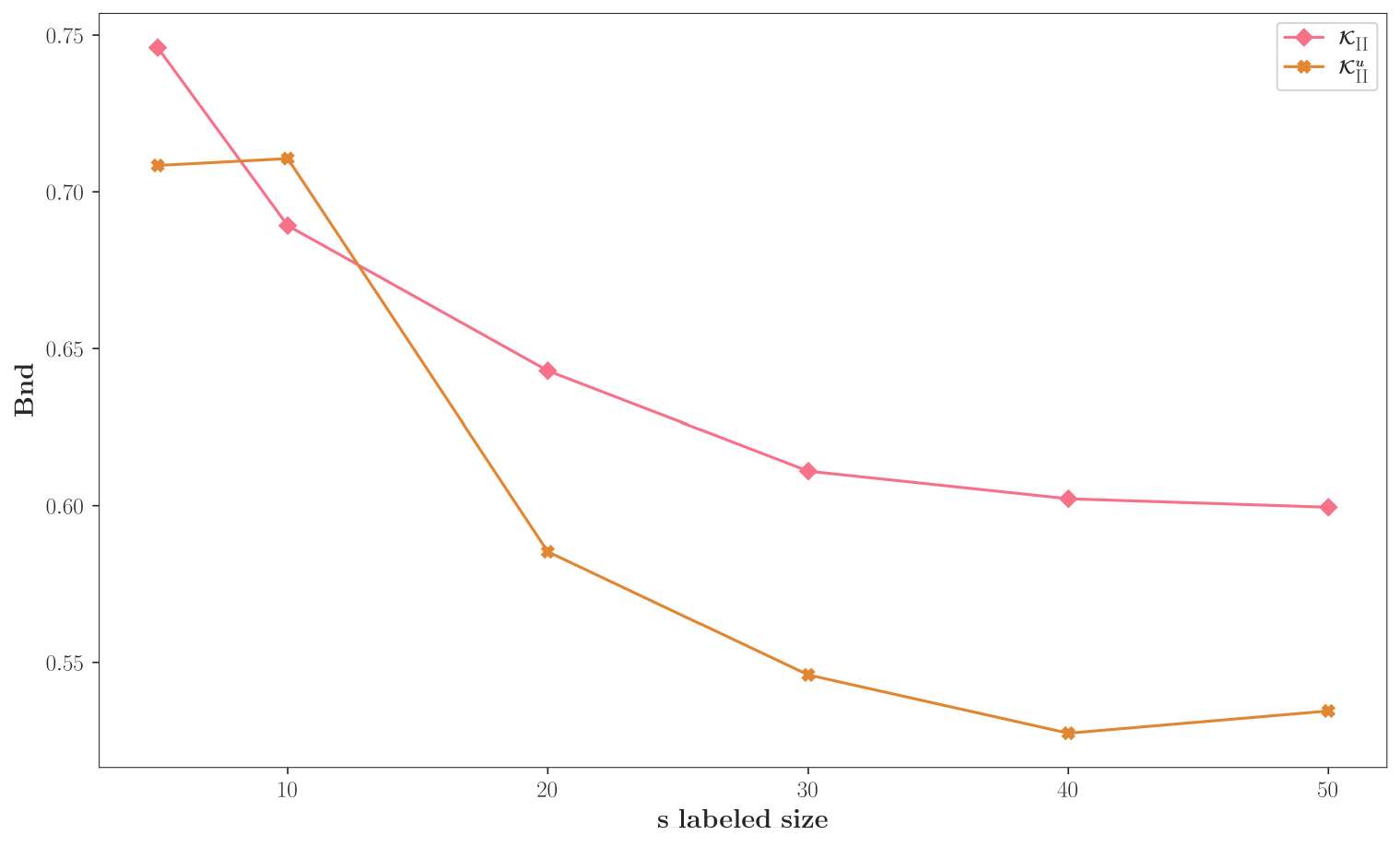}}

\caption{Comparison of bound values on the mfeat-large dataset (4vs9) as a function of $s\_labeled\_size$. \textbf{(a)} illustrates how changes in the proportion of labeled data ($s\_labeled\_size$) with a fixed $\alpha$ (1.1 in this case) affect the bound value. \textbf{(b)} shows that with access to a significant amount of unlabeled data, the bound $\mathcal{K}^{u}_{\textnormal{II}}$ (Equation~\ref{Eq-Pac-bayes-dis-inv-mv-SO}), which employs the disagreement term, is tighter than $\mathcal{K}_{\textnormal{II}}$ (Equation~\ref{Eq-Pac-bayes-joint-inv-mv-SO}).}

\label{figure:vary-labeled}
\end{figure}

\begin{figure}
\centering
\includegraphics[width=.8\linewidth]{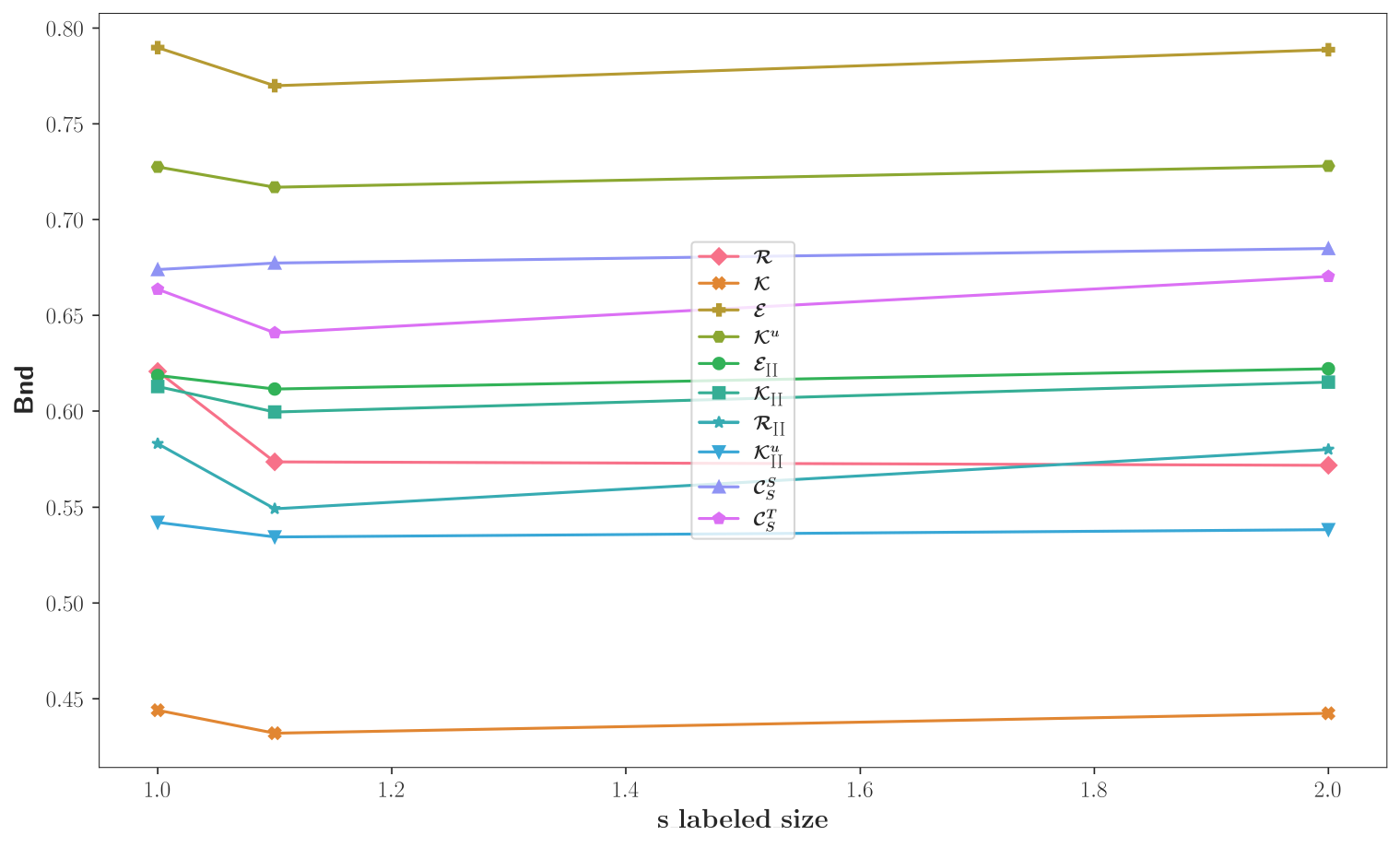}

\caption{Comparison of bound values on the mfeat-large (4vs9) dataset as a function of $\alpha$. The figure illustrates how changes in the Rényi divergence order $\alpha$ affect the bound values with a fixed $s\_labeled\_size = 0.5$. Overall, the bounds become tighter when $\alpha$ is arround 1.1.}

\label{figure:vary-alpha}
\end{figure}


\begin{figure}
\centering
\includegraphics[width=.8\linewidth]{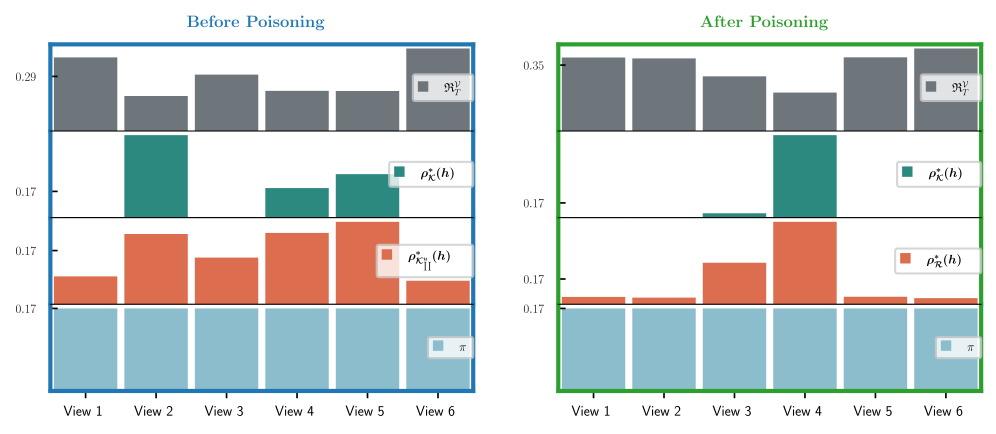}

\caption{Comparison of optimized hyper-posterior distributions ($\rho^*$) from the top two performing algorithms (based on the bound value over views on mfeat-large dataset 4vs9), before and after poisoning the most effective views (2 and 5) by adding Gaussian noise. \textbf{Left:} Posterior distribution before data poisoning. \textbf{Right:} Posterior distribution after data poisoning. The shift in the posterior distribution after data poisoning indicates a significant change in the model's confidence levels across different views. Parameters: $\alpha=1.1$, stump, $s\_labeled\_size=50\%$. (a detailed version of this figure can be found in Figure~\ref{figure:poisoning-full}}

\label{figure:poisoning}
\end{figure}

\begin{figure}
\centering
\subfloat[Before Poisoning]{
  \includegraphics[width=0.85\linewidth]{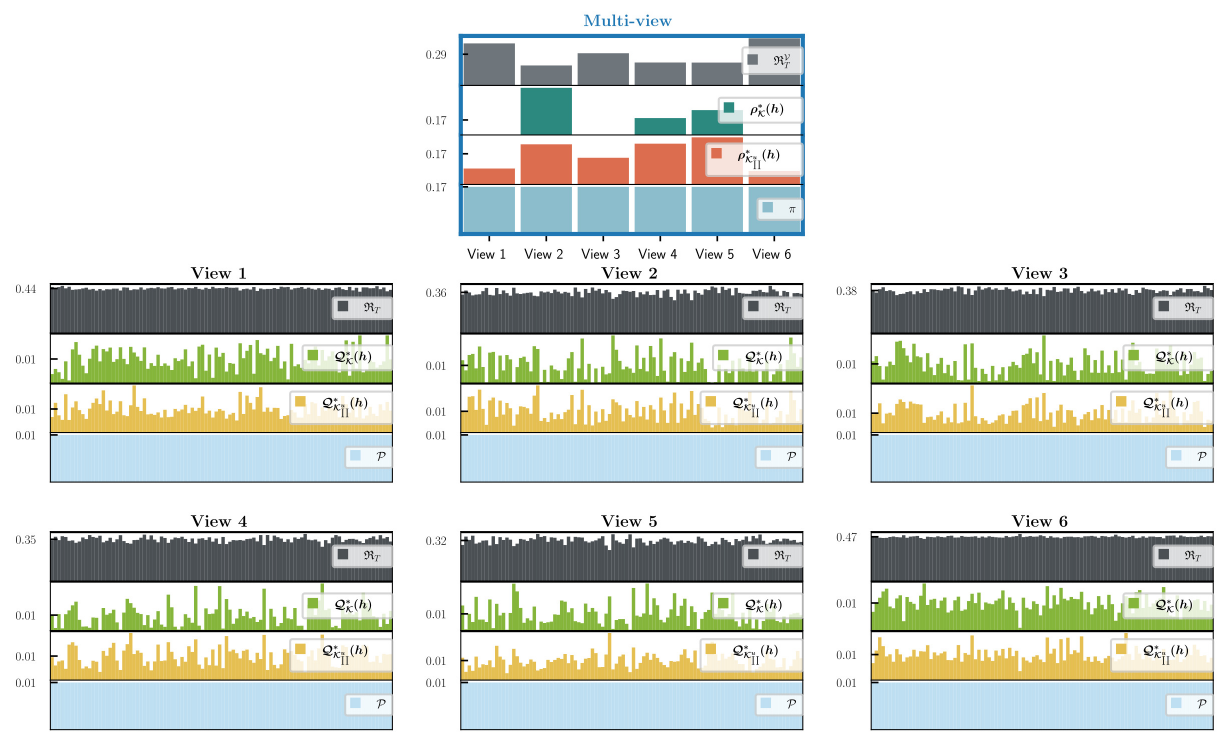}} \vspace{-1pt} \\
\subfloat[After Poisoning]{\includegraphics[width=0.85\linewidth]{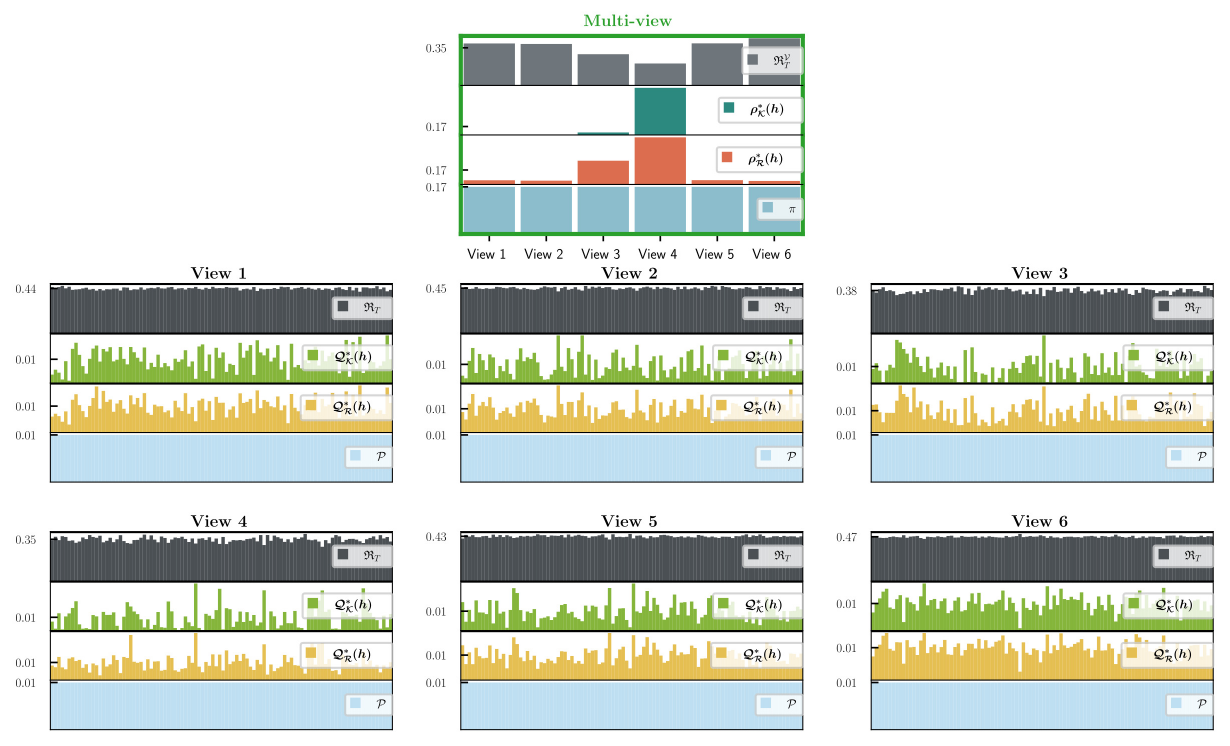}}

\caption{Comparison of optimized posteriors ($\mathcal{Q}_v^*$) and hyper-posterior ($\rho^*$) distributions from the top two performing algorithms (based on the bound value over views on the mfeat-large dataset 4vs9), before and after poisoning the most effective views (2 and 5) by adding Gaussian noise. The task is binary classification on the mfeat 4 vs 9 dataset. \textbf{(a)} Posterior distributions before data poisoning. \textbf{(b)} Posterior distributions after data poisoning. Parameters: $\alpha=1.1$, stump, $s\_labeled\_size=50\%$.}
\label{figure:poisoning-full}
\end{figure}

\end{document}